\newcommand{\MC}{\mathrm{MC}}
\newcommand{\dmix}{\mathrm{d}_{\mathrm{mix}}}
\newcommand{\dmixbar}{\bar{\mathrm{d}}_{\mathrm{mix}}}
\newcommand{\tmix}{\tau_{\mathrm{mix}}}
\newcommand{\loss}{\mathcal{L}}
\newcommand{\bias}{\mathrm{bias}}
\newcommand{\ber}{\mathrm{Ber}}
\newcommand{\tr}{\mathrm{Tr}}
\newcommand{\id}{\mathrm{I}}
\newcommand{\var}{\mathrm{var}}
\newcommand{\tv}{\mathrm{TV}}
\newcommand{\kl}{\mathrm{KL}}
\newcommand{\alg}{\mathrm{ALG}}
\newcommand{\law}{\mathcal{D}}
\newtheorem{proposition}{Proposition}
\newtheorem{theorem}{Theorem}
\newtheorem{lemma}{Lemma}
\newcommand{\defeq}{:=}
\newcommand{\norm}[1]{\left\|#1\right\|}
\newcommand{\trans}[1]{{#1}^{\top}}
\newcommand{\noise}{n}
\newcommand{\iprod}[2]{\langle #1, #2 \rangle}
\newcommand{\unif}{\mathsf{Unif}}
\newcommand{\bI}{\mathbf{I}}
\newcommand{\bJ}{\mathbf{J}}
\newcommand{\sgddd}{SGD-DD\xspace}
\def\R{\mathbb{R}}
\def\eps{\epsilon}
\def\N{\mathcal{N}}
\def \E{\mathbb{E}}
\title{Least Squares Regression with Markovian Data: Fundamental Limits and Algorithms}
\author{
  Guy Bresler \\
  Massachusetts Institute of Technology\\
  Cambridge, USA 02139 \\
  \texttt{guy@mit.edu} \\
  \And
   Prateek Jain \\
  Microsoft Research\\
  Bengaluru, India 560001 \\
  \texttt{prajain@microsoft.com} \\
  \And
  Dheeraj Nagaraj \\
  Massachusetts Institute of Technology\\
  Cambridge, USA 02139 \\
  \texttt{dheeraj@mit.edu} \\
   \And
  Praneeth Netrapalli \\
  Microsoft Research\\
  Bengaluru, India 560001\\
  \texttt{praneeth@microsoft.com} \\
  \And
  Xian Wu \\
  Stanford University\\
  Stanford, USA 94305 \\
  \texttt{xwu20@stanford.edu} \\
}
\begin{document}
\maketitle

\begin{abstract}
We study the problem of least squares linear regression where the datapoints are {\em dependent} and are sampled from a Markov chain. We establish sharp information theoretic minimax lower bounds for this problem in terms of $\tmix$, the mixing time of the underlying Markov chain, under different noise settings. Our results establish that in general, optimization with Markovian data is {\em strictly} harder than optimization with independent data and a trivial algorithm (\sgddd) that works with only one in every $\tilde{\Theta}(\tmix)$ samples, which are approximately independent, is minimax optimal. In fact, it is strictly better than the popular Stochastic Gradient Descent (SGD) method with constant step-size which is otherwise {\em minimax optimal} in the regression with {\em independent} data setting. 

Beyond a worst case analysis, we investigate whether structured datasets seen in practice such as Gaussian auto-regressive dynamics can admit more efficient optimization schemes. Surprisingly, even in this specific and natural setting, Stochastic Gradient Descent (SGD) with constant step-size is still no better than \sgddd. 
Instead, we propose an algorithm based on experience replay--a popular reinforcement learning technique--that achieves a significantly better error rate. Our improved rate serves as one of the first results where an algorithm outperforms \sgddd on an interesting Markov chain and also provides one of the first theoretical analyses to support the use of experience replay in practice. 

\end{abstract}

\section{Introduction}
Typical machine learning algorithms and their analyses crucially require the training data to be sampled independently and identically (i.i.d.).  However, real-world datapoints collected can be highly dependent on each other. One model for capturing data dependencies which is popular in many applications such as Reinforcement Learning (RL) is to assume that the data is generated by a Markov process. While it is intuitive that the state of the art optimization algorithms with provable guarantees for iid data will not converge as quickly or as efficiently for dependent data, a solid theoretical foundation that rigorously quantifies tight fundamental limits and upper bounds on popular algorithms in the non-asymptotic regime is sorely lacking. Moreover, popular schemes to break temporal correlations in the input datapoints that have been shown to work well in practice, such as \emph{experience replay}, are also wholly lacking in theoretical analysis. Through the classical problem of linear least squares regression, we first present fundamental limits for Markovian data, followed by an in-depth study of the performance of Stochastic Gradient Descent and variants (ie with experience replay). Our work is comprehensive in its treatment of this important problem and in particular, we offer the first theoretical analysis for experience replay in a structured Markovian setting, an idea that is widely adopted in practice for modern deep RL. 

There exists a rich literature in statistics, optimization and control that studies learning/modeling/optimization with Markovian data \cite{tsitsiklis1997analysis,kushner2003stochastic,Mokkadem88}. However, most of the existing analyses work only in the infinite data regime. \cite{DuchiAJJ12} provides non-asymptotic analysis of the  Mirror Descent method for Markovian data and \cite{bhandari2018finite}  provides a similar analysis for $\textrm{TD}(\lambda)$ (Temporal Difference) algorithms which are widely used in RL. However, the provided guarantees are in general suboptimal and seem to at best match the simple data drop technique where most of the data points are dropped. \cite{srikant2019finite} considers constant step size $\textrm{TD}(\lambda)$ algorithm but their guarantees suffer from a constant bias. 

Stochastic Gradient Descent (SGD) is the modern workhorse of large scale optimization and often used with dependent data, but our understanding of its performance is also weak. Most of the results, ie ~\cite{kushner2003stochastic} are asymptotic and do not hold for finite number of samples. Works like \cite{daskalakis2019regression,dagan2019learning} do provide stronger results but can handle only weak dependence among observations rather than the general Markovian structure. On the other hand, works such as~\cite{DuchiAJJ12} present non-asymptotic analyses, but the rates obtained are at least $\tmix$ factor worse than the rates for independent case. In fact, in general, the existing rates are no better than those obtained by a trivial SGD-Data Drop (\sgddd) algorithm which reduces the problem approximately to the i.i.d. setting by processing {\em only one} sample from each batch of $\tmix$ training points. 
 These results suggest that optimization with Markov chain data is a strictly harder problem than the i.i.d. data setting, and also SGD might not be the "correct" algorithm for this problem.  We refer to \cite{tsitsiklis1997analysis, bhandari2018finite} for similar analyses of the related TD learning algorithm widely used in reinforcement learning. 

To gain a more complete understanding of the fundamental problem of optimization with Markovian data, our work addresses the following two key questions: 1) what are the fundamental limits for learning with Markovian data and how does the performance of SGD compare, 2) can we design algorithms with better error rates than the trivial \sgddd method that throws out most of the data. 

%
%

We investigate these questions for the classical problem of linear least squares regression. We establish algorithm independent {\em information theoretic lower bounds} which show that the minimax error rates are necessarily worse by a factor of $\tmix$ compared to the i.i.d. case and surprisingly, these lower bounds are achieved by the \sgddd method. We also show that SGD is not minimax optimal when observations come with independent noise, and that SGD may suffer from constant bias when the noise correlates with the data. 

To study {\bf{Question (2)}}, we restrict ourselves to a simple Gaussian Autoregressive (AR) Markov Chain which is popularly used for modeling time series data \cite{adhikari2013introductory}. 
Surprisingly, even for this restricted Markov chain, SGD does not perform better than the \sgddd method in terms of dependence on the mixing time.
However, we show that a method similar to experience replay \cite{mnih2015human,schaul2015prioritized,andrychowicz2017hindsight}, that is popular in reinforcement learning, achieves significant improvement over SGD for this problem. To the best of our knowledge, this represents the first rigorous analysis of the experience replay technique, supporting it's practical usage. Furthermore, for a non-trivial Markov chain, this  represents first improvement over performance of \sgddd.

We elaborate more on our problem setup and contributions in the next section.

\newcommand{\splitab}[2]{\begin{tabular}{@{}c@{}}
		#1\\
		#2
\end{tabular}}


\begin{table}[t]
	\centering
	{\scriptsize 
	\begin{tabular}{|>{\centering\arraybackslash}m{2.1cm}|>{\centering\arraybackslash}m{2.2cm}|c|>{\centering\arraybackslash}m{4cm}|>{\centering\arraybackslash}m{2cm}|}
		\hline
		Setting & Algorithm & Lower/upper & Bias & Variance \\\hline
		\multirow{4}{*}{Agnostic} & Information theoretic & Lower & $\exp\left(\frac{-T}{\kappa \tmix}\right) \norm{w_0 - w^*}^2$ Theorem~\ref{thm:bias-lb} & $\frac{\tmix \sigma^2 d}{T}$ Theorem~\ref{thm:agn_minimax} \\ \cline{2-5}
		 & SGD& Lower & $\quad$ Constant $\quad$ Theorem~\ref{thm:agn_sgd}& --- \\ \cline{2-5}
		 & \sgddd & Upper & $\exp\left(\frac{-T}{\kappa \tmix }\right) \norm{w_0 - w^*}^2$ Theorem~\ref{thm:agn_sgd_dd} & $\frac{\tmix \sigma^2 d}{T}$ Theorem~\ref{thm:agn_sgd_dd} \\ \hline
		 \multirow{5}{*}{Independent} &  Information theoretic & Lower & $\exp\left(\frac{-T}{\kappa \tmix}\right) \norm{w_0 - w^*}^2$ Theorem~\ref{thm:bias-lb} & $\;\; \frac{\sigma^2 d}{T}$ \cite{van2000asymptotic} \\ \cline{2-5}
		  & SGD & Lower & --- & $\frac{ \tmix \sigma^2 d}{T}$ Theorem~\ref{thm:sgd-lb} \\ \cline{2-5}
		  & Parallel SGD & Upper & $\exp\left(\frac{-T}{\kappa \tmix }\right) \norm{w_0 - w^*}^2$ Theorem~\ref{thm:par-sgd} & $\frac{\sigma^2 d}{T}$ Theorem~\ref{thm:par-sgd} \\ \hline
	    Gaussian Autoregressive  &  SGD& Lower & $\exp\left(\frac{-T\log({d})}{\kappa\tmix}\right) \norm{w_0 - w^*}^2$ Theorem~\ref{thm:sequential_lower_bound} & $-$ \\ \cline{2-5}
		Dynamics with Independent Noise& SGD-ER (Algorithm~\ref{alg_main}) & Upper & $\exp\left(\frac{-T\log({d})}{\boldsymbol{\kappa\color{red}\sqrt{\tmix}}}\right) \norm{w_0 - w^*}^2$ Theorem~\ref{thm:exp-replay} & $\frac{\sqrt{\tmix}\sigma^2 d}{T}$ Theorem~\ref{thm:exp-replay} \\ \hline
		\end{tabular}}
		\vspace{2mm}
		\caption{{See Section~\ref{sec:main_problem} for a description of the three settings considered in this paper. We suppress universal constants and $\log$ factors in the expressions above. For linear regression with i.i.d. data, tail-averaged SGD with constant stepsize achieves minimax optimal bias and variance rates of $\exp\left(\frac{-T}{\kappa \tmix}\right) \norm{w_0 - w^*}^2$ and $\frac{\sigma^2 d}{T}$ respectively. In contrast, even the minimax rates in the general agnostic Markov chain setting are $\tmix$-factor worse, and tail-averaged SGD with constant-step size is not able to achieve these rates as well. We modify and analyze variants of SGD (i.e., \sgddd and Parallel SGD) that achieve close to minimax error rates. Finally, for the Gaussian Autoregressive Markov chain, SGD still achieves a trivial bias error rate while our proposed experience replay based SGD-ER method can decay the bias significantly faster.}
		}\label{tab:results}
		\end{table}

\subsection{Notation and Markov Chain Preliminaries}
\label{sec:notation}
In this work, $\|\cdot\|$ denotes the standard $\ell^2$ norm over $\mathbb{R}^d$. Given any random variable $X$, we use $\law(X)$ to denote the distribution of $X$. $\tv(\mu,\nu)$ denotes the total variation distance between the measures $\mu$ and $\nu$. Sometimes, we abuse notation and use $\tv(X,Y)$ as shorthand for $\tv(\law(X),\law(Y))$. We let $\kl(\mu\|\nu)$ denote the KL divergence between measures $\mu$ and $\nu$. Consider a time invariant Markov chain $\MC$ with state  space $\Omega \subset \mathbb{R}^d$ and transition matrix/kernel $P$. We assume throughout that $\MC$ is ergodic with stationary distribution  $\pi$. For $x \in \Omega$, by $P^{t}(x,\cdot)$ we mean $\law(X_{t+1}|X_1 = x)$, where $X_1, X_2, \dots, X_{t+1} \sim \MC$.

For a given Markov chain $\MC$ with transition kernel $P$ we consider the following standard measure of distance from stationarity at time $t$,
$$	\dmix(t) := \sup_{x \in \Omega} \tv(P^{t}(x,\cdot), \pi)\,.$$
We note that all irreducible aperiodic finite state Markov chains are ergodic and exponentially mixing i.e, $\dmix(t) \leq Ce^{-ct}$ for some $C,c > 0$. For a finite state ergodic Markov chain $\MC$, the \emph{mixing time} is defined as $$\tmix = \inf\{t: \dmix(t) \leq 1/4\}\,.$$ We note the standard result that $\dmix(t)$ is a decreasing function of $t$ and whenever $t = l \tmix$ for some $l \in \mathbb{N}$, we have
\begin{equation}\label{eq:binary_mixing}
\dmix(l\tmix) \leq 2^{-l}.
\end{equation} 
See Chapter 4 in \cite{levin2017markov} for further details.
\section{Problem Formulation and Main Results}
\label{sec:main_problem}
Let $X_1 \to X_2 \to \cdots \to X_T$ be samples from an irreducible Markov chain $\MC$ with each $X_t \in \Omega \subset \R^d$. Let $Y_t(X_t) \in \mathbb{R}$ be observations whose distribution depends on $X_t$ and exogenous contextual parameters (such as noise). That is, $Y_t$ is conditionally independent of $(X_s)_{s\neq t}$ given $X_t$. 
Given samples $(X_1,Y_1), \cdots, (X_T,Y_T)$, our goal is to estimate a parameter $w^*$ that minimizes the out-of-sample loss, which is the expected loss on a new sample $(X,T)$ where $X$ is drawn independently from the stationary distribution $\pi$ of $\MC$:  
\begin{align}
	w^{*} = \arg\min_{\mathbb{R}^d}\loss_{MC}(w)\,, \quad \text{where}\quad \loss_{MC}(w) \defeq \E_{X \sim \pi}\left[\left(\trans{X}w-Y\right)^2\right].\label{eqn:prob}
\end{align}

 Define $A := \mathbb{E}_{X\sim \pi}XX^{\intercal}$. Let $\|X_t\| \leq 1 $ almost surely and $A = \mathbb{E}_{X \sim \pi}XX^{\intercal}\succeq \frac{1}{\kappa}I$ for some finite `condition number' $\kappa \geq 1$, implying unique minimizer $w^*$. Also, let $\upsilon < \infty$ be such that $\mathbb{E}\left[|Y_t|^2|X_t = x\right] \leq \upsilon$ for every $X_t \in \Omega$. We define the `noise' or `error' to be $\noise_t(X_t,Y_t) := Y_t - \langle X_t , w^{*}\rangle$ and by abusing notation, we denote $\noise_t := Y_t - \langle X_t, w^{*}\rangle$. We also let $\sigma^2 \defeq \E_{X_t \sim \pi}\left[\noise_t^2\right]$. 

\subsection{Problem Settings}
\label{subsec:problem_settings}
Our main results are in the context of the following problem settings:
\begin{itemize}[leftmargin=*]
	\itemsep 0pt
	\topsep 0pt
	\parskip 0pt
	\item \textbf{Agnostic setting}:
In this setting, the vectors $X_i$ are stationary (distributed according to $\pi$) and come from a finite state space $\Omega \subseteq \mathbb{R}^d$.
	\item \textbf{Independent noise setting}: In addition to our assumptions in the agnostic setting, in this setting, we assume that $\noise_t(X)$ is an independent and identically distributed zero mean random variable with variance $\sigma^2$ for all $X \in \Omega$.
	\item \textbf{Experience Replay for the Gaussian Autoregressive Chain}: In this setting, we fix a parameter $\epsilon$ and consider the non-stationary Markov chain $X_t$ that evolves as $X_t = \sqrt{1-\epsilon^2} X_{t-1} + \epsilon g_t$, where $g_t \sim \frac{1}{\sqrt{d}}\N\left(0,I\right)$ is sampled independently for different $t$. The observations $Y_t$ are given by $\iprod{X_t}{w^*} + \xi_t$ for some fixed $w^*$, and $\xi_t$ is an independent mean 0 variance $\sigma^2$ random variable.
\end{itemize}

\subsection{Main Results}
We are particularly interested in understanding the limits (both upper and lower bounds) of SGD type algorithms, with constant step sizes, for solving~\eqref{eqn:prob}. These algorithms are, by far, the most widely used methods in practice for two reasons: 1) these methods are memory efficient, and 2) constant step size allows decreasing the error rapidly in the beginning stages and is crucial for good convergence.
In general, the error achieved by any SGD type procedure can be decomposed as a sum of two terms: \emph{bias} and \emph{variance} where the bias part depends on step size $\alpha$ and $\norm{w_1 - w^*}^2$ and the variance depends on $\sigma^2$, where $w_1$ is the starting iterate of the SGD procedure. Thus,
\begin{align}\label{eqn:bias-var}
	\loss_{MC}(w_T^{\textrm{SGD}}) - \loss_{MC}(w^*)= \loss_{MC}^{\textrm{bias}}\left(\norm{w_1 - w^*}^2\right) + \loss_{MC}^{\textrm{variance}}\left(\sigma^2\right).
\end{align}
The bias term arises because the algorithm starts at $w_0$ and needs to travel a distance of $\norm{w_1-w^*}$ to the optimum. The variance term arises because the gradients are stochastic and even if we initialize the algorithm at $w^*$, the stochastic gradients are nonzero.

\noindent We provide a brief summary of our contributions below; See Table~\ref{tab:results} for a comprehensive overview: 
\begin{itemize}[leftmargin=*]
	\itemsep 0pt
	\topsep 0pt
	\parskip 0pt
	\item For general least squares regression with Markovian data, we give information theoretic minimax lower bounds under different noise settings that show that any algorithm will suffer from slower convergence rates (by a factor of $\tmix$) compared to the i.i.d. setting (Section \ref{sec:minimax_lb}). We then show via algorithms like \sgddd and parallel SGD that the lower bounds are tight.

	\item We study lower bounds for SGD specifically and show that SGD converges at a suboptimal rate in the independent noise setting and that SGD with with constant step size and averaging might not even converge to the optimal solution in the agnostic noise setting. (Section \ref{subsec:agnostic_SGD_bias}).

	\item For Gaussian Autoregressive (AR) dynamics, we show that SGD with experience replay can achieve significantly faster convergence rate (by a factor of $\sqrt{\tmix}$) compared to vanilla SGD. This is one of the first analyses of experience replay that validates its effectiveness in practice. (Section \ref{sec:brownian_motion_section}). Simulations confirm our analysis and indicates that our derived rates are tight. 
	\end{itemize}

%
%

\section{Information Theoretic Minimax Lower Bounds for Bias and Variance}
\label{sec:minimax_lb}
We consider the class $\mathcal{Q}$ of all Markov chain linear regression problems $Q$, as described in Section~\ref{sec:main_problem}, where the following conditions hold:
\begin{enumerate}[leftmargin=*]
	\itemsep 0pt
	\topsep 0pt
	\parskip 0pt
	\item The optimal parameter has norm $\|w^{*}\| \leq 1$.
	\item Markov chain $\MC$ is such that $\tmix \leq \tau_0 \in \mathbb{N}$.
	\item The condition number $\kappa \leq \kappa_0$.
	\item Noise sequence from a noise model $\mathcal{N}$ (ex: independent noise, noiseless, agnostic etc.) 
\end{enumerate} 
We want to lower bound the minimax excess risk:
\begin{align}
\mathcal{L}(\mathcal{Q}) := \inf_{\alg \in \mathcal{A}} \sup_{Q \in \mathcal{Q}} \mathbb{E}\left[\loss_Q\bigr(\alg\left(D_Q(T)\right)\bigr)\right] - \loss_Q(w_Q^{*}) ,\label{eq:minimax_def}
\end{align}

where for a given $Q\in\mathcal{Q}$, $\loss_Q$ is the loss function with optimizer $w_Q^{*}$, and the class of algorithms $\mathcal{A} := \{\alg : (\R^d\times \R)^T \to \R^d\}$ which take as input the data $D_Q(T):= \{(X_t,Y_t): 1\leq t\leq T\}$ and output an estimate $\alg(D_Q(T))$ for $w_{Q}^{*}$.

\subsection{General Minimax Lower Bound for Bias Decay}
\label{subsec:minimax_bias}
Theorem \ref{thm:bias-lb} gives the most general minimax lower bound which holds for any algorithm, in any kind of noise setting. In particular, this gives a bound on the `bias' term in the bias-variance decomposition of SGD algorithm's excess loss \eqref{eqn:bias-var} by letting noise variance $\sigma^2 \to 0$.

\begin{theorem}\label{thm:bias-lb}
	In the definition of $\mathcal{Q}$, we let $\mathcal{N}$ be any noise model. Then, for any mixing time $\tau_0$ and condition number $\kappa_0 \geq 2$, we have: $\mathcal{L}(\mathcal{Q}) \geq \frac{\kappa_0-1}{\kappa_0^2}\left(1-\frac{C}{\tau_0 \kappa_0}\right)^T,$	where $C$ is a universal constant.
\end{theorem}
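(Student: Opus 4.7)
The plan is to apply Le Cam's two-point method on a carefully chosen pair of instances in $\mathcal{Q}$. I take $d=2$, $\Omega=\{e_1,e_2\}$ and a two-state Markov chain with transition probabilities $P(e_1\to e_2)=p$ and $P(e_2\to e_1)=(\kappa_0-1)p$ for $p=\Theta(1/(\tau_0\kappa_0))$. Its stationary distribution is $\pi(e_1)=1-1/\kappa_0$, $\pi(e_2)=1/\kappa_0$, so $A=\mathbb{E}_\pi XX^\top=\mathrm{diag}(1-1/\kappa_0,\,1/\kappa_0)$, giving $\lambda_{\min}(A)=1/\kappa_0$ and condition number exactly $\kappa_0$. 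The non-trivial eigenvalue of the transition matrix is $1-p\kappa_0$, so by~\eqref{eq:binary_mixing} the mixing time satisfies $\tmix=\Theta(1/(p\kappa_0))$, and $p$ can be tuned so that $\tmix\le\tau_0$. Since $\mathcal{Q}$ imposes no stationarity, I initialize $X_1=e_1$ deterministically.

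For the two hypotheses I take $w^*_{Q_1}=e_2$ and $w^*_{Q_2}=-e_2$, both of norm $1$ and orthogonal to $e_1$. Regardless of the noise model $\mathcal{N}$, the conditional law of $Y_t$ given $X_t=e_1$ is the same under $Q_1$ and $Q_2$, because the signal $\langle e_1,\pm e_2\rangle=0$ vanishes and the noise distribution does not depend on $w^*$. Consequently, on the event $E=\{X_t=e_1\text{ for all }1\le t\le T\}$ the joint distribution of $(X_t,Y_t)_{t=1}^T$ is identical under $Q_1$ and $Q_2$, which yields
\begin{equation*}
\tv\bigl(\law(D_{Q_1}(T)),\,\law(D_{Q_2}(T))\bigr)\;\le\;1-\mathbb{P}(E)\;=\;1-(1-p)^T.
\end{equation*}

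For the loss side, any estimator $\hat w$ satisfies
\begin{equation*}
\bigl[\loss_{Q_1}(\hat w)-\loss_{Q_1}(w^*_{Q_1})\bigr]+\bigl[\loss_{Q_2}(\hat w)-\loss_{Q_2}(w^*_{Q_2})\bigr]\;=\;2\hat w^\top A\hat w+2\,e_2^\top A e_2\;\ge\;2/\kappa_0.
\end{equation*}
On $E$ the two data laws coincide, so the estimator produces the same conditional output under $Q_1$ and $Q_2$; integrating the above inequality against this common measure gives $\mathbb{E}_{Q_1}[\mathrm{excess}_1]+\mathbb{E}_{Q_2}[\mathrm{excess}_2]\ge(2/\kappa_0)\,\mathbb{P}(E)$, and taking the maximum of the two expectations yields
$\mathcal{L}(\mathcal{Q})\ge(1/\kappa_0)(1-p)^T\ge(1/\kappa_0)\bigl(1-C/(\tau_0\kappa_0)\bigr)^T$. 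The slightly smaller prefactor $(\kappa_0-1)/\kappa_0^2=\pi(e_1)\pi(e_2)$ stated in the theorem can be recovered by a tighter bookkeeping in the Le Cam step or by a mildly asymmetric pair of hypotheses that picks up an extra $\pi(e_1)$ factor; the exponential factor $\bigl(1-C/(\tau_0\kappa_0)\bigr)^T$ is the dominant conclusion.

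The main obstacle is the quantitative mixing-time bookkeeping: I must verify that the two-state chain truly has $\tmix\le\tau_0$ and convert this into the single-step escape probability $p=\Theta(1/(\tau_0\kappa_0))$ with a controlled universal constant $C$ so that $(1-p)^T$ translates faithfully to $\bigl(1-C/(\tau_0\kappa_0)\bigr)^T$. This uses the standard spectral-gap/TV-mixing correspondence together with~\eqref{eq:binary_mixing}. The remaining ingredients---the TV bound via coupling on the event $E$ and the quadratic excess-loss calculation---are routine.
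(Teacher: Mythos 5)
Your proposal is correct and follows essentially the same route as the paper: the same two-state chain on $\{e_1,e_2\}$ with stationary law $(1-1/\kappa_0,\,1/\kappa_0)$, the same antipodal hypotheses $w^*=\pm e_2$, the same event that the chain never leaves $e_1$, and the same coupling-plus-parallelogram lower bound on the summed excess risks. The only real deviation is your deterministic start $X_1=e_1$; since the paper's agnostic class takes the $X_t$ stationary, the paper starts from $\pi$ instead, and the factor $\pi(e_1)=1-1/\kappa_0$ from that stationary start is exactly what yields the stated prefactor $(\kappa_0-1)/\kappa_0^2$, so no further Le Cam bookkeeping is needed.
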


See Appendix~\ref{app:minimax_bias} for a complete proof. Note that the bias decay rate is a $\tmix$ factor worse than that in the i.i.d. data setting \cite{jain2018parallelizing}. Furthermore, our result holds for {\em any noise} model, and for all settings of key parameters $\kappa_0$ and $\tau_0$. This implies that unless the Markov chain itself has specific structure, one cannot hope to design an algorithm with better {\em bias decay rate} than the trivial \sgddd method. Section~\ref{sec:brownian_motion_section} describes a class of Markov chains for which improved rates are indeed possible. 

\subsection{A Tight Minimax Lower Bound for Agnostic Noise Setting}
\label{subsec:minimax_variance}
We now present a minimax lower bound in the agnostic setting (Section \ref{subsec:problem_settings}). The bound analyzes the variance term $\loss_\mathcal{Q}^{\textrm{variance}}\left(\sigma^2\right)$. Again, we incur an additional, unavoidable $\tmix$ factor compared to the setting with i.i.d. samples (Table~\ref{tab:results}). 


\begin{theorem}\label{thm:agn_minimax}
	For the class of problems $\mathcal{Q}$ with noise model $\mathcal{N}$ of agnostic noise and for the class of algorithms $\mathcal{A}$ defined above, we have $\mathcal{L}(\mathcal{Q}) \geq c_1\frac{\tau_0\sigma^2 d}{T}$,
	where $T$ is the number of observed data points such that $T \geq c_2d^{2}\tau_0\sigma^2$ and $c_1,c_2$ are universal constants. 
	
	Furthermore, \sgddd achieves above mentioned rates up to logarithmic factors (Theorem \ref{thm:agn_sgd_dd}). 
\end{theorem}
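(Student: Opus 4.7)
The plan is to construct a hard subfamily inside $\mathcal{Q}$ that effectively compresses the $T$ Markov samples into only $N \asymp T/\tau_0$ genuinely informative observations, and then invoke the classical $\Omega(d\sigma^2/N)$ minimax bound for i.i.d.\ Gaussian least-squares regression. For the matching upper bound, \sgddd runs SGD on a subsample that is nearly i.i.d.\ by \eqref{eq:binary_mixing}, so the standard i.i.d.\ tail-averaged SGD analysis transfers with mild coupling loss.

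For the hard instance I would pick a discrete stationary distribution $\pi$ on $\Omega = \{\pm e_i\}_{i=1}^d$, giving $A = I/d$ and $\|X\| = 1$ a.s., and take the ``lazy'' chain $\MC^{\star}$ which stays at $X_{t-1}$ with probability $1 - 1/\tau_0$ and otherwise resamples $X_t$ freshly from $\pi$. A direct computation verifies that $\pi$ is stationary and $\tmix(\MC^{\star}) = \Theta(\tau_0)$. For the agnostic noise, set $Y_t = \langle X_t, w^{\star}\rangle + \eta_t$, where $\eta_t = \eta_{t-1}$ whenever $X_t = X_{t-1}$ and $\eta_t$ is a fresh (suitably truncated) $\mathcal{N}(0,\sigma^2)$ sample at the start of each new run; this is a valid agnostic noise model since the conditional second moment of $Y_t$ given $X_t$ stays bounded. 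The trajectory's block compression $(\tilde X_b, \tilde Y_b)_{b=1}^{N}$, one pair per maximal run of identical $X$'s, is a sufficient statistic for $w^{\star}$ with $\tilde X_b \sim \pi$ i.i.d.\ and $\tilde Y_b = \langle \tilde X_b, w^{\star}\rangle + \tilde \eta_b$, $\tilde \eta_b$ i.i.d.\ $\mathcal{N}(0,\sigma^2)$. The block count $N$ is binomial and concentrates around $T/\tau_0$ by a Chernoff bound.

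Conditional on the high-probability event $N \in [T/(2\tau_0), 2T/\tau_0]$, any algorithm for the Markov problem induces one for i.i.d.\ Gaussian regression with $N$ samples, so any lower bound for the latter carries over. A textbook calculation --- either a Bayesian prior $w^{\star} \sim \mathcal{N}(0, \rho I)$ with $\rho$ tuned so that $\|w^{\star}\| \le 1$ holds with large probability, or an Assouad reduction on the hypercube $\{\pm 1\}^d$ --- yields
\begin{align*}
\inf_{\hat w}\sup_{\|w^{\star}\|\le 1} \mathbb{E}\bigl[(\hat w - w^{\star})^{\top} A (\hat w - w^{\star})\bigr] \;\ge\; c\,\frac{d\sigma^2}{N},
\end{align*}
as soon as $N \gtrsim d$. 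Substituting $N \asymp T/\tau_0$ gives the desired $\Omega(\tau_0 \sigma^2 d / T)$ lower bound on $\mathcal{L}(\mathcal{Q})$; the quantitative assumption $T \ge c_2 d^2 \tau_0 \sigma^2$ absorbs the $N \gtrsim d$ regime requirement, the concentration of $N$, and the sample size needed for the empirical Fisher information to be close to $A$. The matching \sgddd upper bound stated in the theorem follows from the total-variation coupling guaranteed by \eqref{eq:binary_mixing} together with the classical i.i.d.\ tail-averaged SGD variance rate $O(d\sigma^2/N')$ on the subsample of size $N' = T/\tmix$.

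The main obstacle is the information-theoretic reduction: one must argue cleanly that in the agnostic model the block-compressed pairs truly are a sufficient statistic, so that an algorithm cannot exploit the $\tau_0$-fold within-block observations for any nontrivial gain. The repeated-$\eta$ construction is precisely what enforces this, but formalizing it requires carefully combining the randomness of the block count $N$, the mild Gaussian noise truncation (which must be chosen to degrade the $d\sigma^2/N$ rate by at most a constant), and the conditioning on the good event for $N$. Once this is in place, the rest is a direct appeal to the classical i.i.d.\ minimax lower bound and poses no further difficulty.
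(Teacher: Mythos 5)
Your route is genuinely different from the paper's. The paper never reduces to i.i.d.\ regression with fewer samples: it fixes the observation rule to be \emph{deterministic given the state} ($Y_t\equiv 1$ on a $2d$-point state space of $\pm e_i$'s) and instead perturbs the \emph{transition matrix} across a hypercube family $\MC_{\bI}$, $\bI\in\{0,1\}^d$, so that $w^*_{\bI}$ depends on the stationary distribution through a $\delta$-perturbation; a reverse-Pinsker/KL computation (Lemma~\ref{lem:adjacent_MC_tv_bound}) shows trajectories of chains differing in one coordinate are indistinguishable unless $\delta\gtrsim\sqrt{d\epsilon/T}$, and an Assouad-type average over coordinates (Lemma~\ref{lem:iterative_expectation_bound}) yields $\Omega(\delta^2/\epsilon^2)=\Omega(\tau_0\sigma^2 d/T)$. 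There the hardness is ``you cannot learn the chain itself fast when it mixes slowly,'' and the construction sits unambiguously inside the stated model since the noise is a deterministic function of the current state. Your construction fixes a single lazy chain and puts all the hardness into noise that is held constant along each excursion, then argues sufficiency of the block-compressed data and quotes the classical $d\sigma^2/N$ bound with $N\asymp T/\tau_0$; the arithmetic, the role of the condition $T\gtrsim d^2\tau_0\sigma^2$, and the concentration of $N$ all check out, and your reduction is cleaner and more modular than the paper's KL computation.

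The one load-bearing point you must defend is admissibility of your noise in the paper's agnostic class. Your noise refreshes exactly when the state changes, so $\eta_t$ is a function of the \emph{pair} $(X_{t-1},X_t)$ (plus exogenous Gaussians), not of $X_t$ alone. The literal condition stated in Section~\ref{sec:main_problem} ($Y_t$ conditionally independent of $(X_s)_{s\neq t}$ given $X_t$, bounded conditional second moment) is satisfied, since conditionally on the whole trajectory each $Y_t$ is $\N(\langle X_t,w^*\rangle,\sigma^2)$; but the paper's phrasing ``exogenous contextual parameters'' and its own usage (noise written as $n_t(X_t)$, and the \sgddd{} coupling argument evaluating $Y_t(\cdot)$ at a coupled state) suggest a stricter reading in which the noise process is independent of the chain's evolution, under which your instance is excluded. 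This is not a cosmetic issue: if you try to exogenize it by refreshing the noise on a deterministic schedule of length $\tau_0$, an algorithm can difference two observations within one noise block that straddle a state change and learn $\langle X-X',w^*\rangle$ noiselessly, so the trajectory-adapted refresh is essential to your lower bound. You should state explicitly which noise class you are working in and verify membership; the paper's construction sidesteps this entirely. Two smaller points: your block covariates $\tilde X_b$ are not i.i.d.\ from $\pi$ as claimed (consecutive distinct states cannot repeat, since you tie refreshes to state changes) --- harmless for the Assouad argument over coordinates but it should be stated correctly, or fixed by refreshing at resample events; and the ``furthermore'' clause is simply the paper's Theorem~\ref{thm:agn_sgd_dd}, which you correctly cite rather than reprove.
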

This result combined with Theorem~\ref{thm:bias-lb}, implies that for general MC in agnostic noise setting, both the bias and the variance terms suffer from an additional $\tau_0$ factor. Our proof shows existence of two different MCs whose evolution till time $T$ can be coupled with high probability and hence they give the same sequence of data. But, since the chains are different and the noise is agnostic, the corresponding optimum parameters $w^{*}$ are different. See Appendix~\ref{app:minimax_var} for a detailed proof. 

\subsection{A Tight Minimax Lower Bound for Independent Noise Setting}
\label{subsec:minimax_independent_noise}
We now discuss the variance lower and upper bound for the general Markov Chain based linear regression when the noise is {\em independent} (Section~\ref{subsec:problem_settings}). 
\begin{theorem}\label{thm:ind_minimax}
For the class of problems $\mathcal{Q}$ with noise model $\mathcal{N}$ of independent noise and for the class of algorithms $\mathcal{A}$ defined above, we have $\mathcal{L}(\mathcal{Q}) \geq \frac{d\sigma^2}{T}$. This bound is tight up to logarithmic factors since `Parallel SGD' achieves the rates established above. (Theorem \ref{thm:par-sgd}, Section \ref{subsec:parsgd})
 \end{theorem}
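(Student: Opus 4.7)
The matching upper bound is given by Parallel SGD in Theorem~\ref{thm:par-sgd}, so the only task is the lower bound $\mathcal{L}(\mathcal{Q}) \geq d\sigma^2/T$. My plan is a routine reduction to i.i.d.\ linear regression: since $\mathcal{Q}$ only imposes $\tmix \leq \tau_0$, it contains every i.i.d.\ instance (take the transition kernel $P(x,\cdot) \equiv \pi$, for which $\tmix = 1$), and for $d$-dimensional linear regression with independent noise of variance $\sigma^2$ the minimax rate is classically $\Theta(d\sigma^2/T)$, so the bound transfers directly to $\mathcal{Q}$.

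To make this concrete I would pick the hard instance $\pi = \unif(\{\pm e_i\}_{i=1}^d)$, which gives $\|X_t\| = 1$ a.s.\ and $A = I/d$, take Gaussian noise $\noise_t \sim \mathcal{N}(0,\sigma^2)$ (which lies in the independent-noise model), and consider the hypercube family $\{w^*_\epsilon = c\epsilon : \epsilon \in \{-1,+1\}^d\}$ with $c^2 \asymp d\sigma^2/T$ to be tuned. The excess risk then splits coordinatewise as $\loss_{MC}(\hat w) - \loss_{MC}(w^*_\epsilon) = \tfrac{1}{d}\|\hat w - w^*_\epsilon\|^2$, and rounding $\hat\epsilon_i = \mathrm{sgn}(\hat w_i)$ yields the Assouad-style separation $\loss_{MC}(\hat w) - \loss_{MC}(w^*_\epsilon) \geq (c^2/d) \cdot d_H(\hat\epsilon,\epsilon)$, where $d_H$ is Hamming distance.

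The KL calculation is standard: for $\epsilon$ and its one-coordinate neighbor $\epsilon^{(i)}$ the conditional mean of $Y_t$ differs by $\pm 2c$ exactly on the event $X_t \in \{\pm e_i\}$ (probability $1/d$), so under Gaussian noise $\kl(P_\epsilon^T \| P_{\epsilon^{(i)}}^T) \leq 2Tc^2/(d\sigma^2)$. Choosing $c^2 = d\sigma^2/(16T)$ keeps the KL below $1/8$, which by Pinsker bounds every neighbor TV by $1/4$, and Assouad's lemma then delivers $\inf_{\hat\epsilon}\max_\epsilon \mathbb{E}[d_H(\hat\epsilon,\epsilon)] \gtrsim d$, giving $\mathcal{L}(\mathcal{Q}) \gtrsim c^2 = \Omega(d\sigma^2/T)$.

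The main obstacle is bookkeeping rather than a deep step: I need to check the construction lies in $\mathcal{Q}$. The constraint $\|w^*_\epsilon\| = c\sqrt{d} \leq 1$ translates to $T \gtrsim d^2/\sigma^2$, below which the claimed rate already exceeds $\|w^*\|^2$ and the bound is vacuous; and the condition $\kappa = d \leq \kappa_0$ is automatic since the assumptions $\|X\| \leq 1$ a.s.\ and $A \succeq I/\kappa_0$ together with $\mathrm{tr}(A) \leq 1$ force $\kappa_0 \geq d$ for any valid problem in $\mathcal{Q}$ to begin with.
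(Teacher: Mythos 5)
Your proposal is correct and rests on the same key observation the paper uses: the class $\mathcal{Q}$ contains i.i.d.\ instances (take $P(x,\cdot)\equiv\pi$, so $\tmix=1\le\tau_0$), hence the classical $d\sigma^2/T$ lower bound for i.i.d.\ random-design regression transfers to $\mathcal{Q}$, and the matching upper bound is exactly Theorem~\ref{thm:par-sgd}. Where you diverge is in how the i.i.d.\ lower bound itself is established: the paper simply cites the exact minimax result for i.i.d.\ samples (Theorem 1 of \cite{mourtada2019exact}), whereas you prove it from scratch via a hypercube/Assouad construction with $\pi=\unif\{\pm e_i\}_{i=1}^d$, Gaussian noise, and the per-coordinate KL bound $2Tc^2/(d\sigma^2)$. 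Your route is more self-contained and, usefully, checks membership in $\mathcal{Q}$ as defined here (finite state space, $\|w^*\|\le 1$, $\kappa=d\le\kappa_0$ being forced by $\mathrm{Tr}(A)\le 1$), which a bare citation glosses over; the price is that you obtain $\mathcal{L}(\mathcal{Q})\ge c\, d\sigma^2/T$ for a universal constant $c<1$ rather than the constant $1$ written in the theorem, and only under a sample-size condition. On that condition, note a small slip: $c^2=d\sigma^2/(16T)\le 1/d$ requires $T\gtrsim d^2\sigma^2$, not $T\gtrsim d^2/\sigma^2$; some restriction of this kind (analogous to the $T\ge c_2 d^2\tau_0\sigma^2$ hypothesis in Theorem~\ref{thm:agn_minimax}) is in any case implicitly needed for the statement, since with $\|w^*\|\le 1$ and $\|X\|\le 1$ the minimax excess risk is trivially bounded by a constant, so the claimed rate cannot hold for very small $T$. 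With these caveats your argument is sound and recovers the theorem up to universal constants.
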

 Note that the lower bound follows directly from the classical iid samples case (Theorem 1 in \cite{mourtada2019exact}) apply. For upper bound, we propose and study a Parallel SGD method discussed in detail in Appendix~\ref{app:ind_psgd_var}. Interestingly, SGD with constant step size, which is minimax optimal for i.i.d samples with independent noise, is not minimax optimal when the samples are Markovian. We establish this fact and others in the next section in our study of SGD.
\section{Sub-Optimality of SGD}
\label{sec:sgd_optimal}
In previous section, we presented information theoretic limits on the error rates of {\em any} method when applied to the general Markovian data, and presented algorithms that achieve these rates. However, in practice, SGD is the most commonly used method for learning problems. So, in this section, we specifically analyze the performance of constant step size SGD on Markovian data. Somewhat surprisingly, SGD shows a sub-optimal rates for both independent and agnostic noise settings. 

\begin{algorithm}
\caption{SGD with tail-averaging}
\label{alg_SGD}
\begin{algorithmic}
\REQUIRE  $T \in \mathbb{N}$ , samples $(X_1,Y_1),\dots, (X_T,Y_T) \in  \R^d\times \R$, step size $\alpha > 0$, initial point $w_1 \in \R^d$.

\FOR{t in range [1, $T$]}
\STATE  Set $w_{t+1}   \leftarrow w_t - \alpha (X_tX_t^{\intercal}w_t - X_tY_t)\,.$
\ENDFOR

\RETURN $\hat{w} \leftarrow \frac{1}{T - \lfloor T/2\rfloor}\sum_{t = \lfloor T/2\rfloor+1}^{T}w_t \,.$
\end{algorithmic}
\end{algorithm}

\paragraph{SGD with Constant Step Size is Asymptotically Biased in the Agnostic Noise Setting}
\label{subsec:agnostic_SGD_bias}
It is well known that when data is iid, the expected iterate of Algorithm \ref{alg_SGD}, $\E\left[w_t\right]$ converges to $w^*$ as $t \rightarrow \infty$ in any noise setting. However, this does not necessarily hold when the data is Markovian. When the noise in each observation $\noise_t(X)$ depends on $X$, SGD with constant step size may yield iterates that are biased estimators of the parameter $w^{*}$ even as $t \to \infty$. In this case, even tail-averaging such as in Algorithm \ref{alg_SGD}, cannot resolve this issue. See Appendix~\ref{app:sgd_bias} for the detailed proof.


\begin{theorem}\label{thm:agn_sgd}
	There exists a finite Markov chain $\MC$ with  $\tmix, \kappa < C$
 	$X_0 \sim \pi(\MC)$ and $X_1 \to X_2 \to \cdots \to X_T \sim \MC$, SGD (Algorithm~\ref{alg_SGD}) run with any constant step size $\alpha>0$ leads to a constant bias, i.e., for every $t$ large enough, $$\|\mathbb{E}[w_t]-w^*\|\geq c \alpha,$$ where $w_t$ is the $t$-th step iterate of the SGD algorithm. (Where $c,C >0$ are universal constants.)
\end{theorem}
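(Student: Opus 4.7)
The plan is to exhibit a concrete two-state finite Markov chain in $d = 1$ with constant mixing time and condition number, together with deterministic, state-dependent (hence agnostic) observations, for which the SGD iterate has a stationary mean bounded away from $w^*$ by $\Theta(\alpha)$. For instance, take $\Omega = \{x_1, x_2\} \subset [-1, 1]$ with symmetric transition matrix $P_{11} = P_{22} = 3/4$, $P_{12} = P_{21} = 1/4$ (so $\pi = (1/2, 1/2)$, $\tmix = O(1)$, $\kappa = O(1)$), and set $Y_t = y_{X_t}$ deterministically with $y_1 \neq y_2$, which is a valid agnostic noise model. Then $w^* = \E[XY]/\E[X^2]$ is an explicit constant independent of $\alpha$.

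Because the joint process $(X_t, w_t)$ is itself Markov, I would track the conditional means $\mu_i(t) := \E[w_t \mid X_t = x_i]$. Using the SGD update $w_{t+1} = (1 - \alpha X_t^2) w_t + \alpha X_t Y_t$ together with the fact that $X_{t+1}$ is independent of $w_t$ given $X_t$, one derives the affine recursion
\[
\mu_i(t+1) \;=\; \sum_j \frac{\pi_j P_{ji}}{\pi_i} \bigl[(1 - \alpha x_j^2)\mu_j(t) + \alpha x_j y_j\bigr],
\]
equivalently $\mu(t+1) = M(\alpha)\mu(t) + \alpha b$ with $M(\alpha) = P(I - \alpha\,\mathrm{diag}(x_j^2))$ by reversibility. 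Solving the fixed point $(I - M(\alpha))\mu^\star = \alpha b$ and expanding to leading order in $\alpha$ yields $\mu_i^\star = w^* + \alpha \gamma_i + O(\alpha^2)$; a direct computation for, e.g., $x_1 = 1/2$, $x_2 = 1$, $y_1 = 1$, $y_2 = 0$ gives $\pi_1 \gamma_1 + \pi_2 \gamma_2 \neq 0$, so that $\sum_i \pi_i \mu_i^\star = w^* + c\alpha + O(\alpha^2)$ for an explicit nonzero constant $c$.

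To conclude, I would show that the recursion contracts geometrically: $M(\alpha) = P - \alpha P\,\mathrm{diag}(x_j^2)$ is an $O(\alpha)$ perturbation of the stochastic matrix $P$ (whose eigenvalues are $1$ and $1/2$ for the chain above), and standard first-order perturbation theory gives perturbed eigenvalues $1 - \Theta(\alpha)$ and $1/2 - O(\alpha)$, both of modulus strictly less than $1$ for sufficiently small $\alpha$. Hence $\mu(t) \to \mu^\star$ exponentially, and since the chain is started at $\pi$, $\E[w_t] = \sum_i \pi_i \mu_i(t) \to w^* + c\alpha + O(\alpha^2)$. This yields $|\E[w_t] - w^*| \geq c'\alpha$ for all sufficiently large $t$, and (though not required by the theorem) tail-averaging inherits the same limiting bias.

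The main technical point is showing that the perturbation $\mu^\star - w^* \mathbf{1}$ of the singular equation $(I - P)\mu = 0$ is non-degenerate and produces a $\Theta(\alpha)$ bias in the $\pi$-weighted average; this is handled by the explicit two-state computation. Conceptually, Markovian dependence makes $X_t$ correlated with $w_t$ through their shared history, so the iid-analysis identity $\E[X_t(X_t w_t - Y_t)] = 0$ at $w_t = w^*$ fails by a quantity of order $\alpha$, producing the irreducible bias that constant-step-size tail-averaged SGD cannot eliminate.
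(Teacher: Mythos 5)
Your proposal is correct and shares the paper's overall skeleton (a concrete two-state chain with deterministic, state-dependent responses, reduction to a $2\times 2$ linear system for conditional means of the iterate, and an explicit $\Theta(\alpha)$ limiting bias), but the technical route differs in how stationarity of the conditional means is established. The paper conditions on the last used sample, $e_k=\E[w_t\mid X_{t-1}=k]$, and proves these are \emph{approximately} stationary by coupling the chain with its time-shifted copy, which yields exponentially small slack terms $\lambda_k$ that are then carried through an approximate linear system. You instead derive an \emph{exact} affine recursion $\mu(t+1)=M(\alpha)\mu(t)+\alpha b$ with $M(\alpha)=P(I-\alpha\,\mathrm{diag}(x_j^2))$, using stationarity plus the Markov property to justify $\E[w_t\mid X_t=x_j,X_{t+1}=x_i]=\E[w_t\mid X_t=x_j]$ and Bayes/reversibility for the backward transition probabilities, and then conclude by spectral contraction of $M(\alpha)$; this is cleaner and avoids the coupling bookkeeping, at the mild cost of relying on reversibility/stationarity of the specific chain (which your construction satisfies). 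Your example does check out: solving the $2\times2$ fixed point exactly gives $\mu_1^\star=\frac{2+4\alpha}{5+2\alpha}$, $\mu_2^\star=\frac{2}{5+2\alpha}$, hence $\E[w_t]\to\frac{2+2\alpha}{5+2\alpha}$ while $w^*=2/5$, i.e.\ bias $\frac{6\alpha}{5(5+2\alpha)}$, matching your claimed first-order coefficient $\pi^\intercal\gamma=6/25$; this is the same structure as the paper's $\frac{9\alpha}{10(2\alpha+5)}$.

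Two small points to tighten. First, the naive expansion of $(I-M(\alpha))\mu^\star=\alpha b$ is a singular perturbation: the order-$\alpha$ equation $(I-P)\gamma=P(v-w^*\,\mathrm{diag}(x_j^2)\mathbf{1})$ only determines $\gamma$ modulo $\mathbf{1}$, and the component $\pi^\intercal\gamma$ you need is fixed by the next-order solvability condition $\pi^\intercal\mathrm{diag}(x_j^2)\gamma=0$; since the system is $2\times 2$ it is cleanest to just solve it exactly, as above. Second, both your claim and the theorem as stated should be read for $\alpha$ in a bounded range (say $\alpha\in(0,1]$, as in the paper's own proof): the exact bias saturates at a constant for huge $\alpha$, and for $\alpha\le 1$ you get the contraction for free since $M(\alpha)$ is then entrywise nonnegative with row sums at most $1-\alpha/4$, so its spectral radius is strictly below $1$ without invoking eigenvalue perturbation theory.
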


\paragraph{SGD in the Independent Noise Setting is not Minimax Optimal (Appendix~\ref{app:sgd_lb})}
\label{subsec:minimax_sgd_constant_step_independent_noise}

Let $\textrm{SGD}_{\alpha}$ be the SGD algorithm with step size $\alpha$ and tail averaging (Algorithm~\ref{alg_SGD}). For $X_1 \to \dots \to X_T \sim \MC_0$, we denote the output of $\textrm{SGD}_{\alpha}$ corresponding to the data $D_{0}(T) := (X_t,Y_t)_{t=1}^{T}$ by $\textrm{SGD}_{\alpha}(D_{0}(T))$.  We let $w_{0}^{*}$ to be the optimal parameter corresponding to the regression problem. We have the following lower bound.
\begin{theorem}\label{thm:sgd-lb}
For every $\tau_0, d \in \mathbb{N}$, there exists a finite state Markov Chain, $\MC_0$ and associated independent noise observation model (see Section \ref{subsec:problem_settings}) with points in $\R^d$, mixing time at most $\tau_0$ and $\|w_{\MC_0}^{*}\| \leq 1$, such that: 
$ \mathbb{E}\loss_{\MC_0}(\textrm{SGD}_{\alpha}(D_{0}(T))) - \loss_{\MC_0}(w_0^{*}) \geq \big(1-o_T(1)\big)\frac{c^{\prime}\alpha\tau_0 \sigma^2 d}{T},
$	where $c^{\prime}$ is a universal constant and $o_T(1) \to 0$ exponentially in $T$ and $\sigma^2 = \mathbb{E}n_t^2$ is the noise variance. 
\end{theorem}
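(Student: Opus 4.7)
The plan is to exhibit a Markov chain whose trajectory contains long stretches during which a given coordinate of the SGD iterate is frozen, so that the per-step stationary variance $\Theta(\alpha\sigma^2)$ of SGD cannot be averaged away at the optimal $1/T$ rate. I take $\MC_0$ to be the lazy random walk on $\Omega_0 = \{e_1,\ldots,e_d\} \subset \R^d$ (standard basis vectors) that stays at its current state with probability $1-c/\tau_0$ and otherwise jumps to a uniformly random state in $\Omega_0$, with $c$ a small universal constant chosen so that $\tmix(\MC_0) \leq \tau_0$. The stationary distribution is uniform on $\Omega_0$, giving $A = \E_\pi XX^\intercal = \tfrac{1}{d}I$. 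I set $w_{\MC_0}^{*} = 0$ and $Y_t = \noise_t$ with $\{\noise_t\}$ i.i.d., mean $0$, variance $\sigma^2$, so every hypothesis of Section~\ref{sec:main_problem} is satisfied with $\|w_{\MC_0}^{*}\| = 0 \leq 1$.

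Since $X_tX_t^\intercal = e_{i_t}e_{i_t}^\intercal$ is a rank-one projector, the SGD recursion decouples coordinatewise: $w_{t+1}(i) = (1-\alpha)w_t(i) + \alpha\noise_t$ when $i_t = i$ and $w_{t+1}(i) = w_t(i)$ otherwise. Each coordinate therefore behaves like an AR(1) process run only on the subsequence of visits to $e_i$, held constant in between. A standard stationary analysis (conditional on the chain trajectory), combined with the memoryless visit length, gives after sufficient burn-in the identity $\mathrm{Var}(w_t(i)) = \alpha\sigma^2/(2-\alpha)$ at every $t$. Crucially, during the inter-visit gap of length $G_k^{(i)}$ following the $k$-th visit to $e_i$, the iterate $w_t(i)$ is frozen, so for any two times $s,t$ lying in this common gap, $\mathrm{Cov}(w_s(i),w_t(i)) = \mathrm{Var}(w_s(i))$.

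Let $S_i \defeq \sum_{t=T/2+1}^T w_t(i)$ and expand $\mathrm{Var}(S_i) = \sum_{s,t \in (T/2,T]}\mathrm{Cov}(w_s(i),w_t(i))$. The AR(1) update has positive coefficient $1-\alpha$, so \emph{all} pairwise covariances are nonnegative; restricting the sum to pairs $(s,t)$ inside a common inter-visit gap yields the clean lower bound $\mathrm{Var}(S_i) \geq \mathrm{Var}(w_t(i)) \cdot \sum_k (G_k^{(i)})^2$. A renewal-theoretic computation using the geometric return-time distribution of $\MC_0$ to $e_i$ shows $\E[G_k^{(i)}] = \Theta(d\tau_0)$ and $\E[(G_k^{(i)})^2] = \Theta((d\tau_0)^2)$, and the number of gaps contained in the tail half concentrates at $\Theta(T/(d\tau_0))$ with probability $1 - e^{-c_1 T/(d\tau_0)}$, giving $\E\sum_k (G_k^{(i)})^2 = \Theta(Td\tau_0)$. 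Therefore $\mathrm{Var}(\hat w(i)) = (4/T^2)\mathrm{Var}(S_i) \gtrsim \alpha\sigma^2 d\tau_0/T$, and summing over the $d$ coordinates via $\loss_{\MC_0}(\hat w) - \loss_{\MC_0}(w^*) = \hat w^\intercal A \hat w = \tfrac{1}{d}\|\hat w\|^2$ delivers the desired excess-loss lower bound of order $\alpha\tau_0\sigma^2 d/T$.

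The main technical obstacle is handling the coupling between the random chain trajectory and the iterate variance: verifying that enough visits to each $e_i$ occur in $[1,T/2]$ so that the stationary variance $\alpha\sigma^2/(2-\alpha)$ is reached (requiring roughly $T \gg d\tau_0\log(1/\alpha)$), verifying that the gap count and the sum $\sum_k(G_k^{(i)})^2$ concentrate around their expectations in the tail, and justifying that all cross-gap covariances have the correct (nonnegative) sign. The first two are handled via standard Chernoff bounds on visit counts of $\MC_0$ together with concentration for sums of squared geometric return times, which together produce the $(1-o_T(1))$ factor exponentially small in $T$; the third is immediate from the AR(1) structure of the visit-indexed process with positive coefficient $1-\alpha \in (0,1)$.
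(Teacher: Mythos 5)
Your proposal is correct and takes essentially the same route as the paper: the same lazy random walk on the standard basis vectors with $w^{*}=0$ and i.i.d.\ noise, the same coordinatewise decoupling of the SGD recursion, and the same mechanism whereby the per-coordinate stationary variance $\approx \alpha\sigma^2/(2-\alpha)$ persists across $\Theta(d\tau_0)$-length no-visit stretches, inflating the variance of the tail average by a $d\tau_0$ factor. The only difference is bookkeeping: you sum squared inter-visit gap lengths via a renewal argument, whereas the paper sums the pairwise covariances directly, weighting by the no-visit probability $(1-\tfrac{\epsilon}{d-1})^{s-t}$ and controlling the conditional second moment with a concentration lemma --- two equivalent computations of the same quantity.
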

The above result shows that while SGD with constant step size and tail averaging is minimax optimal in the independent data setting, it's variance rate is $\tau_0$ factor sub-optimal in the setting of Markovian data and independent noise. It is also $\tau_0$ factor worse compared to the rate established in Theorem~\ref{thm:ind_minimax}.
%
%
%


\section{Experience Replay for Gaussian Autoregressive (AR) Dynamics}
\label{sec:brownian_motion_section}
Previous two sections indicate that for worst case Markov chains, \sgddd, despite wasting most of the samples, might be the best algorithm for Markovian data. This naturally seems quite pessimistic, as in practice, approaches like experience replay are popular \cite{sutton2018reinforcement}. In this section, we attempt to reconcile this gap by considering a restricted but practical  Markov Chain (Gaussian AR chain) that is used routinely for time-series modeling \cite{adhikari2013introductory} and intuitively seems quite related to the type of samples we can expect in reinforcement learning (RL) problems. Interestingly, even for this specific chain, we show that SGD's rates are no better than the \sgddd method. On the other hand, an experience replay based SGD method (Algorithm~\ref{alg_main}) is able to give significantly faster rates, thus supporting it's usage in practice. More details and proofs are found in Section \ref{app:gaussian_ar}. 

Suppose our sample vectors $X \in \R^d$ are generated from a Markov chain (MC) with the following dynamics: \vspace*{-5pt}
\begin{equation}\label{eq:gaussar}X_1 = G_1, \cdots, X_{t+1} = \sqrt{1-\eps^2}X_t + \eps G_{t + 1},\cdots,\end{equation}
where $\eps$ is fixed and known, and each $G_j$ is independently sampled from $ \frac{1}{\sqrt{d}} \N(0,\id_d)$. Each observation $Y_i = X^T_i w^* + \xi_i$, where the noise $\xi_i$ is independently drawn with mean 0 and variance $\sigma^2$. That is, every new sample in this MC is a random perturbation from a fixed distribution of the previous sample, which is intuitively similar to the sample generation process in RL. 

The mixing time of this Markov chain is $\tmix = \Theta \left(\frac{1}{\epsilon^2}\log({d})\right)$ (Lemma \ref{lem:brownian_mixing}, Section~\ref{sec:upper_bound_formulation}). Also, the covariance matrix of the stationary distribution is $\frac{1}{d}\id_d$, so the condition number of this chain is $\kappa = d$.

%
%
\subsection{Lower Bound for SGD with Constant Step Size}
We first establish a lower bound on the rate of bias decay for SGD with constant step size for this problem, which will help demonstrate that experience replay is effective in making SGD iterations more efficient. 
\begin{theorem}[Lower Bound for SGD with constant step size for Gaussian AR Chain]
	\label{thm:sequential_lower_bound}
	$\loss(w^{\bias}) \geq \Omega\left(\exp\left(\frac{-T\log({d})}{\kappa\tmix}\right) \norm{w_0 - w^*}^2\right)$. Recall that $\kappa=d$ for MC in \eqref{eq:gaussar}
\end{theorem}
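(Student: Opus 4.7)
The starting observation is the rotational invariance of the Gaussian AR dynamics: the joint law of $(X_1,\dots,X_T)$ is preserved by every orthogonal transformation $R$, because $X_1$ and each innovation $g_t$ are isotropic Gaussians. Setting $\Pi_T \defeq \prod_{t=T-1}^{1}(\bI-\alpha X_t X_t^{\top})$, the expectation $\E[\Pi_T]$ then commutes with every $R$ and must equal $m_T \bI$ for some scalar $m_T \in [-1,1]$. With the noise turned off, the bias iterate satisfies $w_T-w^* = \Pi_T(w_0-w^*)$, so $\E[w_T-w^*] = m_T(w_0-w^*)$, and Jensen's inequality gives $\E\|w_T-w^*\|^2 \geq m_T^2\,\|w_0-w^*\|^2$. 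Since $\kappa=d$ and $\tmix = \Theta(\log d/\epsilon^2)$, the target bound $\exp(-T\log d/(\kappa\tmix))\,\|w_0-w^*\|^2$ reduces to showing that $m_T^2 \geq \exp(-C T\epsilon^2/d)$ for every constant step size $\alpha>0$.

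Next I would close a small linear system for $m_t$ together with the auxiliary isotropic scalar $p_t \defeq \tfrac{1}{d}\,\E\,\tr(X_t X_t^{\top}\Pi_t)$. Starting from $\E[\Pi_{t+1}] = \E[\Pi_t] - \alpha\,\E[X_t X_t^{\top}\Pi_t]$ and using the Gaussian AR conditional $\E[X_t X_t^{\top}\mid X_{t-1}] = (1-\epsilon^2)X_{t-1}X_{t-1}^{\top} + (\epsilon^2/d)\bI$ together with the collapse $X_{t-1}X_{t-1}^{\top}(\bI-\alpha X_{t-1}X_{t-1}^{\top}) = (1-\alpha\|X_{t-1}\|^2)X_{t-1}X_{t-1}^{\top}$, one obtains a recursion that also contains the fourth-moment quantity $\E[\|X_{t-1}\|^2 X_{t-1}X_{t-1}^{\top}\Pi_{t-1}]$. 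Evaluating this term via Isserlis/Wick for the Gaussian $X_{t-1}$ closes the system and yields a small transfer matrix $M(\alpha)$ whose determinant is $(1-\epsilon^2)(1-\alpha)^2$ up to $O(1/d)$ corrections and whose trace is approximately $1+(1-\epsilon^2)(1-\alpha)^2$. A direct eigenvalue computation then shows that for every admissible $\alpha$ the top eigenvalue of $M(\alpha)$ satisfies $\lambda_{\max}(M(\alpha)) \geq 1 - C\epsilon^2/d$, with the sharp rate attained near $\alpha \approx 1$ where the two eigenvalues coalesce and give $\lambda_{\max}(M(1)) = 1 - \epsilon^2/d + O(1/d^2)$. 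Iterating, $|m_T| \gtrsim (1-C\epsilon^2/d)^T \geq \exp(-CT\epsilon^2/d)$, which is the claim.

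The main obstacle will be closing the recursion rigorously. The naive first-order approximation that replaces $\|X_t\|^2$ by its mean $1$ produces a transfer matrix whose top eigenvalue can numerically exceed $1$, which contradicts the a priori bound $|m_T| \leq 1$; so the Gaussian fluctuations of $\|X_t\|^2$ (variance $2/d$) must be retained through Wick contractions in order to get a quantitatively correct eigenvalue. Carrying out this bookkeeping while keeping the system finite-dimensional, and verifying that the resulting lower bound on $\lambda_{\max}(M(\alpha))$ is uniform over the full stability range of $\alpha$, is the principal technical hurdle; everything else (the opening Jensen step, the final exponentiation of the eigenvalue bound, and the substitution $\kappa=d$, $\tmix=\Theta(\log d/\epsilon^2)$) is routine.
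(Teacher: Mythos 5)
Your reduction via rotational invariance and Jensen (so that $\E\|w_T-w^*\|^2 \geq \|\E[\Pi_T](w_0-w^*)\|^2 = m_T^2\|w_0-w^*\|^2$ with $\E[\Pi_T]=m_T \bI$) is sound as far as it goes, but the heart of the theorem is then the claim $|m_T| \gtrsim \exp(-CT\epsilon^2/d)$ uniformly over constant step sizes, and your plan does not actually establish it. The recursion you propose does not close: $\E[X_{t+1}X_{t+1}^\top \Pi_{t+1}]$ produces $\E[\|X_t\|^2 X_tX_t^\top\Pi_t]$, and because $X_t$ is correlated with the factor $(\bI-\alpha X_{t-1}X_{t-1}^\top)$ inside $\Pi_t$ (and, through it, with all earlier factors), conditioning generates an infinite hierarchy of mixed moments $\E[\|X_t\|^{2k} X_tX_t^\top \Pi_t]$ rather than a finite linear system in $(m_t,p_t)$. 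The Wick/Isserlis step you invoke to truncate this hierarchy is exactly the uncontrolled approximation, and you yourself observe that the naive truncation yields a transfer matrix with top eigenvalue exceeding $1$, contradicting $|m_T|\le 1$; asserting that "retaining the fluctuations of $\|X_t\|^2$" fixes this, and that $\lambda_{\max}(M(\alpha))\ge 1-C\epsilon^2/d$ uniformly in $\alpha$, is precisely the unproved content. There is also a structural risk in routing the bound through the mean: $\E[\Pi_T]$ is an expectation of a product of dependent matrices, and ruling out cancellations that make $m_T$ decay much faster than the typical size of $\Pi_T(w_0-w^*)$ is itself delicate; Jensen is a valid but potentially very lossy step, and nothing in the proposal certifies that it is not lossy here.

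For comparison, the paper's proof avoids expected products entirely and works with second moments, which cannot cancel: it tracks $\alpha_t = X_t^\top(w_t-w^*)$ and $\gamma_t = \|w_t-w^*\|$, uses $X_{t+1}=\sqrt{1-\epsilon^2}X_t+\epsilon G_{t+1}$ together with norm/inner-product concentration to show $\E[\alpha_{t+1}^2] \le (1-\zeta)\E[\alpha_t^2] + \tfrac{\epsilon^2}{d}\E[\gamma_t^2]$, and observes that each SGD step can shrink the error only through the aligned component, $\E[\gamma_{t+1}^2] \ge \E[\gamma_t^2] - O(\eta)\E[\alpha_t^2]$. Unrolling gives $\E[\gamma_{T}^2] \ge \E[\gamma_1^2]\bigl(1 - O(T\epsilon^2/d) - O(1/d)\bigr)$, so $T=\Omega(d/\epsilon^2)=\Omega(\kappa\tmix/\log d)$ samples are needed for any constant-factor decay, which yields the stated rate. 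If you want to salvage your route, you would need either a rigorous closure (e.g., an exact finite system obtained by conditioning on the whole trajectory direction, or a window-wise argument treating blocks of length $\Theta(1/\epsilon^2)$ as approximate projections with quantified error), or to abandon the mean and control second moments directly as the paper does.
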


The proof for this lemma involves carefully tracking the norm of the error at each iteration $\|w_t - w^*\|$. We show that the expected norm of the error contracts by a factor of at most $\frac{\eps^2}{d}$ in each iteration, therefore, we require $T = \Omega(\frac{d}{\eps^2})$ samples and iterations to get a $\delta$-approximate $w_T$. Note that the number of samples required here is $\Omega(\frac{d}{\eps^2}) = \Omega(\frac{\kappa \tmix}{\log({d})})$. See Section \ref{app:er_lb} for a detailed proof.
\subsection{SGD with Experience Replay}
We propose that the following interpretation of experience replay applied to SGD, which improves the dependence on $\tmix$ on the rate of error decay. 

Suppose we have a continuous stream of samples $X_1, X_2, \ldots X_T$ from the Markov Chain. We split the $T$ samples into $\frac{T}{S}$ separate buffers of size $S$ in a sequential manner, ie $X_1, \ldots X_S$ belong to the first buffer. Let $S = B + u$, where $B$ is orders of magnitude larger than $u$. From within each buffer, we drop the first $u$ samples. Then starting from the first buffer, we perform $B$ steps of SGD, where for each iteration, we sample uniformly at random from within the $[u, B+u]$samples in the first buffer. Then perform the next $B$ steps of SGD by uniformly drawing samples from within the $[u, B+u]$ samples in the second buffer. We will choose $u$ so that the buffers are are approximately i.i.d.. 

We run SGD this way for the first $\frac{T}{2S}$ buffers to ensure that the bias of each iterate is small. Then for the last $\frac{T}{2S}$ buffers, we perform SGD in the same way, but we tail average over the last iterate produced using each buffer to give our final estimate $w$. We formally write Algorithm \ref{alg_main}. 

\begin{algorithm}
\caption{SGD with Experience Replay (SGD-ER)}
\label{alg_main}
\begin{algorithmic}
\REQUIRE $(X_1, Y_1), \ldots (X_T, Y_T) \in \R^d$ sampled using \eqref{eq:gaussar}, $\eta$: learning rate
\STATE $w \sim \N(0,1)$,  $B \leftarrow \frac{1}{\eps^7}$, $u \leftarrow  \max(\frac{2}{\epsilon^2} \log \frac{300000 \pi d B}{\epsilon}, \frac{2}{\epsilon^2} \log \frac{300000 \pi d^2\sigma^6}{\epsilon^2 \delta})$, $S\leftarrow B+u$
\FOR{each buffer $j \in [0, \frac{T}{S}-1]$}
\STATE $\text{Buffer}_j=[X_{(S\cdot j + 1)},\ \ldots,\ X_{(S\cdot j + S)}]$ 
\FOR{iterate in range[1, $B$]}
\STATE  $w=w-\eta (Y_{Sj + i}-\langle X_{Sj + i},w \rangle)X_{Sj + i}$ where $i\stackrel{unif}{\sim}[u,S]$
\ENDFOR
\STATE Store $w_j\leftarrow w$
\ENDFOR
\RETURN $\frac{2S}{T}\sum\limits_{j={\cdot T/{2S}+1}}^\frac{T}{S} w_j\ \ $ (i.e. average over last $T/{2S}$ buffers)
\end{algorithmic}
\end{algorithm}

\begin{theorem}[SGD with Experience Replay for Gaussian AR Chain]
	\label{thm:exp-replay}For any $\eps \leq 0.21$, if $B \geq \frac{1}{\eps^7}$ and $d = \Omega(B^4 \log (\frac{1}{\beta}))$, with probability at least $1-\beta$, Algorithm~\ref{alg_main} returns $w$ such that $\E[\loss(w)] \leq O\left(\exp\left(\frac{-T\log({d})}{\kappa\sqrt{\tmix}}\right) \norm{w_0 - w^*}^2\right) + \tilde{O}\left(\frac{\sigma^2d\sqrt{\tmix}}{ T}\right) + \loss(w^*)$. Recall that $\kappa=d$ for MC in \eqref{eq:gaussar}.
\end{theorem}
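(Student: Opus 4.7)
The plan is to analyze Algorithm~\ref{alg_main} in three stages: (i) decoupling the buffers into nearly-independent blocks via the skip parameter $u$, (ii) establishing a per-buffer bias-variance recursion for SGD with uniform in-buffer sampling, and (iii) combining these with tail averaging over the last $T/(2S)$ buffers.

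First, I would use the skip length $u$ to decouple buffers. For the Gaussian AR chain, the TV distance to stationarity decays as $\dmix(u) = O(\sqrt{d}\,(1-\epsilon^2)^{u/2})$ (cf.\ Lemma~\ref{lem:brownian_mixing}), so the choice $u = \Theta(\epsilon^{-2}\log(dB/(\epsilon\delta)))$ makes the conditional law of the $B$ retained samples in buffer $j$ (given all prior buffers) lie within TV distance $\delta/T$ of the unconditional stationary joint law on $B$ consecutive AR samples. A maximal coupling across the $T/S$ buffers then lets me replace the true sample stream by an idealized stream in which the retained blocks are mutually independent, at a union-bound cost absorbed in the $\tilde{O}(\cdot)$.

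Second, within one idealized buffer I would set up the per-buffer error recursion. Let $e_j = w_j - w^*$ denote the iterate error at the end of buffer $j$ and $\hat{A}_j = \frac{1}{B}\sum_{i} X_i X_i^\top$ the buffer's empirical covariance. Because the $B$ in-buffer SGD indices are i.i.d.\ uniform, the conditional expectation of the iterate given the buffer satisfies the matrix-product recursion $\E[e_j \mid \text{buffer}] = (I - \eta \hat A_j)^B e_{j-1}$ in the noiseless regime, with an additive noise contribution of order $\eta\sigma^2 \mathrm{tr}(\hat A_j)$. Taking expectation over the (near-stationary) buffer gives an error recursion $\E\|e_j\|^2 \leq \gamma\,\E\|e_{j-1}\|^2 + c\eta\sigma^2$, where $\gamma$ is the per-buffer contraction factor. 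Iterating across the first $T/(2S)$ buffers gives a bias bound $\gamma^{T/(2S)}\|w_0-w^*\|^2$; over the last $T/(2S)$ buffers, a standard tail-averaging calculation for constant-step SGD converts the accumulated per-buffer noise into the final-estimator variance $\tilde{O}(\sigma^2 d\sqrt{\tmix}/T)$, once the appropriate fourth-moment conditions on the stationary AR joint law are verified.

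The crux of the proof --- and the step I expect to be by far the main obstacle --- is showing that the per-iteration bias contraction extracted from a single buffer is $\Omega(\log d/(\kappa\sqrt{\tmix}))$, so that $B$ in-buffer iterations yield per-buffer contraction $\gamma \leq \exp(-\Omega(B\log d/(\kappa\sqrt{\tmix})))$ and the claimed bias rate follows by iterating over $T/(2S)$ buffers. A naive analysis based only on the mixing time of the AR chain yields at best $\Omega(\log d/(\kappa\tmix))$, which merely reproduces the SGD lower bound of Theorem~\ref{thm:sequential_lower_bound}; improving this by a factor of $\sqrt{\tmix}$ requires exploiting the joint Gaussianity of the AR chain. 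Concretely, one writes the buffer samples as linear functionals of a single standard Gaussian driving vector and invokes Hanson-Wright / matrix-Gaussian concentration to show that, even though $\hat A_j$ is rank-deficient in the regime $d = \Omega(B^4\log(1/\beta))$, the random subspace on which it acts rotates enough from buffer to buffer that the effective per-unit-time spectral gap over the full trajectory scales as $\sqrt{\log d}/\sqrt{\tmix}$ rather than $\log d/\tmix$. The quantitative hypotheses $B \geq \epsilon^{-7}$ and $d = \Omega(B^4\log(1/\beta))$ are precisely what is needed for this matrix-Gaussian concentration to go through, and rigorously establishing the improved effective-spectrum bound is where I expect the substantive work of the proof to lie.
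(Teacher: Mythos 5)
Your overall skeleton (drop the first $u$ samples to decouple buffers, compare to an idealized buffer whose first vector is drawn from stationarity, prove a per-buffer contraction, and finish with tail-averaging over the last $T/(2S)$ buffers) matches the paper's structure, and your stage (i) is essentially Lemma~\ref{lem:approxim-iid-helper}/\ref{lem:approx-iid}. But the step you explicitly defer --- showing that one buffer of $B$ correlated samples contracts the bias by $\exp(-\Omega(\eps B/d))$ rather than $\exp(-\Omega(\eps^2 B/d))$ --- is the entire content of the theorem, and the mechanism you sketch for it is not the right one. The gain does not come from ``the random subspace rotating enough from buffer to buffer'': cross-buffer composition is handled almost for free by independence of the idealized buffers and spherical symmetry. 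The gain is a \emph{within-buffer} spectral fact: the nonzero eigenvalues of the buffer covariance $H=\frac{1}{B}\sum_j X_jX_j^{\top}$ coincide with those of the $B\times B$ Gram matrix, which (up to a perturbation $E$) is the Toeplitz matrix with entries $\frac{1}{B}(1-\eps^2)^{|i-j|/2}$; approximating that Toeplitz matrix by a circulant one and computing its Fourier eigenvalues shows that $\Omega(\eps B)$ of them exceed the threshold $1/B$ (Lemmas~\ref{lem:T_eigs} and~\ref{lem:circulent_eigs}), Weyl's inequality absorbs the perturbations (Lemma~\ref{lem:perturbation_eigs}), and Lemma~\ref{lem:iid_contraction} converts ``eigenvalue $\geq 1/B$'' into a constant-factor contraction along that eigendirection after $B$ uniform in-buffer SGD steps. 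Hanson--Wright-type Gaussian concentration only enters to bound $\|E\|_F\leq 1/B$, which is where $d=\Omega(B^4\log(1/\beta))$ is used; it does not, by itself, produce the count of large eigenvalues, so your proposal as written has no route to the $\sqrt{\tmix}$ improvement and would at best recover the $\tmix$ rate you correctly identify as the naive bound.

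Two smaller issues: your variance bound also depends on the same eigenvalue count (the cross-buffer terms are summed geometrically with ratio $1-\Theta(\eps B/d)$, which is what yields $d/(\eps B)\sim d\sqrt{\tmix}\cdot S/T$ after averaging), so it too is contingent on the missing spectral argument. And your claim that the per-iterate noise contribution is ``of order $\eta\sigma^2\,\mathrm{tr}(\hat A_j)$'' with an i.i.d.-style recursion glosses over the fact that experience replay re-uses samples within a buffer, so the noise at different SGD steps is correlated with the sampled outer products; the paper handles this explicitly (the probability-$1/B$ collision terms in Lemma~\ref{lem:c_infty_er}) to get the uniform $3\sigma^2 I$ covariance bound.
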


\begin{proof}
$\E[\loss(w)] = \E(\loss(w^{\bias})) + \E(\loss(w^{\var})) + \loss(w^*)$. We give proof sketches for $\E(\loss(w^{\bias})) \leq O\left(\exp\left(\frac{-T\log({d})}{\kappa\sqrt{\tmix}}\right) \norm{w_0 - w^*}^2\right)$ and $\E(\loss(w^{\var})) \leq \tilde{O}\left(\frac{\sigma^2d\sqrt{\tmix}}{ T}\right)$. Formal proofs are given in the appendix. 
\end{proof}

{\bf Proof Sketch for Bias Decay (Proof in Section \ref{sec:er_sgd_upper_bound}):}
Since the samples within the same buffer and across buffers are highly dependent, our algorithm drops the first $u$ samples from each batch. This ensures that across buffers the sampled points are approximately independent. This allows us to break down the problem into analyzing the progress made by one buffer, which we can then multiply $T/S$ times to get the overall bias bound. Lemma \ref{lem:approx-iid} formalizes this idea and upper bounds the expected contraction after every $B$ samples from a buffer $j$, by the expected contraction of a parallel process where the first vector in this buffer was sampled i.i.d. from $\N(0, \frac{1}{d}I)$.

The rest of the proof involves solving for expected rate of error decay when taking $B$ steps of SGD using samples generated from a single buffer in this parallel process. We write $H \defeq  \frac{1}{B} \sum\limits^S_{j=u+1} X_j X_j^T$, where $X_j$ are the vectors in the sampling pool. Lemma \ref{lem:iid_contraction} establishes that the error in the direction of an eigenvector $v$, with associated eigenvalue $\lambda(v)\geq \frac{1}{B}$ in $H$ contracts by a factor of $\frac{1}{2}$ after $B$ rounds of SGD, while smaller eigenvalues in $H$ in the worst case do not contract. By spherical symmetry of eigenvectors, as long as the fraction of eigenvalues $\geq \frac{1}{B}$ is at least $\frac{\eps B}{d}$, and since we draw $B$ samples from each buffer, we see that the loss decays at a rate of $\exp(\frac{-T\eps}{d}) = \exp(\frac{-T\log({d})}{\kappa \sqrt{\tmix}})$. 

So, the key technical argument is to establish that $\frac{\eps B}{40\pi}$ of the eigenvalues of $H$ are larger than or equal to $\frac{1}{B}$, the overall proof structure is as follows. The non-zero eigenvalues of $H$ correspond directly to the non-zero eigenvalues of the gram matrix $M = \frac{1}{B} X^TX$, where the columns of $X$ are each $X_j$. We show that the Gram matrix can be written as $C$ + $E$, where $C$ is a circulent matrix and $E$ is a small perturbation which can be effectively bounded when $d$ is large, i.e., $d=O(B^4)$. Using standard results about  eigenvalues of $C$ along with Weyl's inequality \cite{bhatia97} to handle perturbation $E$, we get a bound on the number of large-enough eigenvalues of $H$. The formal proof is in Section \ref{sec:er_sgd_upper_bound}.



{\bf Proof Sketch for Variance Decay (Proof in Section \ref{sec:er_sgd_upper_bound_variance})}

To analyze the variance, we start with $w^{\var}_0 = w^*$, and based on the SGD update dynamics, consider the expected covariance matrix $\E[(w^{\var}-w^*)(w^{\var}-w^*)^T]$, where $w^{\var}$ is the tail-averaged value of the last iterate from every buffer, for the last $\frac{T}{2S}$ buffers. We show that for each iterate in the average, the covariance matrix $\E[(w^{\var}_t-w^*)(w^{\var}_t-w^*)^T] \preceq 3\sigma^2$. Next, we analyze the cross terms $\E[(w^{\var}_i-w^*)(w^{\var}_j-w^*)^T]$, which is approximately equal to $(I- H)^{j-i} \E[(w^{\var}_i-w^*)(w^{\var}_i-w^*)^T] $, when $j> i$ are buffer indices. This approximation is based on perfectly iid buffers, which we later correct by explicitly quantifying the worst case difference in the expected contraction of our SGD process and a parallel SGD process that does use perfectly iid buffers, (see Lemma \ref{lem:approxim-iid-helper}). We use our earlier analysis of the eigenvalues of $H$ to arrive at our final rate. 

\begin{wrapfigure}{r}{0.33\columnwidth}
	\centering
	\includegraphics[width=0.33\columnwidth]{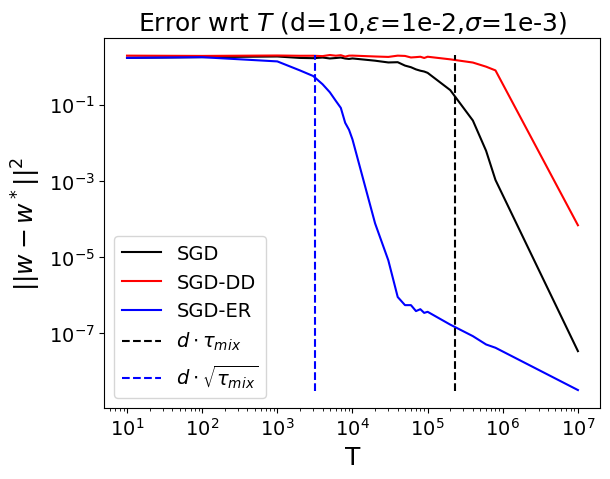}\vspace*{-10pt}
	\caption{Gaussian AR Chain: error incurred by various methods.}
	\label{fig:exprply}
	\vspace{-2mm}
\end{wrapfigure}

\paragraph{Simulations.} We also conducted experiments on  data generated using Gaussian AR MC \eqref{eq:gaussar} . We set $d=10$,  noise std. deviation $\sigma=1e-3$, $\epsilon=.01$ (i.e. $\tmix\approx 3e-4$), and buffer size $B=1/\eps^2$. We report results averaged over $100$ runs. Figure~\ref{fig:exprply} compare the estimation error achieved by SGD, \sgddd, and the proposed SGD-ER method. Note that, as expected by our theorems, the decay regime starts at $d\sqrt{\tmix}$ for SGD-ER and $d\tmix$ for SGD which is similar to rate of \sgddd . At about $50,000$ samples, SGD-ER's bias term becomes smaller than the variance term, hence we observe a straight line post that point. Also, according to Theorem~\ref{thm:exp-replay} the variance at final point should be about $2\sigma^2 d^2/(\epsilon T)\approx 2e-9$, which matches the empirically observed error. We present results for higher dimensions in the appendix.

\section{Conclusion}
In this paper, we obtain the fundamental limits of performance/minimax rates that are achievable in linear least squares regression problem with Markov chain data. Furthermore, we discuss algorithms that achieve these rates (\sgddd and Parallel SGD). In the general agnostic noise setting, we show that any algorithm suffers by a factor of $\tmix$ in both bias and variance, compared to the i.i.d. setting. In the independent noise setting, the minimax rate for variance can be improved to match that of the i.i.d. setting but standard SGD method with constant step size still suffers from a worse rate. 
Finally, we study a version of the popular  technique `experience replay' used widely for RL in the noiseless Gaussian AR setting and show that it achieves a significant improvement over the vanilla SGD with constant step size. Overall, our results suggest that instead of considering the general class of optimization problems with arbitrary Markov chain data (where things cannot be improved by much), it may be useful to identify and focus on important special cases of Markovian data, where novel algorithms with nontrivial improvements might be possible.

\section*{Broader Impact}
We build foundational theoretical groundwork for the fundamental problem of optimization with Markovian data. We think that our work sheds light on the possibilities and impossibilities in this space. For practitioners, our focus on the popular SGD algorithm provides them with a rigorously justified understanding of what SGD can achieve and for specially structured chains, experience replay with SGD can be provably helpful (though not in the general case). We also think that the proof techniques in this paper could impact future research in this space and beyond.
\bibliographystyle{unsrt}  

\bibliography{bibliography}

\clearpage
\appendix

\section{Sharp Upper Bounds via. SGD-type Algorithms}

\subsection{SGD with Data Drop for Agnostic Noise Setting}\label{subsec:agn_sgd_dd}
In this section, we modify SGD so that despite having constant step size, the algorithm converges to the optimal solution as $t\rightarrow \infty$ even if the noise in each observation $\noise_t(X)$ can depend on $X$. The modified algorithm is known as SGD with data drop (\sgddd, Algorithm~\ref{alg_SGDDD}): fix $K \in \mathbb{N}$ and run SGD on samples  $X_{Kr}$ for $r \in \mathbb{N}$, and ignore the other samples. Theorem~\ref{thm:agn_sgd_dd} below shows that if $K =\Omega(\tmix \log{T})$, then the error is $O(\frac{\tmix \log T}{T})$. Combined with the lower bounds in Theorems~\ref{thm:agn_minimax} and~\ref{thm:bias-lb}, this implies that \sgddd is optimal up to log factors -- in particular, the mixing time must appear in the rates. The analysis simply bounds the distance between the iterates of SGD with independent samples and the respective iterates of \sgddd with Markovian samples.

We now formally describe the algorithm and result. Given samples from an exponentially ergodic finite state Markov Chain, $\MC$ with stationary distribution $\pi$ and mixing time $\tmix$, for $T \in \mathbb{N}$ we obtain data $(X_t,Y_t)_{t=1}^{T}$ corresponding to the states of the Markov chain $X_1 \to \cdots \to X_T \sim \MC$. We pick $K = \tmix \lceil L\log_2{T}\rceil$ for some constant $L > 0 $ to be fixed later. For the sake of simplicity we assume that $T/K$ is an integer. 

\begin{algorithm}
\caption{\sgddd}
\label{alg_SGDDD}
\begin{algorithmic}
\REQUIRE  $T \in \mathbb{N}$ , $(X_1,Y_1),\dots, (X_T,Y_T) \in  \R^d\times \R$ , step size $\alpha >0$, initial point $w_1 \in \R^d$, drop number $K \leq T$

\FOR{t in range [1, $T/K$]}
\STATE Set \begin{equation}\label{e:sgd-dd}
w_{t+1} \leftarrow w_t - \alpha X_{tK}\left(\langle w_t,X_{tK}\rangle - Y_{tK}\right)\,.
\end{equation}
\ENDFOR

\RETURN $\hat{w} \leftarrow \frac{2K}{T}\sum_{s = T/2K+2}^{T/K + 1}w_{s}\,.$
\end{algorithmic}
\end{algorithm}

We now present our theorem bounding the bias and variance for \sgddd. 
\begin{theorem}[\sgddd]\label{thm:agn_sgd_dd}
Let $MC$ be any exponentially mixing ergodic finite state Markov Chain with stationary distribution $\pi$ and mixing time $\tmix$. For $T \in \mathbb{N}$ we obtain data $(X_t,Y_t)_{t=1}^{T}$ corresponding to the states of the Markov chain $X_1 \to \cdots \to X_T \sim \MC$. Let $\alpha$ be small enough as given in Theorem 1 of \cite{jain2018parallelizing}. Then
\begin{align*}
&\mathbb{E}[\loss(\hat{w})] - \loss(w^*) \\&\quad \leq \underbrace{ \exp\left( \frac{-\alpha T}{C \cdot L\cdot  \tmix \kappa\log_2{T}}\right) \norm{w_0 - w^*}^2 + \frac{C\cdot L\cdot \tmix \mathrm{Tr}\left(A^{-1} \Sigma\right)\log_2T}{T}}_{\text{Suboptimality for i.i.d. SGD with $T/K$ samples}} + \underbrace{\frac{16\|w_0\|^2}{T^{L-2}}+ \frac{16\alpha^2\upsilon}{T^{L-3}},}_{\text{error due to leftover correlations}}\end{align*}
where $\hat{w}$ is the output of \sgddd (Algorithm~\ref{alg_SGDDD}), $A \defeq \E_{x \sim \pi}\left[xx^{\intercal}\right]$ is the data covariance matrix and $\Sigma \defeq \E_{x \sim \pi}\left[n^2 xx^{\intercal}\right]$ is the noise covariance matrix.
\end{theorem}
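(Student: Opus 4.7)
The plan is to reduce the analysis to the i.i.d.\ case: couple the Markovian subsample $(X_K,X_{2K},\ldots)$ with a truly i.i.d.\ sequence drawn from $\pi$, invoke the tail-averaged SGD guarantee of~\cite{jain2018parallelizing} on the event that the coupling never fails, and handle the complementary event through a non-expansiveness argument that gives only polynomial growth of the iterates.

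First, I would pin down how close the $K$-step marginals are to stationarity. Since $K=\tmix\lceil L\log_2 T\rceil$, the geometric mixing bound~\eqref{eq:binary_mixing} gives $\dmix(K)\le 2^{-L\log_2 T}=T^{-L}$. For each block index $t\in\{1,\dots,T/K\}$, I would construct via the maximal coupling an auxiliary $\tilde X_{tK}\sim\pi$ so that, conditionally on the past, $\Pr[X_{tK}\ne\tilde X_{tK}]\le T^{-L}$; by drawing the $\tilde X_{tK}$'s with fresh randomness at each step, the sequence $\{\tilde X_{tK}\}_{t=1}^{T/K}$ is i.i.d.\ from $\pi$. Using the conditional independence of $Y_t$ given $X_t$, I would extend the coupling so that $\tilde Y_{tK}=Y_{tK}$ on $\{X_{tK}=\tilde X_{tK}\}$. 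Let $E$ be the event that all $T/K$ couplings succeed; a union bound gives $\Pr[E^c]\le (T/K)\cdot T^{-L}\le T^{1-L}$.

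Second, on $E$ the iterates of \sgddd coincide pathwise with the iterates of tail-averaged SGD run on the i.i.d.\ data $\{(\tilde X_{tK},\tilde Y_{tK})\}_{t=1}^{T/K}$. Since the excess loss is nonnegative, Theorem~1 of~\cite{jain2018parallelizing} applied to $T/K$ i.i.d.\ samples immediately yields
\[
\mathbb{E}\bigl[(\loss(\hat w)-\loss(w^*))\,\mathbbm{1}_E\bigr]\le \exp\!\left(\tfrac{-\alpha T}{C\cdot L\cdot\tmix\kappa\log_2 T}\right)\|w_0-w^*\|^2+\tfrac{C\cdot L\cdot\tmix\,\tr(A^{-1}\Sigma)\log_2 T}{T},
\]
which is exactly the ``suboptimality for i.i.d.\ SGD with $T/K$ samples'' piece of the claim.

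Third, I would control the contribution from $E^c$ via a deterministic polynomial-in-$T$ envelope on $\|w_t-w^*\|$. Writing the update as $w_{t+1}-w^*=(I-\alpha X_{tK}X_{tK}^\intercal)(w_t-w^*)+\alpha X_{tK}\noise_{tK}$ and using $\|X_{tK}\|\le 1$ with the step-size condition (which in particular forces $\alpha\le 1$), the multiplier satisfies $\|I-\alpha X_{tK}X_{tK}^\intercal\|\le 1$, so the signal part is non-expansive. The noise part is bounded in norm by $\alpha\sqrt{\upsilon}$, giving the pathwise estimate $\|w_t-w^*\|\le\|w_0-w^*\|+t\alpha\sqrt{\upsilon}$, hence $\|\hat w-w^*\|^2\lesssim\|w_0\|^2+T^2\alpha^2\upsilon$. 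Combined with $\loss(\hat w)-\loss(w^*)=(\hat w-w^*)^\intercal A(\hat w-w^*)\le\|\hat w-w^*\|^2$ and $\Pr[E^c]\le T^{1-L}$, this yields the stated leftover-correlation term $\tfrac{16\|w_0\|^2}{T^{L-2}}+\tfrac{16\alpha^2\upsilon}{T^{L-3}}$ after bookkeeping constants.

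The main obstacle is choosing the right envelope in Step~3. The naive one-step estimate $\|w_{t+1}\|\le(1+\alpha)\|w_t\|+\alpha\sqrt{\upsilon}$ leads to exponential growth $e^{\alpha T/K}$ that, when multiplied by $T^{1-L}$, blows up rather than vanishes for moderate choices of $L$. The crucial insight is to measure iterates relative to $w^*$ and to exploit that $I-\alpha XX^\intercal$ is non-expansive whenever $\alpha\|X\|^2\le 2$, so only the noise accumulates, and it does so only linearly in $t$. Everything else is a routine transfer of the i.i.d.\ tail-averaged SGD bound through the coupling.
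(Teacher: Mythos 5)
Your proposal follows essentially the same route as the paper: couple the subsampled chain $(X_K,X_{2K},\dots)$ with an i.i.d.\ sequence from $\pi$ via the maximal coupling (the paper's Lemma~\ref{lem:data_drop_independence_coupling}), transfer the tail-averaged SGD guarantee of \cite{jain2018parallelizing} on $T/K$ samples through the coupling, and kill the coupling-failure contribution by a crude polynomial bound on the iterates times $\Pr[E^c]\lesssim T^{1-L}$. The only real difference is cosmetic: the paper compares $\hat w$ to the coupled i.i.d.\ iterate average $\hat{\tilde w}$ through $\loss(\hat w)-\loss(w^*)\le 2\bigl(\loss(\hat{\tilde w})-\loss(w^*)\bigr)+2\|\hat w-\hat{\tilde w}\|^2$ and bounds $\mathbb{E}\|\hat w-\hat{\tilde w}\|^2$ on the failure event (Lemma~\ref{lem:iterate_deviation}), whereas you split $\mathbb{E}[\cdot\,\mathbbm{1}_E]+\mathbb{E}[\cdot\,\mathbbm{1}_{E^c}]$ directly, which avoids the factor $2$ but is otherwise the same argument.

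One step in your Step~3 is not justified as written: you claim the noise increment is \emph{almost surely} bounded, $\|\alpha X_{tK}\noise_{tK}\|\le\alpha\sqrt{\upsilon}$, but the standing assumption only gives a conditional second-moment bound $\mathbb{E}\bigl[|Y_t|^2\,\big|\,X_t=x\bigr]\le\upsilon$, so no deterministic envelope of the form $\|w_t-w^*\|\le\|w_0-w^*\|+t\alpha\sqrt{\upsilon}$ is available. The repair is routine and is what the paper does: keep the pathwise inequality $\|w_t-w^*\|\le\|w_0-w^*\|+\alpha\sum_{s<t}\|X_{sK}Y_{sK}-X_{sK}X_{sK}^{\intercal}w^*\|$, square via Jensen (which is where the quadratic-in-$t$ noise term comes from), and then take expectation \emph{conditionally on the state sequences} $(\mathbf{X},\tilde{\mathbf{X}})$, using that the coupling-failure event is measurable with respect to these sequences alone (the $Y$'s play no role in the coupling), so conditioning on $E^c$ does not distort the bound $\mathbb{E}[\|xY_t(x)\|^2]\le\upsilon$. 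With that substitution your bookkeeping gives exactly the stated leftover-correlation terms, so the gap is minor and fixable rather than structural.
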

\textbf{Remarks}:
\begin{itemize}[leftmargin=*]
	\itemsep 0pt
	\topsep 0pt
	\parskip 0pt
	\item The bound above has two groups of terms. The first group is the error achieved by SGD on i.i.d. samples and the second group is the error due to the fact that the samples we use are only \emph{approximately} independent.
	\item With $L=5$, the error is bounded by that of SGD on i.i.d. data plus a $O(1/T^2)$ term. 
\end{itemize}
\begin{proof}[Main ideas of the proof]
	By Lemma~\ref{lem:data_drop_independence_coupling} in Section~\ref{sec:coupling_lemmas} we can couple $(\tilde{X}_{K}, \tilde{X}_{2K},\dots,\tilde{X}_{T}) \sim \pi^{\otimes (T/K)}$ to $(X_K,X_{2K},\dots,X_{T})$ such that:
	$$\mathbb{P}\left(\tilde{X}_{K},\tilde{X}_{2K},\dots,\tilde{X}_{T}) \neq (X_{K},X_{2K},\dots,X_{T})\right) \leq \tfrac{T}{K} d(K) \leq \tfrac{T}{K}  e^{-K/\tmix}.$$
	We call the data $\left(\tilde{X}_{tK},Y_{tK}(\tilde{X}_{tK})\right)$ as $(\tilde{X}_{tK},\tilde{Y}_{tK})$ for $t = 1,\dots, \frac{T}{K}$.  We replace $(X_{tK},Y_{tK})$ in the definition of \sgddd with $(\tilde{X}_{tK},\tilde{Y}_{tK})$ (with the exogenous, contextual noise $n_{tK}(\tilde{X}_{tK})$). We call the resulting iterates $\tilde{w}_t$.
	We can first show that $\mathbb{E}\|w_t -\tilde{w}_t\|^2$ is small and hence that the guarantees for SGD with i.i.d data, run for $T/K$ steps as given in \cite{jain2018parallelizing} carry over to `SGD with Data Drop' (Algorithm~\ref{alg_SGDDD}).
We refer to Appendix~\ref{app:sgd_dd} for a detailed proof.
\end{proof}

\subsection{Parallel SGD for Independent Noise Setting}\label{subsec:parsgd}
We established in Section \ref{subsec:minimax_sgd_constant_step_independent_noise} that SGD with constant step size and averaging cannot achieve the minimax risk for least squares regression with Markovian data and independent noise \cite{van2000asymptotic}, so we propose Parallel SGD algorithm with parallelization number $K \in \mathbb{N}$ to bridge the gap. For the sake of simplicity, let $\frac{T}{2K}$ be an integer. 

In this algorithm, we run $K$ different SGD instances in parallel such that the $i$th instance of the algorithm observes $(X_{K(t-1) + i}, Y_{K(t-1)+i})$ for $t \geq 1$. Therefore, each parallel instance of SGD observes points which are $K$ time units apart and if $K \gg \tmix$, the observations used by each of the SGD instance appear to be almost independent.

\begin{algorithm}
\caption{Parallel SGD}
\label{alg_PSGD}
\begin{algorithmic}
\REQUIRE  $T \in \mathbb{N}$ , $(X_1,Y_1),\dots, (X_T,Y_T) \in  \R^d\times \R$ , step size $\alpha >0$,  parallelization number $K \leq T$, initial points $w^{(i)}_1 \in \R^d$ for $1\leq i\leq K$.

\FOR{$t$ in range [1, $T/K$]}
\FOR{$i$ in range [1, $K$]}
\STATE Set 
        $w_{t+1}^{(i)} \leftarrow w^{(i)}_t - \alpha X_{(t-1)K+i } \left(\langle X_{(t-1)K+i},w_t^{(i)}\rangle - Y_i\right)$
\ENDFOR
\ENDFOR
\RETURN 
       $ \hat{w} \leftarrow \frac{2}{T} \sum_{i=1}^{K} \sum_{t = T/2K+1}^{T/K} w_t^{(i)}$
\end{algorithmic}
\end{algorithm}

The following is the main result of this section.
\begin{theorem}[Parallel SGD]\label{thm:par-sgd}
	Consider the Parallel SGD algorithm in the independent noise setting. Let the step size $\alpha < \frac{1}{2}$ and the number of parallel instances $K \geq \tmix \lceil r\log_2(T) \rceil$ where $r>5$. Assume $T/K$ is an integer. If $\hat{w}$ is the output of the algorithm using $T$ data points, we have for a universal constant $C>0$ the bound: 
	{\small
	\begin{align*}&\mathbb{E}[\loss(\hat{w})] - \loss(w^{*}) \leq  2\left( 1-\frac{\alpha}{2\kappa}\right)^{\frac{T}{2K}}\frac{1}{\tmix  \cdot \log T}\left[\sum_{i=1}^{K}\|w_1^{(i)}-w^*\|^2\right] + \frac{Cd\sigma^2}{T}.
	\end{align*}}
\end{theorem}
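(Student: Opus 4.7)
The plan is to reduce Parallel SGD on Markovian data to $K$ parallel copies of i.i.d.-data SGD via a coupling argument analogous to the one used for \sgddd in Theorem~\ref{thm:agn_sgd_dd}, and then combine the $K$ instances by averaging. For each $i \in \{1, \dots, K\}$, the samples fed to the $i$-th instance, namely $X_i, X_{K+i}, X_{2K+i}, \ldots$, are separated by $K \geq \tmix \lceil r \log_2 T\rceil$ steps of the underlying Markov chain, so \eqref{eq:binary_mixing} gives $\dmix(K) \leq T^{-r}$. An extension of Lemma~\ref{lem:data_drop_independence_coupling} from a single chain to $K$ parallel subchains couples the $K$ subsequences simultaneously with i.i.d.\ samples $\tilde X_i, \tilde X_{K+i}, \ldots \sim \pi$ so that the good event $\mathcal{G}$ on which the coupling succeeds satisfies $\mathbb{P}(\mathcal{G}^c) \leq T \cdot T^{-r} = T^{1-r}$.

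On $\mathcal{G}$, the iterate $w^{(i)}_t$ equals the iterate $\tilde w^{(i)}_t$ produced by running ordinary SGD on $T/K$ i.i.d.\ samples from $\pi$ starting at $w^{(i)}_1$. Applying Theorem~1 of \cite{jain2018parallelizing} to each instance (with $T/K$ iterations and tail averaging) gives
\begin{align*}
\mathbb{E}[\loss(\hat w^{(i)}) - \loss(w^*)] \;\lesssim\; \bigl(1-\tfrac{\alpha}{2\kappa}\bigr)^{T/(2K)} \|w_1^{(i)}-w^*\|^2 + \tfrac{K d \sigma^2}{T}.
\end{align*}
For the bias, Jensen's inequality applied to $\hat w = \tfrac{1}{K}\sum_i \hat w^{(i)}$ and the convex excess-loss functional yields the prefactor $\tfrac{1}{K}\sum_i \|w_1^{(i)}-w^*\|^2 = \Theta\bigl(\tfrac{1}{\tmix \log T}\bigr) \sum_i \|w_1^{(i)}-w^*\|^2$, matching the claim. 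The residual contribution from $\mathcal{G}^c$ can be absorbed by a crude polynomial-in-$T$ bound on the iterates, multiplied by $\mathbb{P}(\mathcal{G}^c) \leq T^{1-r}$; for $r > 5$ this is dominated by the main terms.

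The main technical obstacle is the variance reduction from $Kd\sigma^2/T$ per instance to $Cd\sigma^2/T$ overall, which requires care because the $\tilde X$-trajectories of different instances remain correlated even after coupling. I would unroll the linearized SGD recursion
\begin{align*}
\tilde w^{(i)}_{t+1} - w^* = \bigl(I - \alpha \tilde X \tilde X^\top\bigr)(\tilde w^{(i)}_t - w^*) + \alpha \, \noise\, \tilde X
\end{align*}
at the appropriate index, and split each iterate into a bias part driven only by $w_1^{(i)}-w^*$ and a noise part driven by a weighted sum of the $\noise_\cdot$'s. By the independent-noise assumption, $\noise_{(t-1)K+i}$ is i.i.d.\ across all $(t,i)$ and independent of $(X_s)_s$. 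Therefore, conditional on the full $\tilde X$-trajectory, the noise parts of $\hat w^{(i)}$ and $\hat w^{(j)}$ for $i\neq j$ are functions of disjoint sets of independent $\noise$'s and hence uncorrelated. This conditional decorrelation lets the $1/K$ outer averaging across instances shrink the per-instance variance by a factor of $K$, producing the stated $Cd\sigma^2/T$ term. Combining with the bias bound completes the proof.
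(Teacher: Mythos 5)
Your high-level strategy is genuinely different from the paper's: the paper's proof of Theorem~\ref{thm:par-sgd} (Appendix~\ref{app:ind_psgd}) never couples the data with i.i.d.\ samples. It unrolls each instance's recursion, splits iterates into bias and variance parts, notes (Lemma~\ref{lem:basic_results}, item 3) that the variance parts of distinct instances are uncorrelated because the noise is independent --- the same observation you make --- and then controls everything through conditional expectations: by \eqref{eq:binary_mixing} the law of $X_{(t-1)K+i}$ given the instance's past is within $T^{-r}$ of $\pi$ in total variation, so $\mathbb{E}\left[I-\alpha X_{t-1,i}X_{t-1,i}^{\intercal}\mid\mathcal{F}_{t-2,i}\right]=I-\alpha A+E_t$ with $\|E_t\|\leq \alpha T^{-r}$ (Lemma~\ref{lem:conditional_operator_contraction}). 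The mixing error thus enters multiplicatively inside the bias contraction, and in the variance it produces only $\sigma^2$-proportional terms suppressed by $T^{-(r-1)}$ or $T^{-(r-2)}$ (Lemmas~\ref{lem:summation_1} and~\ref{lem:PSD_map_properties}, plus operator bounds imported from \cite{jain2018parallelizing}), all of which fold into $Cd\sigma^2/T$.

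The genuine gap in your route is the claim that the contribution of the failure event $\mathcal{G}^c$ ``is dominated by the main terms.'' That residual has size $\mathrm{poly}(T)\cdot T^{1-r}$ and, for the bias iterates, is proportional to $\|w_1^{(i)}-w^*\|^2$ and $\upsilon$ rather than to $\sigma^2$; it can therefore not be absorbed into $Cd\sigma^2/T$, and it is only polynomially small, whereas the stated bias term $\left(1-\frac{\alpha}{2\kappa}\right)^{T/2K}=\exp\left(-\Theta\left(\frac{\alpha T}{\kappa\tmix\log T}\right)\right)$ is super-polynomially small in $T$. For large $T$ (for instance in the noiseless case $\sigma=0$) the residual dominates the claimed bound, so the theorem as stated does not follow from your argument. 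This is exactly why the paper, when it does take the coupling route for \sgddd (Theorem~\ref{thm:agn_sgd_dd}), retains the leftover-correlation terms $\frac{16\|w_0\|^2}{T^{L-2}}+\frac{16\alpha^2\upsilon}{T^{L-3}}$ in its bound, while Theorem~\ref{thm:par-sgd} carries no such terms. The rest of your outline is sound: the simultaneous coupling of the $K$ subsequences via a union bound over per-instance maximal couplings is legitimate, Jensen gives the $1/K$ bias prefactor, and the conditional-on-$X$ decorrelation of the noise parts across instances correctly delivers the factor-$K$ variance reduction. So your proposal proves the theorem up to additional additive $\mathrm{poly}(T)\,T^{1-r}$ terms, but not the statement as written; removing them requires replacing the one-shot coupling by the conditional perturbation analysis the paper uses.
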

Note that compared to the rate for SGD and \sgddd (Section~\ref{subsec:agn_sgd_dd}), the variance term has no dependence on $\tmix$.
The bias decay is slower by a factor of $\tmix$ compared to the i.i.d. data setting, but is optimal up to a logarithmic factor for the Markovian setting. A
complete proof can be found in Appendix~\ref{app:ind_psgd}.  
\section{Coupling Lemmas}
\label{sec:coupling_lemmas}
We give a well known characterization of total variation distance:
\begin{lemma}
	\label{lem:coupling_tv} Let $\mu$ and $\nu$ be any two probability measures over a finite set $\Omega$. Then, there exist coupled random variables $(X,Y)$, that is random variables on a common probability space, such that $X \sim \mu$, $Y \sim \nu$ and, 
	$$\mathbb{P}(X\neq Y) = \tv(\mu,\nu).$$
\end{lemma}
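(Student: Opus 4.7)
The plan is to construct the standard \emph{maximal coupling} explicitly, which simultaneously gives the upper bound $\mathbb{P}(X \neq Y) \leq \tv(\mu,\nu)$ needed to achieve equality, together with the easy matching lower bound $\mathbb{P}(X \neq Y) \geq \tv(\mu,\nu)$ that holds for every coupling.

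First, I would record the arithmetic identity that powers the construction. Define $\lambda(x) \defeq \min(\mu(x),\nu(x))$ for $x \in \Omega$. Writing $\mu(x)+\nu(x) = \min(\mu(x),\nu(x)) + \max(\mu(x),\nu(x))$ and $|\mu(x)-\nu(x)| = \max(\mu(x),\nu(x)) - \min(\mu(x),\nu(x))$, summing over $x \in \Omega$ and using $\mu(\Omega)=\nu(\Omega)=1$ gives
\[
\sum_{x\in\Omega} \lambda(x) \;=\; 1 - \tfrac{1}{2}\sum_{x\in\Omega}|\mu(x)-\nu(x)| \;=\; 1-\tv(\mu,\nu).
\]
Set $p \defeq 1-\tv(\mu,\nu)$, so $\sum_x \lambda(x) = p$. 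The degenerate cases $p=0$ or $p=1$ can be handled separately (take $X,Y$ independent or $X=Y$ respectively), so assume $0 < p < 1$.

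Next I would construct the coupling. Let $B$ be a Bernoulli random variable with $\mathbb{P}(B=1)=p$, independent of everything else. Conditional on $B=1$, draw a single value $Z \in \Omega$ from the probability mass function $\lambda/p$ and set $X = Y = Z$. Conditional on $B=0$, draw $X$ from the pmf $(\mu-\lambda)/(1-p)$ and, independently, $Y$ from $(\nu-\lambda)/(1-p)$; these are valid pmfs because $\mu(x) \geq \lambda(x)$ and $\nu(x)\geq \lambda(x)$ pointwise and because $\sum_x(\mu(x)-\lambda(x)) = 1 - p = \tv(\mu,\nu)$, and likewise for $\nu$. A direct computation then verifies
\[
\mathbb{P}(X=x) = p\cdot \tfrac{\lambda(x)}{p} + (1-p)\cdot \tfrac{\mu(x)-\lambda(x)}{1-p} = \mu(x),
\]
and symmetrically $\mathbb{P}(Y=x)=\nu(x)$, so the marginals are correct.

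Finally, on $\{B=1\}$ we have $X=Y$ by construction, and on $\{B=0\}$ the supports of $(\mu-\lambda)/(1-p)$ and $(\nu-\lambda)/(1-p)$ are disjoint (since for any $x$, $\min(\mu(x)-\lambda(x),\,\nu(x)-\lambda(x)) = 0$), so $X \neq Y$ almost surely on that event. Therefore $\mathbb{P}(X\neq Y) = 1-p = \tv(\mu,\nu)$, giving the desired equality. For completeness I would note the matching lower bound: for any coupling $(X',Y')$ with these marginals and any $A\subseteq\Omega$,
\[
\mu(A)-\nu(A) = \mathbb{P}(X'\in A) - \mathbb{P}(Y'\in A) \leq \mathbb{P}(X'\in A,\,Y'\notin A) \leq \mathbb{P}(X'\neq Y'),
\]
and taking the supremum over $A$ gives $\tv(\mu,\nu) \leq \mathbb{P}(X'\neq Y')$, so our coupling is optimal. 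The only mild technical subtleties are the degenerate $p\in\{0,1\}$ cases and checking the marginal computation; there is no real obstacle.
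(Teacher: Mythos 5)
Your construction is correct and complete: the splitting into the common part $\lambda=\min(\mu,\nu)$ and the two disjointly supported residuals, the Bernoulli mixture, the marginal check, and the disjoint-support argument on $\{B=0\}$ together give exactly $\mathbb{P}(X\neq Y)=\tv(\mu,\nu)$, and the degenerate cases are handled. The paper states this lemma without proof as a well-known fact (it is the standard maximal-coupling characterization, cf.\ Chapter 4 of Levin--Peres), and your argument is precisely that canonical construction, so there is nothing to reconcile; the optimality lower bound you append is not needed for the statement but is harmless.
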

\begin{lemma}
	\label{lem:markov_coupling_tv}
	Let $X_0,\dots, X_t,\dots$ be a stationary finite state Markov chain $\MC$ with stationary distribution $\pi$. For arbitrary $r,s \in \mathbb{N}$, consider the following random variable: $$Y_{t,r,s}:= (X_{t+r},X_{t+r+1},\dots,X_{t+r+s}).$$ Then, we have:
	$$\tv(\law(X_t,Y_{t,r,s}),\pi \otimes \law(Y_{t,r,s}) )\leq \dmix(r),$$
	where $d(r)$ is the mixing metric as defined in Section~\ref{sec:notation}.
\end{lemma}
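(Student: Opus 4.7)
My plan is to show this by conditioning on $X_t$ and reducing to the definition of $\dmix(r)$.

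First, I would rewrite the TV distance as an average over $X_t$. Since both measures $\law(X_t, Y_{t,r,s})$ and $\pi \otimes \law(Y_{t,r,s})$ have the same $X_t$-marginal (namely $\pi$, using stationarity), a direct computation gives
\begin{align*}
\tv\bigl(\law(X_t, Y_{t,r,s}),\, \pi \otimes \law(Y_{t,r,s})\bigr)
= \mathbb{E}_{x \sim \pi}\Bigl[\tv\bigl(\law(Y_{t,r,s}\mid X_t = x),\, \law(Y_{t,r,s})\bigr)\Bigr].
\end{align*}
This is the standard factorization of joint-vs-product TV distance along a shared marginal and is an immediate unpacking of the definition.

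Next, for each fixed $x$ I would bound the inner TV distance by $\dmix(r)$. By the Markov property, conditional on $X_t = x$ the block $Y_{t,r,s}$ is generated by first drawing $X_{t+r} \sim P^r(x,\cdot)$ and then running the transition kernel $P$ for $s$ more steps. By stationarity, the unconditional distribution of $Y_{t,r,s}$ is generated in exactly the same way, except the initial draw $X_{t+r}$ is from $\pi$. Thus both laws are images of two different initial distributions (namely $P^r(x,\cdot)$ and $\pi$) under the \emph{same} Markov channel that extends a starting state to a length-$(s+1)$ trajectory. Since TV distance is non-increasing under any Markov channel (data processing),
\begin{align*}
\tv\bigl(\law(Y_{t,r,s}\mid X_t=x),\, \law(Y_{t,r,s})\bigr) \leq \tv\bigl(P^r(x,\cdot),\, \pi\bigr) \leq \dmix(r),
\end{align*}
where the last inequality is just the definition of $\dmix(r)$ from Section~\ref{sec:notation}.

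Combining these two steps yields the claimed bound, since the right side is independent of $x$ and so the expectation over $x \sim \pi$ is also bounded by $\dmix(r)$. I don't foresee any real obstacle here; the only thing to verify carefully is the data-processing step, which can alternatively be done by an explicit computation: writing out $\mathbb{P}(Y_{t,r,s} = (y_0, \ldots, y_s)\mid X_t = x) = P^r(x,y_0)\prod_{i=1}^s P(y_{i-1},y_i)$ and the analogous formula with $\pi(y_0)$ in place of $P^r(x,y_0)$, summing out $y_1,\ldots,y_s$ telescopes to give exactly $\tv(P^r(x,\cdot), \pi)$. Either framing works and the proof is short.
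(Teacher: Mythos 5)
Your proposal is correct and follows essentially the same route as the paper: factor the TV distance over the shared $\pi$-marginal of $X_t$, then use the Markov property to reduce the conditional TV to $\tv(P^r(x,\cdot),\pi)\leq \dmix(r)$. The only cosmetic difference is that you invoke data processing (giving an inequality, which suffices), whereas the paper notes the per-$x$ distance actually equals $\tv(P^r(x,\cdot),\pi)$ since the trajectory contains $X_{t+r}$ as its first coordinate — a fact your telescoping computation also recovers.
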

\begin{proof}
	Using the fact that $X_t \sim \pi$ and by definition of total variation distance, we have: $$\tv(\law(X_t,Y_{t,r,s}),\pi \otimes \law(Y_{t,r,s}) ) = \sum_{x\in \Omega} \pi(x) \tv(\law(Y_{t,r,s}|X_t = x), \law(Y_{t,r,s}))\,.$$
	 By the Markov property, $\tv(\law(Y_{t,r,s}|X_t = x), \law(Y_{t,r,s})) = \tv(\law(X_{t+r}|X_t = x), \law(X_{t+r})) = \tv(\law(X_{t+r}|X_t = x), \pi)$. Lemma now follows from the definition of $\dmix(r)$.
\end{proof}
\begin{lemma}\label{lem:data_drop_independence_coupling}
	Let $X_0,\dots, X_t,\dots$ be a stationary finite state Markov chain $\MC$ with stationary distribution $\pi$. Let $K, n \in \mathbb{N}$. Then,
	$$\tv\left(\law(X_0,X_{K},X_{2K},\dots, X_{nK}), \pi^{\otimes (n+1)}\right) \leq n\dmix(K)\,.$$
	Furthermore, we can couple $(X_0,X_{K},\dots,X_{nK})$ and $(\tilde{X}_0,\tilde{X}_{K},\dots,\tilde{X}_{nK}) \sim \pi^{\otimes (n+1)}$ such that:
	$$\mathbb{P}\left((X_0,X_{K},\dots,X_{nK}) \neq (\tilde{X}_0,\tilde{X}_{K},\dots,\tilde{X}_{nK})\right) \leq n\dmix(K)\,.$$
\end{lemma}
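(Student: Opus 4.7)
The plan is to prove the total variation bound by induction on $n$, using Lemma \ref{lem:markov_coupling_tv} to peel off one coordinate at a time, and then obtain the coupling statement as an immediate application of Lemma \ref{lem:coupling_tv}. The base case $n=0$ is immediate since $X_0 \sim \pi$ by stationarity, so both distributions in the claim coincide and the TV distance is zero.

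For the inductive step, I would apply Lemma \ref{lem:markov_coupling_tv} with parameters $t=0$, $r=K$, and $s=(n-1)K$, so that $Y_{0,K,(n-1)K} = (X_K, X_{K+1}, \ldots, X_{nK})$. This yields
\begin{align*}
\tv\bigl(\law(X_0, X_K, X_{K+1}, \ldots, X_{nK}),\ \pi \otimes \law(X_K, X_{K+1}, \ldots, X_{nK})\bigr) \leq \dmix(K).
\end{align*}
Marginalizing both sides onto the subsampled coordinates $(X_0, X_K, X_{2K}, \ldots, X_{nK})$ can only decrease total variation distance, so
\begin{align*}
\tv\bigl(\law(X_0, X_K, X_{2K}, \ldots, X_{nK}),\ \pi \otimes \law(X_K, X_{2K}, \ldots, X_{nK})\bigr) \leq \dmix(K).
\end{align*}

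By stationarity of the chain, $\law(X_K, X_{2K}, \ldots, X_{nK}) = \law(X_0, X_K, \ldots, X_{(n-1)K})$, so the inductive hypothesis bounds its TV distance from $\pi^{\otimes n}$ by $(n-1)\dmix(K)$. Since appending an independent $\pi$-distributed coordinate to both arguments preserves TV distance, the triangle inequality then closes the induction, giving a bound of $\dmix(K) + (n-1)\dmix(K) = n\dmix(K)$ on the total variation between $\law(X_0, X_K, \ldots, X_{nK})$ and $\pi^{\otimes(n+1)}$.

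The coupling assertion is then a direct invocation of Lemma \ref{lem:coupling_tv} applied to $\mu = \law(X_0, X_K, \ldots, X_{nK})$ and $\nu = \pi^{\otimes(n+1)}$ on the finite product space $\Omega^{n+1}$, producing a joint distribution under which $\mathbb{P}(X \neq Y) = \tv(\mu, \nu) \leq n \dmix(K)$. The only subtle ingredients are (i) that TV distance is monotone under marginalization and (ii) that taking a product with a fixed distribution leaves TV distance unchanged; both are standard but worth stating explicitly so that the inductive chaining is transparent. I do not anticipate any genuinely hard step: morally, the argument is a union bound over the $n$ subsampling gaps of length $K$, each contributing at most $\dmix(K)$ by the very definition of the mixing metric.
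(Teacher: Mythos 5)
Your proof is correct and follows essentially the same route as the paper's: peel off one coordinate at a time via Lemma~\ref{lem:markov_coupling_tv}, use monotonicity of TV under marginalization, tensor with $\pi$, chain with the triangle inequality, and conclude the coupling from Lemma~\ref{lem:coupling_tv}. The only cosmetic difference is that you run the induction from the front (using stationarity to shift $\law(X_K,\dots,X_{nK})$ back to $\law(X_0,\dots,X_{(n-1)K})$) whereas the paper iterates from the last pair forward; the two are equivalent.
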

\begin{proof}
	We prove this inductively. By Lemma~\ref{lem:markov_coupling_tv}, we have:
	$$\tv(\law(X_{(n-1)K},X_{nK}),\pi^{\otimes 2}) \leq \dmix(K)\,.$$
	From this it is easy to show that 
	\begin{equation} \label{eq:first_induction_case}
	\tv(\pi\otimes\law(X_{(n-1)K},X_{nK}),\pi^{\otimes 3}) = \tv(\law(X_{(n-1)K},X_{nK}),\pi^{\otimes 2}) \leq \dmix(K)\,.
	\end{equation}
	Using the notation in Lemma~\ref{lem:markov_coupling_tv}, we have:
	$$\tv(\law(X_{(n-2)K}, Y_{(n-2)K,K,K}), \pi \otimes \law(Y_{(n-2)K,K,K})) \leq \dmix(K)\,.$$
	By elementary properties of $\tv$ distance, it is clear that the $\tv$ between the respective marginals is smaller than the $\tv$ between the given measures. Therefore,
	\begin{equation}\label{eq:second_induction_case}
	\tv(\law(X_{(n-2)K}, X_{(n-1)K},X_{nK}), \pi \otimes \law(X_{(n-1)K},X_{nK}) \leq \dmix(K)\,.
	\end{equation}
	Using triangle inequality for TV distance along with \eqref{eq:first_induction_case} and~\eqref{eq:second_induction_case}, we have:
	$$\tv(\law(X_{(n-2)K}, X_{(n-1)K},X_{nK}), \pi^{\otimes 3} )\leq 2\dmix(K) \,.$$
	First part of the Lemma follows by using similar argument for all $i$, $1\leq i\leq n$.  The coupling part of the lemma then follows by Lemma~\ref{lem:coupling_tv}.
\end{proof}

\section{Minimax Lower Bounds: Proofs}\label{app:minimax_lb}

We first note some well known and useful results about the square loss. 
\begin{lemma}\label{lem:basic_loss_results}
	\begin{enumerate}
		\item
		$\mathbb{E}_{x\sim \pi}\mathbb{E}[\noise_t(x)\cdot x] = 0$
		\item
		
		$\loss(w) - \loss(w^{*}) = (w - w^*)^{\intercal}A(w - w^{*})$
		
	\end{enumerate}
\end{lemma}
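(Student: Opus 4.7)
The plan is to prove both parts from the first-order optimality condition characterizing $w^*$, together with a direct expansion of the squared loss. Neither part requires any Markov chain machinery; both are essentially standard facts about ordinary least squares where the only role played by the Markov chain is to define the stationary distribution $\pi$ that underlies the loss $\loss_{MC}$.

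For part (1), I would start from the definition $\loss_{MC}(w) = \mathbb{E}_{X\sim\pi}[(X^\top w - Y)^2]$ and note that since $A = \mathbb{E}_{X\sim\pi}[XX^\top] \succeq \tfrac{1}{\kappa} I$, the loss $\loss_{MC}$ is strongly convex and the unique minimizer $w^*$ satisfies the first-order condition $\nabla \loss_{MC}(w^*) = 0$. Differentiating under the expectation (which is justified because the integrand is quadratic in $w$ and $\|X\|\leq 1$, $\mathbb{E}[Y^2\mid X=x]\leq \upsilon$), we get
\begin{equation*}
\nabla \loss_{MC}(w^*) = 2\,\mathbb{E}_{X\sim\pi}\!\big[X(X^\top w^* - Y)\big] = -2\,\mathbb{E}_{X\sim\pi}\!\big[X\, \noise(X,Y)\big] = 0,
\end{equation*}
where in the last step I use the definition $\noise_t(X,Y) = Y - \langle X, w^*\rangle$ and the fact that $Y$ is conditionally independent of past/future given $X$, so that the expectation over $(X,Y)$ factors as $\mathbb{E}_{X\sim\pi}\mathbb{E}[\,\cdot\,\mid X]$. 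This gives $\mathbb{E}_{X\sim\pi}\mathbb{E}[\noise_t(X)\cdot X] = 0$.

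For part (2), I expand directly. Writing $Y = \langle X, w^*\rangle + \noise$, we have $X^\top w - Y = X^\top(w - w^*) - \noise$, so
\begin{align*}
\loss_{MC}(w) &= \mathbb{E}_{X\sim\pi}\!\big[(X^\top(w-w^*) - \noise)^2\big] \\
&= (w-w^*)^\top A (w-w^*) \;-\; 2(w-w^*)^\top \mathbb{E}_{X\sim\pi}[X\,\noise] \;+\; \mathbb{E}_{X\sim\pi}[\noise^2].
\end{align*}
The cross term vanishes by part (1), and the constant term equals $\loss_{MC}(w^*)$ (obtained by setting $w = w^*$ in the identity). Subtracting yields $\loss_{MC}(w) - \loss_{MC}(w^*) = (w-w^*)^\top A (w-w^*)$.

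Neither step presents a real obstacle; the only subtlety worth flagging is that part (1) uses the stationary-expectation form of the optimality condition rather than any sample-path argument, so it is unaffected by the Markovian dependence across time, and part (2) then follows as a one-line consequence once the cross term is identified.
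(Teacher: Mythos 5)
Your proof is correct and follows essentially the same route as the paper: part (1) via the first-order optimality condition $\nabla\loss(w^*)=0$, and part (2) by expanding the quadratic loss and using part (1) to eliminate the cross term (the paper expands $\loss(w)$ as a quadratic in $w$ rather than around $w^*$, but this is only a cosmetic difference). No gaps.
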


\begin{proof}
	\begin{enumerate}
		\item This follows from the fact that $w^*$ is the minimizer of the square loss $\loss(w)$ and hence $\nabla \loss(w^{*}) = 0$.
		\item Clearly, $\loss(w) = w^{\intercal}Aw + \mathbb{E}_{x\sim \pi} \mathbb{E}|Y_0(x)|^2 - 2\mathbb{E}_{x\sim \pi}\mathbb{E} Y_0(x) x^{\intercal}w$. The result follows after a simple algebraic manipulation involving item 1 above.
	\end{enumerate}
\end{proof}

\subsection{General Minimax Lower Bound for Bias Decay}
\label{app:minimax_bias}

{\bf Proof sketch of Theorem~\ref{thm:bias-lb}}: The proof of Theorem~\ref{thm:bias-lb} proceeds by considering a particular Markov chain and constructing a two point Bayesian lower bound. Let $\Omega = \{e_1,e_2\} \subset \mathbb{R}^2$ where $e_1,e_2 \in\mathbb{R}^2$ are the standard basis vectors. Let $\kappa \geq 2$ be given. Fix $\delta \leq (0,1/2]$ and  define $\epsilon = \frac{\delta}{\kappa -1}$. Consider the Markov Chain $\MC_3$ defined by its transition matrix:
\begin{equation}
P_3 = \begin{bmatrix}
P_3(e_1,e_1) & P_3(e_1,e_2) \\
P_3(e_2,e_1) & P_3(e_2,e_2)
\end{bmatrix} 
= \begin{bmatrix}
1-\epsilon & \epsilon\\
\delta & 1-\delta
\end{bmatrix}
\end{equation}
Below given proposition shows that the mixing time $\tmix^{(3)}$ of this Markov chain is bounded. 
\begin{proposition}\label{prop:general_mc_mixing}
	$\tmix^{(3)} \leq \frac{C}{\kappa\epsilon} \leq \frac{C}{\delta}$ for some universal constant $C$.
\end{proposition}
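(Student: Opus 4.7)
The plan is to directly analyze this two-state chain, since with only two states all quantities (stationary distribution, eigenvalues, $t$-step transition probabilities, and TV distance to stationarity) can be written in closed form as elementary functions of $\epsilon$ and $\delta$.

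First I would compute the stationary distribution of $\MC_3$. Solving $\pi P_3 = \pi$ gives $\pi(e_1) = \delta/(\epsilon+\delta)$ and $\pi(e_2) = \epsilon/(\epsilon+\delta)$. Next I would identify the non-trivial eigenvalue of $P_3$: since the trace is $2 - \epsilon - \delta$ and one eigenvalue is $1$, the other is $\lambda_2 = 1 - \epsilon - \delta$, which lies in $[-1,1)$ under our constraints $\epsilon,\delta \in (0,1/2]$. A direct diagonalization (or the standard formula for $2\times 2$ stochastic matrices) then yields
\begin{equation*}
P_3^t(e_1,e_1) - \pi(e_1) \;=\; \tfrac{\epsilon}{\epsilon+\delta}(1-\epsilon-\delta)^t, \qquad P_3^t(e_2,e_2) - \pi(e_2) \;=\; \tfrac{\delta}{\epsilon+\delta}(1-\epsilon-\delta)^t,
\end{equation*}
and analogous expressions for the off-diagonal entries. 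Because on a two-point space the TV distance between two probability measures equals the absolute difference of their mass on either atom, this immediately gives
\begin{equation*}
\tv\bigl(P_3^t(x,\cdot),\pi\bigr) \;\leq\; |1-\epsilon-\delta|^t \qquad \text{for } x \in \{e_1,e_2\}.
\end{equation*}

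Taking the supremum over $x$ yields $\dmix(t) \leq |1-\epsilon-\delta|^t$. Setting this to at most $1/4$ and solving gives
\begin{equation*}
\tmix^{(3)} \;\leq\; \frac{\log 4}{-\log(1-\epsilon-\delta)} \;\leq\; \frac{C}{\epsilon+\delta}
\end{equation*}
for a universal constant $C$ (using $-\log(1-x) \geq x$ for $x \in (0,1)$, valid here since $\epsilon + \delta \leq 1$). Finally I substitute $\delta = (\kappa-1)\epsilon$, which gives $\epsilon + \delta = \kappa\epsilon$ and hence $\tmix^{(3)} \leq C/(\kappa\epsilon)$. The second inequality $C/(\kappa\epsilon) \leq C/\delta$ is just $\delta \leq \kappa\epsilon$, which follows from $\delta = (\kappa-1)\epsilon \leq \kappa\epsilon$.

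I do not anticipate any real obstacle here: the entire argument is an elementary computation on a $2 \times 2$ stochastic matrix, and the only mild care needed is to verify $\epsilon + \delta \leq 1$ so that $|1-\epsilon-\delta| = 1-\epsilon-\delta$ and the logarithmic inequality applies; this is immediate from $\epsilon,\delta \in (0,1/2]$.
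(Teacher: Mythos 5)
Your proof is correct, and it takes a mildly different route from the paper's. You diagonalize the $2\times 2$ transition matrix and read off the exact $t$-step probabilities, so the decay rate is the second eigenvalue $1-\epsilon-\delta$ and you get $\dmix(t)\leq(1-\epsilon-\delta)^t=(1-\kappa\epsilon)^t$ directly; the paper instead bounds the one-step quantity $\dmixbar(1)=1-\kappa\epsilon$ and invokes submultiplicativity of $\dmixbar$ together with $\dmix(t)\leq\dmixbar(t)$ (Lemmas 4.11 and 4.12 of \cite{levin2017markov}). For a two-state chain these carry identical content, since $\dmixbar(1)$ coincides with the modulus of the second eigenvalue, so the difference is in mechanism rather than strength: your argument is fully self-contained and gives closed-form expressions, while the paper's template avoids any spectral computation and is the same argument it reuses for the $2d$-state chains in Proposition~\ref{prop:clique_walk_general}. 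Your remaining steps (solving $(1-\kappa\epsilon)^t\leq 1/4$ via $-\log(1-x)\geq x$, substituting $\delta=(\kappa-1)\epsilon$ so that $\epsilon+\delta=\kappa\epsilon$, and noting $\delta\leq\kappa\epsilon$ for the second inequality) match the intended conclusion, and the one point of care you flag, $\epsilon+\delta\leq 1$, indeed holds in the paper's setup since $\kappa\geq 2$ forces $\epsilon=\delta/(\kappa-1)\leq\delta\leq 1/2$.
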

We use $\MC_3$ to generate a set of points. We note that if we start in $e_1$, with probability $\sim \left(1-\frac{C}{\tau_0\kappa_0}\right)^T$, we do not visit $e_2$ for the first $T$ time steps. In this event, the algorithm does not have any information about $\langle w^{*},e_2\rangle$, giving us the lower bound.
\begin{proof}[Proof of Proposition~\ref{prop:general_mc_mixing}]
	We consider the metric:
	$$\dmixbar(t) := \sup_{i,j \in 
		\Omega}\tv(P^t(i,\cdot),P^t(j,\cdot))\,.$$
	Clearly, $\dmixbar(1) = (1-\frac{\delta \kappa}{\kappa-1}) = (1-\epsilon \kappa)$.
	
	By Lemma 4.12 in \cite{levin2017markov}, $\dmixbar(t)$ is submultiplicative. Therefore, $\dmixbar(t) \leq (1-\epsilon\kappa)^t$. Now, by Lemma 4.11 in~\cite{levin2017markov}, we conclude that $\dmix(t) \leq \dmixbar(t) \leq (1-\epsilon\kappa)^t \leq e^{-t \epsilon \kappa}$. From this we conclude that $\tmix^{(3)}\leq \frac{C}{\kappa\epsilon}$ for some universal constant $C$.
\end{proof}
\begin{proof}[Proof of Theorem~\ref{thm:bias-lb}]
	Let the stationary distribution of $MC_3$ be $\pi_3$. We can easily show that $\pi_3(1) = \frac{\delta}{\delta + \epsilon} = 1 - \tfrac{1}{\kappa}$ and $\pi_3(2) =1/ \kappa$. Let $X_1,X_2,\dots,X_T \sim \MC_3$. Consider the event $\mathcal{E}_T = \cap_{t=1}^{T}\{X_t \neq 2\}$. The event $\mathcal{E}_T$ holds if and only if the Markov chain starts in state $1$ and remains in state $1$ for the next $T$ transitions. Therefore,
	\begin{equation}\label{eq:critical_event}
	\mathbb{P}(\mathcal{E}_T) = \pi_3(1)P(1,1)^{T-1} = \left(1-\tfrac{1}{\kappa}\right)(1-\epsilon)^{T-1}
	\end{equation}

	We will first consider the case $\tau_0 \geq 2C$ for the universal constant $C$ given in Proposition~\ref{prop:general_mc_mixing}. Now, we give a two point Bayesian lower bound for the minimax error rate using the Markov chain $\MC_3$ defined above over the set $\{e_1,e_2\}$.  Consider the following two observation models associated with the markov chain $\MC_3$ - which we denote with subscripts/ superscripts $1$ and $2$ respectively. Call these models $Q_1$ and $Q_2$. Let $w_1^{*}, w_2^{*} \in \mathbb{R}^2$ and set $w_1^{*} = e_2$, $w_2^{*} = -e_2$. For $k \in \{1,2\}$, and for a stationary sequence $ X_1 \to \dots \to X_T \sim \MC_3$, we obtain the data sequence $(X_t,Y^{k}_t) \in \mathbb{R}^2 \times R$. We let $Y^k_t = \langle X_t, w_k^{*}\rangle + \eta_t$ for any sequence of noise random variables considered in the class $\mathcal{Q}$. Now, \begin{equation}
	A_3 := \mathbb{E}X_t X_t^{\intercal} = \begin{bmatrix}
	1-\tfrac{1}{\kappa} & 0 \\
	0 & \tfrac{1}{\kappa}
	\end{bmatrix}
	\end{equation}
	$A_3 \geq \frac{I}{\kappa}$ and $\kappa$ is the `condition number'. We take $\kappa = \kappa_0$ and fix $\delta$ such that $\tmix^{(3)} \leq \frac{C}{\kappa_0 \epsilon} =  \tau_0$. Here $C$ is the universal constant given in Proposition~\ref{prop:general_mc_mixing}. We see that the choice of $\delta$ above can be made using Proposition~\ref{prop:general_mc_mixing}. Clearly, $Q_1,Q_2 \in \mathcal{Q} $.

	From Lemma~\ref{lem:basic_loss_results}, it follows that for any $w \in \mathbb{R}^2$ and $k\in \{1,2\}$,  we have
	$$\mathcal{L}_{Q_k}(w) -\mathcal{L}_{Q_k}(w_k^{*}) = \frac{1}{\kappa}\|w - w_k^{*}\|^2\,.$$
	
	The following lower bound holds for the LHS of Equation~\eqref{eq:minimax_def}:
	\begin{align}
	\mathcal{L}(\mathcal{Q}) \geq \inf_{\alg \in \mathcal{A}} \frac{1}{2\kappa}\mathbb{E}\|\alg(D_{Q_1}(T))-w^{*}_1\|^2 + \frac{1}{2\kappa}\mathbb{E}\|\alg(D_{Q_2}(T))-w^{*}_2\|^2 \label{eq:two_point_lower_bound}
	\end{align}
	
	Now, we can embed $D_{Q_1}(T)$ and $D_{Q_2}(T)$ into the same probability space such that the data is generated by the same sequence of states $X_0,\dots,X_T$ and they have the same noise sequence $\eta_0,\dots,\eta_T$ almost surely. It is easy to see that conditioned on the event $\mathcal{E}_T$ described  above, $D_{Q_1}(T) = D_{Q_2}(T)$ almost surely. Under this event, $\alg(D_{Q_1}(T)) =\alg(D_{Q_2}(T)) $. Using this in 
	Equation~\eqref{eq:two_point_lower_bound}, we concude:

	\begin{align}
	\mathcal{L}(\mathcal{Q}) &\geq \inf_{\alg \in \mathcal{A}} \frac{1}{2\kappa}\mathbb{E}\|\alg(D_{Q_1}(T))-w^{*}_1\|^2\mathbbm{1}(\mathcal{E}_T) + \frac{1}{2\kappa}\mathbb{E}\|\alg(D_{Q_2}(T))-w^{*}_2\|^2\mathbbm{1}(\mathcal{E}_T) \nonumber  \\
	&=\inf_{\alg \in \mathcal{A}} \frac{1}{2\kappa}\mathbb{E}\|\alg(D_{Q_1}(T))-w^{*}_1\|^2\mathbbm{1}(\mathcal{E}_T) + \frac{1}{2\kappa}\mathbb{E}\|\alg(D_{Q_1}(T))-w^{*}_2\|^2\mathbbm{1}(\mathcal{E}_T) \nonumber\\
	&\geq \inf_{\alg \in \mathcal{A}}  \frac{ \|w^{*}_1 - w^{*}_2\|^2}{4\kappa}\mathbb{P}(\mathcal{E}_T) =   \frac{ \|w^{*}_1 - w^{*}_2\|^2}{4\kappa}  \mathbb{P}(\mathcal{E}_T) = \frac{\kappa-1}{\kappa^2}\left(1 - \epsilon\right)^{T-1} \nonumber \\
	&\geq \frac{\kappa-1}{\kappa^2}\left(1-\frac{C}{\tau_0\kappa}\right)^{T-1} = \frac{\kappa_0-1}{\kappa_0^2}\left(1-\frac{C}{\tau_0\kappa_0}\right)^{T-1} \geq \frac{\kappa_0-1}{\kappa_0^2}\left(1-\frac{C}{\tau_0\kappa_0}\right)^{T}.
	\label{eq:bias_lower_bound_final}
	\end{align}
	In the third step above, we have used the fact that  due to convexity of the map $a \to \|a\|^2$, we have $\|a-b\|^2 + \|c-b\|^2 \geq \frac{1}{2}\| a-c\|^2$ for arbitrary $a,b,c \in \mathbb{R}^d$ and Equation~\eqref{eq:critical_event} in the sixth step and the choice of $\epsilon$ and $\delta$ in the seventh step and the choice of $\kappa = \kappa_0$ in the last step.

	For the case $1 \leq \tau_0 < 2C$,  we take $X_1 \to X_2 \dots \to X_T$ to be an i.i.d seqence with distribution $\pi_3(\cdot)$ and let $\kappa = \kappa_0$. In this case, $\mathbb{P}(\mathcal{E}_T) = (1-\frac{1}{\kappa})^T$ and its mixing time is $1$. The lower bounds for this case follows using similar reasoning as above. We conclude that even when $1 \leq \tau_0 < 2C$, Equation~\eqref{eq:bias_lower_bound_final} holds.
	
\end{proof}

\subsection{Minimax Lower Bound for Agnostic Setting}
\label{app:minimax_var}

\begin{proof}[Proof of Theorem~\ref{thm:agn_minimax}]$ $\par\nobreak\ignorespaces
In the setting considered below, $\sigma^2 =  c$ for some constant $c$. But, we note that we can achieve lower bounds for more general $\sigma^2$ by scaling $w^{*}$ and $Y_t$ below simultaenously by $\sigma$. (This would also require a scaling of the lower bound on $T$ below to ensure $\|w^{*}\| \leq 1$)

Let $\bI = (I_1,\dots, I_{d}) \in \{0,1\}^{d}$. Let $\epsilon,\delta \in (0,1)$ be such that $1/2 \geq \epsilon > \delta $. We consider a collection of irreducible Markov chains, indexed by $\{0,1\}^{d}$ with a common state space $\Omega$ such that $|\Omega| = 2d$ and $\Omega \subset \mathbb{R}^d$. For now, we denote $\Omega = \{a_1,\dots,a_{2d}\}$. We denote the Markov chain corresponding to $\bI$ by $\MC_{\bI}$, the corresponding transition matrix by $P_{\bI}$, the stationary distribution by $\pi_{\bI}$ and the mixing time by $\tmix^{\bI}$. Let 

\begin{equation}
    P_{\bI}(a_i,a_j) = \begin{cases} 1 - \epsilon  &\quad\text{ if } i = j, i \leq d \text{ and } I_i = 0\\
          1 - \epsilon -\delta &\quad\text{ if } i = j, i\leq d \text{ and } I_i = 1 \\
     \frac{\epsilon}{2d-1} &\quad\text{ if } i \neq j, i\leq d \text{ and } I_i = 0 \\
     \frac{\epsilon+\delta}{2d-1} &\quad\text{ if } i \neq j, i \leq d \text{ and } I_i = 1 \\
      1 - \epsilon  &\quad\text{ if } i = j \text{ and } i \geq d\\
      \frac{\epsilon}{2d-1}  &\quad\text{ if } i \neq j \text{ and } i \geq d\\
      \end{cases}
\end{equation}

We consider the data model corresponding to each $\MC_{\bI}$. For $i \in \{1,\dots, d\}$, we take $a_i := e_i$ and $a_{d+i} := - e_i$ where $e_i$ is the standard basis vector in $\mathbb{R}^d$. We let the output corresponding to $a_i$, $Y_t(a_i)  = 1$ almost surely for $i \in \{1,2,\dots,2d\}$. Let $w_{\bI}^{*} \in \mathbb{R}^d$ the optimum corresponding to regression problem described in Equation~\eqref{eqn:prob}.  A simple computation shows that:
$$w_{\bI}^{*} = \arg\inf_{w\in \mathbb{R}^d} \sum_{i=1}^{d} \pi_{\bI}(e_i)(\langle w,e_i\rangle - 1)^2 + \pi_{\bI}(-e_{i})(\langle w,e_i\rangle + 1)^2\,.$$
Optimizing the RHS by setting the gradient to $0$, we conclude that:
$$\langle w_{\bI}^{*},e_i\rangle = \frac{\pi_{\bI}(e_i) - \pi_{\bI}(-e_i)}{\pi_{\bI}(e_i)+\pi_{\bI}(-e_i)}\,.$$
It is clear from an application of Proposition~\ref{prop:clique_walk_general} that:
\begin{equation}\label{eq:optima_splitting}
    \langle w_{\bI}^{*},e_i\rangle = \begin{cases} 0 &\quad\text{ if } I_i = 0 \\
    -\frac{\delta}{2\epsilon + \delta} &\quad\text{ if } I_i = 1
    \end{cases}
\end{equation} 
Denote $A_{\bI} = \mathbb{E}_{x \sim \pi_{\bI}} xx^{\intercal}$. It is easy to show that $A_{\bI} \succeq \frac{\mathrm{I}_d}{2d}$ from the identity given for $\pi_{\bI}$ in Proposition~\ref{prop:clique_walk_general}. Let $Q_{\bI}$ be the regression problem corresponding to $\MC_{\bI}$. We define the data set $D_{Q_{\bI}}=\{X^{(\bI)}_1,\dots, X^{(\bI)}_T\}$.

	\noindent We now consider the minimax error rate. In the equations below, we will denote $\alg(D_{Q_{\bI}}(T))$ by just $\hat{w}_{\bI}$ for the sake of clarity. From Proposition~\ref{prop:clique_walk_general}, we conclude that if we take $1/\epsilon \sim \tau_0$ then $Q_{\bI} \in \mathcal{Q}$ for every $\bI \in \{0,1\}^d$
	\begin{align}
	\mathcal{L}(\mathcal{Q}) &= \inf_{\alg \in \mathcal{A}} \sup_{Q \in \mathcal{Q}} \mathbb{E}[\loss_Q(\alg(D_Q(T)))] - \loss_Q(w_Q^{*}) \nonumber\\	&\geq \inf_{\alg \in \mathcal{A}}\sup_{\bI \in \{0,1\}^d} \mathbb{E}[\loss_{Q_{\bI}}(\hat{w}_{\bI})] - \loss_{Q_{\bI}}(w_{\bI}^{*})\nonumber\\
	&= \inf_{\alg \in \mathcal{A}}\sup_{\bI \in \{0,1\}^d}\mathbb{E}(\hat{w}_{\bI}-w_{\bI}^{*})^{\intercal} A_{\bI}(\hat{w}_{\bI}-w_{\bI}^{*})\nonumber\\
	&\geq \inf_{\alg \in \mathcal{A}} \sup_{\bI \in \{0,1\}^d} \frac{1}{2d}\mathbb{E} \|\hat{w}_{\bI}-w_{\bI}^{*}\|^2 \nonumber \\
	&\geq \inf_{\alg \in \mathcal{A}} \mathbb{E}_{\bI \sim \unif\{0,1\}^d} \frac{1}{2d} \mathbb{E}\|\hat{w}_{\bI}-w_{\bI}^{*}\|^2 \nonumber \\
	&=  \frac{1}{2d}\inf_{\alg \in \mathcal{A}} \sum_{i=1}^{d}\mathbb{E}_{\bI \sim \unif\{0,1\}^d }\mathbb{E}|\langle\hat{w}_{\bI},e_i\rangle-\langle w^{*}_{\bI},e_i\rangle|^2 \label{eq:agnostic_loss_bound_1},
	\end{align}
	The third step follows by an application of Lemma~\ref{lem:basic_loss_results}. The fourth step follows from the fact that $A_{\bI} \succeq \frac{\mathrm{I}_d}{2d} $ as shown above. In the fourth and fifth steps, the inner expectation is with respect to the randomness in the data and the outer expectation is with respect to the randomness in $I \sim \unif\{0,1\}^d$. We refer to Lemma~\ref{lem:iterative_expectation_bound}, proved below, which essentially argues that whenever $\bI,\bJ \in \{0,1\}^d$ are such that they differ only in one position, the outputs of $\MC_{\bI}$ and $\MC_{\bJ}$ have similar distribution whenever $\delta$ is `small enough' (as given in the lemma). Therefore, with constant probability, any given algorithm fails to distinguish between the data from the two Markov chains. Applying lemma~\ref{lem:iterative_expectation_bound} to Equation~\eqref{eq:agnostic_loss_bound_1}, we conclude that for some absolute constant $C,C_1,C_2$, whenever $T \geq C\frac{d^2}{\epsilon}$ and $\delta \leq C_1\sqrt{\frac{d\epsilon}{T}}$, we have $\|w^{*}_{\bI}\| \leq 1$  and:
\begin{align}
    \sup_{\bI \in \{0,1\}^d}\mathbb{E}\loss_{\bI}(\hat{w}_{\bI}) - \loss_{\bI}(w_{\bI}^{*}) &\geq C_2 \frac{\delta^2}{\epsilon^2+\delta^2}
    \end{align}
 The lower bounds follow from the equation above after noting that $\tau_0 \sim 1/\epsilon$
 
 \end{proof}
The following proposition gives a uniform bound for the mixing times for the class of Markov chains and determines their stationary distributions considered in the proof of Theorem~\ref{thm:agn_minimax} above.
\begin{proposition}\label{prop:clique_walk_general}
$\tmix^{\bI} \leq \frac{C_0}{\epsilon}$ for some universal constant $C_0$. Let $|\bI| := \sum_{i=1}^{d}I_i$. Then,

\begin{equation}
    \pi_{\bI}(a_i) = \begin{cases} \frac{\epsilon}{2d\epsilon + (2d-|\bI|)\delta} &\quad\text{ if } i \leq d \text{ and } I_i = 1 \\
    \frac{\epsilon + \delta}{2d\epsilon + (2d-|\bI|)\delta} &\quad\text{ otherwise } 
    \end{cases}
\end{equation}
\end{proposition}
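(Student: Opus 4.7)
The plan is to handle the two assertions separately: first, verify the stated stationary distribution by checking the global balance equations directly, and second, bound the mixing time by exhibiting a uniform Doeblin minorization for $P_{\bI}$.

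For the stationary distribution, I would introduce unnormalized weights $c_i := \epsilon$ when $i \leq d$ and $I_i = 1$, and $c_i := \epsilon + \delta$ in every other case. The normalizing constant is $Z := \sum_i c_i = |\bI|\epsilon + (2d-|\bI|)(\epsilon+\delta) = 2d\epsilon + (2d-|\bI|)\delta$, which matches the denominator in the claim. I then verify $\sum_i c_i P_{\bI}(a_i, a_j) = c_j$ by splitting on $j$ into the three possible cases: $j \leq d$ with $I_j = 1$, $j \leq d$ with $I_j = 0$, and $j > d$. In each case, the off-diagonal sum $\sum_{i \neq j} c_i P_{\bI}(a_i, a_j)$ has exactly $2d-1$ terms, and a short bookkeeping check shows that every individual product $c_i P_{\bI}(a_i, a_j)$ equals $\tfrac{\epsilon(\epsilon+\delta)}{2d-1}$ (the weight $\epsilon$ gets paired with the escape rate $\epsilon+\delta$ and vice versa). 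Hence the off-diagonal contribution collapses to $\epsilon(\epsilon+\delta)$. Adding the diagonal term $c_j P_{\bI}(a_j, a_j)$, which uses the matching stay probability ($1-\epsilon$ or $1-\epsilon-\delta$), reproduces $c_j$ after cancellation, confirming stationarity.

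For the mixing time, I would establish the Doeblin minorization $P_{\bI}(x, \cdot) \geq \alpha\, \mu(\cdot)$, where $\mu$ is the uniform distribution on $\Omega$ and $\alpha := \tfrac{2d\epsilon}{2d-1} \geq \epsilon$. Every off-diagonal entry of $P_{\bI}$ is at least $\tfrac{\epsilon}{2d-1}$, and every diagonal entry is at least $1-\epsilon-\delta$, which exceeds $\tfrac{\epsilon}{2d-1}$ provided $\epsilon$ is bounded away from $1$ (when $\epsilon$ is a constant, the claimed bound $\tmix^{\bI} \leq C_0/\epsilon$ is trivial, so this restriction is free). Writing $P_{\bI}(x, \cdot) = \alpha \mu + (1-\alpha) Q(x, \cdot)$ and coupling two copies of the chain so that each step independently draws from $\mu$ with probability $\alpha$ and from $Q(x, \cdot)$ otherwise, the probability that a reset to $\mu$ has occurred by time $t$ is $1-(1-\alpha)^t$, and on that event the chain state is exactly distributed as $\pi_{\bI}$ after mixing into the residual. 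The standard argument then yields $\dmix^{\bI}(t) \leq (1-\alpha)^t$, and setting the right-hand side to $1/4$ gives $\tmix^{\bI} \leq \log(4)/\alpha \leq C_0/\epsilon$.

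The main obstacle is just the careful bookkeeping in the balance-equation verification, because the count of each type of neighbor $a_i$ depends on whether $j$ itself is an $I_j = 1$ state and on whether $j \leq d$; however, since every neighbor contributes the same product $\tfrac{\epsilon(\epsilon+\delta)}{2d-1}$, the case analysis collapses uniformly. The mixing-time bound is then essentially immediate from the uniform lower bound on the entries of $P_{\bI}$ via Doeblin.
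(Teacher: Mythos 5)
Your verification of the stationary distribution is correct and complete: with the unnormalized weights $c_i$ you chose, every off-diagonal flow $c_i P_{\bI}(a_i,a_j)$ is exactly $\tfrac{\epsilon(\epsilon+\delta)}{2d-1}$ (this in fact shows the chain is reversible with respect to $\pi_{\bI}$), and the diagonal terms close the balance equations as you describe. The paper itself only asserts that this identity ``follows from the definition,'' so your bookkeeping is, if anything, more explicit than the paper's.

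The mixing-time half has a genuine gap. Your Doeblin minorization $P_{\bI}(x,\cdot)\ge \alpha\,\mu(\cdot)$ with $\mu$ uniform on all of $\Omega$ and $\alpha=\tfrac{2d\epsilon}{2d-1}$ requires every diagonal entry to be at least $\tfrac{\epsilon}{2d-1}$; for the states with $I_i=1$ this means $1-\epsilon-\delta\ge\tfrac{\epsilon}{2d-1}$, and the relevant quantity here is $\epsilon+\delta$ (not $\epsilon$ alone, as your proviso suggests). This can fail inside the allowed range $\delta<\epsilon\le 1/2$: for instance $\epsilon=1/2$, $\delta=0.49$ gives $1-\epsilon-\delta=0.01<\tfrac{\epsilon}{2d-1}$ for every $d\le 25$. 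Your fallback---``when $\epsilon$ is a constant the bound $\tmix^{\bI}\le C_0/\epsilon$ is trivial''---is not an argument: for constant $\epsilon$ the claim still asserts that the chain mixes in $O(1)$ steps, which is precisely what must be proved and does not follow from anything already established. The hole is easy to patch, however. Either minorize with $\mu$ uniform on the last $d$ states $\{a_{d+1},\dots,a_{2d}\}$ only: these always have holding probability $1-\epsilon\ge 1/2$, and every state moves to each of them with probability at least $\tfrac{\epsilon}{2d-1}$, so $P_{\bI}(x,\cdot)\ge\beta\mu(\cdot)$ with $\beta=\tfrac{d\epsilon}{2d-1}\ge\epsilon/2$, and your coupling argument then gives $\tmix^{\bI}\le C_0/\epsilon$ unconditionally. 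Or follow the paper's route: bound the two-row overlap quantity $\dmixbar(1)=\sup_{a,b}\tv(P_{\bI}(a,\cdot),P_{\bI}(b,\cdot))\le 1-\epsilon/2$, which is insensitive to small diagonal entries because the $2d-2$ common third states already supply overlap $\tfrac{(2d-2)\epsilon}{2d-1}$, and then invoke submultiplicativity of $\dmixbar$ and $\dmix\le\dmixbar$ (Lemmas 4.11--4.12 of Levin--Peres). With either repair your overall structure goes through.
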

\begin{proof}
Consider the distance measure for mixing:
$\dmixbar(t) = \sup_{a,b \in \Omega}\tv(P^{t}_{\bI}(a,\cdot),P^{t}_{\bI}(b,\cdot))$. A simple calculation, using the fact that $1/2 \geq \epsilon > \delta$ shows that for any $\bI \in \{0,1\}^d$, we have:
$$\dmixbar(1) \leq 1-\frac{\epsilon}{2}\,.$$
Using Lemma 4.12 in \cite{levin2017markov}, we conclude that $\dmixbar$ is submultiplicative and therefore, $\dmixbar(t) \leq (1-\tfrac{\epsilon}{2})^t$. By Lemma 4.11 in \cite{levin2017markov}, $\dmix(t) \leq \dmixbar(t) \leq (1-\tfrac{\epsilon}{2})^t$. From this inequality, we conclude the result.

The identity for the stationary distribution follows from the definition.
\end{proof}

 Suppose $\bI, \bJ \in \{0,1\}^{d}$ and that they differ only in one co-ordinate. Let $X_1^{(\bI)} \to X_2^{(\bI)} \to \dots \to X_T^{(\bI)} \sim \MC_{\bI}$ and $X_1^{(\bJ)} \to X_2^{(\bJ)} \to \dots \to X_T^{(\bJ)} \sim \MC_{\bJ}$ be stationary sequences. We will denote them as $\mathbf{X}^{(k)}$ for $k \in \{\bI,\bJ\}$ respectively.

\begin{lemma} \label{lem:adjacent_MC_tv_bound}
There exist universal constants $C,C_1$ such that whenever $T \geq C \frac{d}{\epsilon}$ and $\delta \leq C_1 \sqrt{\frac{d\epsilon}{T}} $, we have
 $$\tv(\mathbf{X}^{(\bJ)},\mathbf{X}^{(\bI)}) \leq \frac{1}{2},$$
	where $\tv(\mathbf{X}^{(\bJ)},\mathbf{X}^{(\bI)})$ is the total variation distance between random variables $\mathbf{X}^{(\bJ)}$ and $\mathbf{X}^{(\bJ)}$.
\end{lemma}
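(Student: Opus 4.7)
The plan is to bound the KL divergence between the two length-$T$ path distributions and convert to total variation via Pinsker's inequality. Because $\bI$ and $\bJ$ differ in exactly one coordinate, say $i_0$ (WLOG $I_{i_0}=0$ and $J_{i_0}=1$), the transition kernels $P_{\bI}$ and $P_{\bJ}$ agree on every row except the row from $a_{i_0}$. The standard chain rule for KL on stationary trajectories gives
\begin{align*}
\kl\bigl(\law(\mathbf{X}^{(\bJ)}) \,\|\, \law(\mathbf{X}^{(\bI)})\bigr) = \kl(\pi_{\bJ} \,\|\, \pi_{\bI}) + \sum_{t=1}^{T-1} \mathbb{E}_{X \sim \pi_{\bJ}}\!\bigl[\kl\bigl(P_{\bJ}(X,\cdot)\,\|\,P_{\bI}(X,\cdot)\bigr)\bigr],
\end{align*}
and the inner conditional KL vanishes unless $X = a_{i_0}$, so the sum collapses to $(T-1)\,\pi_{\bJ}(a_{i_0})\,\kl\bigl(P_{\bJ}(a_{i_0},\cdot)\,\|\,P_{\bI}(a_{i_0},\cdot)\bigr)$.

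Next I would estimate the two ingredients. The rows $P_{\bJ}(a_{i_0},\cdot)$ and $P_{\bI}(a_{i_0},\cdot)$ have masses $(1-\epsilon-\delta,\tfrac{\epsilon+\delta}{2d-1},\dots,\tfrac{\epsilon+\delta}{2d-1})$ and $(1-\epsilon,\tfrac{\epsilon}{2d-1},\dots,\tfrac{\epsilon}{2d-1})$. Using the pointwise bound $\kl(p\|q) \leq \sum_i (p_i-q_i)^2/q_i$ together with $\delta < \epsilon \leq 1/2$ yields one term of size $O(\delta^2)$ from the self-loop and $(2d-1)$ terms each of size $\tfrac{\delta^2/(2d-1)^2}{\epsilon/(2d-1)} = \tfrac{\delta^2}{(2d-1)\epsilon}$, for a total bound $\kl\bigl(P_{\bJ}(a_{i_0},\cdot)\,\|\,P_{\bI}(a_{i_0},\cdot)\bigr) \leq C_2\,\delta^2/\epsilon$. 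From Proposition~\ref{prop:clique_walk_general}, since $J_{i_0}=1$ we also have $\pi_{\bJ}(a_{i_0}) = \frac{\epsilon}{2d\epsilon + (2d-|\bJ|)\delta} \leq \frac{1}{2d}$. A short direct computation on the stationary distributions --- whose numerators differ only at position $i_0$ (by $\delta$) and whose normalizers differ by $\delta$ --- gives $\kl(\pi_{\bJ}\|\pi_{\bI}) = O(\delta^2/(d\epsilon^2))$, a quantity dominated by the per-step sum once $T \gtrsim 1/\epsilon$.

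Combining yields $\kl\bigl(\law(\mathbf{X}^{(\bJ)})\,\|\,\law(\mathbf{X}^{(\bI)})\bigr) \leq C_3 \cdot \tfrac{T\delta^2}{d\epsilon}$. Choosing $C_1$ small enough, the hypothesis $\delta \leq C_1\sqrt{d\epsilon/T}$ pushes this KL below $1/2$, and Pinsker's inequality then delivers $\tv(\mathbf{X}^{(\bJ)},\mathbf{X}^{(\bI)}) \leq \sqrt{\kl/2} \leq 1/2$. The side condition $T \geq Cd/\epsilon$ plays two roles: it ensures the resulting $\delta$ satisfies $\delta \leq \epsilon$ so that the transition probability $1-\epsilon-\delta$ is a valid probability and the Markov chains are well defined, and it guarantees that the one-time initial-distribution KL is absorbed into the per-step summation.

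The main obstacle is extracting the correct scaling. A naive coupling argument (couple the chains to agree unless one of them is at $a_{i_0}$, then pay $\tv\bigl(P_{\bJ}(a_{i_0},\cdot),P_{\bI}(a_{i_0},\cdot)\bigr) = \Theta(\delta)$ at each such visit) would only yield $\tv \lesssim T\delta/d$, forcing $\delta \lesssim d/T$, which is far too stringent when $T \gg d/\epsilon$. The KL route is essential precisely because each visit to $a_{i_0}$ contributes only $\delta^2/\epsilon$ in KL instead of $\delta$ in TV, and Pinsker then converts this quadratic-in-$\delta$ saving into the square-root improvement that matches the advertised threshold $\delta \lesssim \sqrt{d\epsilon/T}$. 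Everything else is routine bookkeeping of the constants.
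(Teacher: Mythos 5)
Your proposal is correct and follows essentially the same route as the paper: decompose the path KL into the stationary-distribution KL plus $(T-1)\pi_{\bJ}(a_{i_0})\,\kl(P_{\bJ}(a_{i_0},\cdot)\|P_{\bI}(a_{i_0},\cdot))$ (the paper derives this by counting transitions, you invoke the chain rule, which is the same identity), bound both terms with the reverse-Pinsker/chi-square inequality, and convert to total variation via Pinsker under the stated conditions on $T$ and $\delta$. The constants and scalings match the paper's Equation for the final KL bound, so there is nothing to fix.
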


\begin{proof}
	We will bound the total variation distance between  $\mathbf{X}^{(\bJ)}$ and $\mathbf{X}^{(\bJ)}$ below by first bounding $\kl$ divergence between the sequences and then using Pinsker's inequality. Without loss of generality, we assume that $I_1=0$, $J_1 = 1$.

	Let $(z_1,\dots,z_T) \in \Omega^{T}$. Henceforth, we will denote this tuple by $\mathbf{z}$. For $k\in \{\bI,\bJ\}$, we have:
	
	\begin{equation}\label{eq:markov_chain_likelihood}
	\mathbb{P}(\mathbf{X}^{(k)} = \mathbf{z}) = \pi_k(z_1)\prod_{t=2}^{T}P_k(z_{t-1},z_t)\,.
	\end{equation}
	
	Define the function $\eta_{ab} : \Omega^{T} \to \mathbb{N}$  for $a,b\in \Omega$ by: $\eta_{ab}(\mathbf{z}) = |\{ 2\leq t \leq T : z_{t-1} = a \text{ and } z_{t}  = b \}|\,.$ $\eta_{ab}(\mathbf{z})$ counts the number of transitions from state $a$ to state $b$ in $\mathbf{z}$.  Equation~\ref{eq:markov_chain_likelihood} can be rewritten using functions $\eta_{ab}$ as:
	
	$$\mathbb{P}(\mathbf{X}^{(k)} = \mathbf{z})  = \pi_k(z_1) \prod_{a,b \in \Omega}P_k(a,b)^{\eta_{ab}(\mathbf{z})}.$$
	Abusing notation to use  $\mathbf{X}^{(k)}$ and  $\law(\mathbf{X}^{(k)})$ interchangably, and by using definition of the KL divergence,  we have: 	
	\begin{align}
	&\kl(\mathbf{X}^{(\bJ)}|| \mathbf{X}^{(\bI)}) = \sum_{\mathbf{z} \in \Omega^{T}} \mathbb{P}(\mathbf{X}^{(\bJ)} = \mathbf{z}) \log \frac{\mathbb{P}(\mathbf{X}^{(\bJ)} = \mathbf{z})}{\mathbb{P}(\mathbf{X}^{(\bI)} = \mathbf{z})}  \nonumber\\
	&= \sum_{\mathbf{z} \in \Omega^{T}} \mathbb{P}(\mathbf{X}^{(\bJ)} = \mathbf{z}) \left[\log \left(\frac{\pi_{\bJ}(z_1)}{\pi_{\bI}(z_1)}\right) +  \sum_{a,b \in \Omega} \eta_{ab}(\mathbf{z}) \log \frac{P_{\bJ}(a,b)}{P_{\bI}(a,b)} \right]\nonumber \\
	&= \kl(\pi_{\bJ}||\pi_{\bI}) + \sum_{j=1}^{d}\mathbb{E} \eta_{a_1a_j}(\mathbf{X}^{(2)}) \log \frac{P_{\bJ}(a_1,a_j)}{P_{\bI}(a_1,a_j)} \nonumber \\
	&= \kl(\pi_{\bJ}||\pi_{\bI}) + (T-1)\pi_{\bJ}(a_1)\kl(P_{\bJ}(a_1,\cdot)||P_{\bI}(a,\cdot)) \label{eq:kl_identity}
	\end{align}
	
	In the third step we have used that fact that $P_{\bJ}(a,b) \neq P_{\bI}(a,b)$ only when $a = a_1$ since $\bJ$ and $\bI$ differ only in the first co-ordinate. In the fourth step we have used the fact that $\mathbb{E} \eta_{a_1a_j}(\mathbf{X}^{(2)}) = (T-1)\pi_{\bJ}(a_1)P_{\bJ}(a_1,a_j)$

	
	For any two probability measures $P$ and $Q$ on the same finite space, the following holds by the Pinsker's inequality: 
	\begin{equation}\label{eq:pinsker_inequality}
	\tv(P,Q) \leq \sqrt{2 \kl(P||Q)}
	\end{equation}
	We now state the 'reverse Pinkser's inequality' to bound the KL divergence. 
	\begin{lemma}\label{lem:reverse_pinsker}[Lemma 6.3 in \cite{csiszar2006context}]
		Let $P_1$ and $P_2$ be probability distributions over some finite space $E$. Then,
		$$\kl(P_2||P_1) \leq \sum_{a \in E} \frac{|P_2(a)-P_1(a)|^2}{P_1(a)}\,.$$
		In particular, when $E = \{0,1\}$, $P_i = \ber(p_i)$, we have:
		
		$$\kl(P_2||P_1) \leq \frac{|p_1-p_2|^2}{p_1(1-p_1)}$$
		
	\end{lemma}

	An easy computation using Lemma~\ref{lem:reverse_pinsker} shows that for some universal constant $C_3$:
$$\kl(\pi_{\bJ}||\pi_{\bI}) \leq  \frac{C_3\delta^2}{d\epsilon^2}\,.$$
By Proposition~\ref{prop:clique_walk_general}, we have $\pi_{\bJ}(a_1) \leq 1/2d$. By a similar application of Lemma~\ref{lem:reverse_pinsker} we have:
$$\kl\left(P_{\bJ}(a_1,\cdot)||P_{\bI}(a_1,\cdot)\right) \leq \frac{\delta^2}{\epsilon(1-\epsilon)}$$
Combining these bounds with Equation~\eqref{eq:kl_identity} and using the fact that $\epsilon < 1/2$, we have, for some universal constant $C_3$,

\begin{align}
    \kl(\mathbf{X}^{(\bJ)}||\mathbf{X}^{(\bI)}) &\leq C_3\left[\frac{\delta^2}{d\epsilon^2} + \frac{T\delta^2}{d\epsilon}\right] \label{eq:kl_bound_final}
\end{align}

	We let $T \geq Cd/\epsilon$ and take $\delta \leq C_1\sqrt{\frac{d\epsilon}{T}}$ for appropriate constants $C,C_1$. Applying this in Equation~\eqref{eq:kl_bound_final} and then using Equation~\eqref{eq:pinsker_inequality}, we obtain the desired result. 
	\end{proof}

\begin{lemma}\label{lem:iterative_expectation_bound}
$T\geq \frac{Cd}{\epsilon}$ and $\delta \leq C_1\sqrt{\frac{d\epsilon}{T}}$
For any output $\hat{w}_{\bI}$ (as decribed in the proof of Theorem~\ref{thm:agn_minimax}),
 $$\mathbb{E}_{\bI \sim \unif\{0,1\}^d }\mathbb{E}|\langle\hat{w}_{\bI},e_i\rangle-\langle w^{*}_{\bI},e_i\rangle|^2 \geq \frac{\delta^2}{8(2\epsilon+\delta)^2}$$
\end{lemma}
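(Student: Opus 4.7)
The plan is to apply Le Cam's two-point method coordinatewise. Fix $i \in \{1,\ldots,d\}$ and partition $\{0,1\}^d$ into $2^{d-1}$ pairs $(\bI,\bJ)$ where $\bJ$ agrees with $\bI$ except in the $i$-th coordinate. From Equation~\eqref{eq:optima_splitting}, $\langle w^*_{\bI},e_k\rangle = \langle w^*_{\bJ},e_k\rangle$ for every $k \neq i$, while $|\langle w^*_{\bI},e_i\rangle - \langle w^*_{\bJ},e_i\rangle| = \delta/(2\epsilon+\delta)$. Because the construction in the proof of Theorem~\ref{thm:agn_minimax} sets $Y_t(a_j)\equiv 1$, the observations are a deterministic function of the state sequence, so $\tv(D_{Q_{\bI}},D_{Q_{\bJ}}) = \tv(\mathbf{X}^{(\bI)},\mathbf{X}^{(\bJ)}) \leq 1/2$ by Lemma~\ref{lem:adjacent_MC_tv_bound} under the stated hypotheses on $T$ and $\delta$.

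Next, I would invoke the standard testing-style inequality: for any (possibly randomized) scalar estimator $\hat\theta$ and any two laws $P_1,P_2$ with parameter values $\theta_1,\theta_2$,
\[
\mathbb{E}_{P_1}|\hat\theta - \theta_1|^2 + \mathbb{E}_{P_2}|\hat\theta - \theta_2|^2 \;\geq\; \tfrac{1}{4}|\theta_1-\theta_2|^2\bigl(1 - \tv(P_1,P_2)\bigr),
\]
which follows in one line from the coupling identity $\mathbb{P}_{P_1}(A^c) + \mathbb{P}_{P_2}(A) \geq 1 - \tv(P_1,P_2)$ applied to $A = \{|\hat\theta - \theta_1| \leq |\hat\theta - \theta_2|\}$, using the triangle inequality to get $|\hat\theta - \theta_2|\geq|\theta_1-\theta_2|/2$ on $A$ and the symmetric bound on $A^c$. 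Specializing $\hat\theta := \langle\alg(D_Q(T)),e_i\rangle$ with $\theta_1,\theta_2$ the $i$-th coordinates of $w^*_{\bI},w^*_{\bJ}$ and using $\tv\leq 1/2$, I obtain, for every pair $(\bI,\bJ)$,
\[
\mathbb{E}|\langle\hat w_{\bI},e_i\rangle - \langle w^*_{\bI},e_i\rangle|^2 + \mathbb{E}|\langle\hat w_{\bJ},e_i\rangle - \langle w^*_{\bJ},e_i\rangle|^2 \;\geq\; \tfrac{1}{8}\cdot\tfrac{\delta^2}{(2\epsilon+\delta)^2}.
\]

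To conclude, I average over $\bI\sim\unif\{0,1\}^d$ by writing the sum over $\bI$ as a sum over the $2^{d-1}$ pairs and applying the pairwise bound above to each pair; this yields the claimed bound of order $\delta^2/(2\epsilon+\delta)^2$ (the exact constant $1/8$ in the statement is matched by absorbing the halving into the universal constants $C,C_1$ in the hypothesis, or equivalently by tightening the TV bound in Lemma~\ref{lem:adjacent_MC_tv_bound} to $\leq 1/4$ through constant rescaling). The only mildly delicate point is the handling of randomized algorithms, which is resolved by conditioning on the algorithmic randomness, applying Le Cam conditionally, and integrating back. I do not foresee a substantive obstacle beyond this constant-tracking: the two key inputs—the explicit form of $w^*_{\bI}$ in Equation~\eqref{eq:optima_splitting} and the TV bound of Lemma~\ref{lem:adjacent_MC_tv_bound}—are both already in place.
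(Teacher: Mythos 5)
Your argument is essentially the paper's: you pair indices $\bI^{+}_i,\bI^{-}_i$ differing only in coordinate $i$, use Equation~\eqref{eq:optima_splitting} for the separation $\delta/(2\epsilon+\delta)$, and invoke Lemma~\ref{lem:adjacent_MC_tv_bound} to make the two data distributions close; the averaging over $\bI$ is handled identically. The one discrepancy is the constant. Your testing-style two-point inequality carries the factor $\tfrac14|\theta_1-\theta_2|^2(1-\tv)$, so with $\tv\leq\tfrac12$ you get $\tfrac{\delta^2}{16(2\epsilon+\delta)^2}$ after averaging, not the stated $\tfrac{\delta^2}{8(2\epsilon+\delta)^2}$; and neither of your proposed repairs recovers $1/8$ — the hypothesis constants $C,C_1$ do not enter the conclusion, and even tightening the TV bound to $1/4$ only yields $\tfrac{3}{32}<\tfrac18$ (you would need $\tv\to 0$). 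The clean fix, which is exactly what the paper does, is to use the coupling form of the two-point bound: by Lemma~\ref{lem:coupling_tv} couple $\mathbf{X}^{(\bI^{+}_i)}$ and $\mathbf{X}^{(\bI^{-}_i)}$ so they coincide with probability at least $1/2$; on that event the estimator is literally the same random variable for both problems, so convexity gives $a^2+b^2\geq\tfrac12(\text{sep})^2$ rather than the $\tfrac14$ from the triangle-inequality/test route, and the stated $\tfrac{\delta^2}{8(2\epsilon+\delta)^2}$ follows. Since the lemma is only used downstream up to universal constants, your version suffices for Theorem~\ref{thm:agn_minimax}, but as a proof of the lemma as stated you should switch to the coupling argument (your handling of randomized algorithms by conditioning is fine either way).
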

\begin{proof}
Let $\bI_{\sim i}$ denote all the co-ordinates of $\bI$ other than $i$, let $\bI_{~i}^{+} \in \{0,1\}^d$ be such that its $i$-th co-ordinate is $1$ and the rest of the co-ordinates are $\bI_{\sim i}$. Similarly $\bI_{~i}^{-}$ be such that its $i$-th co-ordinate is $0$ and the rest of the co-ordinates are $\bI_{\sim i}$. 

By Lemma~\ref{lem:coupling_tv} and Lemma~\ref{lem:adjacent_MC_tv_bound}, we conclude that whenever $T\geq \frac{Cd}{\epsilon}$ and $\delta \leq C_1\sqrt{\frac{d\epsilon}{T}}$ we can couple the sequences $\mathbf{X}^{(\bI_i^{+})}$ and $\mathbf{X}^{(\bI_i^{-})}$ such that:
$$\mathbb{P}(\{\mathbf{X}^{(\bI_i^{+})} = \mathbf{X}^{(\bI_i^{-})}\}) \geq \frac{1}{2},.$$	
	Define the event $\mathcal{E}^{\prime}_T := \{D_{Q_1}(T) = D_{Q_2}(T)\}$. 
\begin{align}
&\mathbb{E}_{\bI \sim \unif\{0,1\}^{d-1} }\mathbb{E}|\langle\hat{w}_{\bI},e_i\rangle-\langle w^{*}_{\bI},e_i\rangle|^2 = \mathbb{E}_{\bI_{\sim i} \sim \unif\{0,1\}^{d-1}} \mathbb{E}_{I_i \sim \unif\{0,1\}} \mathbb{E}|\langle\hat{w}_{\bI},e_i\rangle-\langle w^{*}_{\bI},e_i\rangle|^2 \nonumber \\
&= \mathbb{E}_{\bI_{\sim i} \sim \unif\{0,1\}^{d-1}}
\frac{1}{2}\left[\mathbb{E}|\langle\hat{w}_{\bI^{+}_i},e_i\rangle-\langle w^{*}_{\bI_i^{+}},e_i\rangle|^2  + \mathbb{E}|\langle\hat{w}_{\bI^{-}_i},e_i\rangle-\langle w^{*}_{\bI_i^{-}},e_i\rangle|^2\right] \nonumber \\
&\geq \frac{1}{2} \mathbb{E}_{\bI_{\sim i} \sim \unif\{0,1\}^{d-1}}
\mathbb{E}\left[|\langle\hat{w}_{\bI^{+}_i},e_i\rangle-\langle w^{*}_{\bI_i^{+}},e_i\rangle|^2  + |\langle\hat{w}_{\bI^{-}_i},e_i\rangle-\langle w^{*}_{\bI_i^{-}},e_i\rangle|^2\right]\mathbbm{1}(\mathcal{E}^{\prime}_T)\nonumber \\
&\geq \frac{1}{4}\mathbb{E}_{\bI_{\sim i} \sim \unif\{0,1\}^{d-1}} \left[|\langle w^{*}_{\bI^{-}_i},e_i\rangle-\langle w^{*}_{\bI_i^{+}},e_i\rangle|^2 \right]\mathbb{P}(\mathcal{E}^{\prime}_T) \nonumber\\
&= \frac{\delta^2}{8(2\epsilon+\delta)^2}
\end{align}
In the fourth step we have used the fact that in the event $\mathcal{E}_T^{\prime}$, that is when $\mathbf{X}^{(\bI^{+}_i)} = \mathbf{X}^{({\bI}_i^{-})}$, the corresponding outputs of the algorithm are the same. That is $\hat{w}_{\bI}^{+} = \hat{w}_{\bI}^{-}$. We have also used the convexity of the map $x \to \|x\|^2$ to show that $\|a-b\|^2 + \|b-c\|^2 \geq \frac{\|a-c\|^2}{2}$. In the last step, we have used  Equation~\eqref{eq:optima_splitting}.
\end{proof}

\section{SGD algorithms: Proofs}\label{app:sgd_algorithms}
\subsection{SGD with Constant Step Size suffers Asymptotic Bias in the Agnostic Setting}
\label{app:sgd_bias}

\begin{proof}[Proof of Theorem~\ref{thm:agn_sgd}]
Fix $\epsilon \in (0,1)$. We describe the Markov chain $\MC_1$ over the space $\Omega = \{a,b\} \subset \mathbb{R}$ and the corresponding data model that we consider.  Let the corresponding stationary distribution be $\pi_1$, mixing time be $\tmix^1$ and the transition matrix be $P_{1}$, given by:
$$P_1(a,a) = P_1(b,b) = 1-\epsilon , P_1(a,b) = P_1(b,a) = \epsilon \,.$$ 

It is clear from Proposition~\ref{prop:clique_walk_general} with $d=1$ and $\delta = 0$ that $\tmix^1 \leq C/\epsilon$ for some universal constant $C$ and that the stationary distribution is uniform over $\Omega$. We set $a = 1/2$ and $b = -1$. The output $Y_t(a) = Y_t(b) = 1/2$ almost surely. It is easy to show that the corresponding optimal parameter $w_1^{*} = -\frac{1}{5}$. Let the $\textrm{SGD}_{\alpha}$ be run on an instance using the data from $\MC_1$ as described above. We call the iterates $w_{t}$.

We will first bound the Wasserstein distance between $w_{t+1}$ and $w_t$. Let $ X_1 \to X_2 \to \dots \to X_T \dots \sim \MC_1$ be a stationary sequence. We consider another stationary sequence $\tilde{X}_1 \to \tilde{X}_2 \to \dots \sim \MC_1$ such that $\tilde{X}_{t} = X_{t+1}$ for every $t \geq 1$ almost surely. We can run the SGD with data from the chain $X_t$ or from the chain $\tilde{X}_t$. Let the data corresponding to $\tilde{X}_t$ be $(\tilde{X}_t,\tilde{Y}_t)$. We let the iterates be $w_t$ and $\tilde{w}_t$ respectively and start both from the same initial point $w_1$. Now, $w_t$ and $\tilde{w}_t$ are identically distributed. For $t\geq 1$, consider:
	$$w_{t+2}-\tilde{w}_{t+1} = w_{t+1} - \tilde{w}_{t} - \alpha (X_{t+1} X_{t+1}^{\intercal}w_{t+1} - X_{t+1} Y_{t+1}) + \alpha (\tilde{X}_t \tilde{X}_t^{\intercal}w_t - \tilde{X}_t \tilde{Y}_t)$$
	
	Clearly, $\tilde{X}_t = X_{t+1}$ and $\tilde{Y}_t = Y_{t+1}$ almost surely. Hence, 
	$$w_{t+2}-\tilde{w}_{t+1} =(1-\alpha \tilde{X}_t \tilde{X}_t^{\intercal})\left( w_{t+1} - \tilde{w}_{t}\right)\,.$$
	
	Now, $\tilde{X}_t \in \mathbb{R}$ and $|\tilde{X}_t|^2 \geq \frac{1}{4}$ almost surely. Therefore, when $\alpha \in (0,1)$ we have: 
	\begin{equation}\label{eq:almost_sure_contraction}
	|w_{t+2}-\tilde{w}_{t+1}|^2 \leq \left(1-\tfrac{\alpha}{4}\right)^2| w_{t+1} - \tilde{w}_{t}|^2\,.
	\end{equation}
	Applying the above inequality for $t$ iterations and by applying expectation on both sides: 
	Therefore we conclude that: 	
	$$\mathbb{E}\left[ |w_{t+1}-\tilde{w}_{t}|^2 \right]\leq e^{-t\alpha/2 }\mathbb{E}\left[|w_1-w_2|^2\right] = e^{-t\alpha/2 }|w_2|^2 \leq e^{-(t-1)\alpha/2 }.$$
	Applying Jensen's inequality to the LHS and using the fact that $\tilde{w}_t$
	has the same distribution as $w_{t}$, we get:
	$$|\mathbb{E}w_{t+1} - \mathbb{E}w_t| \leq \frac{\alpha}{2}e^{-(t-1)\alpha/4}.$$
	Similarly, since Equation~\eqref{eq:almost_sure_contraction} holds almost surely, we have for $k\in \{1,2\}$:
	\begin{equation}\label{eq:conditional_contraction}
	\mathbb{E}\left[\bigr|w_{t+2}-\tilde{w}_{t+1}\bigr|^2\bigr|X_{t+1} = k\right] \leq \frac{\alpha}{2}e^{-(t-1)\alpha/2 }\,.
	\end{equation}
	Using the fact that $X_{t+1} = \tilde{X}_{t}$ almost surely, we conclude that: $$\mathbb{E}\left[\tilde{w}_{t+1}|X_{t+1} = k\right] = \mathbb{E}\left[\tilde{w}_{t+1}|\tilde{X}_{t} = k\right] = \mathbb{E}\left[w_{t+1}|X_{t} = k\right].$$
	Using the equation above and applying Jensen's inequality to Equation~\eqref{eq:conditional_contraction}, we obtain:
	\begin{equation}\label{eq:conditional_expectation_cauchy}
	\bigr|\mathbb{E}\left[w_{t+1}|X_{t} = k\right] -\mathbb{E}\left[w_{t}| X_{t-1}=k \right]\bigr|\leq \frac{\alpha}{2}e^{-(t-1)\alpha/4 }\,.
	\end{equation}
	For the sake of simplicity, we will denote $\mathbb{E}\left[w_{t}|X_{t-1} = k\right]$ by $e_k$ and $\mathbb{E}\left[w_{t+1}|X_{t} = k\right]$ by $e_k + \lambda_k$, where $|\lambda_k| \leq \frac{\alpha}{2}e^{-\alpha (t-1)/4}$. We hide the dependence on $t$ for the sake of clarity.
	
	Taking conditional expectation with respect to the event $X_{t} = 1$ in the recursion in Algorithm~\ref{alg_SGD}, we have:
	\begin{equation}\label{eq:expectation_consistency}
	e_1 + \lambda_1 = \left(1-\tfrac{\alpha}{4}\right) \mathbb{E}\left[w_t|X_{t} = 1\right] + \frac{\alpha}{4} \,.
	\end{equation}

	Now, consider: 
	\begin{align*}
	\mathbb{E}\left[w_t|X_{t} = 1\right] &= 2 \mathbb{E}\left[w_t \mathbbm{1}(X_t = 1)\right]=  2 \mathbb{E}\left[w_t \mathbbm{1}(X_t = 1)\mathbbm{1}(X_{t-1}=1)+w_t \mathbbm{1}(X_t = 1)\mathbbm{1}(X_{t-1}=2)\right] \\
	&=  \mathbb{E}\left[w_t \mathbbm{1}(X_t = 1)\bigr|X_{t-1}=1\right] + \mathbb{E}\left[w_t \mathbbm{1}(X_t = 1)\bigr|X_{t-1}=2\right] = (1-\epsilon)e_1 + \epsilon e_2. 
	\end{align*}

	Using this in Equation~\eqref{eq:expectation_consistency}, we conclude: 
	$$e_1 + \lambda_1 = (1- \alpha/4)\left[(1-\epsilon)e_1 + \epsilon e_2\right] + \alpha/4$$
	Similarly, we have:
	$$e_2 + \lambda_2 = (1-\alpha)\left[\epsilon e_1 + (1-\epsilon)e_2\right] - \alpha/2$$
	Using the above two equations, we have 
	\begin{equation}
	\begin{bmatrix}
	\alpha/4 + \epsilon - \alpha \epsilon/4 & -\epsilon(1-\alpha/4) \\
	-(1-\alpha)\epsilon & \alpha + \epsilon - \alpha\epsilon 
	\end{bmatrix} \begin{bmatrix}
	e_1 \\ e_2
	\end{bmatrix}
	= \begin{bmatrix}
	\alpha/4 + \lambda_1\\
	-\alpha/2 + \lambda_2
	\end{bmatrix}
	\end{equation}
	Solving the equations above we get:
	\begin{equation}
	\begin{bmatrix}
	e_1 \\ e_2
	\end{bmatrix}
	= \begin{bmatrix}
	\frac{\alpha/2 - \alpha\epsilon/4 -\epsilon/2}{\alpha/2 + 5\epsilon/2 - \alpha\epsilon} \\
	\frac{-\alpha/4 - \alpha\epsilon/4 -\epsilon/2}{\alpha/2 + 5\epsilon/2 -\alpha\epsilon}
	\end{bmatrix} + O(C(\alpha,\epsilon)e^{-t\alpha/4} ).
	\end{equation}
	As $\mathbb{E}[w_t] = \frac{e_1+e_2}{2}$,   
	\begin{equation}\label{eq:asymptotic_expectation}
	\mathbb{E}[w_t] = \frac{1}{2}\left[\frac{\alpha/4 -\alpha\epsilon/2 -\epsilon}{\alpha/2 - \alpha\epsilon + 5\epsilon/2 }\right] + O(C(\alpha,\epsilon)e^{-t\alpha/4}) \,.
	\end{equation}
	It is easy to check that when $\epsilon = 1/2$, $X_0, X_1,\dots, $ is infact a sequence of i.i.d $\ber(1/2)$ random variables and we'd expect $w_t$ to be an unbiased estimator as $t\to \infty$. This can be verified by plugging in $\epsilon = 1/2$ in Equation~\eqref{eq:asymptotic_expectation}. When $\epsilon = 1/4$, the corresponding value becomes $\frac{1}{2}\frac{\alpha -2}{2\alpha +5} + o_t(1)$, which does not tend to $w_1^{*} = -1/5$ as $t\to\infty$.
\end{proof}

\subsection{A Lower Bound for SGD with Constant Step Size in the Independent Noise Setting}
\label{app:sgd_lb}

\begin{proof}[Proof of Theorem~\ref{thm:sgd-lb}]

Recall the class of Markov chains $\MC_{\bI}$ for $\bI \in \{0,1\}^d$ defined in the proof of Theorem~\ref{thm:agn_minimax} in Appendix~\ref{app:minimax_var}. We consider a similar Markov chain $\MC_0$ with state space $\Omega = \{e_1,\dots,e_d\}$. Let its transition matrix be $P_0$, the stationary distribution be $\pi_0$ and the mixing time be $\tmix^0$. Let $0 <\epsilon < 1/2$. We define:
\begin{equation}
P_0(e_i,e_j) = \begin{cases}
1-\epsilon &\quad \text{ if } i=j\\
\frac{\epsilon}{d-1} &\quad \text{ if } i\neq j
\end{cases}
\end{equation}

Through steps analogous to the proof of Proposition~\ref{prop:clique_walk_general}, we can show that $\tmix^0 \leq \frac{C}{\epsilon}$ for some universal constant $C$. It follows from definitions that $\pi_0$ is the uniform distribution over $\Omega$.
 
 Let $X_1 \to X_2 \dots \to X_T \sim \MC^0$ is a stationary seqence.  We let $w_0^{*} = 0$ and let the output be $Y_t = \langle X_t,w_0^{*}\rangle + \noise_t = \noise_t$ such that $\noise_t \sim \mathcal{N}(0,\sigma^2)$. Since this is the independent noise case, $\noise_t$ is taken to be i.i.d. and independent of $X_t$. The matrix $A_0 = \mathbb{E}X_tX_t^{\intercal}  = \frac{\id_d}{d}$. Consider the SGD algorithm with iterate averaged output which achieves the information theoretically optimal rates in the i.i.d data case. 
Suppose $(X_t,Y_t)_{t=1}^{T}$ is drawn from the model associated with $\MC_0$ described above. The evolution equations become:
\begin{equation} \label{eq:independent_SGD_evolution}
\langle w_{t+1},e_i\rangle = \begin{cases} 
\langle w_t,e_i\rangle &\quad\text{ if } X_t \neq e_i\\
(1-\alpha)\langle w_t,e_i\rangle + \alpha \noise_t &\quad\text{ if } X_t = e_i
\end{cases}
\end{equation}

Let the averaged output of SGD be $\hat{w} := \frac{2}{T}\sum_{t = T/2+1}^{T}w_t$. Now we will directly give a lower bound for the excess loss of the estimator $\hat{w}$ for $w_0^*$. We let $w_0 = 0$. For the problem under consideration, $w_0^* = 0$ and $A_0 = \frac{\id_d}{d}$. Therefore, using Lemma~\ref{lem:basic_loss_results}
\begin{align}
\mathbb{E}\loss(\hat{w}) - \loss(w_0^*) &= \frac{1}{d} \mathbb{E} \|\hat{w}\|^2\nonumber \\
&= \frac{4}{T^2d} \sum_{t,s= T/2 +1}^{T} \mathbb{E}\langle w_t ,w_s\rangle \nonumber\\
&= \frac{4}{T^2d} \sum_{t,s= T/2 +1}^{T}\sum_{i=1}^{d} \mathbb{E}\langle w_t ,e_i\rangle\langle w_s,e_i\rangle
 \label{eq:loss_correlation_equation}
\end{align}

Consider the case $s > t$. For $i \in \{1,\dots,d\}$, let $N_i(s-1,t) := |\{ t\leq l \leq s-1: X_{l} = e_i\}|$ and let $t \leq t_1 < \dots < t_{N_i(s-1,t)} \leq s-1$ be the sequence of times such that $X_{t_p}  = e_i$. We have $\langle w_s,e_i\rangle = (1-\alpha)^{N_i(s-1,t)}\langle w_t,e_i\rangle + \sum_{p = 1}^{N_i(t,s)} (1-\alpha)^{N_i(t,s) - p}\alpha\noise_{t_p}$. Therefore, multiplying by $\langle w_t,e_i\rangle$ on both sides and taking expectation, we conclude:

\begin{align}
\mathbb{E}\langle w_t,e_i\rangle\langle w_s,e_i \rangle &= \mathbb{E} (1-\alpha)^{N_i(s-1,t)} |\langle w_t,e_i\rangle|^2 \nonumber\\
&\geq  \sum_{j\neq i}\mathbb{E}\left[ |\langle w_t,e_i \rangle|^2\bigr|X_{t-1} = j, N_i(s-1,t) = 0\right]\mathbb{P}\left(X_{t-1} = j, N_i(s-1,t) = 0\right)\nonumber \\
&= \sum_{j\neq i}\mathbb{E}\left[ |\langle w_t,e_i\rangle|^2\bigr|X_{t-1} =j\right]\mathbb{P}\left(X_{t-1} =j, N_i(s-1,t) = 0\right)\nonumber\\
&= \sum_{j\neq i}\mathbb{E}\left[ |\langle w_t,e_i\rangle|^2\bigr|X_{t-1} =j\right]\mathbb{P}\left( N_i(s-1,t) = 0\bigr|X_{t-1} = j\right)\mathbb{P}(X_{t-1} = j)\nonumber\\
&= \sum_{j\neq i}\frac{(1-\frac{\epsilon}{d-1})^{s-t}}{d}\mathbb{E}\left[ |\langle w_t,e_i\rangle|^2\bigr|X_{t-1} =j\right] \label{eq:correlation_decay}
\end{align}
The first equality follows from fact that $\noise_l$ are i.i.d mean $0$ and independent of $X_l$. In the second step we have used the fact that conditioned on the event $N_2(s-1,t) = 0$, $(1-\alpha)^{N_2(s-1,t)} = 1$. In the third step we have used the fact that $w_{t}$ depends only on $X_1,\dots,X_{t-1}$, and $\noise_1,\dots, \noise_{t-1}$ and $N_2(s-1,t)$ depends only on $X_t,\dots, X_{s-1}$ and therefore are conditionally independent given $X_{t-1}$. The last step follows from the fact that $\mathbb{P}\left( N_i(s-1,t) = 0\bigr|X_{t-1} = j\right) = (1-\frac{\epsilon}{d-1})^{s-t}$.

Using Equation~\eqref{eq:correlation_decay} in Equation~\eqref{eq:loss_correlation_equation}, we have:

\begin{align}
\mathbb{E}\loss(\hat{w}) - \loss(w^*) 
&\geq \frac{2}{T^2d^2} \sum_{t= T/2 +1}^{T} \sum_{s=t}^{T}\sum_{i=1}^{d}\sum_{j\neq i}(1-\tfrac{\epsilon}{d-1})^{s-t}\mathbb{E}\left[ |\langle w_t,e_i\rangle|^2\bigr|X_{t-1} =j\right]\nonumber \\
&= \frac{2(d-1)}{T^2d^2\epsilon}\sum_{t= T/2 +1}^{T}\sum_{i=1}^{d}\sum_{j\neq i}\left(1- (1-\tfrac{\epsilon}{d-1})^{T-t+1}\right) \mathbb{E}\left[ |w_t|^2\bigr|X_{t-1} =0\right] \nonumber \\
&\geq \frac{2(1- (1-\tfrac{\epsilon}{d-1})^{T/4})(d-1)}{d^2T^2\epsilon}\sum_{t= T/2 +1}^{3T/4}\sum_{i=1}^{d}\sum_{j\neq i} \mathbb{E}\left[ |\langle w_t,e_i\rangle|^2\bigr|X_{t-1} =j\right] \nonumber \\
&\geq \frac{c\alpha\sigma^2(d-1)^2}{Td\epsilon(2-\alpha)} \left[1- O\left((1-\tfrac{\epsilon}{d-1})^{T/4} + d(1-\alpha)^{T/2d} + de^{-T/36d^2\tmix^0}\right)\right] \nonumber \\
&\geq \frac{c^{\prime}\alpha\tmix^0\sigma^2 d}{T(2-\alpha)} \left[1- O\left((1-\tfrac{\epsilon}{d-1})^{T/4} + d(1-\alpha)^{T/2d} + de^{-T/36d^2\tmix^0}\right)\right]
\end{align}
Where $c$ in the third step is some positive universal constant. In the third step we have used Lemma~\ref{lem:conditional_second_moment}. In the last step we have used the bounds on $\tmix^0$. This establishes the lower bound.
\end{proof}

\begin{lemma}\label{lem:conditional_second_moment}
	For $j \in \{1,\dots,d\}$,
	$$ \frac{\alpha\sigma^2}{2-\alpha}\left(1 - d(1-\alpha)^{\tfrac{t}{d}} - de^{-\tfrac{t}{72d^2\tmix^0}} \right)  \leq \mathbb{E}\left[|\langle e_i,w_{t+1}\rangle|^2 \bigr|X_{t} = e_j \right] \leq  \frac{\alpha\sigma^2}{2-\alpha}  \,.$$
\end{lemma}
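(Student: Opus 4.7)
The plan is to derive a closed form for $\mathbb{E}[\langle w_{t+1},e_i\rangle^2 \mid X_1,\ldots,X_t]$ that depends on the chain only through the occupation count $N_i(t) := |\{s \leq t : X_s = e_i\}|$, and then reduce everything to a single tail bound for $N_i(t)$ via a standard Markov-chain concentration inequality. Write $u_s := \langle w_s, e_i\rangle$ and let $s_1 < s_2 < \cdots < s_{N_i(t)} \leq t$ be the visit times to $e_i$. Iterating the recursion~\eqref{eq:independent_SGD_evolution} from $w_1 = 0$ gives
$$u_{t+1} \;=\; \alpha \sum_{p=1}^{N_i(t)} (1-\alpha)^{N_i(t)-p}\, n_{s_p}.$$
Since $\{n_s\}$ is i.i.d.\ $\mathcal{N}(0,\sigma^2)$ and independent of the chain, summing the resulting geometric series yields
$$\mathbb{E}\bigl[u_{t+1}^2 \,\big|\, X_1,\ldots,X_t\bigr] \;=\; \frac{\alpha\sigma^2}{2-\alpha}\,\bigl(1 - (1-\alpha)^{2N_i(t)}\bigr).$$
Conditioning further on $X_t = e_j$ and using $(1-\alpha)^{2N_i(t)} \geq 0$ gives the upper bound in the lemma immediately.

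For the lower bound, it is enough to show that $\mathbb{E}[(1-\alpha)^{2N_i(t)} \mid X_t = e_j] \leq d(1-\alpha)^{t/d} + d\,e^{-t/(72 d^2 \tmix^0)}$. Stationarity gives $\pi_0(e_j) = 1/d$, so for any nonnegative random variable $Z$,
$$\mathbb{E}[Z \mid X_t = e_j] \;=\; \frac{\mathbb{E}[Z\,\mathbf{1}(X_t=e_j)]}{\pi_0(e_j)} \;\leq\; d\,\mathbb{E}[Z].$$
Thus it suffices to prove the unconditional estimate $\mathbb{E}[(1-\alpha)^{2N_i(t)}] \leq (1-\alpha)^{t/d} + e^{-t/(72 d^2 \tmix^0)}$. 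Splitting on the event $\{N_i(t) \geq t/(2d)\}$, one bounds $(1-\alpha)^{2N_i(t)} \leq (1-\alpha)^{t/d}$ on the event and by $1$ off it, reducing everything to the tail bound
$$\mathbb{P}\bigl(N_i(t) < t/(2d)\bigr) \;\leq\; e^{-t/(72 d^2 \tmix^0)}.$$

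Since $N_i(t) = \sum_{s=1}^{t}\mathbf{1}(X_s = e_i)$ is an occupation time for the stationary, reversible (by symmetry) ergodic chain $\MC_0$ with $\mathbb{E} N_i(t) = t/d$, this is a standard Chernoff--Hoeffding-type bound for Markov chains (e.g.\ Paulin's inequality via Marton couplings, or Lezaud's bound for reversible chains), which gives deviations of the form $\exp(-c\, t\,\delta^2/\tmix^0)$ for $\delta = 1/(2d)$ and yields the claimed exponent for a sufficiently large universal constant. The main obstacle is purely bookkeeping: selecting a concrete Markov-chain concentration inequality with constants that line up with the $1/72$ in the exponent. Everything else uses only the closed-form computation above, linearity of expectation, and the fact that $X_t \sim \pi_0$; the rich structure of $\MC_0$ (reversible, spectral gap $\Theta(\epsilon)$, indicator variance at most $1/d$) makes the concentration step entirely routine.
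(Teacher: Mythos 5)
Your proposal is correct and follows essentially the same route as the paper: the same closed form $\frac{\alpha\sigma^2}{2-\alpha}\bigl(1-(1-\alpha)^{2N_i(t)}\bigr)$ via the unrolled recursion, the same bound $\mathbb{E}[Z\mid X_t=e_j]\leq d\,\mathbb{E}[Z]$ from $\pi_0(e_j)=1/d$, the same split on $\{N_i(t)\geq t/(2d)\}$, and the same appeal to Paulin's Markov-chain concentration (the paper uses Corollary 2.10 of that work, obtaining exponent $t/(18d^2\tmix^0)$, which implies the stated $t/(72d^2\tmix^0)$).
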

\begin{proof}
	It is clear from Equation~\eqref{eq:independent_SGD_evolution} that $$\langle w_{t+1},e_i\rangle = \sum_{s = 1}^{N_i(t)} (1-\alpha)^{N_i(t)-s} \alpha \epsilon_{t_s}\,.$$
	Where $N_2(t) = |\{1 \leq l \leq t : X_l = e_i\}|$ and $1 \leq t_1 \leq t_2 \dots t_{N_2(t)} \leq t$ is the increasing and exhaustive sequence of times such that $X_{t_s} = e_i$. We understand an empty summation to be $0$.
	Therefore we have:
	\begin{align}
	&\mathbb{E}\left[|\langle w_{t+1},e_i\rangle|^2 \bigr|X_{t} = e_j \right] \nonumber\\&= \sum_{n=0}^{t}\mathbb{E}\left[ \sum_{s,p = 1}^{n}  (1-\alpha)^{2n-s-p} \alpha^2\epsilon_{t_s}\epsilon_{t_p}\biggr|X_{t } = e_j,N_i(t) = n\right]\mathbb{P}(N_i(t) = n|X_t = e_j) \nonumber\\
	&= \sum_{n=0}^{t}\mathbb{E}\left[ \sum_{s = 1}^{n}  (1-\alpha)^{2n-2s} \alpha^2\sigma^2\biggr|X_{t } = e_j,N_i(t) = n\right]\mathbb{P}(N_i(t) = n|X_t = e_j) \nonumber\\
	&=\alpha^2 \sigma^2\mathbb{E}\left[ \frac{1- (1-\alpha)^{2N_i(t)})}{1- (1-\alpha)^2}\biggr|X_t = e_j\right] \\
	&= \frac{\alpha\sigma^2}{2-\alpha} \left(1- \mathbb{E}\left[(1-\alpha)^{2N_i(t)}\bigr|X_t = e_j\right]\right)
	\end{align}
	In the second step we have used the fact that the sequence $(\epsilon_s)$ is i.i.d mean $0$ and independent of the sequence $(X_s)$. It is now sufficient to show that$$\mathbb{E}\left[(1-\alpha)^{2N_i(t)}\bigr|X_t = e_j\right]  \to 0$$ as $t \to \infty$.

	Clearly, $\mathbb{E}N_i(t) = t/d$. We will now bound $\mathbb{E}(1-\alpha)^{2N_i(t)}$.  By a direct application of Corollary 2.10 in  \cite{paulin2015concentration}, we conclude that for any $x\geq 0$
	
	$$\mathbb{P}(N_i(t)\leq \mathbb{E}N_i(t) - x ) \leq \exp\left(-\frac{2x^2}{9t\tmix^0}\right)\,.$$
	Taking $x = t/2d$, we conclude:
	\begin{equation}\label{eq:empirical_distribution_concentration}
	\mathbb{P}(N_i(t)\leq \tfrac{t}{2d} ) \leq \exp\left(-\frac{t}{18d^2\tmix^0}\right)
	\end{equation}
	Now consider:
	\begin{align}
	\mathbb{E}(1-\alpha)^{2N_i(t)} &\leq \mathbb{E}(1-\alpha)^{\tfrac{t}{d}}\mathbbm{1}(N_i(t) \geq t/2d) + \mathbb{E} \mathbbm{1}(N_i(t) \leq \tfrac{t}{2d}) \nonumber\\
	&\leq  (1-\alpha)^{\tfrac{t}{d}} + \mathbb{P} (N_2(t) \leq \tfrac{t}{2d}) \nonumber \\
	&\leq  (1-\alpha)^{\tfrac{t}{d}} + \exp\left(-\tfrac{t}{18d^2\tmix^0}\right)  \nonumber
	\end{align}
	In the last step we have used Equation~\eqref{eq:empirical_distribution_concentration}.  Now,
	\begin{align*}
	\mathbb{E}\left[(1-\alpha)^{2N_i(t)}\bigr|X_t = e_j\right] &= \frac{1}{\mathbb{P}(X_t = e_j)}\mathbb{E}\left[(1-\alpha)^{2N_i(t)} \mathbbm{1}(X_t = k)\right] \\
	&\leq  \frac{1}{\mathbb{P}(X_t = e_j)}\mathbb{E}\left[(1-\alpha)^{2N_i(t)} \right] \\
	&= d\mathbb{E}\left[(1-\alpha)^{2N_i(t)} \right] \\
	&\leq d(1-\alpha)^{\tfrac{t}{d}} + d\exp\left(-\tfrac{t}{18d^2\tmix^0}\right) 
	\end{align*}
	
	From this the result of the lemma follows.

\end{proof}

\subsection{SGD with Data Drop is Unbiased and Minimax Optimal in the Agnostic Setting}\label{app:sgd_dd}
\begin{proof}[Proof of Theorem~\ref{thm:agn_sgd_dd}]

\noindent Let $(\tilde{X}_{K}, \tilde{X}_{2K},\dots,\tilde{X}_{T}) \sim \pi^{\otimes (T/K)}$ and let $\tilde{w}_t$ be $t$-th iterate of standard SGD when applied to $(\tilde{X}_{K}, \tilde{X}_{2K},\dots,\tilde{X}_{T})$. 

Define $\Delta_t := w_t  - \tilde{w}_t$. We will bound $\mathbb{E}\|\Delta_t\|^2$ for every $t$.  Clearly, if $\{(\tilde{X}_{K},\tilde{X}_{2K},\dots,\tilde{X}_{T}) = (X_{K},X_{2K},\dots,X_{T})\}$, then $\Delta_t = 0$. We call this event $\mathcal{C}$.  In the event $\mathcal{C}^c$, we use the coarse bound given in Lemma~\ref{lem:iterate_deviation} to bound $\Delta_t$.

We have the following comparison theorem between i.i.d SGD and Markovian \sgddd. Recall that,q $$\hat{w} = \frac{2K}{T}\sum_{s = T/2K+2}^{T/K + 1}w_{s},\quad \hat{\tilde{w}} = \frac{2K}{T}\sum_{s = T/2K+2}^{T/K + 1}\tilde{w}_{s}\,.$$
Using Lemma~\ref{lem:basic_loss_results}, we have:
\begin{align}
\loss(\hat{w}) - \loss(w^*) &= (\hat{w}- w^*)^{\intercal}A(\hat{w} - w^{*})=  (\hat{w}- \hat{\tilde{w}} +  \hat{\tilde{w}}-w^*)^{\intercal}A(\hat{w}- \hat{\tilde{w}} +  \hat{\tilde{w}} - w^{*}) \nonumber\\
&\hspace*{-30pt}= 4(\tfrac{\hat{w}- \hat{\tilde{w}}}{2} +  \tfrac{\hat{\tilde{w}}-w^*}{2})^{\intercal}A(\tfrac{\hat{w}- \hat{\tilde{w}}}{2} +  \tfrac{\hat{\tilde{w}}-w^*}{2}) \leq 2 (\hat{\tilde{w}}- w^*)^{\intercal}A(\hat{\tilde{w}} - w^{*})  +  2(\hat{w}-\hat{\tilde{ w}})^{\intercal}A(\hat{w} - \hat{\tilde{w}}) \nonumber\\
&\hspace*{-30pt}\leq 2 (\hat{\tilde{w}}- w^*)^{\intercal}A(\hat{\tilde{w}} - w^{*})  +  2\|\hat{w}-\hat{\tilde{ w}}\|^2\label{eq:loss_comparison_bound}
\end{align}
In the fourth step we have used the fact that $A$ is a PSD matrix and hence $z \to z^{\intercal}Az$ is a convex function. In the fifth step we have used the fact that $\|A\|_{\mathrm{op} }\leq 1$. 

Now, to conclude the statement of the theorem from the equation above, we need to bound $\mathbb{E}\left[\|\hat{w}-\hat{\tilde{ w}}\|^2\right]$. By an application of Jensen's inequality, it is clear that: $$\mathbb{E}\left[\|\hat{w}-\hat{\tilde{ w}}\|^2\right] \leq \sup_{ \tfrac{T}{2K} + 2 \leq t \leq \frac{T}{K} +1}\mathbb{E}\left[ \|w_s - \tilde{w}_s\|^2\right]\,.$$

Now, under the event $\mathcal{C}$, $w_s - \tilde{w}_s = 0$ and under the event $\mathcal{C}^c$, we use the bounds on $\mathbb{E}\left[\|\Delta_t\|^2\bigr| \mathcal{C}^c\right]$ given in Lemma~\ref{lem:iterate_deviation} to conclude:
\begin{align}
\mathbb{E}\left[\|\hat{w}-\hat{\tilde{ w}}\|^2\right] &\leq \sup_{ \tfrac{T}{2K} + 2 \leq s \leq \frac{T}{K} +1} \mathbb{E}\left[ \|w_s - \tilde{w}_s\|^2\right] =  \sup_{ \tfrac{T}{2K} + 2 \leq s \leq \frac{T}{K} +1}\mathbb{P}(\mathcal{C}^c)\mathbb{E}\left[ \|w_s - \tilde{w}_s\|^2\bigr|\mathcal{C}^c\right] \nonumber \\
&\leq \left[4(T/K+1)T/K\|w_1\|^2 + 4(T/K)^2(T/K + 1) \alpha^2 \upsilon\right] e^{-K/\tmix}. \label{eq:square_deviation_bound_data_drop}
\end{align}
For $T,K \geq 3$, we have $T/K  + 1 \leq T$ and by definition of $K$, $e^{-K/\tmix} \leq \frac{1}{T^L}$. Combining this with Equations~\eqref{eq:loss_comparison_bound} and~\eqref{eq:square_deviation_bound_data_drop}, we have:

$$\mathbb{E}[\loss(\hat{w})] - \loss(w^*) \leq 2\left[\mathbb{E}[\loss(\hat{\tilde{w}})] - \loss(w^*)\right] + \frac{8\|w_0\|^2}{T^{L-2}}+ \frac{8\alpha^2\upsilon}{T^{L-3}}.$$
\end{proof}

\begin{lemma}\label{lem:iterate_deviation}
	Fix a sequence $\{x_{K}, x_{2K}, \dots, x_T\}$ in $\Omega$. Call this vector $\mathbf{x}$. Similarly, we let $\mathbf{X}$ and $\mathbf{\tilde{X}}$ respectively denote $(X_{tK})_{t=1}^{T/K}$ and $(\tilde{X}_{tK})_{r=1}^{T/K}$ where $X$ and $\tilde{X}$ are as defined in the proof of Theorem~\ref{thm:agn_sgd_dd}. Now, the following holds for any $\alpha \leq 1$:   
	\begin{enumerate}
		\item $ \mathbb{E}\left[\|w_t\|^2|\mathbf{X} = \mathbf{x}, \mathbf{\tilde{X}} = \mathbf{\tilde{x}}\right] \leq t\|w_1\|^2 + t(t-1)\alpha^2\upsilon \,.$
		\item $\mathbb{E}\left[\|\tilde{w_t}\|^2|\mathbf{X} = \mathbf{x}, \mathbf{\tilde{X}} = \mathbf{\tilde{x}}\right] \leq t\|w_1\|^2 + t(t-1)\alpha^2\upsilon \,.$
		
		\noindent We recall that $\upsilon$ is the uniform bound on $\mathbb{E}\| x Y_t(x)\|^2$
		as given in Section~\ref{sec:main_problem}. Therefore,
		$$\mathbb{E}\left[\|\Delta_t\|^2|\mathcal{C}^c\right] \leq 4t\|w_1\|^2 + 4t(t-1)\alpha^2 \upsilon \,.$$
	\end{enumerate}

\end{lemma}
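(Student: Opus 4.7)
The two bounds in items 1 and 2 are uniform in the realization of $(\mathbf{x},\tilde{\mathbf{x}})$, and $w_t$, $\tilde w_t$ are defined by the same type of update (with the $X_{sK}$ replaced by $\tilde X_{sK}$), so it suffices to prove the first bound; the second is then verbatim and the $\Delta_t$ statement follows from $\|a-b\|^2\le 2\|a\|^2+2\|b\|^2$ together with the observation that, since the per-realization bound is uniform over $(\mathbf{x},\tilde{\mathbf{x}})$, averaging it over the conditional law given $\mathcal{C}^c$ changes nothing. So the content of the lemma is the first inequality, which I would prove by a simple contraction + triangle-inequality unrolling.

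\textbf{Key recursion.} Condition on $\mathbf{X}=\mathbf{x}$ (and $\tilde{\mathbf{X}}=\tilde{\mathbf{x}}$, which is irrelevant for $w_t$). The SGD--DD update \eqref{e:sgd-dd} reads
\begin{equation*}
w_{t+1} \;=\; \bigl(I-\alpha\, x_{tK} x_{tK}^{\intercal}\bigr)\,w_t \;+\; \alpha\,Y_{tK}\,x_{tK}.
\end{equation*}
Because $\|x_{tK}\|\le 1$ almost surely, the symmetric matrix $I-\alpha x_{tK}x_{tK}^{\intercal}$ has eigenvalues in $[1-\alpha,1]\subset[0,1]$ whenever $\alpha\le 1$, hence its operator norm is at most $1$. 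The triangle inequality then yields $\|w_{t+1}\|\le \|w_t\| + \alpha\,|Y_{tK}|\,\|x_{tK}\|\le \|w_t\| + \alpha\,|Y_{tK}|$. Iterating gives the pathwise bound
\begin{equation*}
\|w_t\| \;\le\; \|w_1\| \;+\; \alpha \sum_{s=1}^{t-1} |Y_{sK}|.
\end{equation*}

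\textbf{From linear to quadratic bound.} Squaring and applying Cauchy--Schwarz to the sum of $t$ nonnegative terms ($\|w_1\|$ plus the $t-1$ increments $\alpha|Y_{sK}|$),
\begin{equation*}
\|w_t\|^2 \;\le\; t\,\|w_1\|^2 \;+\; t\,\alpha^2 \sum_{s=1}^{t-1} |Y_{sK}|^2.
\end{equation*}
Now take conditional expectation given $\mathbf{X}=\mathbf{x}$, $\tilde{\mathbf{X}}=\tilde{\mathbf{x}}$. Since $Y_{sK}$ is conditionally independent of everything else given $X_{sK}=x_{sK}$, the assumption $\mathbb{E}\bigl[|Y_{sK}|^2\bigm| X_{sK}=x\bigr]\le \upsilon$ for every $x\in\Omega$ gives $\mathbb{E}[|Y_{sK}|^2\mid \mathbf{X}=\mathbf{x},\tilde{\mathbf{X}}=\tilde{\mathbf{x}}]\le \upsilon$, so
\begin{equation*}
\mathbb{E}\!\left[\|w_t\|^2\bigm| \mathbf{X}=\mathbf{x},\tilde{\mathbf{X}}=\tilde{\mathbf{x}}\right] \;\le\; t\,\|w_1\|^2 + t(t-1)\,\alpha^2\,\upsilon,
\end{equation*}
proving item~1. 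Item~2 is the same argument applied to $\tilde w_t$.

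\textbf{Deducing the bound on $\Delta_t$.} Using $\|w_t-\tilde w_t\|^2\le 2\|w_t\|^2+2\|\tilde w_t\|^2$ and items~1--2,
\begin{equation*}
\mathbb{E}\!\left[\|\Delta_t\|^2\bigm| \mathbf{X}=\mathbf{x},\tilde{\mathbf{X}}=\tilde{\mathbf{x}}\right] \;\le\; 4t\,\|w_1\|^2 + 4t(t-1)\,\alpha^2\,\upsilon,
\end{equation*}
and because this bound does not depend on $(\mathbf{x},\tilde{\mathbf{x}})$, integrating against the conditional law of $(\mathbf{X},\tilde{\mathbf{X}})$ given $\mathcal{C}^c$ preserves it. There is no real obstacle here; the only technical care needed is keeping the conditioning clean so that $\mathbb{E}[|Y_{sK}|^2\mid \cdot]\le \upsilon$ may be invoked, which is immediate from the agnostic observation model stated in Section~\ref{sec:main_problem}.
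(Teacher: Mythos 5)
Your proposal is correct and follows essentially the same route as the paper's proof: unroll the \sgddd recursion, use that $\|I-\alpha x x^{\intercal}\|_{\mathrm{op}}\leq 1$ for $\alpha\leq 1$ to get the pathwise bound $\|w_t\|\leq\|w_1\|+\alpha\sum_s\|X_{sK}Y_{sK}\|$, square via Jensen/Cauchy--Schwarz, take conditional expectation using the uniform bound $\upsilon$, and conclude the $\Delta_t$ bound from $\|a-b\|^2\leq 2\|a\|^2+2\|b\|^2$. The only cosmetic difference is that you bound $|Y_{sK}|\,\|x_{sK}\|\leq|Y_{sK}|$ before invoking $\upsilon$, whereas the paper works directly with $\|X_{sK}Y_{sK}\|$; these are equivalent since $\|x\|\leq 1$.
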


\begin{proof}
	We will prove the inequality given in item 1. The inequality given in item 2 follows similarly. Define the matrices $B_s = \id - \alpha X_{sK}X_{sK}^{\intercal}$ and $E_s = X_{sK}y_{sK}$. Clearly, $w_{s+1} = B_sw_{s} + \alpha  E_s$.
	
	Clearly, $\|B_s\|_{\mathrm{op}} \leq 1$ almost surely. Therefore, almost surely:
	$\|w_{s+1}\| \leq \|w_s\| + \alpha \|E_s\|\,.$
	Summing the telescoping series from $1$ to $ t -1$, we have almost surely:$\|w_{t}\| \leq \|w_1\| + \sum_{s=1}^{t-1}\alpha \|E_s\|\,. $ 
	By Jensen's inequality,
	$$\|w_{t}\|^2 \leq t \|w_1\|^2 + t\sum_{s=1}^{t-1}\alpha^2 \|E_s\|^2\,.$$
	Lemma items 1 and 2 now follow by taking the necessary conditional expectation on both sides, using the uniform bound $\mathbb{E}[\| xy_t(x)\|^2] \leq \upsilon$ for all $x \in \Omega$ and using $t \in \mathbb{N}$ 
	as given in Section~\ref{sec:main_problem}.  The conditional expectation bound follows from the fact that $\|\Delta_t\|^2 = \|w_t - \tilde{w}_t\|^2 \leq 2\|w_t\|^2 + 2 \|\tilde{w}_t\|^2$ and using the bounds in items 1 and 2.
\end{proof}

\subsection{Parallel SGD accelerates Noise Decay in the Independent Noise Setting}\label{app:ind_psgd}
%

For ease of notation, for the rest of the section, we define:
$$X_{l,i} := X_{(l-1)K+i},\ \ \  \epsilon_{l,i} := \epsilon_{(l-1)K+i},\ \ \ \ \Gamma^{(i)}_{t,s} := \prod_{l = s}^{l=t-1}(I - \alpha \left(X_{l,i} X_{l,i}^{\intercal}\right)),$$ 
where $t\geq s+1$. For $t = s$, we use the convention that this product denotes $I$. 
We unroll the recursion in Algorithm~\ref{alg_PSGD} to show:
\begin{equation}
w_{t}^{(i)} = w^{*}+ \Gamma^{(i)}_{t,1}(w_1^{(i)} - w^{*}) + \alpha\sum_{l=1}^{t-1}\epsilon_{l,i}\Gamma^{(i)}_{t,l+1}X_{l,i}=w^{\bias}_{t,i}+w^{\var}_{t,i},
\label{eq:SGD_unwind}
\end{equation}
where $w^{\bias}_{t,i} := \Gamma^{(i)}_{t,1}(w_1^{(i)} - w^{*})$ and $w^{\var}_{t,i} = w^{*}+\alpha\sum_{l=1}^{t-1}\epsilon_{l,i}\Gamma^{(i)}_{t,l+1}X_{l,i}$. 


We first state elementary results to understand the bias and the variance term of $w_{t,i}^{\var}$.
\begin{lemma}\label{lem:basic_results}
	
	\begin{enumerate}
		\item $w_{t,i}^{\var}$ is the output of SGD when $w_1^{(i)} = w^{*}$ and $\mathbb{E}w_{t,i}^{\var} = w^{*}$
		\item Every entry of $w_{t,i}^{\var}$ is uncorrelated with every entry of $w_{s,j}^{\bias}$ for every $t,i,s,j$.
		\item Every entry of $w_{t,i}^{\var}$ is uncorrelated with every entry of $w_{s,j}^{\var}$ for every $t,s$ when $i\neq j$
	\end{enumerate}
	
\end{lemma}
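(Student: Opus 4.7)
The plan is to exploit the independent-noise structure together with the explicit unrolled formula \eqref{eq:SGD_unwind}. Throughout, let $\mathcal{F}_X := \sigma\left(\{X_{t}\}_{t\geq 1}\right)$ denote the $\sigma$-algebra generated by the entire Markov chain trajectory. The two key structural facts I would use are: (i) in the independent noise setting, the collection $\{\epsilon_t\}_{t\geq 1}$ is i.i.d.\ mean-zero and independent of $\mathcal{F}_X$, so in particular $\{\epsilon_{l,i}\}$ is a collection of independent mean-zero random variables that are jointly independent of $\mathcal{F}_X$; and (ii) $w^{\bias}_{t,i} = \Gamma^{(i)}_{t,1}(w_1^{(i)} - w^*)$ is $\mathcal{F}_X$-measurable, while $w^{\var}_{t,i} - w^* = \alpha\sum_{l=1}^{t-1} \epsilon_{l,i}\, \Gamma^{(i)}_{t,l+1} X_{l,i}$ is linear in the noise variables $\{\epsilon_{l,i}\}_{l<t}$ with $\mathcal{F}_X$-measurable coefficients. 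From here all three items are almost bookkeeping.

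For item 1, setting $w_1^{(i)} = w^*$ in the unrolled recursion \eqref{eq:SGD_unwind} immediately kills the bias term $w^{\bias}_{t,i}$, so the SGD iterate equals $w^{\var}_{t,i}$; this is just reading off the decomposition. For the mean, I would condition on $\mathcal{F}_X$ and write
\begin{equation*}
\mathbb{E}[w^{\var}_{t,i}] = w^* + \alpha \sum_{l=1}^{t-1} \mathbb{E}\!\left[\mathbb{E}[\epsilon_{l,i}\mid \mathcal{F}_X]\, \Gamma^{(i)}_{t,l+1} X_{l,i}\right] = w^*,
\end{equation*}
since $\mathbb{E}[\epsilon_{l,i}\mid \mathcal{F}_X]=0$ by independence of the noise from the chain.

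For item 2, I would compute the cross-covariance between any coordinate of $w^{\var}_{t,i}-w^*$ and any coordinate of $w^{\bias}_{s,j}$ by the same tower-property argument: for any vectors $u,v\in\mathbb{R}^d$,
\begin{equation*}
\mathbb{E}\bigl[u^\top(w^{\var}_{t,i}-w^*)\, v^\top w^{\bias}_{s,j}\bigr]
= \alpha \sum_{l=1}^{t-1} \mathbb{E}\!\left[\mathbb{E}[\epsilon_{l,i}\mid \mathcal{F}_X]\, u^\top \Gamma^{(i)}_{t,l+1} X_{l,i}\, v^\top w^{\bias}_{s,j}\right] = 0,
\end{equation*}
because everything except $\epsilon_{l,i}$ in the inner expectation is $\mathcal{F}_X$-measurable. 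Combined with $\mathbb{E}[w^{\var}_{t,i}-w^*]=0$ from item 1, this gives zero correlation entrywise.

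For item 3, with $i\neq j$, both $w^{\var}_{t,i}-w^*$ and $w^{\var}_{s,j}-w^*$ are linear in noises, and the index sets $\{\epsilon_{l,i}\}_{l<t}$ and $\{\epsilon_{m,j}\}_{m<s}$ are disjoint. Conditioning on $\mathcal{F}_X$ and expanding,
\begin{equation*}
\mathbb{E}\!\left[\epsilon_{l,i}\epsilon_{m,j}\mid \mathcal{F}_X\right] = \mathbb{E}[\epsilon_{l,i}]\,\mathbb{E}[\epsilon_{m,j}] = 0
\end{equation*}
for every $l,m$, and the entire cross-covariance matrix $\mathbb{E}[(w^{\var}_{t,i}-w^*)(w^{\var}_{s,j}-w^*)^\top]$ collapses to $0$; combined with item 1 this yields zero correlation entrywise. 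There is no real obstacle in this lemma—the only thing to be careful about is stating and using the joint independence of the noise sequence from $\mathcal{F}_X$, which is precisely what the independent-noise assumption gives and what distinguishes this setting from the agnostic one where the analogous identities fail.
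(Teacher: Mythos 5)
Your argument is correct and is essentially the paper's own proof, just written out in full detail: the paper likewise argues via the unrolled form \eqref{eq:SGD_unwind}, the fact that the $\epsilon_{l,i}$ are i.i.d.\ mean-zero and independent of the chain (hence of the $\mathcal{F}_X$-measurable coefficients $\Gamma^{(i)}_{t,l+1}X_{l,i}$ and of $w^{\bias}_{s,j}$), and disjointness of the noise indices across distinct parallel instances for item 3. No gaps; your conditioning-on-$\mathcal{F}_X$ bookkeeping is exactly the intended justification.
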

\begin{proof}
	\begin{enumerate}
		\item This follows from Equation~\ref{eq:SGD_unwind} and the fact that $\epsilon_{l,i}$ are mean $0$ random variables independent of $a_{l,i}$.
		\item This follows from the fact that $\epsilon_t$ are i.i.d. mean 0 and independent of the Markov chain.
		\item The proof is similar to the proof of item 2.
	\end{enumerate}
\end{proof}
Define $$\hat{w}^{\bias} := \frac{2}{T} \sum_{i=1}^{K} \sum_{t = T/2K+1}^{T/K} w_{t,i}^{\bias},\qquad\hat{w}^{\var} := \frac{2}{T} \sum_{i=1}^{K} \sum_{t = T/2K+1}^{T/K} w_{t,i}^{\var}\,.$$ 
Now, $\hat{w} = \hat{w}^{\bias} + \hat{w}^{\var}$, where $\hat{w}$ is the output of the parallel SGD algorithm (Algorithm~\ref{alg_PSGD}). The following lemma follows from a simple application of item 2 of Lemma~\ref{lem:basic_loss_results} and Lemma~\ref{lem:basic_results}.
\begin{lemma}\label{lem:bias_variance_decomposistion}
	$$\mathbb{E}[\loss(\hat{w})] = \mathbb{E}\left(\hat{w}^{\var}-w^{*}\right)^{\intercal}A\left(\hat{w}^{\var}-w^{*}\right) + \mathbb{E}\left[ \left(\hat{w}^{\bias}\right)^{\intercal}A\left(\hat{w}^{\bias}\right)\right]+ \loss(w^*)$$
\end{lemma}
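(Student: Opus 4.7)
The plan is to apply item 2 of Lemma~\ref{lem:basic_loss_results} to write the excess loss in quadratic form, expand using the decomposition $\hat{w}=\hat{w}^{\bias}+\hat{w}^{\var}$, and then kill the cross term in expectation using the uncorrelation / mean-zero properties in Lemma~\ref{lem:basic_results}.

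First I would observe that since $\hat{w}^{\bias}=\frac{2}{T}\sum_{i,t} w^{\bias}_{t,i}$ with $w^{\bias}_{t,i}=\Gamma^{(i)}_{t,1}(w_1^{(i)}-w^*)$ and $\hat{w}^{\var}=\frac{2}{T}\sum_{i,t} w^{\var}_{t,i}$, the full iterate average satisfies $\hat{w}-w^*=\hat{w}^{\bias}+(\hat{w}^{\var}-w^*)$. This uses that by Lemma~\ref{lem:basic_results} item 1, $\mathbb{E}\,w^{\var}_{t,i}=w^*$, and the tail-average is taken with the same normalization $\tfrac{2}{T}\sum_{i,t}1 = 1$ consistent with Equation~\eqref{eq:SGD_unwind}.

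Next, by item 2 of Lemma~\ref{lem:basic_loss_results},
\begin{equation*}
\loss(\hat{w})-\loss(w^*)=(\hat{w}-w^*)^{\intercal}A(\hat{w}-w^*).
\end{equation*}
Expanding the quadratic form using $\hat{w}-w^*=\hat{w}^{\bias}+(\hat{w}^{\var}-w^*)$ yields three terms: $(\hat{w}^{\bias})^{\intercal}A(\hat{w}^{\bias})$, $(\hat{w}^{\var}-w^*)^{\intercal}A(\hat{w}^{\var}-w^*)$, and a cross term $2(\hat{w}^{\bias})^{\intercal}A(\hat{w}^{\var}-w^*)$. Taking expectation and linearity, the main step is to show the cross term vanishes.

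For the cross term, writing it entrywise gives $\mathbb{E}\sum_{p,q}A_{pq}(\hat{w}^{\bias})_p(\hat{w}^{\var}-w^*)_q$. Item 2 of Lemma~\ref{lem:basic_results} says every entry of $w^{\var}_{t,i}$ is uncorrelated with every entry of $w^{\bias}_{s,j}$ for all $t,i,s,j$, which (since $\hat{w}^{\bias}$ and $\hat{w}^{\var}$ are finite linear combinations of such entries) implies the same for their entries. Combined with $\mathbb{E}[\hat{w}^{\var}-w^*]=0$ (from item 1 of Lemma~\ref{lem:basic_results}), we get $\mathbb{E}[(\hat{w}^{\bias})_p(\hat{w}^{\var}-w^*)_q]=\mathbb{E}[(\hat{w}^{\bias})_p]\cdot\mathbb{E}[(\hat{w}^{\var}-w^*)_q]=0$ for every $p,q$. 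Hence the cross term vanishes, delivering exactly the claimed decomposition.

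This proof is essentially a routine bias-variance decomposition; there is no real obstacle. The only subtlety worth flagging in the write-up is the distinction between uncorrelation and independence: we only need entrywise zero covariance, which Lemma~\ref{lem:basic_results} explicitly supplies, and this is enough because one of the two factors has mean zero. The plan is to state this cleanly, expand the quadratic form once, and invoke the two items of Lemma~\ref{lem:basic_results} in one line each.
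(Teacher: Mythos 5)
Your proof is correct and follows exactly the route the paper intends: the paper's own (one-line) justification is precisely "item 2 of Lemma~\ref{lem:basic_loss_results} plus Lemma~\ref{lem:basic_results}," which is what you carry out, with the cross term killed by bilinearity of covariance and $\mathbb{E}[\hat{w}^{\var}]=w^*$. No gaps; your explicit handling of the uncorrelation-versus-independence point is a sound elaboration of what the paper leaves implicit.
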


We will bound the two terms in the above lemma separately.
Bound for each of the terms is provided in Appendix~\ref{app:ind_psgd_bias} and Appendix~\ref{app:ind_psgd_var}, respectively.

\subsubsection{The Bias Term}\label{app:ind_psgd_bias}
In this section we will show that bias decays exponentially in $T$ when $K$ is large enough. Define sigma algebra $\mathcal{F}_{t,i} := \sigma(\epsilon_{s,i},X_{s,i}: 1\leq s\leq t)$

\begin{lemma}\label{lem:conditional_operator_contraction}
	Let $K >\tmix \lceil r\log_2{T}\rceil$ and $\Gamma^{(i)}_{t,s}$ and other notation be as defined in Section~\ref{subsec:parsgd}. Then, 
	\begin{enumerate}
		\item For every $t>s$, $\mathbb{E}\left[\Gamma^{(i)}_{t,s}\bigr|\mathcal{F}_{t-2,i}\right] = (I-\alpha A + E_t)\Gamma^{(i)}_{t-1,s} $ where $E_t$ is a random matrix such that $\|E_t\| \leq \frac{\alpha}{T^r}$ almost surely.
		\item For every random vector $X \in \mathcal{F}_{t-2,i}$ such that $\mathbb{E}\|X\|^2 < \infty$. Let $\alpha < 1$ and $T^r > 2\kappa$. we have:
		$$\mathbb{E} ||\Gamma^{(i)}_{t,t-1}X||^2 \leq \bigr(1-\tfrac{\alpha}{2\kappa}\bigr)\mathbb{E}\|X\|^2.$$
	\end{enumerate}
\end{lemma}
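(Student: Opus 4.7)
The plan is to handle the two parts separately: part (1) will follow from the one-step recurrence for $\Gamma^{(i)}_{t,s}$ combined with the Markov property and the mixing-time hypothesis on $K$, and part (2) will then follow from part (1) together with a standard PSD inequality for the SGD update matrix and the strong-convexity hypothesis $A \succeq I/\kappa$.

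For part (1), I would start from the one-step recursion $\Gamma^{(i)}_{t,s} = (I - \alpha X_{t-1,i} X_{t-1,i}^{\intercal})\,\Gamma^{(i)}_{t-1,s}$. Since $\Gamma^{(i)}_{t-1,s}$ is $\mathcal{F}_{t-2,i}$-measurable, it can be pulled out of the conditional expectation, reducing the problem to estimating $M := \mathbb{E}[X_{t-1,i} X_{t-1,i}^{\intercal} \,|\, \mathcal{F}_{t-2,i}]$ and comparing it with $A$. By the Markov property $M$ depends only on $X_{t-2,i}$, and because $X_{t-1,i}$ is exactly $K$ chain-steps after $X_{t-2,i}$, the conditional law of $X_{t-1,i}$ is within total variation $\dmix(K)$ of $\pi$. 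The hypothesis $K > \tmix\lceil r\log_2 T\rceil$ combined with the binary mixing bound~\eqref{eq:binary_mixing} yields $\dmix(K)\leq T^{-r}$. For any unit vector $v$, the function $x\mapsto (v^{\intercal}x)^2$ takes values in $[0,1]$ (because $\|X_t\|\leq 1$), so the standard inequality $|\mathbb{E}_\mu f - \mathbb{E}_\nu f|\leq (\sup f - \inf f)\tv(\mu,\nu)$ gives $\|M - A\|_{\mathrm{op}} \leq T^{-r}$. Setting $E_t := \alpha(A - M)$ then produces the stated decomposition with $\|E_t\|\leq \alpha/T^r$ almost surely.

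For part (2), I would begin with the elementary PSD identity
\[ (I - \alpha X_{t-1,i} X_{t-1,i}^{\intercal})^2 = I - \bigl(2\alpha - \alpha^2\|X_{t-1,i}\|^2\bigr)\,X_{t-1,i} X_{t-1,i}^{\intercal} \preceq I - \alpha X_{t-1,i} X_{t-1,i}^{\intercal}\,, \]
which holds almost surely since $\alpha\leq 1$ and $\|X_{t-1,i}\|\leq 1$. Sandwiching with an arbitrary $X\in\mathcal{F}_{t-2,i}$ yields the almost-sure bound $\|\Gamma^{(i)}_{t,t-1}X\|^2 \leq \|X\|^2 - \alpha\,X^{\intercal}X_{t-1,i}X_{t-1,i}^{\intercal}X$. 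Taking conditional expectation given $\mathcal{F}_{t-2,i}$ and invoking part (1) in the form $M = A - E_t/\alpha$ gives
\[ \mathbb{E}\bigl[\|\Gamma^{(i)}_{t,t-1}X\|^2 \,\bigm|\, \mathcal{F}_{t-2,i}\bigr] \leq \|X\|^2 - \alpha\,X^{\intercal}A X + X^{\intercal}E_t X\,. \]
The hypothesis $A\succeq I/\kappa$ together with $\|E_t\|\leq \alpha/T^r < \alpha/(2\kappa)$ (using $T^r > 2\kappa$) bounds the right-hand side almost surely by $(1 - \alpha/(2\kappa))\|X\|^2$. Taking a final unconditional expectation completes the argument.

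The proof is largely mechanical; the only point requiring a little care is the operator-norm bound $\|M - A\|_{\mathrm{op}}\leq T^{-r}$. Here the slightly sharper form of the TV bound, obtained by centering $f$ to $f - (\sup f + \inf f)/2$ before integrating, is exactly what is needed to avoid a stray factor of $2$ that would otherwise force the stronger hypothesis $T^r > 4\kappa$.
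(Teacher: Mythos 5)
Your proof is correct and follows essentially the same route as the paper: pull $\Gamma^{(i)}_{t-1,s}$ out of the conditional expectation, compare the conditional second-moment matrix with $A$ via the total-variation/mixing estimate $\dmix(K)\leq T^{-r}$ (which the paper also obtains with the sharp constant, via a positive/negative-part argument rather than your centered form), and then combine with $A\succeq I/\kappa$ for the contraction. The only cosmetic difference is in part (2), where you relax $(I-\alpha uu^{\intercal})^2\preceq I-\alpha uu^{\intercal}$ up front instead of carrying the factor $2\alpha-\alpha^2$ and reapplying part (1) with that step size, as the paper does; both give the stated $1-\alpha/(2\kappa)$ bound under $T^r>2\kappa$.
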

\begin{proof}

		 \noindent We first observe that $\Gamma^{(i)}_{t,s} = \left[I-\alpha X_{i,t-1}X_{i,t-1}^{\intercal}\right]\Gamma^{(i)}_{t-1,s}$ and $\Gamma^{(i)}_{t-1,s} \in \mathcal{F}_{t-2,i}$. Therefore, 
		\begin{equation}
		\mathbb{E}\left[\Gamma^{(i)}_{t,s}\bigr|\mathcal{F}_{t-2,i}\right] = \mathbb{E}\left[I-\alpha X_{i,t-1}X_{i,t-1}^{\intercal}\bigr|\mathcal{F}_{t-2,i}\right]\Gamma^{(i)}_{t-1,s} \,.
		\label{eq:cond_expectation_basic}
		\end{equation}
		Let $P^K$ denote the law of $X_{(t-1)K+i}$ conditioned on $\mathcal{F}_{t-2,i}$. From equation~\eqref{eq:binary_mixing}, $\tv(P^{K},\pi) \leq \frac{1}{T^r}$ almost surely. Now, 
		\begin{align}
		 &\mathbb{E}\left[I-\alpha X_{i,t-1}X_{i,t-1}^{\intercal}\bigr|\mathcal{F}_{t-2,i}\right] = I - \alpha \sum_{x \in \Omega} \left[x x^{\intercal}\right]P^{K}(x) \nonumber\\&= I -\alpha \sum_{x \in \Omega} \left[x x^{\intercal}\right]\pi(x) + \alpha \sum_{x \in \Omega} x x^{\intercal}(P^{K}(x)-\pi(x)) = I -\alpha A + \alpha \sum_{x \in \Omega} x x^{\intercal}(P^{K}(x)-\pi(x)) \label{eq:cond_expectation_term}
		\end{align}
		We take $E_t := \alpha \sum_{x \in \Omega} x x^{\intercal}(P^{K}(x)-\pi(x)) $. Define the event, $\mathcal{A} := \{x\in \Omega: P^K(x) \geq \pi(x)\}\,.$ For any arbitrary $\theta \in \mathbb{R}^d$, we have: 
		\begin{align}
		|\theta^{\intercal}E_t \theta| &= \alpha \bigr|\sum_x \langle x,\theta\rangle^2(P^{K}(x)-\pi(x))\bigr| \nonumber \\
		&\leq \alpha \max\left(\sum_{x\in \mathcal{A}}\langle x,\theta\rangle^2(P^{K}(x)-\pi(x)),\sum_{x\in \mathcal{A}^c}\langle x,\theta\rangle^2(\pi(x)-P^{K}(x)) \right) \nonumber \\
		&\leq \alpha \|\theta\|^2\tv(P^{K},\pi) \leq \frac{\alpha}{T^r}\|\theta\|^2 \text{ a.s.} \label{eq:as_operator_bound}
		\end{align}
		
		In the third step above, we have used the fact that $\|x\| \leq 1$. First part of the Lemma now follows using Equations~\eqref{eq:cond_expectation_term}, ~\eqref{eq:as_operator_bound} with Equation~\eqref{eq:cond_expectation_basic}.
		
		Next, we consider: 
		\begin{align}
		\mathbb{E}\left[\|\Gamma^{(i)}_{t,t-1}X\|^2\right]  &=\mathbb{E}\left[ X^{\intercal}\mathbb{E}\left[I-2\alpha X_{t-1,i}X_{t-1,i}^{\intercal} + \alpha^2\|X_{t-1,i}\|^2X_{t-1,i}X_{t-1,i}^{\intercal}\bigr|\mathcal{F}_{t-2,i}\right]X \right] \nonumber\\
		&\leq \mathbb{E}\left[ X^{\intercal}\mathbb{E}\left[I-(2\alpha-\alpha^2) X_{t-1,i}X_{t-1,i}^{\intercal} \bigr|\mathcal{F}_{t-2,i}\right]X \right] \label{eq:bias_contraction_basic}
		\end{align}
		
		In the second step we have used the fact that $\|X_{t-1,i}\| \leq 1$ almost surely. Substituting $s = t-1$ and replacing $\alpha$ by $2\alpha - \alpha^2$ in item 1 above, we conclude: $$\mathbb{E}\left[I-(2\alpha-\alpha^2) X_{t,i}X_{t,i}^{\intercal} \bigr|\mathcal{F}_{t-2,i}\right] = I - (2\alpha - \alpha^2)A + E_t,$$ where $\|E_t\| \leq \frac{2\alpha-\alpha^2}{T^r}$ a.s.
		Combining the above equation with \eqref{eq:bias_contraction_basic} and using $A \succeq \frac{1}{\kappa}I$, we obtain:
		\begin{align}
		\mathbb{E}\left[\|\Gamma^{(i)}_{t,t-1}X\|^2\right]  &\leq \mathbb{E}\left[ X^{\intercal}\mathbb{E}\left[I-(2\alpha-\alpha^2)A + E_t \bigr|\mathcal{F}_{t-2,i}\right]X \right] \nonumber \\
		&\leq \bigr(1-(2\alpha-\alpha^2)(\tfrac{1}{\kappa}-\tfrac{1}{T^r})\bigr)\mathbb{E}\left[\|X\|^2\right] \leq \bigr(1-\tfrac{\alpha}{2\kappa}\bigr)\mathbb{E}\left[\|X\|^2\right] \label{eq:bias_contraction}
		\end{align}
		Second part of the Lemma now follows by using the above equation. 
\end{proof}
%
%
\begin{lemma}\label{lem:bias_loss_bound}
	Let the data $X_1, \dots, X_T$ be generated from an exponentially mixing Markov Chain $\MC$.  
	Let $K > \tmix \lceil r\log_2{T}\rceil$, $\alpha < 1$ and $T^r > 2\kappa$ for some $r >0$. Then, we have:
	\begin{equation}
	\mathbb{E}[\|w^{\bias}_{t,i}  \|^2] \leq \bigr(1-\tfrac{\alpha}{2\kappa}\bigr)^{t-1}\|w_1^{(i)}\|^2
	\label{eq:bias_bound}
	\end{equation}
	
	Consequently, \begin{equation}\label{eq:bias_loss_bound}
	\mathbb{E}\left(\hat{w}^{\bias}\right)^{\intercal}A\left(\hat{w}^{\bias}\right) \leq \left( 1-\frac{\alpha}{2\kappa}\right)^{\frac{T}{2K}}\frac{1}{K}\left[\sum_{i=1}^{K}\|w_1^{(i)}-w^*\|^2\right]
	\end{equation}
	
\end{lemma}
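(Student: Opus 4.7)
The plan is to prove \eqref{eq:bias_bound} first by iterating the second item of Lemma~\ref{lem:conditional_operator_contraction}, and then deduce \eqref{eq:bias_loss_bound} by a Jensen's inequality style argument on the tail average combined with the geometric sum bound. I note what I take to be a minor typo: the RHS of \eqref{eq:bias_bound} should read $\|w_1^{(i)}-w^*\|^2$ rather than $\|w_1^{(i)}\|^2$, since $w^{\bias}_{1,i} = \Gamma^{(i)}_{1,1}(w_1^{(i)}-w^*) = w_1^{(i)}-w^*$.

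First, observe from the definition $w^{\bias}_{t,i} = \Gamma^{(i)}_{t,1}(w_1^{(i)}-w^*)$ and the factorization $\Gamma^{(i)}_{t,1} = (I-\alpha X_{t-1,i}X_{t-1,i}^{\intercal})\,\Gamma^{(i)}_{t-1,1}$ that $w^{\bias}_{t,i} = \Gamma^{(i)}_{t,t-1}\,w^{\bias}_{t-1,i}$. Because $w^{\bias}_{t-1,i}$ depends only on $X_{1,i},\ldots,X_{t-2,i}$, it is $\mathcal{F}_{t-2,i}$-measurable. Hence I can invoke item~2 of Lemma~\ref{lem:conditional_operator_contraction} with $X = w^{\bias}_{t-1,i}$ to obtain
\[\mathbb{E}\|w^{\bias}_{t,i}\|^2 \le \bigl(1-\tfrac{\alpha}{2\kappa}\bigr)\,\mathbb{E}\|w^{\bias}_{t-1,i}\|^2.\]
Iterating this contraction $t-1$ times down to $w^{\bias}_{1,i}=w_1^{(i)}-w^*$ yields \eqref{eq:bias_bound}.

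For \eqref{eq:bias_loss_bound}, I first note that $A = \mathbb{E}_{X\sim\pi}XX^{\intercal} \preceq I$ since $\|X\|\le 1$ almost surely, so $(\hat{w}^{\bias})^{\intercal}A\hat{w}^{\bias} \le \|\hat{w}^{\bias}\|^2$. The tail average $\hat{w}^{\bias} = \frac{2}{T}\sum_{i=1}^{K}\sum_{t=T/2K+1}^{T/K} w^{\bias}_{t,i}$ is a convex combination of $K\cdot(T/2K) = T/2$ vectors with weights $2/T$ each, so by Jensen/Cauchy--Schwarz
\[\|\hat{w}^{\bias}\|^2 \;\le\; \tfrac{2}{T}\sum_{i=1}^{K}\sum_{t=T/2K+1}^{T/K}\|w^{\bias}_{t,i}\|^2.\]
Taking expectations, plugging in \eqref{eq:bias_bound}, and bounding each of the $T/(2K)$ summands in $t$ by the largest (achieved at $t=T/(2K)+1$) gives
\[\mathbb{E}\|\hat{w}^{\bias}\|^2 \;\le\; \tfrac{2}{T}\cdot\tfrac{T}{2K}\bigl(1-\tfrac{\alpha}{2\kappa}\bigr)^{T/(2K)}\sum_{i=1}^{K}\|w_1^{(i)}-w^*\|^2,\]
which simplifies to the desired bound.

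There is no significant obstacle here: the heavy lifting (the operator contraction in expectation, which handles the Markovian dependence via the mixing condition on $K$) has already been done in Lemma~\ref{lem:conditional_operator_contraction}. The only care required is (i) confirming that $w^{\bias}_{t-1,i}$ is $\mathcal{F}_{t-2,i}$-measurable so that item~2 of that lemma applies, and (ii) choosing to bound the geometric sum $\sum_{t=T/(2K)+1}^{T/K}(1-\alpha/2\kappa)^{t-1}$ by $\tfrac{T}{2K}$ times its leading term rather than by $2\kappa/\alpha$ times its leading term, as the former yields the cleaner $1/K$ prefactor the lemma asserts.
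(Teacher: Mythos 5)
Your proof is correct and follows essentially the same route as the paper's: iterate item~2 of Lemma~\ref{lem:conditional_operator_contraction} via the factorization $w^{\bias}_{t,i}=\Gamma^{(i)}_{t,t-1}w^{\bias}_{t-1,i}$ to get \eqref{eq:bias_bound}, then use $\|A\|_{\mathrm{op}}\leq 1$ and Jensen's inequality on the tail average, bounding each term of the $t$-sum by its largest to obtain \eqref{eq:bias_loss_bound}. Your observation that the right-hand side of \eqref{eq:bias_bound} should read $\|w_1^{(i)}-w^*\|^2$ is also correct, and is consistent with how the bound is used in \eqref{eq:bias_loss_bound}.
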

\begin{proof}[Proof of Lemma~\ref{lem:bias_loss_bound}]
	Clearly, $w^{\bias}_{t,i} = \Gamma^{(i)}_{t,t-1}\left(w^{\bias}_{t-1,i}\right)$. It is clear that $w^{\bias}_{t-1,i}  \in \mathcal{F}_{t-2,i}$. Applying item 2 in Lemma~\ref{lem:conditional_operator_contraction}, we get: $\mathbb{E}\|w^{\bias}_{t,i}\|^2 \leq \bigr(1-\tfrac{\alpha}{2\kappa}\bigr)\mathbb{E}\|w^{\bias}_{t-1,i}\|^2\,.$
	By induction, we conclude Equation~\eqref{eq:bias_bound}.
	
	Now, $\left(\hat{w}^{\bias}\right)^{\intercal}A\left(\hat{w}^{\bias}\right) \leq \|\hat{w}^{\bias}\|^2$ since $\|A\|_{\mathrm{op}}\leq 1$. By Jensen's inequality,
	$$\|\hat{w}^{\bias}\|^2 \leq  \frac{2}{T} \sum_{i=1}^{K} \sum_{t = T/2K+1}^{T/K} \|w_{t,i}^{\bias}\|^2.$$
	Lemma now follows by taking expectation on both sides and using Equation~\eqref{eq:bias_bound}.
\end{proof}

We have therefore bound the bias term of Theorem~\ref{thm:par-sgd} in Lemma~\ref{lem:bias_loss_bound}.

\subsubsection{The Variance Term}\label{app:ind_psgd_var}

We will now bound the variance term. 
It is clear that,
$$\bigr(\hat{w}^{\var}-w^{*}\bigr)^{\intercal}A\bigr(\hat{w}^{\var}-w^{*}\bigr) = \frac{4}{T^2}\sum_{i,j=1}^{K}\sum_{t,s = T/2K+1}^{T/K}  \bigr(w_{t,i}^{\var}-w^{*}\bigr)^{\intercal}A\bigr(w_{s,j}^{\var}-w^{*}\bigr)$$

Using the item 3 in Lemma~\ref{lem:basic_results}, we get:
\begin{equation}
\mathbb{E}\left[\bigr(\hat{w}^{\var}-w^{*}\bigr)^{\intercal}A\bigr(\hat{w}^{\var}-w^{*}\bigr)\right] = \frac{4}{T^2}\sum_{i=1}^{K}\sum_{t,s = T/2K+1}^{T/K}  \mathbb{E}\left[\bigr(w_{t,i}^{\var}-w^{*}\bigr)^{\intercal}A\bigr(w_{s,i}^{\var}-w^{*}\bigr)\right]
\label{eq:intermediate_variance_identity}
\end{equation}

Consider the following term in the RHS of Equation~\eqref{eq:intermediate_variance_identity}

\begin{align}
\mathbb{E}\left[\bigr(w_{t,i}^{\var}-w^{*}\bigr)^{\intercal}A\bigr(w_{s,i}^{\var}-w^{*}\bigr) \right]&= \alpha^2 \sum_{l=1}^{t-1}\sum_{m=1}^{s-1} \mathbb{E}\left[\epsilon_{l,i}\epsilon_{m,i} X_{l,i}^{\intercal}\left(\Gamma^{(i)}_{t,l+1}\right)^{\intercal} A \Gamma^{(i)}_{s,m+1}X_{m,i}\right] \nonumber \\
&= \alpha^2 \sigma^2 \sum_{l=1}^{\min(t-1,s-1)} \mathbb{E} \left[X_{l,i}^{\intercal}\left(\Gamma^{(i)}_{t,l+1}\right)^{\intercal} A \Gamma^{(i)}_{s,l+1}X_{l,i}\right], \label{eq:variance_term_correlation}
\end{align}
where  the last step holds as $\epsilon_t$ are i.i.d.,  mean zero random variables with variance $\sigma^2$ and are independent of $(X_s)_{s\in \mathbb{N}}$.
For the sake of clarity, we will take $i$ to sum from $1$ to $K$ and $t,s$ to sum from $T/2K+1$ to $T/K$ in the equations below without stating this explicitly. Using equations~\eqref{eq:intermediate_variance_identity} and~\eqref{eq:variance_term_correlation}, we conclude:
\begin{equation}
\mathbb{E}\left[\bigr(\hat{w}^{\var}-w^{*}\bigr)^{\intercal}A\bigr(\hat{w}^{\var}-w^{*}\bigr)^{\intercal}\right] = \frac{4\alpha^2\sigma^2}{T^2} \sum_{i,t,s}\sum_{l=1}^{\min(t-1,s-1)}\mathbb{E} \left[X_{l,i}^{\intercal}\left(\Gamma^{(i)}_{t,l+1}\right)^{\intercal} A \Gamma^{(i)}_{s,l+1}X_{l,i}\right].  \label{eq:variance_identity}
\end{equation}
Now, we bound RHS above using the following lemma:
\begin{lemma}\label{lem:summation_1}
	Let $\alpha <1$, $K \geq \tmix\lceil r\log_2{T}\rceil$ and $l\leq s-1$. Then:
	\begin{equation}
	\sum_{t= s}^{T/K} \mathbb{E} X_{l,i}^{\intercal}\left(\Gamma^{(i)}_{t,l+1}\right)^{\intercal} A \Gamma^{(i)}_{s,l+1}X_{l,i} \leq \frac{\alpha}{4K^2T^{r-2}} + \frac{1}{\alpha}\mathbb{E}\|\Gamma^{(i)}_{s,l+1}X_{l,i}\|^2
	\end{equation}
\end{lemma}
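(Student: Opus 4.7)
The approach is to recognise the sum as a matrix geometric series centred at the ``i.i.d.\ surrogate'' $(I-\alpha A)^{t-s}$, and then absorb the Markov-vs.-i.i.d.\ gap as a small additive error using the one-step bound of Lemma~\ref{lem:conditional_operator_contraction}.

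Set $Z := \Gamma^{(i)}_{s,l+1} X_{l,i}$ and $G_t := \Gamma^{(i)}_{t,s}$. Since $t \geq s > l$, the product factors as $\Gamma^{(i)}_{t,l+1} = G_t\,\Gamma^{(i)}_{s,l+1}$, so by symmetry of $A$ the summand rewrites as $X_{l,i}^{\intercal} (\Gamma^{(i)}_{t,l+1})^{\intercal} A \Gamma^{(i)}_{s,l+1} X_{l,i} = Z^{\intercal} A G_t Z$. The target therefore reduces to $\mathbb{E}[Z^{\intercal} A F_s Z]$ with $F_s := \sum_{t=s}^{T/K} G_t$. Because $Z$ depends only on $X_{l,i},\dots,X_{s-1,i}$, it is $\mathcal{F}_{s-1,i}$-measurable, so the tower property gives $\mathbb{E}[Z^{\intercal} A F_s Z] = \mathbb{E}[Z^{\intercal} A \bar F_s Z]$, where $\bar F_s := \sum_t \bar G_t$ and $\bar G_t := \mathbb{E}[G_t \mid \mathcal{F}_{s-1,i}]$.

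Next I derandomise $\bar G_t$. Applying Lemma~\ref{lem:conditional_operator_contraction}(1) (with $(t+1,s)$ in place of $(t,s)$) and then conditioning further on $\mathcal{F}_{s-1,i}$ gives the recursion $\bar G_{t+1} = (I - \alpha A)\bar G_t + R_{t+1}$, where $R_{t+1} := \mathbb{E}[E_{t+1} G_t \mid \mathcal{F}_{s-1,i}]$ satisfies $\|R_{t+1}\| \leq \alpha/T^r$ almost surely (since $\|E_{t+1}\| \leq \alpha/T^r$ by that lemma and $\|G_t\| \leq 1$ because each factor $I-\alpha X_{j,i}X_{j,i}^{\intercal}$ is a contraction for $\alpha < 1$). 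With the base case $\bar G_s = I$, iterating yields $\bar G_{s+k} = (I-\alpha A)^k + \Delta_{s+k}$, where $\Delta_{s+k} = \sum_{j=1}^k (I-\alpha A)^{k-j} R_{s+j}$ and so $\|\Delta_{s+k}\| \leq k\alpha/T^r$ almost surely.

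Summing over $k = 0,\dots, N := T/K - s$ and using the commuting telescoping identity $\alpha A\sum_{k=0}^{N}(I-\alpha A)^k = I - (I-\alpha A)^{N+1}$ (valid because $A$ and $I-\alpha A$ commute) gives $A\bar F_s = \tfrac{1}{\alpha}(I - (I-\alpha A)^{N+1}) + A\sum_{k=0}^N \Delta_{s+k}$. Since $0 \preceq I - (I-\alpha A)^{N+1} \preceq I$, $\|A\| \leq 1$, and $\|Z\| \leq 1$ almost surely, I bound
\[
Z^{\intercal} A \bar F_s Z \;\leq\; \tfrac{1}{\alpha}\|Z\|^2 + \|Z\|^2 \sum_{k=0}^N \|\Delta_{s+k}\| \;\leq\; \tfrac{1}{\alpha}\|Z\|^2 + \frac{\alpha}{T^r}\cdot\frac{N(N+1)}{2},
\]
and the last term is at most $\alpha/(2K^2 T^{r-2})$; taking outer expectation finishes the proof (the constant $1/4$ in the statement can be recovered with marginally tighter counting). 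The main obstacle is the derandomisation step above: upgrading the single-step bound of Lemma~\ref{lem:conditional_operator_contraction}(1) to a controlled multi-step perturbation estimate $\|\bar G_{s+k} - (I-\alpha A)^k\| = O(k\alpha/T^r)$. Once that is in hand, the rest is linear algebra exploiting $[A, I-\alpha A]=0$ to collapse the geometric series and produce the key $1/\alpha$ factor.
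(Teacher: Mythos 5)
Your proof is correct and is essentially the paper's argument: both hinge on Lemma~\ref{lem:conditional_operator_contraction}(1) to replace each conditional factor of $\Gamma^{(i)}_{t,s}$ by $I-\alpha A$ at a cost of $\alpha/T^r$ per step (hence an accumulated error $\alpha(t-s)/T^r$ for the term indexed by $t$), and then sum the resulting geometric series in $I-\alpha A$ against $A$ to produce the $\frac{1}{\alpha}\mathbb{E}\|\Gamma^{(i)}_{s,l+1}X_{l,i}\|^2$ term. The only differences are organizational---you derandomize $\Gamma^{(i)}_{t,s}$ by conditioning on $\mathcal{F}_{s-1,i}$ and telescope the finite sum $A\sum_{k=0}^{N}(I-\alpha A)^k=\frac{1}{\alpha}\left(I-(I-\alpha A)^{N+1}\right)$, while the paper peels one factor at a time inside the quadratic form and extends the series to infinity via $\sum_{k\geq 0}(I-\alpha A)^k=A^{-1}/\alpha$---and your constant ($1/2$ versus $1/4$ in the $K^2T^{r-2}$ error term) reflects the same harmless counting slack already present in the paper's own bound.
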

\begin{proof}
	For $t=s$, $ X_{l,i}^{\intercal}\left(\Gamma^{(i)}_{t,l+1}\right)^{\intercal} A \Gamma^{(i)}_{s,l+1}X_{l,i} = X_{l,i}^{\intercal}\left(\Gamma^{(i)}_{s,l+1}\right)^{\intercal} A \Gamma^{(i)}_{s,l+1}X_{l,i}.$ Similarly, for $t>s$, we have: $\Gamma_{t,l+1}^{(i)} = \Gamma^{(i)}_{t,t-1}\Gamma^{(i)}_{t-1,l+1}$. Clearly, $X_{l,i}, \Gamma^{i}_{s,l+1}$ and $\Gamma^{i}_{t-1,l+1}$ are $\mathcal{F}_{t-2,i}$ measurable. Therefore, {\small
	\begin{align}
	&\mathbb{E} \left[X_{l,i}^{\intercal}\left(\Gamma^{(i)}_{t,l+1}\right)^{\intercal} A \Gamma^{(i)}_{s,l+1}X_{l,i}\right] = \mathbb{E}\left[X_{l,i}^{\intercal}\left(\Gamma^{(i)}_{t-1,l+1}\right)^{\intercal}\mathbb{E}\left[\Gamma^{(i)}_{t,t-1}\bigr|\mathcal{F}_{t-2,i}\right] A \Gamma^{(i)}_{s,l+1}X_{l,i}\right]\nonumber \\
	&= \mathbb{E}\left[X_{l,i}^{\intercal}\left(\Gamma^{(i)}_{t-1,l+1}\right)^{\intercal}\left[I-\alpha A+E_t\right] A \Gamma^{(i)}_{s,l+1}X_{l,i}\right]\leq \mathbb{E}\left[X_{l,i}^{\intercal}\left(\Gamma^{(i)}_{t-1,l+1}\right)^{\intercal}\left[I-\alpha A\right] A \Gamma^{(i)}_{s,l+1}X_{l,i}\right] + \frac{\alpha}{T^r}, \label{eq:geometric_recursion}
	\end{align}}
where the second step follows using Lemma~\ref{lem:conditional_operator_contraction}. The third step follows as $\|E_t\| \leq \frac{\alpha}{T^r}$ almost surely and the fact that $\|A\|_{\mathrm{op}},\|X_{l,i}\|,\|\Gamma^{(i)}_{a,b}\| \leq 1 $. Continuing in a similar way as Equation~\eqref{eq:geometric_recursion}, we have:
	\begin{equation}
	\mathbb{E} \left[X_{l,i}^{\intercal}\left(\Gamma^{(i)}_{t,l+1}\right)^{\intercal} A \Gamma^{(i)}_{s,l+1}X_{l,i} \right]\leq \mathbb{E}\left[X_{l,i}^{\intercal}\left(\Gamma^{(i)}_{s,l+1}\right)^{\intercal}\left[I-\alpha A\right]^{t-s} A \Gamma^{(i)}_{s,l+1}X_{l,i}\right] + \frac{\alpha(t-s)}{T^r}. \label{eq:geometric_recursion_unwind}
	\end{equation}
	Now, $(I -\alpha A)^{t-s}$ is a PSD matrix and it commutes with $A$. Therefore, from Equation~\eqref{eq:geometric_recursion_unwind}, it follows that:
	\begin{align}
	 \sum_{t= s}^{T/K-1} \mathbb{E} &\left[X_{l,i}^{\intercal}\left(\Gamma^{(i)}_{t,l+1}\right)^{\intercal} A \Gamma^{(i)}_{s,l+1}X_{l,i}\right] \nonumber \\ 
	&\leq  \sum_{t= s}^{T/K-1} \left[\mathbb{E}X_{l,i}^{\intercal}\left(\Gamma^{(i)}_{s,l+1}\right)^{\intercal}\left[I-\alpha A\right]^{t-s} A \Gamma^{(i)}_{s,l+1}X_{l,i} + \frac{\alpha(t-s)}{T^r}\right] \nonumber\\
	&\leq  \mathbb{E}\left[X_{l,i}^{\intercal}\left(\Gamma^{(i)}_{s,l+1}\right)^{\intercal}\left[\sum_{t= s}^{\infty}(I-\alpha A)^{t-s}\right] A \Gamma^{(i)}_{s,l+1}X_{l,i}\right] + \frac{\alpha T^2}{4K^2T^r} \nonumber \\
	&= \frac{1}{\alpha}\mathbb{E}X_{l,i}^{\intercal}\left(\Gamma^{(i)}_{s,l+1}\right)^{\intercal} \Gamma^{(i)}_{s,l+1}X_{l,i} + \frac{\alpha}{4K^2T^{r-2}} \label{eq:tail_sum_inequality} 
	\end{align}
	
	In the third step we have used the fact that $\sum_{i=0}^{\infty}(I-\alpha A)^i = \frac{A^{-1}}{\alpha}$. Equation~\eqref{eq:tail_sum_inequality} establishes the result of the Lemma.
\end{proof}

Consider the following operator on $\mathcal{S}(d)$ - the space of $d\times d$ symmetric matrices: 
$$\Lambda(M) = \mathbb{E}_{x\sim \pi}\mathbb{E}(I-\alpha xx^{\intercal})M(I-\alpha xx^{\intercal})\,.$$
Now, a (linear)PSD map  is a linear operator over $\mathcal{S}(d)$ which maps PSD matrices to PSD matrices. We list some important properties of $\Lambda$ below.
\begin{lemma}\label{lem:PSD_maps}
	\begin{enumerate}
		\item $\Lambda$ is a PSD map.
		\item If $A,B \in \mathcal{S}(d)$ such that $B \preceq A$ then $\Lambda(B)\preceq \Lambda(A)$.
		\item Let $M$ be a PSD operator. Then $ \|\Lambda(M)\|_2 \leq \|M\|_2$ and in particular $\Lambda(I) \preceq I$.
	\end{enumerate}
\end{lemma}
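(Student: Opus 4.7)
The plan is to prove the three items in order, with items 2 and 3 following quickly from item 1 together with the PSD ordering properties of the expectation. Throughout, I will use linearity of $\Lambda$ and the fact that expectation preserves the PSD ordering.

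\textbf{Item 1 (PSD map).} For $M \succeq 0$ and any $v \in \mathbb{R}^d$, I will write
\begin{equation*}
v^\top \Lambda(M) v = \mathbb{E}_{x\sim \pi}\bigl[\,v^\top(I-\alpha xx^\top) M (I-\alpha xx^\top) v\,\bigr] = \mathbb{E}_{x\sim \pi}\bigl[\,w(x)^\top M\, w(x)\,\bigr],
\end{equation*}
where $w(x) := (I-\alpha xx^\top) v$. Since $M \succeq 0$, the integrand is nonnegative pointwise, so $v^\top \Lambda(M) v \geq 0$. As $v$ was arbitrary, $\Lambda(M) \succeq 0$.

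\textbf{Item 2 (monotonicity).} If $B \preceq A$ then $A - B \succeq 0$. By linearity of $\Lambda$, $\Lambda(A) - \Lambda(B) = \Lambda(A - B)$, and item 1 gives $\Lambda(A - B) \succeq 0$, i.e.\ $\Lambda(B) \preceq \Lambda(A)$.

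\textbf{Item 3 (operator norm bound).} The key step is to first show $\Lambda(I) \preceq I$. Expanding,
\begin{equation*}
\Lambda(I) = \mathbb{E}_{x \sim \pi}\bigl[(I - \alpha xx^\top)^2\bigr] = I - 2\alpha A + \alpha^2 \mathbb{E}_{x\sim\pi}\bigl[\|x\|^2 xx^\top\bigr].
\end{equation*}
Using $\|x\| \leq 1$ almost surely, I can bound $\mathbb{E}[\|x\|^2 xx^\top] \preceq \mathbb{E}[xx^\top] = A$, so $\Lambda(I) \preceq I - \alpha(2-\alpha) A \preceq I$, where the last inequality uses $\alpha \in (0,1]$ (as assumed in the paper) so that $\alpha(2-\alpha) \geq 0$ and $A \succeq 0$. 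Now for general PSD $M$, use $M \preceq \|M\|_2 I$ together with linearity and item 2 to conclude
\begin{equation*}
0 \preceq \Lambda(M) \preceq \|M\|_2\, \Lambda(I) \preceq \|M\|_2\, I,
\end{equation*}
which gives $\|\Lambda(M)\|_2 \leq \|M\|_2$ since $\Lambda(M)$ is PSD and its operator norm is its largest eigenvalue.

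There is no substantial obstacle here; the only subtlety is checking that $\alpha(2-\alpha) \geq 0$ under the step-size assumption of Theorem~\ref{thm:par-sgd}, which only requires $\alpha \leq 2$ and is comfortably satisfied by the $\alpha < 1/2$ used in the paper.
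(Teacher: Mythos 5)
Your proof is correct. Items 1 and 2 are the same argument as the paper's (the paper simply asserts item 1 "follows from the definition"; your quadratic-form computation with $w(x)=(I-\alpha xx^\top)v$ is the standard way to make that explicit). For item 3 you take a slightly different route: the paper bounds $\|\Lambda(M)\|_2$ by pulling the norm inside the expectation and using submultiplicativity of the operator norm together with $\|I-\alpha xx^\top\|_2\le 1$ (which holds since $\|x\|\le 1$ and $\alpha\le 2$), whereas you argue order-theoretically, first computing $\Lambda(I)=I-2\alpha A+\alpha^2\,\mathbb{E}[\|x\|^2xx^\top]\preceq I-\alpha(2-\alpha)A\preceq I$ and then combining $M\preceq\|M\|_2 I$ with the monotonicity of item 2 and the positivity of item 1 to get $0\preceq\Lambda(M)\preceq\|M\|_2 I$. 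Both arguments are valid under the paper's step-size assumption ($\alpha<1$, in fact $\alpha<1/2$ in Theorem~\ref{thm:par-sgd}); yours avoids invoking convexity of the operator norm and submultiplicativity, at the cost of using item 2, and as a byproduct it records the sharper bound $\Lambda(I)\preceq I-\alpha(2-\alpha)A$, which is the same contraction estimate used elsewhere in the paper's variance analysis.
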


\begin{proof}
	\begin{enumerate}
		\item The proof follows from the definition of PSD matrices and PSD maps.
		\item $A-B \succeq 0$. By item 1, $\Lambda(A-B) \succeq  0$. Therefore, by linearity of $\Lambda$, $\Lambda(A) \succeq \Lambda(B)$.
		\item This follows easily from the definition of $\Lambda$ and submultiplicativity of operator norm and the fact that $\|a_x\| \leq 1$ almost surely.
	\end{enumerate}
\end{proof}

\begin{lemma}\label{lem:PSD_map_properties}
	Let $\alpha < 1$, $l \leq s-1$ and $K > \tmix\lceil r\log_2{T} \rceil$. Then:
	$$\sum_{s = l+1}^{T/K}\mathbb{E}\|\Gamma^{(i)}_{s,l+1}X_{l,i}\|^2 \leq \frac{d\left(4\alpha+2\alpha^2\right)}{K^2T^{r-2}}+ \sum_{s\geq l+1} \tr(\Lambda^{s-l-1}(A)).  $$
	Here $\Lambda^{0}$ is understood to be the identity operator.
\end{lemma}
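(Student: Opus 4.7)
The plan is to track the PSD matrix $M_s \defeq \mathbb{E}\bigl[\Gamma^{(i)}_{s,l+1} X_{l,i} X_{l,i}^\intercal (\Gamma^{(i)}_{s,l+1})^\intercal\bigr]$, so that $\mathbb{E}\|\Gamma^{(i)}_{s,l+1} X_{l,i}\|^2 = \tr(M_s)$, and compare its one-step evolution under the actual Markov samples against the idealized update driven by the operator $\Lambda$. Writing $Z_{s-1} \defeq \Gamma^{(i)}_{s-1,l+1} X_{l,i}$, which lies in $\mathcal{F}_{s-2,i}$, and letting $\mu_{s-1}$ denote the conditional law of $X_{s-1,i}$ given $\mathcal{F}_{s-2,i}$, I would first observe the recursion
$$\mathbb{E}[Z_s Z_s^\intercal \mid \mathcal{F}_{s-2,i}] = \tilde{\Lambda}_{\mu_{s-1}}(Z_{s-1} Z_{s-1}^\intercal), \qquad \tilde{\Lambda}_\mu(M) \defeq \mathbb{E}_{x\sim \mu}[(I-\alpha xx^\intercal) M (I-\alpha xx^\intercal)].$$

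The next step is to quantify the discrepancy between $\tilde{\Lambda}_{\mu_{s-1}}$ and $\Lambda$. Since $\sum_x (\mu(x)-\pi(x)) M = 0$, the zeroth-order term in $\alpha$ cancels and I would expand
$$\tilde{\Lambda}_\mu(M) - \Lambda(M) = \sum_x (\mu(x)-\pi(x))\bigl[-\alpha(xx^\intercal M + M xx^\intercal) + \alpha^2 xx^\intercal M xx^\intercal\bigr],$$
giving $\|\tilde{\Lambda}_\mu(M) - \Lambda(M)\|_{\mathrm{op}} \leq 2\tv(\mu,\pi)(2\alpha + \alpha^2)\|M\|_{\mathrm{op}}$. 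Because $K \geq \tmix\lceil r\log_2 T\rceil$, Equation~\eqref{eq:binary_mixing} yields $\tv(\mu_{s-1},\pi) \leq T^{-r}$, and since $\|Z_{s-1} Z_{s-1}^\intercal\|_{\mathrm{op}} \leq \|X_{l,i}\|^2 \leq 1$, taking expectations gives $M_s = \Lambda(M_{s-1}) + F_s$ with $\|F_s\|_{\mathrm{op}} \leq (4\alpha + 2\alpha^2)T^{-r}$. Initializing at $s=l+1$, the factor $\Gamma^{(i)}_{l+1,l+1}=I$ gives $M_{l+1} = \mathbb{E}[X_{l,i} X_{l,i}^\intercal] = A$ by stationarity.

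Unrolling the recursion produces $M_s = \Lambda^{s-l-1}(A) + \sum_{k=l+2}^{s} \Lambda^{s-k}(F_k)$. Using that $\Lambda$ is self-adjoint with respect to the trace inner product (obvious from the symmetry of $I-\alpha xx^\intercal$) together with item (3) of Lemma~\ref{lem:PSD_maps} which gives $\Lambda^{s-k}(I) \preceq I$, I would bound each error contribution by $|\tr(\Lambda^{s-k}(F_k))| = |\tr(F_k \Lambda^{s-k}(I))| \leq d\|F_k\|_{\mathrm{op}}$, so
$$\tr(M_s) \leq \tr(\Lambda^{s-l-1}(A)) + (s-l-1)\,d(4\alpha + 2\alpha^2)T^{-r}.$$
Summing over $s = l+1, \ldots, T/K$ and bounding $\sum_s (s-l-1) \leq (T/K)^2$ then produces the desired inequality. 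The main obstacle is that the per-step error $F_s$ is not PSD, so one cannot invoke the PSD monotonicity of $\Lambda$ (Lemma~\ref{lem:PSD_maps}(2)) to bury the errors cleanly; the trace-plus-operator-norm duality above is the device that extracts a bound without positivity. A secondary subtlety is justifying $M_{l+1}=A$, which requires that $X_{l,i}$ truly has law $\pi$; if stationarity is not assumed, the same mixing argument absorbs the gap into one additional $T^{-r}$ term.
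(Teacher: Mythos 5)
Your proposal is correct and follows the same skeleton as the paper's proof: condition on $\mathcal{F}_{s-2,i}$, use the choice of $K$ and \eqref{eq:binary_mixing} to bound the total variation of the conditional law of $X_{s-1,i}$ to $\pi$ by $T^{-r}$, obtain a per-step perturbation of operator norm at most $(4\alpha+2\alpha^2)T^{-r}$ relative to the $\Lambda$-update, then unroll, take traces and sum, with the same constants and the same $\sum_s (s-l-1)\le (T/K)^2$ bookkeeping. The only difference is how the per-step errors are pushed through iterates of $\Lambda$: the paper first dominates the symmetric error matrix by $\frac{4\alpha+2\alpha^2}{T^r}\id$ (any symmetric $F$ satisfies $F \preceq \|F\|_{\mathrm{op}}\id$) and then runs a PSD-order induction using items 2 and 3 of Lemma~\ref{lem:PSD_maps}, arriving at $\mathbb{E}\bigl[\Gamma^{(i)}_{s,l+1}X_{l,i}X_{l,i}^{\intercal}(\Gamma^{(i)}_{s,l+1})^{\intercal}\bigr] \preceq \Lambda^{s-l-1}(A) + (s-l-1)(4\alpha+2\alpha^2)\frac{\id}{T^r}$; you instead keep the non-PSD error $F_k$ exactly and control $\tr(\Lambda^{s-k}(F_k))$ via self-adjointness of $\Lambda$ in the trace inner product together with $\Lambda^{s-k}(\id)\preceq \id$. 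Both devices are valid and give the same bound, so the ``main obstacle'' you flag (that PSD monotonicity cannot bury non-PSD errors) dissolves once one dominates $F_k$ by a multiple of the identity, which is exactly the paper's route; your trace-duality argument is a fine, slightly more hands-on substitute. Your side remark on needing $\mathbb{E}[X_{l,i}X_{l,i}^{\intercal}]=A$ is consistent with the paper, which assumes stationarity of the chain in this setting and uses the same identity for the base case $s=l+1$.
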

\begin{proof}
	Consider \begin{equation}
	\|\Gamma^{(i)}_{s,l+1}X_{l,i}\|^2  = \tr\left[\Gamma^{(i)}_{s,l+1}X_{l,i}X_{l,i}^{\intercal}\left(\Gamma^{(i)}_{s,l+1}\right)^{\intercal} \right] \label{eq:trace_norm_equality}
	\end{equation}
	When $s=l+1$, it is clear that $\Gamma^{(i)}_{s,l+1} = \id$  and $\mathbb{E}\|\Gamma^{(i)}_{s,l+1}X_{l,i}\|^2 = \mathbb{E}\tr(X_{l,i}X_{l,i}^{\intercal}) = \tr(A)$. When $s > l+1$, $\Gamma^{(i)}_{s,l+1} = \Gamma^{(i)}_{s,s-1}\Gamma^{(i)}_{s-1,l+1} $. For the sake of clarity, we will denote $A_{s,l,i} := \Gamma^{(i)}_{s-1,l+1}X_{l,i}X_{l,i}^{\intercal}\left(\Gamma^{(i)}_{s-1,l+1}\right)^{\intercal} $. Since $\Gamma^{(i)}_{s-1,l+1}, X_{l,i} \in \mathcal{F}_{s-2,i}$, we have:
	\begin{align*}
	&\mathbb{E}\left[\Gamma^{(i)}_{s,s-1}A_{s,l,i}\left(\Gamma^{(i)}_{s,s-1}\right)^{\intercal} \bigr|\mathcal{F}_{s-2,i}\right] = \mathbb{E}\left[\left(I-\alpha X_{i,s-1}X_{i,s-1}^{\intercal}\right)A_{s,l,i}\left(I-\alpha X_{i,s-1}X_{i,s-1}^{\intercal}\right) \bigr|\mathcal{F}_{s-2,i}\right].
	\end{align*}
	 Let $P^{K}$ be the distribution of $X_{(s-1)K+i}$ given $\mathcal{F}_{s-2,i}$. From Equation~\eqref{eq:binary_mixing}, we have: $\tv(P^{K},\pi) \leq \frac{1}{T^r}$. 
	 Now, using similar arguments as in the proof of Lemma~\ref{lem:conditional_operator_contraction} and using the fact that $A_{s,l,i}$ is $\mathcal{F}_{s-2,i}$ measurable,  we show that:
	$$\mathbb{E}\left[\Gamma^{(i)}_{s,s-1}A_{s,l,i}\left(\Gamma^{(i)}_{s,s-1}\right)^{\intercal} \bigr|\mathcal{F}_{s-2,i}\right] \preceq \Lambda(A_{s,l,i}) + \frac{4\alpha +2\alpha^2}{T^r}\id.$$
	Taking expectation on both sides, we get:
	\begin{align}
	\mathbb{E}\left[\Gamma^{(i)}_{s,s-1}A_{s,l,i}\left(\Gamma^{(i)}_{s,s-1}\right)^{\intercal}\right] \preceq \mathbb{E}\Lambda(A_{s,l,i}) + \frac{4\alpha +2\alpha^2}{T^r}\id=  \Lambda(\mathbb{E}A_{s,l,i}) + \frac{4\alpha +2\alpha^2}{T^r}\id,\label{eq:induction_step}
	\end{align}
	where in the last step we have used the linearity of the operator $\Lambda$. 
	
	We now use induction and results in Lemma~\ref{lem:PSD_maps}, to prove:  $$\mathbb{E}\left[\tr\left(\Gamma^{(i)}_{s,l+1}X_{l,i}X_{l,i}^{\intercal}\left(\Gamma^{(i)}_{s,l+1}\right)^{\intercal}\right)\right] \preceq \Lambda^{s-l-1}(A) + (4\alpha+2\alpha^2)(s-l-1)\frac{\id}{T^r},$$ where $A = \mathbb{E}\left[X_l^{\intercal}X_l\right]$.
	
	The statement is clearly true when $s = l+1$.  If the result is true for $s = s_0-1$, by Equation~\eqref{eq:induction_step} and the definition of $A_{s,l,i}$, we have:
	
	\begin{align*}
	&\mathbb{E}\left[\tr\left(\Gamma^{(i)}_{s_0,l+1}X_{l,i}X_{l,i}^{\intercal}\left(\Gamma^{(i)}_{s_0,l+1}\right)^{\intercal}\right)\right] \preceq  \Lambda\left[\mathbb{E}\Gamma^{(i)}_{s_0-1,l+1}X_{l,i}X_{l,i}^{\intercal}\left(\Gamma^{(i)}_{s_0-1,l+1}\right)^{\intercal}\right] + (4\alpha+2\alpha^2)\frac{\id}{T^r}  \\
	&\qquad\preceq \Lambda\left[ \Lambda^{s_0-l-2}(A) + (s_0-l-2)(4\alpha+2\alpha^2)\frac{\id}{T^r}\right] +  (4\alpha+2\alpha^2)\frac{\id}{T^r}  \\
	&\qquad\preceq \Lambda^{s_0-l-1}(A) + (s_0-l-1)(4\alpha+2\alpha^2)\frac{\id}{T^r},
	\end{align*}
	where in the first step we have used Equation~\eqref{eq:induction_step}. In the second step we have use item 2 of Lemma~\ref{lem:PSD_maps} and the induction hypothesis for $s = s_0 -1$. In the third step we have used linearity of $\Lambda$ and item 3 of Lemma~\ref{lem:PSD_maps} . We use the equation above along with Equation~\eqref{eq:trace_norm_equality} to conclude the result.
\end{proof}

We will use some results proved in~\cite{jain2018parallelizing} - there we take the batch size $b = 1$ and consider the homoscedastic (independent) noise case. Consider Lemmas 13,14 and 15 in~\cite{jain2018parallelizing}. 
We have the following correspondences between terms in our work and ~\cite{jain2018parallelizing}
\begin{enumerate}
	\item The step size $\alpha$ in this work corresponds to $\gamma$.
	\item The operator $\frac{\mathcal{I} - \Lambda}{\alpha} : \mathcal{S}(d) \to \mathcal{S}(d)$ here corresponds to the operator $\mathcal{T}_b$.
	\item The matrix $\sigma^2 A$ here corresponds $\Sigma$.
	\item The matrix $A$ here corresponds to $H$.
\end{enumerate}
Under the step size condition becomes $\alpha < \frac{1}{2}$, 
\begin{align}
\sum_{s\geq l+1} \tr(\Lambda^{s-l-1}(A)) &= \tr\left((\mathcal{I} - \Lambda)^{-1}A\right)= \frac{1}{\alpha}\tr(\mathcal{T}_b^{-1}A)\leq \frac{2}{\alpha}\tr(\id) = \frac{2d}{\alpha}.\label{eq:imported_operator_theory}
\end{align} 
The first step follows from the proof of Lemma 13 in~\cite{jain2018parallelizing}, the second step follows from Lemma 15 item 4 in \cite{jain2018parallelizing}. 

We will combine the inequalities proved above to obtain bounds for the RHS of Equation~\eqref{eq:variance_identity}. When $\alpha < \frac{1}{2}$ and $K > \tmix\lceil r\log_2{T} \rceil$, we have:
\begin{align}
& \mathbb{E}\bigr(\hat{w}^{\var}-w^{*}\bigr)^{\intercal}A\bigr(\hat{w}^{\var}-w^{*}\bigr)^{\intercal} = \frac{4\alpha^2\sigma^2}{T^2} \sum_{i,t,s}\sum_{l=1}^{\min(t-1,s-1)}\mathbb{E} X_{l,i}^{\intercal}\left(\Gamma^{(i)}_{t,l+1}\right)^{\intercal} A \Gamma^{(i)}_{s,l+1}X_{l,i}  \nonumber \\
&\leq \frac{8\alpha^2\sigma^2}{T^2}\sum_{i,s}\sum_{t = s}^{T/K-1}\sum_{l=1}^{s-1} \mathbb{E} X_{l,i}^{\intercal}\left(\Gamma^{(i)}_{t,l+1}\right)^{\intercal} A \Gamma^{(i)}_{s,l+1}X_{l,i}  \nonumber \\
&\leq \frac{8\alpha^2\sigma^2}{T^2}\sum_{i,s}\sum_{l=1}^{s-1}\left[\frac{\alpha}{4K^2T^{r-2}} + \frac{1}{\alpha}\mathbb{E}\|\Gamma^{(i)}_{s,l+1}X_{l,i}\|^2\right] \nonumber \\
&\leq \frac{2\alpha^3 \sigma^2}{K^3T^{r-2}} + \frac{8\alpha\sigma^2}{T^2}\sum_{i,s}\sum_{l=1}^{s-1}\left[ \mathbb{E}\|\Gamma^{(i)}_{s,l+1}X_{l,i}\|^2\right] \nonumber \\
&= \frac{2\alpha^3 \sigma^2}{K^3T^{r-2}} + \frac{8\alpha\sigma^2}{T^2}\sum_{i} \sum_{l=1}^{T/K-1}\sum_{s=l+1}^{T/K}\left[ \mathbb{E}\|\Gamma^{(i)}_{s,l+1}a_{l,i}\|^2\right] \nonumber \\
&\leq  \frac{2\alpha^3 \sigma^2}{K^3T^{r-2}} + \frac{d(32\alpha^2+16\alpha^3)\sigma^2}{K^2T^{r-1}}+ \frac{8\alpha\sigma^2}{T^2}\sum_{i} \sum_{l=1}^{T/K-1} \sum_{s= l+1}^{\infty} \tr(\Lambda^{s-l-1}(A))  
\nonumber \\
&\leq \frac{2\alpha^3 \sigma^2}{K^3T^{r-2}} + \frac{d(32\alpha^2+16\alpha^3)\sigma^2}{K^2T^{r-1}}+ \frac{16d\sigma^2}{T}. \label{eq:final_variance_bound}
\end{align}
In the third step we have used Lemma~\ref{lem:summation_1}. In the sixth step we have used Lemma~\ref{lem:PSD_map_properties}. In the seventh step we have used Equation~\eqref{eq:imported_operator_theory}.

The above equation bounds the variance term. Theorem~\ref{thm:par-sgd} now follows by combining the bias and variance bounds given above. 

\section{Experience Replay Accelerates Bias Decay for Gaussian Autoregressive (AR) Dynamics}
\label{app:gaussian_ar}

\subsection{Problem Setting}
\label{sec:upper_bound_formulation}
Suppose our sample vectors $X \in \R^d$ are generated from a Markov chain with the following dynamics: 
$$X_1 = G_1$$
$$X_2 = \sqrt{1-\eps^2}X_1 + \eps G_2$$
$$\cdots$$
$$X_{t+1} = \sqrt{1-\eps^2}X_t + \eps G_{t + 1}$$

where $\eps$ is fixed and known, and each $G_j$ is independently sampled from $ \frac{1}{\sqrt{d}} \N(0,I)$. 

Each observation $Y_i = X^T_i w^* + \xi_i$, where $\xi_i$ are independently drawn random variables with mean 0 and variance $\sigma^2$. 

We use SGD to find $w$ that minimizes the loss 
\[
L(w) = \E [(X^Tw - Y)^2] - \E [ (X^T w^* - Y)^2]
\]
for some $X \sim \N(0,\frac{1}{\sqrt{d}}I_d)$. 

We first establish standard properties of the vectors $X_i$. 

\begin{lemma}
	\label{lem:concentration_norm}
	With probability $1-\beta$, $1-\frac{c}{\sqrt{d}}\log(\frac{1}{\beta}) \leq \|X_j\|_2^2 \leq 1 + \frac{c}{\sqrt{d}}\log(\frac{1}{\beta})$, for some constant $c$. 
\end{lemma}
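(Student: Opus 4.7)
The plan is to reduce to standard chi-squared concentration. First I would observe that every $X_j$ is jointly Gaussian, since $X_1 = G_1$ is Gaussian and $X_{t+1} = \sqrt{1-\eps^2} X_t + \eps G_{t+1}$ is a linear combination of independent Gaussians. So it suffices to identify the marginal law of each $X_j$.

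Next I would show by induction that $X_j \sim \frac{1}{\sqrt{d}} \mathcal{N}(0, I_d)$ for every $j \geq 1$. The base case $j=1$ holds by definition. For the inductive step, if $X_t \sim \frac{1}{\sqrt{d}}\mathcal{N}(0, I_d)$, then since $X_t$ and $G_{t+1}$ are independent and both have covariance $\frac{1}{d} I_d$, the vector $X_{t+1}$ has covariance $((1-\eps^2) + \eps^2) \cdot \frac{1}{d} I_d = \frac{1}{d} I_d$, and is mean zero Gaussian. Equivalently one can write $X_j = \sum_{i=1}^j \alpha_{j,i} G_i$ with $\sum_i \alpha_{j,i}^2 = 1$, which again gives $X_j \sim \frac{1}{\sqrt{d}}\mathcal{N}(0,I_d)$.

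Given this, $d\|X_j\|^2$ is a $\chi^2_d$ random variable. I would then invoke the standard Laurent--Massart tail bound
\[
\P\left(\left| \chi^2_d - d \right| \geq 2\sqrt{d\, t} + 2t\right) \leq 2 e^{-t},
\]
and set $t = \log(2/\beta)$. Dividing through by $d$ yields
\[
\left| \|X_j\|_2^2 - 1 \right| \leq \frac{2\sqrt{\log(2/\beta)}}{\sqrt{d}} + \frac{2 \log(2/\beta)}{d}
\]
with probability at least $1 - \beta$. Since $\sqrt{\log(2/\beta)} \leq \log(2/\beta)$ for $\beta$ small enough, both terms are absorbed into $\tfrac{c}{\sqrt{d}} \log(1/\beta)$ for a suitable universal constant $c$, giving the claimed two-sided bound.

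There is no real obstacle here; the only minor care is in confirming the stationarity of the marginal so that a clean $\chi^2_d$ statistic appears, which is why I isolated the inductive covariance computation as its own step.
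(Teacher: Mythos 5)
Your proposal is correct and follows essentially the same route as the paper: both reduce $\|X_j\|_2^2$ to a sum of squares of i.i.d.\ $\mathcal{N}(0,1/d)$ coordinates and apply a Bernstein-type sub-exponential tail bound, with your Laurent--Massart inequality being just a specific instance of that bound. Your explicit inductive verification that the marginal of $X_j$ is $\frac{1}{\sqrt{d}}\mathcal{N}(0,I_d)$ is a welcome piece of bookkeeping that the paper leaves implicit, but it does not change the argument.
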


\begin{proof}
	Note that for each $X_j$, each component $k \in [1,d]$, is independently normally distributed with mean 0 and variance $\frac{1}{d}$. Then writing $\|X_j\|^2 = \sum\limits_{k=1}^d X^2_{jk}$ and using Bernstein's inequality for sub-exponential random variables , we will get the desired result. 
\end{proof}

\begin{lemma}
	\label{lem:concentration_inner_product}
	With probability $1-\beta$, $-\frac{c}{\sqrt{d}}\log(\frac{1}{\beta}) \leq X^T_i G_j \leq \frac{c}{\sqrt{d}}\log(\frac{1}{\beta})$, for some constant $c$, where $X$, and $G_j$ are defined as before, ie $G_j \sim \frac{1}{\sqrt{d}} \N(0,I)$ and $j > i$ (so that $G_j$ is independent of $X_i$). 
\end{lemma}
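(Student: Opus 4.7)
The plan is to exploit independence of $G_j$ from $X_i$ (which holds because $j > i$ and each $G_t$ is a fresh sample), and then reduce the claim to a one-dimensional Gaussian tail bound via conditioning on $X_i$.

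First, I would unroll the recursion $X_{t+1} = \sqrt{1-\eps^2}X_t + \eps G_{t+1}$ to observe that $X_i$ is a deterministic function of $G_1,\dots,G_i$ alone, and hence independent of $G_j$ whenever $j > i$. Fix any realization $x \in \R^d$ of $X_i$. Conditional on $X_i = x$, the quantity $X_i^T G_j = x^T G_j$ is a linear combination of the independent $\N(0,1/d)$ coordinates of $G_j$, and therefore
\[
X_i^T G_j \mid X_i = x \; \sim \; \N\!\left(0,\; \tfrac{\|x\|^2}{d}\right).
\]
The standard Gaussian tail bound then yields, for any $t > 0$,
\[
\Pr\!\left(|X_i^T G_j| > t \,\Big|\, X_i = x\right) \;\leq\; 2 \exp\!\left(-\frac{t^2 d}{2\|x\|^2}\right).
\]

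Next, I would apply Lemma~\ref{lem:concentration_norm} to control $\|X_i\|^2$: with probability at least $1 - \beta/2$, we have $\|X_i\|^2 \leq 1 + \tfrac{c_1}{\sqrt{d}}\log(2/\beta) \leq 2$ (for $d$ sufficiently large), and in particular $\|X_i\|^2 \leq 2$ on this event. On that event, the conditional bound above with $t = \tfrac{c}{\sqrt d}\log(1/\beta)$ (for an appropriately chosen universal constant $c$) gives a conditional failure probability of at most $\beta/2$. A union bound over the norm event and the Gaussian tail event then produces the stated two-sided bound with total failure probability at most $\beta$.

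I do not anticipate a serious obstacle here: the only subtlety is ensuring that the norm-concentration event and the Gaussian tail event are combined correctly, since the latter is stated conditionally on $X_i$. This is handled cleanly by the tower property, integrating the conditional tail bound against the indicator of $\{\|X_i\|^2 \leq 2\}$ and adding the complementary probability. The resulting constant $c$ absorbs the constants from Lemma~\ref{lem:concentration_norm} and the standard Gaussian tail factor.
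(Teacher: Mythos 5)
Your proof is correct, but it takes a different route from the paper's. The paper does not condition on $X_i$ at all: it writes $X_i^T G_j = \sum_{\ell=1}^d X_{i\ell} G_{j\ell}$, notes that $X_i \sim \N(0,\tfrac{1}{d}I)$ marginally and is independent of $G_j$ (since $j>i$), so each summand is a product of two independent $\N(0,1/d)$ variables and hence sub-exponential with parameter of order $1/d$, and then invokes Bernstein's inequality for sub-exponential random variables in a single step to get the $\tfrac{c}{\sqrt d}\log(1/\beta)$ deviation. Your argument instead conditions on $X_i$, uses the exact conditional law $X_i^T G_j \mid X_i = x \sim \N(0,\|x\|^2/d)$, and then imports Lemma~\ref{lem:concentration_norm} plus a union bound to control $\|X_i\|^2$. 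What your route buys is elementarity (only the standard Gaussian tail, no sub-exponential machinery) and in fact a sharper conditional rate of order $\sqrt{\log(1/\beta)}/\sqrt d$; the cost is the extra dependence on the norm-concentration lemma and the mild requirement that $\tfrac{c_1}{\sqrt d}\log(2/\beta)\le 1$ so that $\|X_i\|^2\le 2$ on the good event (harmless in the paper's regime $d=\Omega(B^4\log(1/\beta))$, and consistent with how the paper itself uses Lemma~\ref{lem:concentration_norm} elsewhere). The paper's Bernstein argument avoids any conditioning or union bound and directly produces the stated $\log(1/\beta)/\sqrt d$ scale; both yield the lemma as stated.
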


\begin{proof}
	Note that $X_i \sim \N(0, \sigma^2I)$, where $\sigma^2 = \frac{1}{d}$. Then, for any fixed $X_1$, it follows that $X_i^TG_j = \sum\limits_{\ell=1}^d X_{i\ell}G_{j\ell}$, where each random variable in the summation is independent. Since $G_j \sim \N(0, \frac{1}{d} I)$, the result follows by Bernstein's inequality for sub-exponential random variables. 
\end{proof}

\begin{lemma}
\label{lem:brownian_mixing}
The mixing time of the Gaussian AR chain is $\Theta \left(\frac{1}{\epsilon^2}\log({d})\right)$. 
\end{lemma}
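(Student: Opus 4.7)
The chain has stationary distribution $\pi = \frac{1}{\sqrt{d}}\mathcal{N}(0, I_d)$, since if $X_t \sim \pi$ then $\sqrt{1-\epsilon^2}X_t + \epsilon G_{t+1}$ is Gaussian with mean zero and covariance $\frac{1-\epsilon^2}{d}I + \frac{\epsilon^2}{d}I = \frac{1}{d}I$. Unrolling the recursion conditional on $X_1 = x$ shows that $X_{t+1}$ is itself Gaussian:
\[
\mu_t^x := \mathcal{L}(X_{t+1}\mid X_1 = x) = \mathcal{N}\!\bigl((1-\epsilon^2)^{t/2} x,\ \tfrac{1-(1-\epsilon^2)^t}{d}\,I_d\bigr).
\]
Both bounds then reduce to comparing $\mu_t^x$ with $\pi$ as explicit Gaussians for $x$ in a typical set (which by Lemma~\ref{lem:concentration_norm} contains $\pi$-mass $1 - d^{-\Omega(1)}$), and the closed forms give matching rates.

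For the upper bound $\tmix \leq C\epsilon^{-2}\log d$, I would apply Pinsker's inequality to the closed-form Gaussian KL. Writing $\rho_t := 1-(1-\epsilon^2)^t$ for the variance ratio, a direct calculation gives
\[
\mathrm{KL}(\mu_t^x\,\|\,\pi) = \tfrac{d}{2}\bigl[\rho_t - 1 - \log \rho_t\bigr] + \tfrac{d(1-\epsilon^2)^t}{2}\|x\|^2.
\]
Using $-\log(1-u) \leq u + u^2$ on $u = (1-\epsilon^2)^t \leq 1/2$, the first bracket is $O((1-\epsilon^2)^{2t})$, while the second is $O(d(1-\epsilon^2)^t\|x\|^2)$. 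For any typical $x$ with $\|x\|^2 \leq 2$, choosing $t = \lceil C\epsilon^{-2}\log d\rceil$ with a sufficiently large constant $C$ forces $d(1-\epsilon^2)^t$ to be as small as we wish, and Pinsker yields $\mathrm{TV}(\mu_t^x,\pi) \leq 1/4$.

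For the matching lower bound $\tmix \geq c\epsilon^{-2}\log d$, I only need one $x$ at which $\mu_t^x$ is TV-far from $\pi$. Fix $x$ with $\|x\| = 1$ and consider the scalar projection $\phi(X) := \langle X, x\rangle$. Under $\pi$ it is $\mathcal{N}(0, 1/d)$, and under $\mu_t^x$ it is $\mathcal{N}((1-\epsilon^2)^{t/2},\ \rho_t/d)$. By the data-processing inequality, $\mathrm{TV}(\mu_t^x,\pi)\geq \mathrm{TV}(\phi_*\mu_t^x, \phi_*\pi)$. Since both one-dimensional Gaussians have standard deviations $\leq 1/\sqrt{d}$, the TV gap is bounded below by a constant as soon as the mean separation $(1-\epsilon^2)^{t/2}$ dominates $1/\sqrt{d}$; rearranging gives $t \leq \frac{\log d}{-\log(1-\epsilon^2)} \asymp \epsilon^{-2}\log d$.

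The main subtlety I expect is the definition of $\dmix(t) = \sup_{x\in\Omega}\mathrm{TV}(P^t(x,\cdot),\pi)$, which is technically vacuous on the unbounded state space $\mathbb{R}^d$ because pushing $\|x\|$ to infinity makes $\mu_t^x$ trivially far from $\pi$. The standard fix, which is all that the rest of the paper actually uses, is to measure mixing from the typical initialization $X_1 = G_1 \sim \pi$ (equivalently, from $x$ in the high-probability ball of Lemma~\ref{lem:concentration_norm}); the KL upper bound above is uniform over this ball, and the lower bound exhibits a specific $x$ inside it, so together they pinch $\tmix$ to $\Theta(\epsilon^{-2}\log d)$.
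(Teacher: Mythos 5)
Your proposal follows essentially the same route as the paper: identify the stationary law $\N(0,\tfrac1d I)$, write the conditional law $P^t(x,\cdot)$ as an explicit Gaussian $\N\bigl((1-\eps^2)^{t/2}x,\ \tfrac{1-(1-\eps^2)^t}{d}I\bigr)$, plug into the closed-form Gaussian KL, and convert to total variation via Pinsker, restricting to typical initial states $\|x\|^2\approx 1$ (the paper does this implicitly through its $(1+\sqrt{\log(1/\beta)/d})$ factor), so your computation and conclusion for the upper bound match. Where you go beyond the paper is the lower-bound direction: the paper simply asserts the $\Theta$ after its Pinsker step (which, being an upper bound on TV, cannot by itself certify that mixing takes $\gtrsim \eps^{-2}\log d$ steps), whereas you exhibit a unit-norm $x$, project onto the direction $x$, and use data processing plus the $1/\sqrt{d}$ standard deviations to keep the TV bounded below while $(1-\eps^2)^{t/2}\gg 1/\sqrt d$; this is a correct and worthwhile completion of the claimed two-sided bound, and your remark about the vacuity of $\sup_{x\in\R^d}$ on the unbounded state space, resolved by measuring mixing from the typical ball, is the same implicit convention the paper uses.
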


\begin{proof}
The stationary distribution of the Gaussian AR chain is $\pi = \N(0, \frac{1}{d}I)$.
Given $X_0$, we compare $KL(P^t(X_0), \pi)$. The standard formula for the KL divergence of two multivariate Gaussians $\N(\mu_1, \Sigma_1)$ and $\N(\mu_2, \Sigma_2)$ is: 
\[
KL(G_1, G_2) = \frac{1}{2} \left[\log \frac{|\Sigma_2|}{|\Sigma_1|} - d + Tr\left(\Sigma^{-1}_2\Sigma_1\right) + (\mu_2 - \mu_1)^T\Sigma^{-1}_2(\mu_2 - \mu_1)\right]
\]

Note that $X_t = (1-\eps^2)^{\frac{t}{2}}X_0 + \eps \sum\limits_{j=0}^{t-1} (1-\eps^2)^{\frac{j}{2}}G_j$. Therefore, $P^t(X_0) \sim \N((1-\eps^2)^{\frac{t}{2}}X_0, \frac{1-(1-\eps^2)^t}{d} I)$
For the KL divergence to be $\leq \delta$ where $\delta$ is a fixed constant, we need: 
\[
\frac{1}{2} \left[\log C - d + d^2\frac{1-(1-\eps^2)^t}{d} + d\cdot (1-\eps^2)^t \left(1 + \sqrt{\frac{\log(\frac{1}{\beta})}{d}}\right)\right] \leq \delta
\]
where $C$ is an appropriate constant. Eventually, we will get that we need $(1-\eps^2)^t \leq \frac{c}{\sqrt{d}}$, for some constant $c$. A direct application of Pinsker's inequality shows that $\tmix = \Theta \left(\frac{1}{\epsilon^2}\log({d})\right)$. 
\end{proof}

Suppose we have a continuous stream of samples $X_1, X_2, \ldots X_T$ from the Markov Chain. We split the $T$ samples into $\frac{T}{S}$ separate buffers of size $S$ in a sequential manner, ie $X_1, \ldots X_S$ belong to the first buffer. Let $S = B + u$, where $B$ is orders of magnitude larger than $u$. From within each buffer, we drop the first $u$ samples. Then starting from the first buffer, we perform $B$ steps of SGD, where for each iteration, we sample uniformly at random from within the $[u, B+u]$samples in the first buffer. Then perform the next $B$ steps of SGD by uniformly drawing samples from within the $[u, B+u]$ samples in the second buffer. We will choose $u$ so that the buffers are are approximately i.i.d.. 

We run SGD this way for the first $\frac{T}{2S}$ buffers to ensure that the bias of each iterate is small. Then for the last $\frac{T}{2S}$ buffers, we perform SGD in the same way, but we tail average over the last iterate produced using each buffer to give our final estimate $w$. We formally write Algorithm \ref{alg_main}. 


\begin{theorem}[SGD with Experience Replay for Gaussian AR Chain]
	
For any $\eps \leq 0.21$, if $B \geq \frac{1}{\eps^7}$ and $d = \Omega(B^4 \log (\frac{1}{\beta}))$, with probability at least $1-\beta$, Algorithm~\ref{alg_main} returns $w$ such that $\E[\loss(w)] \leq O\left(\exp\left(\frac{-T\log({d})}{\kappa\sqrt{\tmix}}\right) \norm{w_0 - w^*}^2\right) + \tilde{O}\left(\frac{\sigma^2d\sqrt{\tmix}}{ T}\right) + \loss(w^*)$. Recall that $\kappa=d$
\end{theorem}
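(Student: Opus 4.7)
The plan is to split the tail-averaged estimator as $w = w^{\bias} + (w^{\var} - w^*) + w^*$, where $w^{\bias}$ is the output of Algorithm~\ref{alg_main} run on noiseless data ($\xi_i \equiv 0$) from $w_0$ and $w^{\var}$ is its output on the full noisy data initialized at $w^*$. By linearity of the SGD update this decomposition is exact, and since $w^{\var} - w^*$ has conditional mean zero given the feature sequence, $\E[\loss(w)] - \loss(w^*) = \E[\loss(w^{\bias})] + \E[\loss(w^{\var})]$, so it suffices to bound the two terms separately.

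For the bias term, the first step is to reduce the analysis to idealized iid buffers. Because the chain has coupling time $O(\tfrac{1}{\eps^2}\log(dB/\eps))$ in total variation, dropping the first $u$ samples of each buffer puts the starting vector of every buffer within TV distance $\tfrac{\eps}{300000\pi dB}$ of a fresh $\N(0,\tfrac{1}{d}I)$ conditional on the previous buffers. Invoking Lemma~\ref{lem:approx-iid}, the per-buffer expected squared bias under the real process differs from that under the iid-buffer idealization by a polynomial-in-$1/\eps$ factor times this TV distance; the correction remains negligible after summation over all $T/S$ buffers. Within a single idealized buffer, let $X\in\R^{d\times B}$ have columns equal to the post-burn-in samples and set $H := \tfrac{1}{B}XX^T$, $M := \tfrac{1}{B}X^TX$. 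Lemma~\ref{lem:iid_contraction} says that $B$ steps of SGD contract the error along any eigenvector of $H$ with eigenvalue $\geq 1/B$ by at least a factor of $1/2$ in expectation, while smaller-eigenvalue directions do not grow. The core quantitative step is therefore to show that $\Omega(\eps B)$ eigenvalues of $H$ exceed $1/B$.

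To count those eigenvalues I would write $M = C + E$, where $C$ is the circulant matrix with symbol $c_{|i-j|} = (1-\eps^2)^{|i-j|/2}/d$ (the stationary correlation) and $E$ collects the finite-sample deviations of $\tfrac{1}{B}\iprod{X_i}{X_j}$ from that stationary value. The eigenvalues of $C$ are the discrete Fourier coefficients of $c$, and a direct Poisson-kernel computation yields $\Omega(\eps B)$ eigenvalues exceeding $2/B$ provided $B \geq 1/\eps^7$. The off-diagonal entries of $E$ are $\tilde{O}(1/\sqrt{d})$ by Lemma~\ref{lem:concentration_inner_product}, and the diagonal entries are $\tilde{O}(1/\sqrt{d})$ by Lemma~\ref{lem:concentration_norm}; the hypothesis $d = \Omega(B^4\log(1/\beta))$ therefore forces $\|E\|_{\mathrm{op}} \leq 1/B$ with probability $1-\beta$. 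Weyl's inequality transfers the count to $M$, which shares non-zero eigenvalues with $H$. By spherical symmetry the expected squared bias contracts by factor $1-\Omega(\eps B/d)$ per buffer, and iterating $T/S$ times gives $\exp(-\Omega(T\eps/d)) = \exp(-\Omega(T\log d/(\kappa\sqrt{\tmix})))$, matching the claim up to logarithmic factors.

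For the variance term I would initialize at $w^*$ and track $\Sigma_j := \E[(w^{\var}_j-w^*)(w^{\var}_j-w^*)^T]$ where $w^{\var}_j$ is the last iterate of buffer $j$. A martingale computation against the independent label noise $\xi_i$ on the idealized iid buffers yields $\Sigma_j \preceq 3\sigma^2 I$ uniformly. The tail-averaged squared excess loss equals $\tfrac{(2S)^2}{T^2}\sum_{i\le j}\mathrm{Tr}(A\,\E[(w^{\var}_i-w^*)(w^{\var}_j-w^*)^T])$; on the idealization the cross term for $j>i$ is approximately $\mathrm{Tr}(A(I-\eta H)^{(j-i)B}\Sigma_i)$, with the discrepancy from the Markovian process controlled by Lemma~\ref{lem:approxim-iid-helper}. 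Using the same $\Omega(\eps B/d)$ eigenvalue lower bound, the sum over $j-i$ becomes a geometric series contributing $O(d/(\eps B))$, which produces the advertised rate $\tilde{O}(\sigma^2 d \sqrt{\tmix}/T)$. I expect the eigenvalue counting via the circulant decomposition to be the main obstacle: tolerating $\Omega(\eps B)$ large eigenvalues while $\mathrm{Tr}(M) \approx 1$ is essentially tight, and matching that budget against $\|E\|_{\mathrm{op}}$ is precisely what forces $B \geq 1/\eps^7$ and $d = \Omega(B^4)$.
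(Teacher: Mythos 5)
Your proposal follows essentially the same route as the paper's proof: the same bias/variance decomposition, the same reduction to approximately i.i.d.\ buffers via the coupling argument (Lemmas~\ref{lem:approxim-iid-helper} and~\ref{lem:approx-iid}), per-buffer contraction through the eigenvalues of $H$ (Lemma~\ref{lem:iid_contraction}), eigenvalue counting for the Gram matrix via a circulant-plus-perturbation decomposition and Weyl's inequality, and the same variance bookkeeping with the uniform $3\sigma^2$ covariance bound and a geometric sum over cross terms. The only cosmetic deviations are that the paper passes through a Toeplitz matrix before the circulant and controls the perturbation in Frobenius norm rather than operator norm, and your circulant symbol should be normalized by $1/B$ (since $\|X_i\|^2\approx 1$) rather than $1/d$.
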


\begin{proof}
$\E[\loss(w)] = \E(\loss(w^{\bias})) + \E(\loss(w^{\var})) + \loss(w^*)$. We analyze $\E(\loss(w^{\bias}))$ in Theorem \ref{thm:exp-replay_bias} and analyze $\E(\loss(w^{\bias}))$ in Theorem \ref{thm:exp-replay_var}. 
\end{proof}

\subsection{Bias Decay with Experience Replay}
Standard analysis of SGD says that for bias decay, $w^{\bias}_{t+1} - w^* = (I - \eta \hat{X}_t\hat{X}_t^T) (w^{\bias}_t - w^*)$, where $(\hat{X}_t, \hat{y}_t)$ is the sample used in the $t$-th iteration of SGD. 
\label{sec:er_sgd_upper_bound}

\begin{theorem}[Bias Decay for SGD with Experience Replay for Gaussian AR Chain]
	\label{thm:exp-replay_bias}For any $\eps \leq 0.21$, if $B \geq \frac{1}{\eps^7}$ and $d = \Omega(B^4 \log (\frac{1}{\beta}))$, with probability at least $1-\beta$, Algorithm~\ref{alg_main} produces $w$ such that $\E[\loss(w^{\bias})] \leq O\left(\exp\left(\frac{-T\log({d})}{\kappa\sqrt{\tmix}}\right) \norm{w_0 - w^*}^2\right) $. 
\end{theorem}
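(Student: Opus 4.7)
The plan is to follow the blueprint sketched in the main body: decouple the buffers so they behave as an approximately i.i.d.\ sequence of ``fresh'' buffers, analyze the per-buffer contraction in terms of the spectrum of the empirical covariance $H = \tfrac{1}{B}\sum_{j=u+1}^{S} X_j X_j^\top$, and then compose the per-buffer contractions across the $T/(2S)$ bias-phase buffers.

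First I would formalize the buffer-decoupling step. By Lemma~\ref{lem:brownian_mixing} the AR chain mixes in $\Theta(\tfrac{1}{\eps^2}\log d)$ steps, so after $u$ steps (chosen as in Algorithm~\ref{alg_main} to be of order $\tfrac{1}{\eps^2}\log(dB/\eps)$) the state is within total variation $O(\eps/(dB))$ of $\pi = \N(0,I/d)$. Thus the usable portion $X_{Sj+u+1},\ldots,X_{Sj+S}$ of each buffer can be coupled, via Lemma~\ref{lem:approx-iid}, to a parallel process in which each buffer is seeded by an independent draw from $\pi$ and then evolves under the AR dynamics; the total coupling failure probability over all $T/S$ buffers is polynomially small and is absorbed into the stated rate. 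Conditional on a clean buffer, the $B$ inner iterations of Algorithm~\ref{alg_main} are exactly i.i.d.\ SGD with data drawn from the empirical distribution on $\{X_{u+1},\ldots,X_S\}$, whose covariance matrix is $H$. A standard i.i.d.\ SGD bias argument (Lemma~\ref{lem:iid_contraction}) then gives that the component of $w^{\bias}-w^*$ along an eigenvector $v$ of $H$ contracts by a factor of $(1-\eta\lambda(v))^B$, which is at most $1/2$ whenever $\lambda(v)\ge 1/B$ and $\eta=\Theta(1)$, while components in directions with $\lambda(v)<1/B$ may fail to contract.

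The main technical obstacle, and the step I expect to require the most care, is to show that at least $\Omega(\eps B)$ eigenvalues of $H$ exceed $1/B$. I would transfer to the Gram matrix $M = \tfrac{1}{B}X^\top X$, whose nonzero spectrum coincides with that of $H$, and write $M = C + E$. The matrix $C$ is a circulant matrix whose $(j,k)$ entry equals the ideal stationary inner product $(1-\eps^2)^{|j-k|/2}/d$; its eigenvalues are given by the DFT of its first row, and a direct computation shows that an $\Omega(\eps)$ fraction of them exceeds $2/B$. The perturbation $E = M - C$ captures deviations from stationarity and finite-window boundary effects; by Lemmas~\ref{lem:concentration_norm} and~\ref{lem:concentration_inner_product}, each entry of $E$ is of order $\widetilde{O}(1/\sqrt{d})$ with high probability. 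In the regime $d = \Omega(B^4\log(1/\beta))$, this is enough to make $\|E\|_{\mathrm{op}} = o(1/B)$, and Weyl's inequality then transfers the eigenvalue count from $C$ to $M$. Controlling the boundary effects of the truncated AR process (so that $E$ is truly small in operator norm, not just entrywise) is the delicate part.

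Finally I would compose across buffers. By the rotational invariance of $\pi$, the bias vector before each clean buffer is independent of the eigenbasis of $H$, so in each buffer an $\Omega(\eps B/d)$ fraction of the coordinates of $w^{\bias}-w^*$ in this eigenbasis contract by $1/2$ in expectation while the rest do not expand; hence $\E\|w^{\bias}-w^*\|^2$ shrinks by a factor of $1-\Omega(\eps B/d)$ per buffer. Iterating over the $T/(2S)$ bias-phase buffers and using $S = B+u = B(1+o(1))$ yields
\[
\E\|w^{\bias}-w^*\|^2 \;\lesssim\; \exp\!\Bigl(-\Omega\bigl(\tfrac{T\eps}{d}\bigr)\Bigr)\,\|w_0-w^*\|^2 ,
\]
and substituting $\eps = \Theta(\sqrt{\log d/\tmix})$ and $\kappa = d$ reproduces the $\exp(-T\log d/(\kappa\sqrt{\tmix}))$ rate in the theorem (the decoupling error from the first step is exponentially small in $u$ and hence dominated). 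Passing from $\E\|w^{\bias}-w^*\|^2$ to $\E[\loss(w^{\bias})]$ costs only a factor of $\|A\|_{\mathrm{op}} = 1/d$ via Lemma~\ref{lem:basic_loss_results}.
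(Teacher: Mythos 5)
Your overall architecture (decouple buffers to an approximately i.i.d.\ parallel process, contract per buffer via the spectrum of $H$, count eigenvalues of the Gram matrix via a circulant comparison plus Weyl, compose over $T/(2S)$ buffers) matches the paper's proof, and the decoupling and composition steps are fine (the paper implements the decoupling by a direct norm comparison with an independently seeded buffer, Lemma~\ref{lem:approxim-iid-helper}, rather than a TV coupling, but either mechanism works since every factor $I-\eta\hat X\hat X^{\top}$ has norm at most $1$). The genuine gap is in the eigenvalue-counting step. The ideal Gram matrix is \emph{Toeplitz}, with entries $\approx \tfrac{1}{B}(1-\eps^2)^{|i-j|/2}$ (note the $1/B$ scaling, not $1/d$), not circulant; the circulant comparison matrix differs from it in the far off-diagonal corners, where the wrap-around entries are of size $\tfrac{1}{B}(1-\eps^2)^{(B-|i-j|)/2}$, i.e.\ of order $1/B$ near the corners. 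This boundary correction is deterministic, does not shrink as $d$ grows, and has operator norm of order $\tfrac{1}{B\eps^2}$ (a corner block behaving like $\tfrac{1}{B}vv^{\top}$ with $v_k=(1-\eps^2)^{k/2}$), which is far larger than the $o(1/B)$ you need. So your plan to absorb it into $E$, prove $\|E\|_{\mathrm{op}}=o(1/B)$, and invoke plain Weyl cannot succeed, no matter how large $d=\Omega(B^4\log(1/\beta))$ is; only the stochastic part $M-Z$ (controlled entrywise by Lemmas~\ref{lem:concentration_norm} and~\ref{lem:concentration_inner_product} and hence in Frobenius norm by $O(1/B)$ when $d\gtrsim B^4$) admits that treatment.

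The paper closes exactly this hole differently: it splits $M = Z + E$ (stochastic part, handled by Frobenius norm and Weyl as you propose) and then $Z = C + P$ with $C$ circulant and $P$ the wrap-around correction, and instead of bounding $\|P\|_{\mathrm{op}}$ it bounds $\|P\|_F^2 \leq \tfrac{2(1-\eps^2)}{B^2\eps^4}$, exploits the $\pm$ pairing of the spectrum of the anti-block-diagonal $P$, and uses a counting argument ($\sum_i \lambda_i(P)^2=\|P\|_F^2$) to show that at most $\tfrac{\eps B}{40\pi}$ eigenvalues of $P$ lie below $-7/B$; the \emph{generalized} Weyl inequality then transfers the count of eigenvalues $\geq 9/B$ of $C$ to eigenvalues $\geq 2/B$ of $Z$ (Lemmas~\ref{lem:T_eigs}--\ref{lem:perturbation_eigs}). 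Some argument of this type (or another way to exploit that $P$ has only a few significant eigenvalues) is needed where you currently assert an operator-norm bound; as written, that step of your proposal would fail.
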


\begin{proof}[Proof of Theorem~\ref{thm:exp-replay_bias}]
	We first write $Tr\left(\E\left[(w_{\frac{BT}{S}}- w^*) (w_{\frac{BT}{S}} - w^*)^T \right]\right)$ as:
	\begin{center}
		\scalebox{0.85}{
			$Tr\left(\E\left[\left(I - \eta \hat{X}_{\frac{BT}{S}}\hat{X}_{\frac{BT}{S}}^T\right)\ldots\left(I - \eta \hat{X}_1\hat{X}_1^T\right) (w_0 - w^*) (w_0 - w^*)^T\left(I - \eta \hat{X}_1\hat{X}_1^T\right) \ldots \left(I - \eta \hat{X}_{\frac{BT}{S}}\hat{X}_{\frac{BT}{S}}^T\right)\right]\right)$
		}
	\end{center}
	
	Since we assume $(w_0 - w^*)^T$ to be independent standard Gaussian, we are mostly interested in:
	\begin{center}
		\scalebox{0.85}{
			$Tr\left(\E\left[\left(I - \eta \hat{X}_1\hat{X}_1^T\right) \ldots \left(I - \eta \hat{X}_{\frac{BT}{S}}\hat{X}_{\frac{BT}{S}}^T\right)\left(I - \eta \hat{X}_{\frac{BT}{S}}\hat{X}_{\frac{BT}{S}}^T\right) \ldots \left(I - \eta \hat{X}_{1}\hat{X}_{1}^T\right)\right]\right)$
		}
	\end{center}
	
	This can be written as: 
	\begin{center}
		\scalebox{0.85}{
			$Tr\left(\E\left[\E\left[\left(I - \eta \hat{X}_1\hat{X}_1^T\right) \ldots \left(I - \eta \hat{X}_{\frac{BT}{S}}\hat{X}_{\frac{BT}{S}}^T\right)\left(I - \eta \hat{X}_{\frac{BT}{S}}\hat{X}_{\frac{BT}{S}}^T\right) \ldots \left(I - \eta \hat{X}_{1}\hat{X}_{1}^T\right)|  X_1, \ldots X_{\frac{(B-1)T}{S}}\right]\right]\right)$
		}
	\end{center}
	
	The quantity of interest is therefore, 
	\begin{center}
		\scalebox{0.75}{
			$\E\left[\left(I - \eta \hat{X}_{\frac{(B-1)T}{S} + 1}\hat{X}_{\frac{(B-1)T}{S}+1}^T\right) \ldots \left(I - \eta \hat{X}_{\frac{BT}{S}}\hat{X}_{\frac{BT}{S}}^T\right)\left(I - \eta \hat{X}_{\frac{BT}{S}}\hat{X}_{\frac{BT}{S}}^T\right) \ldots \left(I - \eta \hat{X}_{\frac{(B-1)T}{S}}\hat{X}_{\frac{(B-1)T}{S}}^T\right)|  X_1, \ldots X_{\frac{(B-1)T}{S}}\right]~,$
		}
	\end{center}
	
	which Lemma \ref{lem:approx-iid} says is $\preceq \left(1 - \frac{1}{8} \cdot \frac{\eps B}{40d\pi}\right) I$. Therefore, if $N$ is the number of buffers, it follows that the loss decays at a rate of $\exp\left(\frac{-NB\epsilon}{\kappa}\right) \norm{w_0 - w^*}^2$, and since $T = NB$, we conclude that the rate is $\exp\left(\frac{-T \log({d})}{\kappa\sqrt{\tmix}}\right) \norm{w_0 - w^*}^2$. 
\end{proof}

We first solve the issue that the buffers are approximately iid by establishing a relationship between the contraction rate of the sampled vectors with the contraction rate of vectors sampled from buffers that are iid. The proof compares two parallel processes, one where the samples of the buffer follow Gaussian AR  dynamics from an initial $X_0$, where this initial $X_0$ is the $Sj$-th sample from the Markov Chain for some buffer index $j$, and another process which follows Gaussian AR dynamics from an initial $\tilde{X}_0$ generated independently from $\frac{1}{\sqrt{d}}\mathcal{N}(0,I)$. We show that the expected contraction of the first processes can be bounded by the expected contraction of the second process plus a constant. 

\begin{lemma}
\label{lem:approxim-iid-helper}
Suppose that $\hat{X}_1, \ldots \hat{X}_j$ are vectors sampled from arbitrary buffers (ie, they are of the form $X_{u+s} = \left(1-\epsilon^2\right)^{(u+s)/2} X_0 + \epsilon \sum_{t = 1}^{u+s} \left(1-\epsilon^2\right)^{(t-(u+s))/2} G_t$, for $X_0$ which is the first vector in the buffer). Sample a new random $\tilde{X}_0$ independently from $\frac{1}{\sqrt{d}}\mathcal{N}(0,I)$ and denote $\tilde{X}_{u+s} \defeq \left(1-\epsilon^2\right)^{(u+s)/2} \tilde{X}_0 + \epsilon \sum_{t = 1}^{u+s} \left(1-\epsilon^2\right)^{(t-(u+s))/2} G_t $. Then we have: 
\[
\E\left[\left(I - \eta \hat{X}_{1}\hat{X}_{1}^T\right) \ldots\left(I - \eta \hat{X}_{j}\hat{X}_{j}^T\right)\right] \preceq \E\left[\left(I - \eta \hat{\tilde{X}}_{1}\hat{\tilde{X}}_{1}^T\right) \ldots\left(I - \eta \hat{\tilde{X}}_{j}\hat{\tilde{X}}_{j}^T\right) \right] + c\cdot j^2 (1-\eps^2)^{\frac{u}{2}} ~,
\]

where $c$ is a constant, and $\hat{\tilde{X}}_{1}$ denotes the same sample index as that of $\hat{X}_1$, except that the initial vector was sampled independently from $\frac{1}{\sqrt{d}}\mathcal{N}(0,I)$.
\end{lemma}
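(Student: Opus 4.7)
The plan is to couple the two processes through the shared Gaussian increments $G_t$: starting from the (correlated) initial vector $X_0$ and the independent $\tilde{X}_0 \sim \frac{1}{\sqrt{d}}\mathcal{N}(0,I)$, run both chains with the \emph{same} realization of $G_1, G_2, \ldots$. Under this coupling, the identity
\[
\hat{X}_i - \hat{\tilde{X}}_i \;=\; (1-\epsilon^2)^{(u+s_i)/2}\,(X_0 - \tilde{X}_0)
\]
holds deterministically, where $s_i \geq 0$ indexes the position of the $i$-th sample within its buffer. By Lemma~\ref{lem:concentration_norm}, $\|X_0\|$ and $\|\tilde{X}_0\|$ are both $O(1)$ with high probability (when $d$ is large enough), so expanding $\hat{X}_i\hat{X}_i^T - \hat{\tilde{X}}_i\hat{\tilde{X}}_i^T = (\hat{X}_i - \hat{\tilde{X}}_i)\hat{X}_i^T + \hat{\tilde{X}}_i(\hat{X}_i - \hat{\tilde{X}}_i)^T$ yields
\[
\bigl\|\hat{X}_i\hat{X}_i^T - \hat{\tilde{X}}_i\hat{\tilde{X}}_i^T\bigr\|_{\mathrm{op}} \;\leq\; c'\,(1-\epsilon^2)^{u/2}
\]
in expectation, with the decay coming entirely from the factor $(1-\epsilon^2)^{u/2}$ inherited from having dropped the first $u$ samples of the buffer.

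Next, I would set $A_i := I - \eta \hat{X}_i\hat{X}_i^T$ and $\tilde{A}_i := I - \eta \hat{\tilde{X}}_i\hat{\tilde{X}}_i^T$ and apply the standard telescoping identity
\[
\prod_{i=1}^{j} A_i - \prod_{i=1}^{j} \tilde{A}_i \;=\; \sum_{k=1}^{j} \Bigl(\prod_{i<k} A_i\Bigr) (A_k - \tilde{A}_k) \Bigl(\prod_{i>k} \tilde{A}_i\Bigr).
\]
For a sufficiently small step size $\eta$ (combined with the high-probability bound $\|\hat{X}_i\|^2 \leq 1 + o(1)$ from Lemma~\ref{lem:concentration_norm}), each factor $A_i, \tilde{A}_i$ has operator norm at most $1$, so each summand is bounded in operator norm by $\eta \|A_k - \tilde{A}_k\|_{\mathrm{op}} = \eta^2 \|\hat{X}_k\hat{X}_k^T - \hat{\tilde{X}}_k\hat{\tilde{X}}_k^T\|_{\mathrm{op}} \leq O(\eta^2 (1-\epsilon^2)^{u/2})$. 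Summing over $k$ and absorbing $\eta$ into the constant yields an operator-norm bound of order $j \cdot (1-\epsilon^2)^{u/2}$; the stated $j^2$ comes from combining this with a second-order contribution in the telescoping when one also tracks how the perturbation propagates through correlated cross-terms (and the failure probabilities of the concentration events accumulate over the $j$ draws).

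The main obstacle is upgrading the operator-norm bound into the asymmetric PSD ordering $\preceq$ claimed in the statement. Since $\prod_i A_i$ is generically non-symmetric, one cannot directly equate $\|M\|_{\mathrm{op}} \leq \rho$ with $M \preceq \rho I$; however, after taking expectation and symmetrizing (the random sampling within a buffer is exchangeable, and the expectation symmetrizes the product), one obtains a symmetric matrix whose operator-norm bound does give the PSD upper bound $c j^2 (1-\epsilon^2)^{u/2} I$. A subsidiary technical point is verifying that the high-probability events on $\|X_0\|$, $\|\tilde X_0\|$, and $\|\hat X_i\|$ (required so that the factors are contractive) hold uniformly across the $j$ samples; a union bound together with the assumption $d = \Omega(B^4 \log(1/\beta))$ suffices. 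The remainder is bookkeeping of constants to confirm that the constant $c$ is absolute, independent of $d$, $B$, and $\eta$.
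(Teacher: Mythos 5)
Your proposal follows essentially the same route as the paper's proof: couple the two chains through the shared increments $G_t$, use the identity $\hat{X}_i - \hat{\tilde{X}}_i = (1-\eps^2)^{(u+s_i)/2}(X_0-\tilde{X}_0)$ together with the high-probability norm bounds of Lemma~\ref{lem:concentration_norm} so that each rank-one perturbation has size $O\bigl((1-\eps^2)^{u/2}\bigr)$, then expand the product and dominate the error terms by multiples of $I$. The only substantive difference is that you use the exact telescoping identity, which yields $j$ error terms and hence an $O\bigl(j(1-\eps^2)^{u/2}\bigr)$ bound (stronger than the claimed $j^2$), whereas the paper expands all $\binom{j}{s}$ higher-order terms and collapses them to $j\cdot\max_s\bigl(40j(1-\eps^2)^{u/2}\bigr)^s$, dominated by the $s=1$ term for $u$ large, which is where its $j^2$ comes from; your route is the cleaner of the two.

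Two corrections to your write-up. First, each telescoping summand is bounded by $\|A_k-\tilde{A}_k\|_{\mathrm{op}} = \eta\,\|\hat{X}_k\hat{X}_k^T - \hat{\tilde{X}}_k\hat{\tilde{X}}_k^T\|_{\mathrm{op}}$, not $\eta^2\|\cdot\|_{\mathrm{op}}$; this is only a constant-factor slip. Second, what you call the main obstacle is not one, and your proposed fix is the one genuinely shaky step: symmetrization via exchangeability fails exactly where the lemma is used across several buffers (as in the variance analysis of Section~\ref{sec:er_sgd_upper_bound_variance}), since the joint law of the sampled sequence is then not exchangeable and $\E\bigl[\prod_i A_i\bigr]$ need not be symmetric. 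But no symmetry is required: for any square matrix $M$, $\|M\|_{\mathrm{op}}\le\rho$ already gives $x^\top Mx\le\rho\|x\|^2$ for all $x$, which is precisely the sense in which the displayed $\preceq$ is used (and is how the paper itself dominates its error terms by $c\,j^2(1-\eps^2)^{u/2}\, I$). With that replacement, your argument is complete and matches the paper's.
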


\begin{proof}
We see that in general $X_{u+s} - \tilde{X}_{u+s} = \left(1-\epsilon^2\right)^{(u+s)/2} \left(X_0 - \tilde{X}_0\right)$. With probability more than $1-\beta$, we have $\norm{X_0 - \tilde{X}_0} \leq \sqrt{2} + \sqrt{\frac{\log \frac{1}{\delta}}{d}}$ and so $\norm{X_{u+s} - \tilde{X}_{u+s}} \leq \left(1-\epsilon^2\right)^{(u+s)/2} \cdot \left(\sqrt{2} + \sqrt{\frac{\log \frac{1}{\delta}}{d}}\right) \leq \left(1-\epsilon^2\right)^{u/2} \cdot \left(\sqrt{2} + \sqrt{\frac{\log \frac{1}{\delta}}{d}}\right)$

Therefore, 
\begin{align*}
\E &\left[\left(I - \eta \hat{X}_{1}\hat{X}_{1}^T\right) \ldots\left(I - \eta \hat{X}_{j}\hat{X}_{j}^T\right)\right]\\
& = \E\left[\left(I - \eta \hat{X}_{1}\hat{X}_{1}^T + \eta \hat{\tilde{X}}_{1}\hat{\tilde{X}}_{1}^T - \eta \hat{\tilde{X}}_{1}\hat{\tilde{X}}_{1}^T \right) \ldots\left(I - \eta \hat{X}_{j}\hat{X}_{j}^T + \eta \hat{\tilde{X}}_{j}\hat{\tilde{X}}_{j}^T - \eta \hat{\tilde{X}}_{j}\hat{\tilde{X}}_{j}^T  \right)\right] \\
& \preceq \E\left[\left(I - \eta \hat{\tilde{X}}_{1}\hat{\tilde{X}}_{1}^T\right) \ldots\left(I - \eta \hat{\tilde{X}}_{j}\hat{\tilde{X}}_{j}^T\right)\right] +  \sum_{s=1}^{j} {j \choose s} \left(4\left(1-\epsilon^2\right)^{u/2} \cdot \left(\sqrt{2} + \sqrt{\frac{\log \frac{1}{\delta}}{d}}\right)\right)^s \cdot I \\
& \preceq \E\left[\left(I - \eta \hat{\tilde{X}}_{1}\hat{\tilde{X}}_{1}^T\right) \ldots\left(I - \eta \hat{\tilde{X}}_{j}\hat{\tilde{X}}_{j}^T\right)\right] + j \cdot \max_{s=1}^{j} (40j \cdot \left(1-\epsilon^2\right)^{u/2})^s  \cdot I 
\end{align*}
Therefore, for sufficiently large $u$, the lemma is proved. 
\end{proof}

Lemma \ref{lem:approx-iid} establishes the per buffer contraction rate, using \ref{lem:approxim-iid-helper}. The rest of the proofs in this section is devoted to establishing the contraction of the process where the vectors are sampled from iid buffers. 
\begin{lemma}\label{lem:approx-iid}
	Let $X_0$ be the first vector in the buffer. If $u > \frac{2}{\epsilon^2} \log \frac{300000 \pi d B}{\epsilon}$, and $\norm{X_0} \leq 1 + \sqrt{\frac{\log \frac{1}{\delta}}{d}}$, then
	\begin{align*}
	& \E\left[\left(I - \eta \hat{X}_{u+B}\hat{X}_{u+B}^T\right) \ldots\left(I - \eta \hat{X}_{u+1}\hat{X}_{u+1}^T\right) \left(I - \eta \hat{X}_{u+1}\hat{X}_{u+1}^T\right) \ldots \left(I - \eta \hat{X}_{u+B}\hat{X}_{u+B}^T\right) | X_0 \right] \\ 
	& \qquad \preceq \left(1 - \frac{1}{8} \cdot \frac{\eps B}{40d\pi}\right) I.
	\end{align*}
\end{lemma}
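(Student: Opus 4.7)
The plan is to first reduce to a setting where the buffer is initialized from a fresh isotropic Gaussian rather than conditioned on $X_0$, and then perform a spectral analysis of the resulting empirical covariance. I would invoke Lemma~\ref{lem:approxim-iid-helper} with $j=2B$ (the full length of the symmetric product), upper-bounding the conditional expectation in the statement by the analogous expectation where every $\hat{X}_i$ comes from a buffer generated from an independently sampled $\tilde{X}_0 \sim \frac{1}{\sqrt{d}}\mathcal{N}(0, I)$, plus an additive $c (2B)^2 (1-\eps^2)^{u/2}\, I$. Under the assumed bound $u > \frac{2}{\eps^2}\log(300000 \pi d B / \eps)$ we have $(1-\eps^2)^{u/2} \leq \eps/(300000\pi d B)$, so this residual is a small absolute constant times $\eps B/(\pi d)$, and can be absorbed into the slack between the $1/2$ provided by Lemma~\ref{lem:iid_contraction} and the $1/8$ we need.

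For the reduced i.i.d.-initialized problem, observe that the symmetric product equals $Q^\top Q$ where $Q = (I - \eta \hat{X}_1\hat{X}_1^\top)\cdots(I - \eta \hat{X}_B \hat{X}_B^\top)$. The stationary AR dynamics started from the isotropic Gaussian $\tilde{X}_0$ are spherically symmetric, so the joint law of the buffer (and hence of $Q$) is invariant under orthogonal rotations of $\R^d$. Consequently $\E[Q^\top Q]$ commutes with every orthogonal matrix and must be a scalar multiple of the identity. The desired PSD inequality therefore reduces to a scalar statement: for any fixed unit vector $v$, I need $\E[\|Qv\|^2] \leq 1 - \tfrac{1}{8}\cdot\tfrac{\eps B}{40\pi d}$.

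Fix the buffer and set $H = \frac{1}{B}\sum_{j=u+1}^{u+B}\tilde{X}_j\tilde{X}_j^\top$, with eigenpairs $(v_k, \lambda_k)$. Expand $v = \sum_k \alpha_k v_k$. Lemma~\ref{lem:iid_contraction} says that any component with $\lambda_k \geq 1/B$ has its expected squared norm shrunk by at least $\tfrac{1}{2}$ after $B$ uniform-sampling SGD steps on the buffer, while other components shrink by at most $1$. Hence $\E[\|Qv\|^2 \mid H] \leq 1 - \tfrac{1}{2}\sum_{k : \lambda_k \geq 1/B}\alpha_k^2$. Averaging against the rotation-invariant law of $H$, the eigenbasis of $H$ is uniform on the orthogonal group, so $\E_H\!\left[\sum_{k : \lambda_k \geq 1/B}\alpha_k^2\right] = N/d$, where $N$ is the expected number of eigenvalues of $H$ at least $1/B$.

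The main technical work is then to show $N \geq \eps B/(40\pi)$, which I would do via the Gram matrix $M = \tfrac{1}{B}\tilde{X}^\top \tilde{X}$ whose nonzero spectrum matches that of $H$. Decompose $M = C + E$, where $C$ is the deterministic circulant matrix whose $(i,j)$-entry equals the expected inner product $(1-\eps^2)^{|i-j|/2}/B$, and $E$ collects the zero-mean fluctuations. Standard DFT formulas give the eigenvalues of $C$ in closed form; from this one reads off that at least $\eps B/(40\pi)$ of them exceed $2/B$. Concentration of Gaussian inner products (Lemma~\ref{lem:concentration_inner_product}) together with a union bound over the $B^2$ entries of $E$ gives $\|E\|_{\mathrm{op}} = O(B/\sqrt{d})$, which under the assumption $d = \Omega(B^4 \log(1/\beta))$ is much smaller than $1/B$. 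Weyl's inequality then transfers the eigenvalue count from $C$ to $M$. The delicate circulant-plus-perturbation spectral argument—constructing $C$ precisely and controlling the operator norm of $E$ with the right constants—is the main obstacle of the proof.
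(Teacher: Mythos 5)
Your proposal is correct and follows essentially the same route as the paper: couple the buffer to an i.i.d.-initialized parallel buffer (Lemma~\ref{lem:approxim-iid-helper}) with the residual term killed by the choice of $u$, then apply the per-buffer contraction of Lemma~\ref{lem:iid_contraction}, made rigorous via rotational invariance, together with the eigenvalue count for $H$ obtained from the Gram-matrix decomposition and Weyl's inequality. One small caveat: the deterministic matrix with entries $(1-\eps^2)^{|i-j|/2}/B$ is Toeplitz rather than circulant, so, as in the paper (Lemmas~\ref{lem:T_eigs}--\ref{lem:perturbation_eigs}), an additional circulant approximation $Z=C+P$ with a norm bound on the correction $P$ is needed before the DFT eigenvalue formula can be invoked.
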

\begin{proof}
	\begin{align*}
	& \E\left[\left(I - \eta \hat{X}_{u+B}\hat{X}_{u+B}^T\right) \ldots\left(I - \eta \hat{X}_{u+1}\hat{X}_{u+1}^T\right) \left(I - \eta \hat{X}_{u+1}\hat{X}_{u+1}^T\right) \ldots \left(I - \eta \hat{X}_{u+B}\hat{X}_{u+B}^T\right) | X_0 \right] \\
	&\preceq \E\left[\left(I - \eta \hat{\tilde{X}}_{u+B}\hat{\tilde{X}}_{u+B}^T\right) \ldots\left(I - \eta \hat{\tilde{X}}_{u+1}\hat{\tilde{X}}_{u+1}^T\right) \left(I - \eta \hat{\tilde{X}}_{u+1}\hat{\tilde{X}}_{u+1}^T\right) \ldots \left(I - \eta \hat{\tilde{X}}_{u+B}\hat{\tilde{X}}_{u+B}^T\right) | X_0 \right] \\
	& + \sum_{s=1}^{2B} {2B \choose s} \left(4\left(1-\epsilon^2\right)^{u/2} \cdot \left(\sqrt{2} + \sqrt{\frac{\log \frac{1}{\delta}}{d}}\right)\right)^s \cdot I \\
	&= \E\left[\left(I - \eta \hat{\tilde{X}}_{u+B}\hat{\tilde{X}}_{u+B}^T\right) \ldots\left(I - \eta \hat{\tilde{X}}_{u+1}\hat{\tilde{X}}_{u+1}^T\right) \left(I - \eta \hat{\tilde{X}}_{u+1}\hat{\tilde{X}}_{u+1}^T\right) \ldots \left(I - \eta \hat{\tilde{X}}_{u+B}\hat{\tilde{X}}_{u+B}^T\right) \right] \\
	&+ \left(\sum_{s=1}^{2B} {2B \choose s} \left(4\left(1-\epsilon^2\right)^{u/2} \cdot \left(\sqrt{2} + \sqrt{\frac{\log \frac{1}{\delta}}{d}}\right)\right)^s\right) \cdot I \left(\mbox{ since } \hat{\tilde{X}}_{u+s} \mbox{ are all independent of } X_0\right)\\
	&\preceq \left(1 - \frac{1}{4} \cdot \frac{\eps B}{40d\pi}\right) I + 2B \cdot \max_{s=1}^{2B} (40B \cdot \left(1-\epsilon^2\right)^{u/2})^s  \cdot I \left(\mbox{ from Lemma~\ref{lem:iid_contraction} } \right)\\
	&\preceq \left(1 - \frac{1}{4} \cdot \frac{\eps B}{40d\pi}\right) I + 80B^2 \cdot \left(1-\epsilon^2\right)^{u/2}  \cdot I \\
	&\preceq \left(1 - \frac{1}{4} \cdot \frac{\eps B}{40d\pi}\right) I + \frac{1}{8} \cdot \frac{\eps B}{40d\pi} \cdot I \left(\mbox{ from hypothesis on } u\right)\\
	&= \left(1 - \frac{1}{8} \cdot \frac{\eps B}{40d\pi}\right) I.
	\end{align*}
	This finishes the proof.
\end{proof}

We now define $H$ as $\frac{1}{B} \sum\limits^S_{j=u} \tilde{X}_j \tilde{X}_j^T$, where $\tilde{X}_u, \tilde{X}_{u+1}, \ldots \tilde{X}_S$ are the vectors that we are sampling from in the parallel process (where $\tilde{X}_0$ is sampled i.i.d. from $\N(0, \frac{1}{d})$). For the sake of convenience, for the rest of this section we also say $\hat{X}_i$ is the sampled vector at the $i$-th iteration, where the samples are taken from the set of $\tilde{X}$s coming from the parallel process. 
\begin{lemma}\label{lem:iid_contraction}
	Suppose $\frac{\eps B}{40\pi}$ of the eigenvalues of $H$ are larger than or equal to $\frac{1}{B}$. Then 
	$$\E\left[(w_B - w^*) (w_B - w^*)^T \right] \preceq \left[\frac{3}{4} \cdot \frac{\eps B}{40d\pi} + \left(1 - \frac{\eps B}{40d\pi}\right) \right] I $$
\end{lemma}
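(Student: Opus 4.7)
The plan is to analyze, conditional on the buffer $\tilde X_{u+1},\ldots,\tilde X_S$, the one-step Lyapunov operator for $B$ SGD iterations, obtain a conditional trace bound, and then invoke the rotational invariance of the Gaussian AR dynamics to convert the trace bound into the claimed operator bound. Define the conditional operator $\mathcal{T}(M):=\E_{\hat X}[(I-\eta\hat X\hat X^\top)\,M\,(I-\eta\hat X\hat X^\top)]$ with $\hat X$ uniform on the buffer. Since the $B$ samples drawn inside a buffer are independent conditional on the buffer, $\E[(w_B-w^*)(w_B-w^*)^\top\mid\text{buffer}] = \mathcal{T}^B(I)$.

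The first step is a two-sided control of $\mathcal T$ on PSD inputs. Writing $\mathcal{T}(M) = M - \eta(HM+MH) + \eta^2 \E[\hat X\hat X^\top M \hat X\hat X^\top]$, the elementary Cauchy--Schwarz inequality $\E[(v^\top\hat X)^2(w^\top\hat X)^2]\ge(\E[(v^\top\hat X)(w^\top\hat X)])^2 = (v^\top Hw)^2$ yields $\E[\hat X\hat X^\top M\hat X\hat X^\top]\succeq HMH$ for $M\succeq 0$, and therefore $\mathcal{T}(M)\succeq(I-\eta H)M(I-\eta H)$; iterating gives $\mathcal{T}^t(I)\succeq(I-\eta H)^{2t}$. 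In the other direction, from $(I-\eta\hat X\hat X^\top)^2 = I-\eta(2-\eta\|\hat X\|^2)\hat X\hat X^\top$ together with $\|\tilde X_j\|^2\le 1+o(1)$ (Lemma~\ref{lem:concentration_norm}) and a step size tuned so that $\eta\|\hat X\|^2\le 1$ almost surely, we obtain the per-step trace descent $\tr(\mathcal{T}(M))\le \tr(M)-\eta\,\tr(HM)$ for PSD $M$. Telescoping and substituting the lower bound,
\[
\tr(\mathcal{T}^B(I))\le d - \eta\sum_{t=0}^{B-1}\tr(H\mathcal{T}^t(I))\le d - \sum_k\frac{1-(1-\eta\lambda_k)^{2B}}{2-\eta\lambda_k}.
\]
For every $k$ with $\lambda_k\ge 1/B$ and $\eta$ of order one, each summand is at least $(1-e^{-2})/2 > 0.43$, so by hypothesis the sum contributes at least $0.43\,r_1$ with $r_1:=\eps B/(40\pi)$, giving the conditional trace bound $\tr(\mathcal{T}^B(I))\le d - r_1/4$.

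To conclude I would average over the buffer. The samples $(\tilde X_{u+1},\ldots,\tilde X_S)$ in the parallel process of Lemma~\ref{lem:approxim-iid-helper} are driven by spherically symmetric Gaussians starting from $\tilde X_0\sim\tfrac{1}{\sqrt d}\N(0,I)$, so the joint distribution of the buffer is invariant under every orthogonal rotation $O$ of $\R^d$. A direct computation shows $\mathcal{T}^B(I)$ is equivariant in the sense that replacing each $\tilde X_j$ by $O\tilde X_j$ sends $\mathcal{T}^B(I)\mapsto O\,\mathcal{T}^B(I)\,O^\top$; combined with rotational invariance, $\E[\mathcal{T}^B(I)] = O\,\E[\mathcal{T}^B(I)]\,O^\top$ for all orthogonal $O$, forcing $\E[\mathcal{T}^B(I)]$ to be a scalar multiple of the identity. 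That scalar is $\E[\tr(\mathcal{T}^B(I))]/d\le 1-r_1/(4d) = 1-\eps B/(160\pi d) = \tfrac{3}{4}\cdot\tfrac{\eps B}{40 d\pi}+(1-\tfrac{\eps B}{40 d\pi})$, which is exactly the claimed bound.

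The hard part will be the semidefinite lower bound $\mathcal{T}^t(I)\succeq(I-\eta H)^{2t}$ needed to sustain the per-step contraction across all $B$ iterations; without such a bound, $\mathcal{T}^t(I)$ could drift into the small-eigenvalue directions of $\mathrm{range}(H)$, making $\tr(H\mathcal{T}^t(I))$ collapse and the geometric summation that produces the $r_1/4$ factor would break down. The fourth-moment Cauchy--Schwarz inequality $\E[(v^\top\hat X)^2(w^\top\hat X)^2]\ge(v^\top Hw)^2$ is precisely the tool that prevents this collapse.
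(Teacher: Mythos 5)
Your proof is correct and rests on the same two pillars as the paper's argument: the fourth-moment lower bound $\E[(I-\eta\hat X\hat X^{\top})S(I-\eta\hat X\hat X^{\top})\mid\text{buffer}]\succeq(I-\eta H)S(I-\eta H)$ for PSD $S$ (which the paper merely asserts and you actually prove via the rank-one Cauchy--Schwarz decomposition), and the step-size condition $\eta\|\hat X\|^2\le 1$ that makes each step contract in the $H$-weighted sense. Where you differ is bookkeeping: the paper unrolls the recursion exactly to the matrix $I-\eta\sum_{k=0}^{B-1}(I-\eta H)^{k}H(I-\eta H)^{k}$ and reads off its eigenvalues $\frac{1-\eta\lambda}{2-\eta\lambda}+\frac{(1-\eta\lambda)^{2B}}{2-\eta\lambda}\le\frac34$ for $\lambda\ge 1/B$ via the geometric series, whereas you telescope the trace and substitute the semidefinite lower bound $\mathcal{T}^{t}(I)\succeq(I-\eta H)^{2t}$ to get $\tr(\mathcal{T}^{B}(I))\le d-\sum_k\frac{1-(1-\eta\lambda_k)^{2B}}{2-\eta\lambda_k}$; these are equivalent in content, and your route has the advantage of making explicit the isotropy step (rotational invariance of the parallel buffer forces $\E[\mathcal{T}^{B}(I)]$ to be a multiple of $I$, so the trace bound converts to the stated scalar bound), which the paper only gestures at via ``spherical symmetry of eigenvectors'' and which is genuinely needed, since the conditional matrix bound alone is not $\preceq\bigl(1-\tfrac{\eps B}{160 d\pi}\bigr)I$. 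Two small points to tidy: your per-summand constant $(1-e^{-2})/2>0.43$ presumes $\eta\approx 1$, while the algorithm uses $\eta=\tfrac12$, for which the summand is only $\ge\frac{1-e^{-1}}{2}\approx 0.32$ --- still above the $\tfrac14$ you need, so nothing breaks; and when you average over the buffer you should note that the good event (the eigenvalue hypothesis and the norm bounds) is itself rotation-invariant, so conditioning on it preserves the isotropy argument --- the paper is no more careful on this point.
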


\begin{proof}
	
	After iterating through 1 buffer, we have: 
	$$\E\left[(w_B - w^*) (w_B - w^*)^T \right]$$ $$\E\left[\E\left[\left(I - \eta \hat{X}_B\hat{X}_B^T\right)\ldots\left(I - \eta \hat{X}_1\hat{X}_1^T\right) \left(I - \eta \hat{X}_1\hat{X}_1^T\right) \ldots \left(I - \eta \hat{X}_B\hat{X}_B^T\right) | \tilde{X}_1, \ldots \tilde{X}_B\right]\right]$$
	
	Suppose that by Lemma \ref{lem:concentration_norm}, we have that each $\|X_j\|^2_2 \leq 2$ with high probability. Then when $\eta = \frac{1}{2}$, $\E\left[\left(I - \eta \hat{X}_1\hat{X}_1^T\right)\left(I - \eta \hat{X}_1\hat{X}_1^T\right)\right] \preceq I - \eta H$ in the PSD sense. 
	
	So now we can write: 
	
	\begin{align*}
	\E&\left[\left(I - \eta \hat{X}_B\hat{X}_B^T\right) \ldots\left(I - \eta \hat{X}_1\hat{X}_1^T\right) \left(I - \eta \hat{X}_1\hat{X}_1^T\right) \ldots \left(I - \eta \hat{X}_B\hat{X}_B^T\right) | \tilde{X}_1, \ldots \tilde{X}_B\right] \\
	&\preceq \E\left[\left(I - \eta \hat{X}_B\hat{X}_B^T\right)\ldots\left(I - \eta \hat{X}_2\hat{X}_2^T\right) \left(I - \eta H \right) \left(I - \eta \hat{X}_2\hat{X}_2^T\right) \ldots \left(I - \eta \hat{X}_B\hat{X}_B^T\right) | \tilde{X}_1, \ldots \tilde{X}_B\right]\\
	& = \E\left[\left[\prod^B_{i=2}\left(I - \eta \hat{X}_i\hat{X}_i^T\right)\right] \left[\prod^B_{i=2}\left(I - \eta \hat{X}_i\hat{X}_i^T\right)\right]^T| \tilde{X}_1, \ldots \tilde{X}_B\right] \\
	& - \eta \E\left[\left[\prod^B_{i=2}\left(I - \eta \hat{X}_i\hat{X}_i^T\right)\right] H\left[\prod^B_{i=2}\left(I - \eta \hat{X}_i\hat{X}_i^T\right)\right]^T| \tilde{X}_1, \ldots \tilde{X}_B\right]\\
	& \preceq \E\left[\left[\prod^B_{i=2}\left(I - \eta \hat{X}_i\hat{X}_i^T\right)\right] \left[\prod^B_{i=2}\left(I - \eta \hat{X}_i\hat{X}_i^T\right)\right]^T| \tilde{X}_1, \ldots \tilde{X}_B\right] \\
	& - \eta \E\left[\left[\prod^B_{i=3}\left(I - \eta \hat{X}_i\hat{X}_i^T\right)\right] \left(I-\eta H\right) H \left(I - \eta H\right)\left[\prod^B_{i=3}\left(I - \eta \hat{X}_i\hat{X}_i^T\right)\right]^T| \tilde{X}_1, \ldots \tilde{X}_B\right]\\
	\end{align*}
	
	where the last inequality comes from the fact that \\
	$\E\left[\left(I - \eta\hat{X}_i\hat{X}_i^T \right) S\left(I - \eta\hat{X}_i\hat{X}_i^T \right) | \tilde{X}_1, \ldots \tilde{X}_B\right] \succeq \left(I - \eta H \right) S\left(I - \eta H \right)$ for $S \succeq 0$. 
	
	Recursing on this inequality gives us that 
	\begin{align*}
	\E&\left[\left(I - \eta \hat{X}_B\hat{X}_B^T\right) \ldots\left(I - \eta \hat{X}_1\hat{X}_1^T\right) \left(I - \eta \hat{X}_1\hat{X}_1^T\right) \ldots \left(I - \eta \hat{X}_B\hat{X}_B^T\right) | \tilde{X}_1, \ldots \tilde{X}_B\right] \\
	&\preceq \E\left[I - \eta \sum\limits_{k=0}^{n-1} \left( I -\eta H \right)^k H \left( I -\eta H \right)^k | \tilde{X}_1, \ldots \tilde{X}_B\right]
	\end{align*}

	Suppose that $\lambda$ is an eigenvalue of $H$. Then using the formulas for geometric series, it follows that $\frac{1-\eta \lambda}{2-\eta \lambda} + \frac{(1-\eta \sigma)^{2B}}{2-\eta \lambda}$ is an eigenvalue of $I - \eta \sum\limits_{k=0}^{n-1} \left( I -\eta H \right)^k H \left( I -\eta H \right)^k$. 
	
	Suppose that $\frac{\eps B}{40\pi}$ of the eigenvalues of $H$ are larger than or equal to $\frac{1}{B}$. For those eigenvalues $\frac{1-\eta \lambda}{2-\eta \lambda} + \frac{(1-\eta \sigma)^{2B}}{2-\eta \lambda} \leq \frac{1}{2} + \left(1 - \eta \lambda \right)^{2B} \leq \frac{1}{2} + (1-\frac{1}{2B})^{2B} \leq \frac{3}{4}$, where we use $\eta = \frac{1}{2}$ without loss of generality, and the fact that $\left(1 - \frac{1}{2B} \right)^{2B} \leq \frac{1}{4}$. 
	
	Therefore, 
	\begin{align*}
	\E&\left[\E\left[\left(I - \eta \hat{X}_B\hat{X}_B^T\right) \ldots\left(I - \eta \hat{X}_1\hat{X}_1^T\right) \left(I - \eta \hat{X}_1\hat{X}_1^T\right) \ldots \left(I - \eta \hat{X}_B\hat{X}_B^T\right) |\tilde{X}_1, \ldots \tilde{X}_B\right]\right] \\
	&\preceq \left[\frac{3}{4} \cdot \frac{\eps B}{40d\pi} + \left(1 - \frac{\eps B}{40d\pi}\right) \right] I
	\end{align*}
	
\end{proof}

\begin{lemma}
\label{lem:H_eigs}
	$\frac{\eps B}{40\pi}$ of the eigenvalues of $H$ are larger than or equal to $\frac{1}{B}$ when $d \geq B^4 C \log(\frac{1}{\beta})$ for some constant $C$, $\eps < 0.21$ and $B = \frac{1}{\eps^7}$. 
\end{lemma}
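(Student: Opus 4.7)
The plan is to move from the $d\times d$ matrix $H$ to its $B\times B$ Gram companion, show that the companion is a small random perturbation of a Toeplitz matrix that is itself close to a circulant matrix whose spectrum has an explicit Fourier description, and then count large eigenvalues via Weyl's inequality. Since the non-zero eigenvalues of $H = \tfrac{1}{B}\sum_{j=1}^{B}\tilde X_{u+j}\trans{\tilde X_{u+j}}$ coincide with those of $M\defeq \tfrac{1}{B}\trans{X}X$, where $X$ is the $d\times B$ matrix whose columns are $\tilde X_{u+1},\dots,\tilde X_{u+B}$, it suffices to lower-bound the number of eigenvalues of $M$ that exceed $1/B$. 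First I would compute $\E M_{ij}=\rho^{|i-j|}/B$ with $\rho\defeq\sqrt{1-\eps^2}$, using stationarity of the iid-initialised AR dynamics, and decompose $M = C + (\bar M-C) + (M-\bar M)$, where $\bar M$ is the Toeplitz matrix of expected entries and $C$ is the circulant matrix chosen as the natural periodic extension of $(1,\rho,\rho^2,\dots)/B$.

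Next I would diagonalise $C$ via the DFT. Summing the resulting geometric series, the eigenvalues are, up to an exponentially small tail, $\lambda_k(C)=\tfrac{1}{B}\cdot\tfrac{1-\rho^2}{1-2\rho\cos(2\pi k/B)+\rho^2}$, which is precisely the AR(1) spectral density sampled on the lattice. A direct check shows $\lambda_k(C) \geq 2/B$ exactly when $\cos(2\pi k/B)\geq \rho$, i.e.\ when $2\pi k/B$ lies in an arc of length $2\arcsin\eps$ around $0$; since $\arcsin\eps\geq\eps$, this already produces at least $\eps B/\pi$ indices, well above the target $\eps B/(40\pi)$ once the Weyl slack is absorbed into a larger numerical constant.

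Finally, I would bound the two perturbations. The Toeplitz-to-circulant slack $\|\bar M - C\|_{\mathrm{op}}$ is supported only in the corners of the matrix and is controlled by the geometric tail $\rho^{B/2}/B$, which is super-polynomially small once $B\geq \eps^{-7}$. For $\|M-\bar M\|_{\mathrm{op}}$, each entry $M_{ij}-\E M_{ij}$ is a centred sum of $d$ i.i.d.\ products of correlated Gaussians of marginal variance $1/d$, so by a sub-exponential Bernstein bound it is $O(\log(B/\beta)/(B\sqrt d))$ with probability $1-\beta/B^2$; a union bound together with $\|\cdot\|_{\mathrm{op}}\leq \|\cdot\|_{F}$ gives $\|M-\bar M\|_{\mathrm{op}}=O(\log(B/\beta)/\sqrt d)$, which is $\ll 1/B$ with comfortable margin once $d=\Omega(B^4\log(1/\beta))$. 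Weyl's inequality then transfers the large-eigenvalue count from $C$ to $M$, giving the lemma. The main obstacle is the concentration step: the bilinear forms $\trans{\tilde X_i}\tilde X_j$ are coupled across $(i,j)$ through the shared Gaussian increments $G_t$, so the Bernstein / Hanson--Wright argument has to be written in the underlying joint Gaussian with an explicit block structure for the coefficient matrix before the entry-wise tail bound can be combined with the DFT-based count.
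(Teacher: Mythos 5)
Your overall architecture matches the paper's: pass to the Gram matrix $M=\tfrac1B\trans{X}X$, compare it to the Toeplitz matrix of expected entries $\rho^{|i-j|}/B$ (with $\rho=\sqrt{1-\eps^2}$), compare that Toeplitz matrix to a circulant whose spectrum is computed by Fourier analysis, count the $\Theta(\eps B)$ Fourier eigenvalues above $\Theta(1/B)$, control the stochastic part entrywise, and finish with Weyl. The stochastic step is fine (entrywise concentration plus a Frobenius bound is exactly what the paper does via Lemmas~\ref{lem:concentration_norm} and~\ref{lem:concentration_inner_product}; the cross-entry dependence you worry about is harmless there, since a union bound over entries suffices), and your Fourier count for the circulant is in the right spirit.

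The genuine gap is in your treatment of the Toeplitz-to-circulant discrepancy. You claim $\|\bar M-C\|_{\mathrm{op}}$ is controlled by the geometric tail $\rho^{B/2}/B$ and is therefore negligible. That is false for any circulant completion: the kernel $\rho^{|i-j|}$ has effective bandwidth $\sim 1/\eps^2$, which is far smaller than $B=\eps^{-7}$, so the wrap-around entries of the circulant in the far corners are of size up to $\rho/B\sim 1/B$ while the corresponding Toeplitz entries are $\rho^{B-1}/B\approx 0$. The resulting corner perturbation $P=Z-C$ has Frobenius norm of order $1/(B\eps^2)$ (the paper computes $\|A\|_F^2\leq(1-\eps^2)/(B^2\eps^4)$ in Lemma~\ref{lem:perturbation_eigs}), and its operator norm is likewise of order $1/(B\eps^2)=\eps^5$, which is \emph{much larger} than the eigenvalue threshold $2/B=2\eps^7$. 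Hence a plain operator-norm Weyl transfer from $C$ to $\bar M$ fails outright, and no ``larger numerical constant'' can absorb it. This is precisely the step the paper has to work for: it exploits the anti-diagonal block structure of $P$, so that its eigenvalues come in $\pm$ pairs, and uses a Frobenius-norm counting (Markov-type) bound to show that the eigenvalue of $P$ at the specific index $B+1-\eps B/(40\pi)$ is at least $-7/B$; combined with the generalized Weyl inequality and the circulant count of Lemma~\ref{lem:circulent_eigs} ($\eps B/(20\pi)$ eigenvalues of $C$ at least $9/B$), this yields $\eps B/(40\pi)$ eigenvalues of $Z$ at least $2/B$, at the cost of halving the count and losing $7/B$ from the threshold. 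Your proposal is missing this index-by-index counting argument (or some substitute for it), so as written the proof does not go through.
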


\begin{proof}
	$H = \frac{1}{B} \sum\limits_{j=1}^B X_j X_j^T = \frac{1}{B} XX^T$, where the $j$-th column of $X$ is $X_j$. The non-zero eigenvalues of $H$ are equivalent to the non-zero eigenvalues of the gram matrix $M = \frac{1}{B} X^TX$. We can characterize each entry of $M$. For $j \geq i$, 
	\begin{align*}
	X_j^TX_i &= \left((1-\eps^2)^{\frac{j-i}{2}} X_i + \eps \sum\limits_{k = i+1}^j (1-\eps^2)^{\frac{j-k}{2}}G_k \right)^T X_i \\
	& =  (1-\eps^2)^{\frac{j-i}{2}} \|X_i\|^2 + \eps \sum\limits_{k = i+1}^j (1-\eps^2)^{\frac{j-k}{2}}G_k^TX_i
	\end{align*}
	
	Define Toeplitz matrix $Z$ with the following Toeplitz structure, $Z_{ij} = \frac{1}{B}(1-\eps^2)^{\frac{|i-j|}{2}}$ for $1 \leq i,j\leq B$. Then we can write $M = Z + E$. Lemma \ref{lem:T_eigs} establishes that $\frac{\eps B}{40\pi}$ of the eigenvalues of $Z$ are larger than or equal to $\frac{2}{B}$. By Weyl's inequality, the corresponding eigenvalues in $M$ can be perturbed by at most $\|E\|_F$, which we bound below to be within $\frac{1}{B}$. Therefore $\frac{\eps B}{40\pi}$ of the eigenvalues of $H$ are larger than or equal to $\frac{1}{B}$. 
	
	We conclude the proof with the analysis of the Frobenius norm of $E$. 
	
	Note that $\eps \sum\limits_{k = i+1}^j (1-\eps^2)^{\frac{j-k}{2}}G_k \sim \N(0,\sigma^2)$, where $\sigma^2 \leq \frac{1}{d}$. 
	
	By Lemmas \ref{lem:concentration_norm} and \ref{lem:concentration_inner_product}, we have that for $j \geq i$, 
	\begin{align*}
	\left|E[i][j]\right| &\leq \frac{c}{\sqrt{d}} \log\left(\frac{1}{\beta}\right) \left((1-\eps^2)^{\frac{j-i}{2}} +  1\right) \\
	& = \frac{2c}{\sqrt{d}} \log\left(\frac{1}{\beta}\right)
	\end{align*}
	So then the Frobenius norm of $E$, $\|E\|_F \leq \sqrt{B^2 \cdot \frac{c}{d} \log\left(\frac{1}{\beta}\right)} \leq B \cdot \frac{c}{\sqrt{d}} \log\left(\frac{1}{\beta}\right)$ for some constant $c$. Therefore, $d \geq B^4 C$ suffices for $\|E\|_F \leq \frac{1}{B}$, where $C$ is some constant. 
\end{proof}

\begin{lemma}
	\label{lem:T_eigs}
	$\frac{\eps B}{40\pi}$ of the eigenvalues of $Z$ are larger than or equal to $\frac{2}{B}$ when $\eps < 0.21$ and $B = \frac{1}{\eps^7}$. 
\end{lemma}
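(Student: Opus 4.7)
The plan is to recognise $Z$ as a rescaled covariance matrix of a stationary AR$(1)$ Gaussian process and exploit the closed form of its inverse. Setting $r \defeq \sqrt{1-\eps^2}$, we have $Z_{ij} = \frac{1}{B}r^{|i-j|}$, so $Z = \Gamma/B$ where $\Gamma$ is the $B\times B$ symmetric Toeplitz matrix with $\Gamma_{ij} = r^{|i-j|}$. A short direct verification (checking $T\Gamma = I$ row by row) shows that $T \defeq \Gamma^{-1}$ is the symmetric tridiagonal matrix with off-diagonal entries $-\frac{r}{1-r^2}$, interior diagonal entries $\frac{1+r^2}{1-r^2}$, and corner entries $T_{11} = T_{BB} = \frac{1}{1-r^2}$.

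Since $Z = T^{-1}/B$, an eigenvalue of $Z$ is $\geq 2/B$ if and only if the corresponding eigenvalue of $T$ is $\leq 1/2$, so it suffices to show that $T$ has at least $\frac{\eps B}{40\pi}$ eigenvalues in $(0, 1/2]$. I would next introduce the ``pure'' tridiagonal matrix $T_\star$ obtained from $T$ by replacing both corner diagonal entries with the interior value $\frac{1+r^2}{1-r^2}$. Then
\[
T_\star - T = \tfrac{r^2}{1-r^2}\bigl(e_1 e_1^\top + e_B e_B^\top\bigr) \succeq 0,
\]
so $T \preceq T_\star$ in the Löwner order, and by Weyl's monotonicity theorem $\lambda_k^{\uparrow}(T) \leq \lambda_k^{\uparrow}(T_\star)$ for each $k$. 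Hence the count of eigenvalues $\leq 1/2$ of $T$ is at least that of $T_\star$.

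The constant-diagonal tridiagonal $T_\star$ has the classical closed-form spectrum $\lambda_k(T_\star) = \frac{1+r^2-2r\cos(k\pi/(B+1))}{1-r^2}$ for $k=1,\dots,B$, so $\lambda_k(T_\star) \leq 1/2$ rearranges to $\cos(k\pi/(B+1)) \geq \frac{1+3r^2}{4r} = \frac{4-3\eps^2}{4\sqrt{1-\eps^2}}$. Taylor expansion gives $\frac{4-3\eps^2}{4\sqrt{1-\eps^2}} = 1 - \eps^2/4 + O(\eps^6)$, which I would compare with $\cos(\eps/2) = 1 - \eps^2/8 + O(\eps^4)$. The aim is to verify the elementary inequality $\frac{4-3\eps^2}{4\sqrt{1-\eps^2}} \leq \cos(\eps/2)$ for every $\eps \in (0, 0.21]$; once this is in hand, monotonicity of $\cos$ implies that every $k$ with $k\pi/(B+1) \leq \eps/2$ is counted, giving at least $\lfloor \eps(B+1)/(2\pi)\rfloor$ indices, which comfortably exceeds $\frac{\eps B}{40\pi}$ once $\eps B \geq 40\pi/19$ (satisfied since $B \geq \eps^{-7}$).

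The main obstacle is the elementary inequality $\frac{4-3\eps^2}{4\sqrt{1-\eps^2}} \leq \cos(\eps/2)$ on $(0, 0.21]$. This reduces to showing that a single smooth univariate function vanishing at $0$ is nonnegative on a compact interval, which can be handled by either (i) expanding both sides to fourth order in $\eps$ and explicitly bounding the remainder, or (ii) a direct endpoint check combined with a monotonicity/derivative-sign argument. All other steps — computing the precision matrix, applying Löwner monotonicity, and invoking the standard eigenvalue formula for a constant-diagonal tridiagonal — are routine, so the only real work lies in this low-order calculus estimate.
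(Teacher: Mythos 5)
Your proposal is correct, but it takes a genuinely different route from the paper. The paper decomposes $Z = C + P$ with $C$ circulant, computes the circulant eigenvalues explicitly via geometric sums of roots of unity (showing roughly $\frac{\eps B}{20\pi}$ of them exceed $\frac{9}{B}$), controls the wrap-around perturbation $P$ through its Frobenius norm (its eigenvalues come in $\pm$ pairs, and $\|P\|_F^2 \leq \frac{2(1-\eps^2)}{B^2\eps^4}$ forces all but $\frac{\eps B}{40\pi}$ of them above $-\frac{7}{B}$), and combines the two via Weyl's inequality. You instead use the exact tridiagonal inverse of the Kac--Murdock--Szeg\H{o} matrix $\Gamma$ (with $\Gamma_{ij}=r^{|i-j|}$, $r=\sqrt{1-\eps^2}$), compare it in the L\"owner order to the constant-diagonal tridiagonal Toeplitz matrix $T_\star$, and read off the count from the classical closed-form spectrum $\frac{1+r^2-2r\cos(k\pi/(B+1))}{1-r^2}$. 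This buys several things: no circulant approximation or perturbation bookkeeping, no real use of the specific scaling $B=\eps^{-7}$ beyond $\eps B \geq \frac{40\pi}{19}$, and a sharper count (about $\frac{\eps B}{2\pi}$ eigenvalues of $Z$ at or above $\frac{2}{B}$, versus the paper's $\frac{\eps B}{40\pi}$); the price is that the paper's route generalizes more directly to settings where an explicit inverse is unavailable. The one step you flag as remaining, the inequality $\frac{4-3\eps^2}{4\sqrt{1-\eps^2}} \leq \cos(\eps/2)$ on $(0,0.21]$, is indeed routine: using $\cos(\eps/2)\geq 1-\frac{\eps^2}{8}$ and $\sqrt{1-\eps^2}\geq 1-\frac{\eps^2}{2}-\frac{\eps^4}{2}$ gives $4\cos(\eps/2)\sqrt{1-\eps^2}-(4-3\eps^2) \geq \eps^2\left(\frac{1}{2}-\frac{7\eps^2}{4}\right) > 0$ for $\eps \leq 0.21$, so your argument closes completely.
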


\begin{proof}
	To study the eigenvalues of $Z$, we first study the eigenvalues of the circulant matrix $C$, where the first row of $C$ has the following entries: 
	
	If $B$ is even: $C[1][j] = Z[1][j]$ if $1\leq j \leq \frac{B}{2}$, $C[1][\frac{B}{2} + 1] = 0$, and $C[1][\frac{B}{2} + j] = Z[1][\frac{B}{2} - j + 2]$ for $2 \leq j \leq \frac{B}{2}$
	
	If $B$ is odd: $C[1][j] = Z[1][j]$ if $1\leq j \leq \frac{B+1}{2}$, $C[1][\frac{B+1}{2} + j] = Z[1][\frac{B+1}{2} - j + 1]$ for $1 \leq j \leq \frac{B-1}{2}$. 
	
	The circulent matrices have the following eigenstructure. For simplicity, let $c_j = C[1][j]$. Then $\lambda_j = c_1 + c_2 w_j + c_3 w_j^2 + \ldots c_B w_j^{B-1}$, where $w_j$ is the $j$-th root of unity. 
	
	We first claim that the eigenvalues of the circulent matrix closely approximate the eigenvalues of the Topelitz matrix for sufficiently high $B$. 
	
	We first write $Z = C + P$, where $P$ is a perturbation matrix. Let $\lambda_1(C) \geq \cdots \geq \lambda_B(C)$ be the eigenvalues of $C$ in descending order. We establish in Lemma \ref{lem:circulent_eigs} that $\lambda_{\frac{\eps B}{20\pi}}(C) \geq \frac{9}{B}$. Moreover, in Lemma \ref{lem:perturbation_eigs} we analyze $P$ and establish that $\lambda_{B + 1- \frac{\eps B}{40\pi}}(P) \geq \frac{-7}{B}$. Therefore, by the generalized Weyl's theorem, then it follows that $\lambda_1(M) \ldots \lambda_{\frac{\eps B}{40\pi}}(M) \geq \frac{2}{B} $. 
	
\end{proof}

\begin{lemma}
	\label{lem:circulent_eigs}
	For $\eps < 0.21$, at least $\frac{\eps B}{20\pi}$ of the eigenvalues of $C$ are greater than or equal to $\frac{9}{B}$. 
\end{lemma}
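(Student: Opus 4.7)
The plan is to use the classical fact that the eigenvalues of a circulant matrix are given by the discrete Fourier transform of its first row. Writing $r:=\sqrt{1-\eps^{2}}$, the first row of $C$ has the symmetric form $c_{k+1}=c_{B-k+1}=r^{k}/B$ for $0\leq k\leq B/2-1$ with $c_{B/2+1}=0$, so conjugate-paired terms in the DFT combine into cosines and yield the explicit real expression
\[
\lambda_j \;=\; \frac{1}{B}\Bigl(1+2\sum_{k=1}^{B/2-1}r^{k}\cos(k\theta_j)\Bigr), \qquad \theta_j:=\frac{2\pi(j-1)}{B}.
\]
The crucial observation is that $1+2\sum_{k=1}^{\infty}r^{k}\cos(k\theta)$ is precisely the Poisson kernel $P_r(\theta)=(1-r^{2})/(1-2r\cos\theta+r^{2})$, so after handling the truncation the task reduces to counting the $\theta_j$ for which $P_r(\theta_j)\geq 9$.

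The truncation error is dominated by $2\sum_{k\geq B/2}r^{k}=2r^{B/2}/(1-r)$, and since $B\geq 1/\eps^{7}$ gives $r^{B/2}\leq \exp(-\eps^{2}B/4)=\exp(-1/(4\eps^{5}))$ while $1/(1-r)=\Theta(1/\eps^{2})$, this error is doubly exponentially smaller than the threshold $9$ and can safely be absorbed. The inequality $P_r(\theta)\geq 9$ rearranges, using $r^{2}=1-\eps^{2}$, to
\[
\cos\theta \;\geq\; \alpha(\eps) \;:=\; \frac{9-5\eps^{2}}{9\sqrt{1-\eps^{2}}},
\]
and a Taylor expansion at $\eps=0$ gives $\alpha(\eps)=1-\eps^{2}/18+O(\eps^{4})$, so $\arccos\alpha(\eps)$ is of order $\eps/3$ to leading order.

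Finally I would perform the counting. The frequencies $\{\theta_j\}_{j=1}^{B}$ are equispaced on $[0,2\pi)$ with gap $2\pi/B$, and by $2\pi$-periodicity the condition $\cos\theta_j\geq\alpha(\eps)$ holds exactly when $\theta_j\in[0,\arccos\alpha(\eps)]\cup[2\pi-\arccos\alpha(\eps),2\pi)$. The number of valid indices is therefore at least $2\lfloor B\arccos\alpha(\eps)/(2\pi)\rfloor+1\geq B\arccos\alpha(\eps)/\pi-1$, which is bounded below by $\eps B/(20\pi)$ provided $\arccos\alpha(\eps)\geq \eps/20+\pi/B$; this is comfortably true for $B\geq 1/\eps^{7}$ once $\arccos\alpha(\eps)$ is known to be of order $\eps/3$. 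The main obstacle will be verifying the uniform bound $\arccos\alpha(\eps)\geq \eps/20$ over the entire range $\eps\in(0,0.21]$ rather than only in the $\eps\to 0$ limit; the cleanest route I would pursue is to show $\alpha(\eps)\leq \cos(\eps/20)$ directly, either by a monotonicity argument on their difference or by a finite check at a few points, exploiting the generous slack $1-\alpha(\eps)\approx \eps^{2}/18$ versus $1-\cos(\eps/20)\approx \eps^{2}/800$.
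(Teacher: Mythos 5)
Your proposal is correct, but it takes a different route from the paper's. The paper evaluates the finite geometric series exactly, writing $\lambda_j = \tfrac{2}{B}\,\mathrm{Re}\bigl(\tfrac{1-(\sqrt{1-\eps^2})^{(B+1)/2}w_j^{(B+1)/2}}{1-\sqrt{1-\eps^2}\,w_j}\bigr)-\tfrac1B$, and then restricts to \emph{odd} indices $j\leq \tfrac{\eps B}{10\pi}$ so that the oscillatory boundary term $\cos(\pi j+\tfrac{\pi j}{B})\approx -1$ has a favorable sign; this yields the lower bound $\lambda_j\geq \tfrac{9}{B}$ for roughly $\tfrac{\eps B}{20\pi}$ odd indices, with the final numerical inequality (the ratio being at least $5$ for $\eps<0.21$) left as a ``standard computation.'' You instead complete the series to the Poisson kernel $P_r(\theta)=\tfrac{1-r^2}{1-2r\cos\theta+r^2}$ with $r=\sqrt{1-\eps^2}$ and kill the truncation tail using $B\geq \eps^{-7}$, which makes the boundary term uniformly negligible (of size $e^{-\Omega(\eps^{-5})}/\eps^2$) for \emph{all} $j$; the threshold condition $P_r(\theta)\geq 9$ then becomes $\cos\theta\geq \alpha(\eps)=\tfrac{9-5\eps^2}{9\sqrt{1-\eps^2}}$ with $\arccos\alpha(\eps)\approx \eps/3$, and counting equispaced frequencies in the arc gives about $\tfrac{\eps B}{3\pi}$ qualifying eigenvalues, comfortably more than the required $\tfrac{\eps B}{20\pi}$ and without any parity bookkeeping. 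What your route buys is a cleaner closed form and a larger count; what it costs is an explicit reliance on the standing hypothesis $B\geq \eps^{-7}$ (the lemma statement itself mentions only $\eps<0.21$, but the paper's proof also invokes ``sufficiently large $B$,'' and Lemma~\ref{lem:T_eigs} supplies $B=\eps^{-7}$, so this is consistent with the intended regime). The one step you flag as outstanding, the uniform bound $\alpha(\eps)\leq\cos(\eps/20)$ on $(0,0.21]$, does go through with ample slack: since $\sqrt{1-\eps^2}\geq 1-\tfrac{\eps^2}{2}-\tfrac{\eps^4}{2}$ one gets $1-\alpha(\eps)\geq \tfrac{\eps^2}{9}\bigl(\tfrac12-\tfrac92\eps^2\bigr)\geq \tfrac{\eps^2}{30}$ on this range, while $1-\cos(\eps/20)\leq \tfrac{\eps^2}{800}$; this verification is of the same routine nature as the numerical check the paper itself leaves implicit.
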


\begin{proof}
	We first characterize all the eigenvalues $\lambda_j$ of $C$, and then we show that for odd $j$, $j \leq \frac{\eps B}{10\pi}$, $\lambda_j \geq \frac{9}{B}$ when $\eps < 0.21$. 
	We now characterize $\lambda_j$. 
	Using the formula for the eigenvalues of a circulent matrix, it follows that $\lambda_j = c_1 + c_2 w_j + c_3 w_j^2 + \ldots c_B w_j^{B-1}$, where $w_j$ is the $j$-th root of unity, ie $w_j = \cos\left( \frac{2\pi j }{B}\right) + i \sin \left( \frac{2 \pi j}{B}\right)$. For simplicity and without loss of generality, suppose $B$ is odd, so that $\frac{B-1}{2}$ is an integer. Using the symmetry of our circulent matrix as well as the symmetry of powers of the roots of unity, we have that: 
	\[
	\lambda_j = \frac{1}{B}\left(1+ \sum\limits_{k=1}^{\frac{B-1}{2}} (1-\eps^2)^{\frac{k}{2}}w_j^k + \sum\limits_{\ell = \frac{B-1}{2} + 1}^{B-1} (1-\eps^2)^{\frac{B-\ell}{2}}w_j^\ell\right)
	\]
	Note that the $(1-\eps^2)$ coefficients in the two different summations are equal when $\ell = B-k$. 
	Then we can rewrite as: 
	\[
	\lambda_j = \frac{1}{B} \left(1+ \sum\limits_{k=1}^{\frac{B-1}{2}} (1-\eps^2)^{\frac{k}{2}}(w_j^k + w_j^{B-k}) \right)
	\]
	We can further write
	{\small
	\begin{align*}
	w_j^k + w_j^{B-k} & = \cos\left( \frac{2\pi j \cdot k}{B}\right) + i \sin \left( \frac{2 \pi j \cdot k}{B}\right) + \cos\left( \frac{2\pi j (B-k)}{B}\right) + i \sin \left( \frac{2 \pi j(B-k)}{B}\right) \\
	& = \cos\left( \frac{2\pi j \cdot k}{B}\right) + i \sin \left( \frac{2 \pi j \cdot k}{B}\right) + \cos\left( \frac{2\pi j (B)}{B} - \frac{2\pi j k}{B}\right) + i \sin \left( \frac{2 \pi j(B)}{B}- \frac{2\pi j k}{B}\right) \\
	& = 2\cos\left( \frac{2\pi j \cdot k}{B}\right)
	\end{align*}
	}
	
	Therefore, we have: 
	{\tiny
	\begin{align*}
	\lambda_j &= \frac{2}{B}\left(\sum\limits^{\frac{B-1}{2}}_{k=0} (1-\eps^2)^{\frac{k}{2}} \cos\left(\frac{2\pi k \cdot j}{B}\right)\right) - \frac{1}{B} \\
	& = \frac{2}{B} Re \left(\sum\limits^{\frac{B-1}{2}}_{k=0} (1-\eps^2)^{\frac{k}{2}} w^k_j\right) - \frac{1}{B} \\
	& = \frac{2}{B} Re \left( \frac{1-\left(\sqrt{1-\eps^2}\right)^\frac{B+1}{2}w^\frac{B+1}{2}_j}{1-\sqrt{1-\eps^2}w_j}\right) - \frac{1}{B} \\
	& = \frac{2}{B} Re \left( \frac{\left(1-\sqrt{1-\eps^2}^\frac{B+1}{2}(\cos(\pi j + \frac{\pi j}{B}) + i\sin(\pi j + \frac{\pi j}{B}))\right)\left(1-\sqrt{1-\eps^2}\cos(\frac{2\pi j}{B}) + i \sqrt{1-\eps^2} \sin (\frac{2\pi j }{B})\right)}{2-\eps^2 - 2\sqrt{1-\eps^2}\cos\left(\frac{2\pi j}{B}\right)}\right) - \frac{1}{B} \\
	& =  \frac{2}{B} \left( \frac{\left(1-\sqrt{1-\eps^2}^\frac{B+1}{2}\cos(\pi j + \frac{\pi j}{B})\right)\left(1-\sqrt{1-\eps^2}\cos(\frac{2\pi j}{B})\right) + \sqrt{1-\eps^2}^{\frac{B+1}{2} + 1}\sin(\frac{2\pi j}{B})\sin(\pi j + \frac{\pi j}{B})}{2-\eps^2 - 2\sqrt{1-\eps^2}\cos\left(\frac{2\pi j}{B}\right)}\right) - \frac{1}{B}
	\end{align*}
	}
	When $j$ is odd, then for sufficiently small $j$ and sufficiently large $B$, we can say that: 
	{\small
	$$\left( \frac{\left(1-\sqrt{1-\eps^2}^\frac{B+1}{2}\cos(\pi j + \frac{\pi j}{B})\right)\left(1-\sqrt{1-\eps^2}\cos(\frac{2\pi j}{B})\right) + \sqrt{1-\eps^2}^{\frac{B+1}{2} + 1}\sin(\frac{2\pi j}{B})\sin(\pi j + \frac{\pi j}{B})}{2-\eps^2 - 2\sqrt{1-\eps^2}\cos\left(\frac{2\pi j}{B}\right)}\right)$$
	}
	{\small
	$$ \geq \frac{1}{2} \frac{\left(1-\sqrt{1-\eps^2}\cos\left(\frac{2\pi j}{B}\right)\right)}{2-\eps^2 - 2\sqrt{1-\eps^2}\cos\left(\frac{2\pi j}{B}\right)} \geq \frac{1}{2} \frac{\left(1-\sqrt{1-\eps^2}\right)}{2-\eps^2 - 2\sqrt{1-\eps^2}\cos\left(\frac{2\pi j}{B}\right)} \geq \frac{1}{2} \frac{\left(1-\sqrt{1-\eps^2}\right)}{2-\eps^2 - 2\sqrt{1-\eps^2}\cos\left(\frac{\eps}{5}\right)}$$
	}
	
	Standard computation shows that the last term is $\geq 5$ when $\eps < 0.21$ so that $\lambda_j \geq \frac{9}{B}$. 
	
\end{proof}

\begin{lemma}
	\label{lem:perturbation_eigs}
	Let $\lambda_1(P) \geq \ldots \geq \lambda_B(P)$ be the eigenvalues of $P$ in descending order. Suppose that $\eps < 0.21$ and $B = \frac{1}{\eps^7}$. Then $\lambda_{B + 1 - \frac{\eps B}{40 \pi}} \geq -\frac{7}{B}$.
\end{lemma}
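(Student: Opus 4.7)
The plan is to exploit the fact that the perturbation $P = Z - C$ is supported on the upper-right and lower-left corners of the matrix and decays geometrically away from those corners, so $\|P\|_F$ is small; a Markov-type count on squared eigenvalues will then prevent too many of $P$'s eigenvalues from being very negative.

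First I would write $P$ out entrywise. Treating $B$ odd (the even case is essentially identical up to a tiny modification at index $B/2+1$), the construction of $C$ gives $C_{ij} = Z_{ij}$ on the band $|i-j| \leq (B-1)/2$ and $C_{ij} = \tfrac{1}{B}(1-\eps^2)^{(B-|i-j|)/2}$ outside it. Consequently $P_{ij} = 0$ on the band and, for $|i-j| > (B-1)/2$, equals $\tfrac{1}{B}\bigl[(1-\eps^2)^{|i-j|/2} - (1-\eps^2)^{(B-|i-j|)/2}\bigr]$. Since $|i-j| > B/2$ on the support of $P$, the second, corner-decaying term dominates, yielding the uniform bound $|P_{ij}| \leq \tfrac{1}{B}(1-\eps^2)^{(B-|i-j|)/2}$.

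Next I would sum these squared bounds over the $2(B-k)$ index pairs with $|i-j| = k$ and reindex with $\ell = B - k$ to collapse everything to a standard geometric series:
\[
\|P\|_F^2 \;\leq\; \frac{2}{B^2}\sum_{\ell \geq 1} \ell\,(1-\eps^2)^\ell \;=\; \frac{2(1-\eps^2)}{B^2\eps^4} \;\leq\; \frac{2}{B^2\eps^4}.
\]
Because $Z$ and $C$ are both symmetric (this is easy to verify from the construction), so is $P$, whence $\|P\|_F^2$ equals the sum of its squared eigenvalues.

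The lemma then follows by a Markov-style counting argument: the number of eigenvalues of $P$ lying below $-7/B$ is at most $\|P\|_F^2/(7/B)^2 \leq 2/(49\eps^4)$. To conclude $\lambda_{B+1-k}(P) \geq -7/B$ for $k = \eps B/(40\pi)$, I just need this count to be strictly less than $k$, which under $B = 1/\eps^7$ reduces to $\eps^2 < 49/(80\pi) \approx 0.195$, comfortably satisfied by $\eps < 0.21$. The main obstacle I foresee is the entrywise bookkeeping of $P$ (especially getting the even-$B$ case right with its extra middle entry) and making sure the geometric series bound is tight enough that the numerical constant $49/(80\pi)$ leaves the intended slack against the hypothesis $\eps \leq 0.21$; once those are handled, the rest of the argument is routine.
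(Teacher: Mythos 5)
Your proposal is correct: the entrywise description of $P$ (zero on the band $|i-j|\le (B-1)/2$, and $\tfrac1B\bigl[(1-\eps^2)^{|i-j|/2}-(1-\eps^2)^{(B-|i-j|)/2}\bigr]$ outside it) checks out against the paper's construction of $C$, the geometric-series bound gives exactly the paper's $\|P\|_F^2\le \tfrac{2(1-\eps^2)}{B^2\eps^4}$, and the counting step ($m\cdot 49/B^2\le\|P\|_F^2$, hence $m\le 2/(49\eps^4)<\eps B/(40\pi)$ under $B=\eps^{-7}$, $\eps<0.21$) correctly yields $\lambda_{B+1-\eps B/(40\pi)}(P)\ge -7/B$. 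The route differs from the paper's in two respects. The paper first exhibits $P$ in block anti-diagonal form $\begin{pmatrix}0 & A\\ A^{\top} & 0\end{pmatrix}$ with an upper-triangular corner block $A$, computes $\|P\|_F^2 = 2\|A\|_F^2$ by summing the corner entries explicitly, and then uses the fact that the eigenvalues of such a matrix come in $\pm$ pairs to transfer a bound on the $(\eps B/40\pi)$-th \emph{largest} eigenvalue to the corresponding \emph{smallest} one. You instead bound $\|P\|_F^2$ directly from the entrywise formula and run a one-sided Chebyshev/Markov count on the eigenvalues below $-7/B$, which needs only symmetry of $P$, not the pairing or the exact block shape; this makes your argument slightly more elementary and more robust (the even-$B$ case with its extra middle entry is absorbed trivially), at the cost of the bookkeeping you flag, while the paper's block view packages the same Frobenius computation more compactly. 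Both give the same constants, and your numerical margin ($\eps^2 < 49/(80\pi)$ versus $\eps<0.21$, i.e. $\eps^2<0.045$) is ample.
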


\begin{proof}
	$P$ can be shown to have the following block form: 
	
	For even $B$,
	\begin{equation*}
	P =  
	\begin{pmatrix}
	0 & A  \\
	A^T & 0 
	\end{pmatrix}
	\end{equation*}
	where $A$ is an $\frac{B}{2}$ square upper triangular matrix with $\frac{1}{B}\left(1-\eps^2\right)^{\frac{B}{4}}$ along the diagonal. 
	
	For odd $B$, 
	\begin{equation*}
	P =  
	\begin{pmatrix}
	0 & A  \\
	0 & 0 \\
	A^T & 0 
	\end{pmatrix}
	\end{equation*}
	where $A$ is an $\frac{B-1}{2}$ square upper triangular matrix with $\frac{1}{B}\left[(1-\eps^2)^{\frac{B+1}{4}} - (1-\eps^2)^{\frac{B-1}{4}}\right]$ along the diagonal. 
	The eigenvalues of $P$ come in positive-negative pairs, so that $\lambda^2_{B + 1 - \frac{\eps B}{40 \pi}} = \lambda^2_{\frac{\eps B}{40 \pi}}$. Notice that $\sum\limits_{i=1}^B \lambda_i^2 = \|P\|_F^2$, where $\|\cdot\|_F^2$ denotes the Frobenius norm. We will bound $\lambda^2_{\frac{\eps B}{40 \pi}}$ using the Frobenius norm. 
	Suppose for loss of generality that $B$ is odd. Then it follows that the Frobenius norm, $\|P\|^2_F = 2 \cdot \|A\|_F^2$. So we can focus on $\|A\|_F^2$. Note that in this case, the circulent matrix $C$ has entries $C[1][\frac{B+1}{2} + j] = Z[1][\frac{B+1}{2} - j + 1] = \frac{1}{B} (1-\eps^2)^{\frac{\frac{B+1}{2} - j}{2}}$ for $1 \leq j \leq \frac{B-1}{2}$, whereas the original Toeplitz matrix has entries $Z[1][\frac{B+1}{2} + j] = \frac{1}{B} (1-\eps^2)^{\frac{\frac{B+1}{2} + j - 1}{2}}$. 
	
	\begin{align*}
	\|A\|_F^2 &= \sum\limits_{\ell = 1}^{\frac{B-1}{2}} \sum\limits_{j=1}^{\ell} \frac{1}{B^2} \left(  (1-\eps^2)^{\frac{\frac{B+1}{2} + j - 1}{2}} - (1-\eps^2)^{\frac{\frac{B+1}{2} - j}{2}}\right)^2 \\
	& = \frac{1}{B^2} (1-\eps^2)^{\frac{B+1}{2}} \sum\limits_{\ell = 1}^{\frac{B-1}{2}}\sum\limits_{j=1}^{\ell} \left(  (1-\eps^2)^{\frac{j - 1}{2}} - (1-\eps^2)^{\frac{ - j}{2}}\right)^2 \\
	& = \frac{1}{B^2} (1-\eps^2)^{\frac{B+1}{2}} \sum\limits_{\ell = 1}^{\frac{B-1}{2}}\sum\limits_{j=1}^{\ell} \left(  (1-\eps^2)^{j - 1} - 2(1-\eps^2)^{\frac{-1}{2}} + (1-\eps^2)^{-j}\right) \\
	& = \frac{1}{B^2} (1-\eps^2)^{\frac{B+1}{2}} \sum\limits_{\ell = 1}^{\frac{B-1}{2}}\sum\limits_{j=1}^{\ell} - 2(1-\eps^2)^{\frac{-1}{2}} + \frac{1}{B^2} (1-\eps^2)^{\frac{B+1}{2}} \sum\limits_{\ell = 1}^{\frac{B-1}{2}}\sum\limits_{j=1}^{\ell} \left(  (1-\eps^2)^{j - 1}  + (1-\eps^2)^{-j}\right) \\
	& \leq \frac{1}{B^2} (1-\eps^2)^{\frac{B+1}{2}} \sum\limits_{\ell = 1}^{\frac{B-1}{2}}\sum\limits_{j=1}^{\ell} \left(  (1-\eps^2)^{j - 1}  + (1-\eps^2)^{-j}\right)\stepcounter{equation} \tag{\theequation}\label{frobenius_proof_split}
	\end{align*}
	Notice that 
	\[
	\sum\limits_{j=1}^{\ell} (1-\eps^2)^{j-1} = \frac{1-(1-\eps^2)^\ell}{1-(1-\eps^2)} = \frac{1-(1-\eps^2)^\ell}{\eps^2} ~,
	\]
	\begin{align*}
	\sum\limits_{j=1}^\ell (1-\eps^2)^{-j} &= \frac{1}{1-\eps^2} + \left( \frac{1}{1-\eps^2} \right)^2 + \ldots \left( \frac{1}{1-\eps^2} \right)^\ell \\
	& = \frac{1}{1-\eps^2} \left(1 + \ldots  \left( \frac{1}{1-\eps^2} \right)^{\ell-1} \right) \\
	&= \frac{1}{1-\eps^2} \left( \frac{1- \left( \frac{1}{1-\eps^2}\right)^{\ell}}{1-\frac{1}{1-\eps^2}} \right) \\
	& = \frac{1}{\eps^2} \left(\left( \frac{1}{1-\eps^2}\right)^{\ell} -1 \right)
	\end{align*}
	
	Therefore, following line \ref{frobenius_proof_split}, we have: 
	\begin{equation}
	\label{frobenius_eq}
	\|A\|_F^2 \leq \frac{1}{B^2} (1-\eps^2)^{\frac{B+1}{2}} \cdot \frac{1}{\eps^2} \sum\limits_{\ell = 1}^{\frac{B-1}{2}} \left( \left(\frac{1}{1-\eps^2}\right)^{\ell} - (1-\eps^2)^\ell \right)
	\end{equation}
	
	Notice that
	\[
	\sum\limits_{\ell=1}^\frac{B-1}{2} \left( \frac{1}{1-\eps^2}\right)^\ell = \frac{1}{1-\eps^2} \left( \frac{1-\left(\frac{1}{1-\eps^2}\right)^{\frac{B-1}{2}}}{1- \left( \frac{1}{1-\eps^2}\right)}\right) = \frac{\left(\frac{1}{1-\eps^2}\right)^{\frac{B-1}{2}} - 1}{\eps^2} ~,
	\]
	\[
	\sum\limits_{\ell=1}^\frac{B-1}{2} \left(1-\eps^2\right)^{\ell} = (1-\eps^2) \left( \frac{1-(1-\eps^2)^{\frac{B-1}{2}}}{1-(1-\eps^2)} \right) = \frac{(1-\eps^2) - (1-\eps^2)^{\frac{B-1}{2} + 1}}{\eps^2}~,
	\]
	Therefore, 
	\begin{align*}
	\|A\|_F^2 &\leq \frac{1}{B^2} (1-\eps^2)^{\frac{B+1}{2}} \cdot \frac{1}{\eps^2} \sum\limits_{\ell = 1}^{\frac{B-1}{2}} \left( \left(\frac{1}{1-\eps^2}\right)^{\ell} - (1-\eps^2)^\ell \right) \\
	& = \frac{1}{B^2 \eps^2} (1-\eps^2)^{\frac{B+1}{2}} \left( \frac{(1-\eps^2)^{-\left(\frac{B-1}{2}\right)} - 2 + (1-\eps^2)^{\frac{B-1}{2} + 1}}{\eps^2} + 1 \right) \\
	& = \frac{1}{B^2 \eps^4} (1-\eps^2) - \frac{2 (1-\eps^2)^{\frac{B+1}{2}}}{B^2 \eps^4} + \frac{(1-\eps^2)^{B+1}}{B^2 \eps^4} + \frac{1}{B^2 \eps^2} (1-\eps^2)^{\frac{B+1}{2}}\\
	& \leq \frac{1}{B^2 \eps^4} (1-\eps^2)
	\end{align*}
	
	So we conclude that the Frobenius norm of $P$, satisfies: 
	\[
	\|P\|_F^2 \leq \frac{2}{B^2 \eps^4} (1-\eps^2)
	\]
	Therefore.  $\lambda^2_{\frac{\eps B}{40 \pi}} \leq \frac{\frac{2}{B^2 \eps^4} (1-\eps^2)}{\frac{\eps B}{40\pi}} = \frac{1}{B^2} \frac{80\pi}{B\eps^5} (1-\eps^2) = \frac{1}{B^2} \left(\eps^2 80\pi (1-\eps^2) \right)$. For our choice of $\eps, B$, we know that $\lambda_{\frac{\eps B}{40 \pi}} \leq \frac{7}{B}$, Therefore, $\lambda_{B+1 - \frac{\eps B}{40 \pi}} \geq \frac{-7}{B}$. 
\end{proof}

\subsection{Variance Decay with Experience Replay}
\label{sec:er_sgd_upper_bound_variance}
To analyze the variance, we start with $w^{\var}_0 = w^*$. The dynamics of SGD say that: 
$$w^{\var}_{0} - w^* = 0$$
$$w^{\var}_{1} - w^* = \eta \hat{\xi}^{(1)}_1 \hat{X}^{(1)}_1$$
\[
w^{\var}_{t+1} - w^* = \left(I - \eta \hat{X}_{t+1} \hat{X}_{t+1}^T \right) (w^{\var}_t - w^*) + \eta \hat{\xi}_{t+1} \hat{X}_{t+1}
\]

We let the superscript $(i)$ denote the $i$-th buffer index. Let $H^{(i)} = \frac{1}{B}\sum\limits_{j=1}^B X^{(i)}_j X^{(i)T}_j$, where $X^{(i)}_j$ are the vectors that comprise the sampling pool from buffer $i$. 

We produce our final $w$ by tail averaging over the last iterate of SGD from within each buffer $i$. Let $w_{iB}$ denote the last SGD iterate using buffer $i$. Then 
\begin{equation}
w = \frac{1}{N} \sum\limits_{i=1}^N w_{iB}~,
\end{equation} where $N$ is the number of buffers. 

Clearly, 
$$\E[(w^{\var}-w^*)(w^{\var}-w^*)^T] = \frac{1}{N^2} \E\left[\sum\limits_{i=1}^N w^{\var}_{iB} \sum\limits_{i=1}^N w^{\var T}_{iB}\right]$$

\begin{theorem}[Variance Decay for SGD with Experience Replay for Gaussian AR Chain]
	\label{thm:exp-replay_var}For any $\eps \leq 0.21$, if $B \geq \frac{1}{\eps^7}$ and $d = \Omega(B^4 \log (\frac{1}{\beta}))$, with probability at least $1-\beta$, Algorithm~\ref{alg_main} returns $w$ such that $\loss(w^{\var}) \leq \tilde{O}\left(\frac{\sigma^2d\sqrt{\tmix}}{ T}\right)$.
\end{theorem}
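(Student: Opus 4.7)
The plan is to start from the identity $\loss(w^{\var})-\loss(w^*)=\tr\bigl(A\,\Sigma\bigr)$ with $A=\tfrac{1}{d}I$ and $\Sigma\defeq\E[(w^{\var}-w^*)(w^{\var}-w^*)^T]$, and exploit the tail-averaging $w^{\var}-w^*=\tfrac{1}{N}\sum_{i=1}^{N}(w^{\var}_{iB}-w^*)$ with $N=T/(2S)$ to decompose
\begin{equation*}
\Sigma = \frac{1}{N^2}\sum_{i=1}^{N}C_{ii} + \frac{1}{N^2}\sum_{i\neq j}C_{ij}, \qquad C_{ij}\defeq\E\bigl[(w^{\var}_{iB}-w^*)(w^{\var}_{jB}-w^*)^T\bigr],
\end{equation*}
and bound the diagonal and the cross blocks separately. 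The overall structure mirrors the classical analysis of tail-averaged SGD, adapted to the fact that our stochastic gradients come in buffers whose samples are reused.

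For the diagonal terms $C_{ii}$, I would unroll the SGD recursion inside buffer $i$ starting from $w^{\var}_0=w^*$. Since $\hat\xi_t$ is zero-mean, independent across $t$ and independent of the sampling pool, the cross-noise expectations vanish and $C_{ii}$ reduces to a sum of $B$ noise-injection terms of the form $\eta^{2}\sigma^{2}\,\E[\Psi_t\hat X_t\hat X_t^T\Psi_t^T]$ with $\Psi_t=\prod_{s=t+1}^{B}(I-\eta\hat X_s\hat X_s^T)$. Conditioning first on the buffer pool (so the $\hat X_s$ become i.i.d.\ uniform draws), diagonalizing in the basis of $H^{(i)}=\tfrac{1}{B}\sum_{j=u+1}^{S}X_j^{(i)}X_j^{(i)T}$, and telescoping $\eta\lambda\sum_t(1-\eta\lambda)^{2(B-t)}\leq 1$ eigen-direction-wise gives $C_{ii}\preceq 3\sigma^{2}I$.

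For the cross blocks with $j>i$, the noise injected during buffers $i+1,\dots,j$ is mean-zero and independent of $w^{\var}_{iB}-w^*$, so only the contractive part survives and $C_{ij}=\E\bigl[\textstyle\prod_{k=i+1}^{j}\mathcal{P}_k\cdot(w^{\var}_{iB}-w^*)(w^{\var}_{iB}-w^*)^T\bigr]$ with $\mathcal{P}_k=\prod_{t\in\text{buf}(k)}(I-\eta\hat X_t\hat X_t^T)$. Conditioning on each buffer pool and applying Lemma~\ref{lem:iid_contraction} together with the eigenvalue count from Lemma~\ref{lem:H_eigs}, each $\E[\mathcal{P}_k]$ contracts by $1-\Omega(\eps B/d)$ on the $\Omega(\eps B)$-dimensional ``good'' subspace of $H^{(k)}$ and is $\preceq I$ on the complement. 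Combining with $C_{ii}\preceq 3\sigma^{2}I$ and summing the geometric series $\sum_{r\geq 0}(1-\Omega(\eps B/d))^{r}\leq O(d/(\eps B))$ inside the trace against $A=I/d$, the off-diagonal contribution comes out to $\tilde O(\sigma^{2}/(N\eps))$. Substituting $N\asymp T/B$ and $\tmix\asymp \log d/\eps^{2}$ rewrites this as $\tilde O(\sigma^{2}d\sqrt{\tmix}/T)$, which dominates the diagonal piece $O(\sigma^{2}B/T)$ in the regime of Algorithm~\ref{alg_main}.

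The principal obstacle is the same one the bias proof had to overcome: consecutive buffers are not independent, because the first vector of buffer $k+1$ is obtained from the last vector of buffer $k$ via the AR dynamics, so the $\mathcal{P}_k$'s are coupled and the clean factorization $\E\prod_k\mathcal{P}_k=\prod_k\E\mathcal{P}_k$ is false. I would handle this as in the bias analysis, invoking Lemma~\ref{lem:approxim-iid-helper} to swap each $\mathcal{P}_k$ for an i.i.d.-buffer analogue at an additive slack of $O(B^{2}(1-\eps^{2})^{u/2})$ per swap. The choice $u\geq \tfrac{2}{\eps^{2}}\log\tfrac{300000\pi d^{2}\sigma^{6}}{\eps^{2}\delta}$ built into Algorithm~\ref{alg_main} makes each slack at most $O(\eps^{2}/(d^{2}\sigma^{6}))$, so that even after summing across the $O(N^{2})$ buffer pairs and contracting against $A=\tfrac{1}{d}I$ the net correction stays well below the target rate, with the extra logarithmic factors in $u$ absorbed into the $\tilde O(\cdot)$.
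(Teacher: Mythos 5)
Your overall route is the same as the paper's: decompose the tail average into diagonal blocks $C_{ii}$ and cross blocks $C_{ij}$, show $C_{ii}\preceq 3\sigma^2 I$, reduce the cross blocks to expected products of the per-buffer contraction matrices, swap the true (dependent) buffers for i.i.d.-start buffers via Lemma~\ref{lem:approxim-iid-helper}, invoke the eigenvalue count for $H$ to get a per-buffer contraction of $1-\Omega(\eps B/d)$, and sum the geometric series to land at $\tilde O(\sigma^2 d\sqrt{\tmix}/T)$. The cross-block part, the i.i.d. swap with the $u$-dependent slack, and the final arithmetic all match the paper's argument.

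However, your treatment of the diagonal blocks has a genuine gap. You assert that the noise $\hat\xi_t$ is ``independent across $t$'' so that ``the cross-noise expectations vanish'' inside a buffer. This is false for Algorithm~\ref{alg_main}: the SGD steps within a buffer sample \emph{with replacement} from the same pool of $B$ labelled points, so two different iterations $t\neq s$ can draw the same pool index, in which case $\hat\xi_t$ and $\hat\xi_s$ are literally the same random variable (and the same $\hat X$ reappears inside the later contraction factors). Consequently $C_{ii}$ is not just a sum of $B$ clean noise-injection terms, and the telescoping bound you describe does not by itself give $C_{ii}\preceq 3\sigma^2 I$. This is precisely what the paper's Lemma~\ref{lem:c_infty_er} is for: it observes that a cross term between iterations $t$ and $s$ survives only on the event that the same index is drawn (probability $1/B$ per pair), bounds the resulting sum by an extra $\eta\sigma^2 H^{(i)}$ contribution, and the constant $3$ in $3\sigma^2 I$ comes exactly from these non-vanishing reuse terms. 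A secondary bookkeeping issue: $w^{\var}_{iB}$ does not restart from $w^*$ at the beginning of buffer $i$ — it carries the noise accumulated over buffers $1,\dots,i-1$ — so even absent the reuse issue, $C_{ii}$ is not a sum of only $B$ injection terms; the paper handles this by running the induction of Lemma~\ref{lem:c_infty_er} across buffers (the carried-over covariance is hit by the new buffer's contractions and the fresh noise of the new buffer is independent of it). With these two repairs your diagonal bound becomes the paper's, and the rest of your argument goes through.
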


\begin{proof}
\begin{align*}
\E[(w^{\var}-w^*)(w^{\var}-w^*)^T] &= \E \left[ \frac{1}{N} \sum\limits_{k=1}^N (w^{\var}_{kB} - w^*)\frac{1}{N} \sum\limits_{k=1}^N (w^{\var}_{kB} - w^*)^T\right] \\
& = \E \left[ \frac{1}{N^2} \sum\limits_{k=1}^N \sum\limits_{\ell=1}^N (w^{\var}_{kB} - w^*)(w^{\var}_{\ell B} - w^*)^T\right] \\
& =\frac{1}{N^2} \E \left[ \sum\limits_{k > \ell} (w^{\var}_{kB} - w^*)(w^{\var}_{\ell B} - w^*)^T\right] + \frac{1}{N^2} \E \left[ \sum\limits_{k < \ell} (w^{\var}_{kB} - w^*)(w^{\var}_{\ell B} - w^*)^T\right]\\
& + \frac{1}{N^2} \E \left[ \sum\limits^N_{k =1} (w^{\var}_{kB} - w^*)(w^{\var}_{kB} - w^*)^T\right]\\
& \preceq \frac{3\sigma^2}{N^2}  \E \left[\sum\limits_{\ell = 1}^N \sum\limits_{k=\ell+1}^N \prod\limits_{j = \ell + 1}^k \prod\limits_{i=1}^B \left( I - \hat{X}_i^{(j)}\hat{X}_i^{(j)T}\right)\right] \\
&+ \frac{3\sigma^2}{N^2}  \E \left[ \left(\prod\limits_{j = \ell + 1}^k \prod\limits_{i=1}^B \left( I - \hat{X}_i^{(j)}\hat{X}_i^{(j)T}\right)\right)^T\right]+ \frac{3\sigma^2}{N^2} \left[N \cdot I\right] ~,
\end{align*}

where the last line follows as a consequence of Lemma \ref{lem:c_infty_er}. 

Now we can focus on $\E\left[\prod\limits_{j = \ell + 1}^k \prod\limits_{i=1}^B \left( I - \hat{X}_i^{(j)}\hat{X}_i^{(j)T}\right)\right]$, which by spherical symmetry is equal to $c \cdot I$ for constant $c$. 

Following Lemma \ref{lem:approxim-iid-helper}, let $\hat{\tilde{X}}^{(j)}_i$ be a sample from $\tilde{X}^{(j)}_{u+s} \defeq \left(1-\epsilon^2\right)^{(u+s)/2} \tilde{X}^{(j)}_0 + \epsilon \sum\limits_{t = 1}^{u+s} \left(1-\epsilon^2\right)^{(t-(u+s))/2} G_t$, where $\tilde{X}^{(j)}_0$ is sampled independently from $\N(0, \frac{1}{\sqrt{d}} I_d)$. 

\begin{align*}
\E\left[\prod\limits_{j = \ell + 1}^k \prod\limits_{i=1}^B \left( I - \hat{X}_i^{(j)}\hat{X}_i^{(j)T}\right)\right] &= \E\left[\prod\limits_{j = \ell + 1}^k \prod\limits_{i=1}^B \left( I - \hat{\tilde{X}}_i^{(j)}\hat{\tilde{X}}_i^{(j)T} + \hat{\tilde{X}}_i^{(j)}\hat{\tilde{X}}_i^{(j)T} -  \hat{X}_i^{(j)}\hat{X}_i^{(j)T}\right)\right] \\
& \preceq \E\left[\prod\limits_{j = \ell + 1}^k \prod\limits_{i=1}^B \left( I - \hat{\tilde{X}}_i^{(j)}\hat{\tilde{X}}_i^{(j)T}\right)\right] + c\cdot (NB)^2 (1-\eps^2)^{u/2} \cdot I
\end{align*}

where $c$ is an appropriate constant. 

By Lemma \ref{lem:H_eigs}, $\E\left[\sum\limits_{\ell = 1}^N \sum\limits_{k=\ell + 1}^N\prod\limits_{j = \ell + 1}^k \prod\limits_{i=1}^B \left( I - \hat{\tilde{X}}_i^{(j)}\hat{\tilde{X}}_i^{(j)T}\right)\right] \preceq \sum\limits_{\ell = 1}^N \sum\limits_{k=\ell + 1}^N \left[ 1-  \frac{\eps B}{160\pi d} \right]^{k-\ell} \cdot I~.$

Therefore, $\E[(w^{\var}-w^*)(w^{\var}-w^*)^T]  \preceq \frac{6\sigma^2}{N^2} \sum\limits_{\ell = 1}^N \sum\limits_{k=\ell}^N \left[ 1-  \frac{\eps B}{160\pi d} \right]^{k-\ell} + 3\sigma^2(NB)^2 (1-\eps^2)^{u/2} \cdot I$. 

\begin{align*}
\sum\limits_{\ell = 1}^N \sum\limits_{k=\ell}^N \left[ 1-  \frac{\eps B}{160\pi d} \right]^{k-\ell} & = \sum\limits_{\ell = 1}^N \sum\limits_{i=0}^{N-\ell} \left[ 1-  \frac{\eps B}{160\pi d} \right]^{i} \\
& =  \sum\limits_{\ell = 1}^N \frac{160 \pi d}{\eps B} -  \frac{160 \pi d}{\eps B} \sum\limits_{\ell = 1}^N  \left[ 1-  \frac{\eps B}{160\pi d} \right]^{N- \ell + 1} \\
& \leq N \cdot \frac{160 \pi d}{\eps B}
\end{align*}

Putting everything together, we have that
$\E[(w^{\var}-w^*)(w^{\var}-w^*)^T] \preceq \frac{6\sigma^2}{N} \cdot \frac{160 \pi d}{\eps B} \cdot I + 3\sigma^2(NB)^2 (1-\eps^2)^{u/2} \cdot I $.

Therefore, when $u = \frac{2}{\epsilon^2} \log \frac{300000 \pi d^2\sigma^6}{\epsilon^2 \delta}$, it follows that $\loss(w^{\var}) \leq O(\frac{\sigma^2d}{\eps T})$.
\end{proof}

\begin{lemma}
\label{lem:c_infty_er}
$\E[(w^{\var}_{t} - w^*)(w^{\var}_{t} - w^*)^T | X^{(1)}_1, \ldots X^{(T/S)}_S] \preceq 3\sigma^2 I$ for all $t$. 
\end{lemma}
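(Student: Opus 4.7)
\textbf{Proof proposal for Lemma~\ref{lem:c_infty_er}.}
The plan is to proceed by a straightforward induction on $t$, bounding the covariance matrix $\Sigma_t := \E[(w^{\var}_t - w^*)(w^{\var}_t - w^*)^T \mid X^{(1)}_1,\dots,X^{(T/S)}_S]$ using the Loewner order. First I would condition on the high-probability event (guaranteed by Lemma~\ref{lem:concentration_norm} and a union bound over the $T$ Markov-chain samples, together with the hypothesis $d = \Omega(B^4 \log(1/\beta))$) on which $\|X^{(i)}_j\|^2 \le 2$ for every $i,j$. On this event the random sample $\hat{X}_{t+1}$ drawn inside SGD satisfies $\|\hat{X}_{t+1}\|^2 \le 2$ deterministically, and the step size $\eta=\tfrac12$ ensures that $I - \eta\hat{X}_{t+1}\hat{X}_{t+1}^T$ is PSD.

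From the SGD recursion $w^{\var}_{t+1} - w^* = (I - \eta \hat{X}_{t+1}\hat{X}_{t+1}^T)(w^{\var}_t - w^*) + \eta\hat{\xi}_{t+1}\hat{X}_{t+1}$, taking outer products and conditional expectations (and using that $\hat{\xi}_{t+1}$ is mean zero with variance $\sigma^2$, independent of everything to its left) gives the recursion
\begin{equation*}
\Sigma_{t+1} = \E\bigl[(I - \eta A_{t+1})\,\Sigma_t\,(I - \eta A_{t+1})\bigr] + \eta^2 \sigma^2 \E[A_{t+1}],
\end{equation*}
where $A_{t+1} := \hat{X}_{t+1}\hat{X}_{t+1}^T$ and the expectation is over the uniform sample drawn inside the current buffer. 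The base case $\Sigma_0 = 0 \preceq 3\sigma^2 I$ is trivial; assuming $\Sigma_t \preceq 3\sigma^2 I$, the PSD sandwiching identity $M \Sigma_t M \preceq 3\sigma^2 M^2$ for any symmetric PSD $M$ yields
\begin{equation*}
\Sigma_{t+1} \preceq 3\sigma^2\,\E\bigl[(I - \eta A_{t+1})^2\bigr] + \eta^2 \sigma^2 \E[A_{t+1}].
\end{equation*}
Since $A_{t+1}^2 = \|\hat{X}_{t+1}\|^2 A_{t+1} \preceq 2 A_{t+1}$, expanding $(I-\eta A)^2 = I - 2\eta A + \eta^2 A^2$ gives $(I - \eta A_{t+1})^2 \preceq I - (2\eta - 2\eta^2) A_{t+1}$. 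Plugging in $\eta = 1/2$ yields $\Sigma_{t+1} \preceq 3\sigma^2 I - \tfrac{3}{2}\sigma^2 \E[A_{t+1}] + \tfrac14 \sigma^2 \E[A_{t+1}] = 3\sigma^2 I - \tfrac{5}{4}\sigma^2\,\E[A_{t+1}] \preceq 3\sigma^2 I$, closing the induction.

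The main subtlety I expect to manage is the claim that the cross term $\E[\eta\hat{\xi}_{t+1}\hat{X}_{t+1}(w^{\var}_t - w^*)^T]$ vanishes. With uniform sampling with replacement from the buffer, $\hat\xi_{t+1}$ could in principle be correlated with $w^{\var}_t - w^*$ if the index $i_{t+1}$ coincides with a previously drawn index. The clean resolution, which does not change the distribution of the algorithm's output, is to view the noise as freshly resampled whenever an index is drawn (since $Y_i = \langle X_i, w^*\rangle + \xi_i$ and only $Y_i$ is used, we can equivalently regenerate $\xi_i$ at sampling time because the noises are independent and zero-mean with variance $\sigma^2$); with this interpretation $\hat\xi_{t+1}$ is independent of the past and mean zero, so the cross term vanishes and the recursion above is exact. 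Once this is pinned down, everything else is the clean PSD computation sketched above.
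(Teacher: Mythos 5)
Your overall induction structure and the PSD algebra are the same as in the paper's proof, and the paper even uses the same constant bookkeeping ($\eta = \tfrac12$, $\|\hat X\|^2\le 2$, base case and sandwich step). The genuine problem is your resolution of the cross term. You claim that, because only $Y_i$ is used, you may regard the noise as freshly resampled each time an index is drawn, "without changing the distribution of the algorithm's output." That is false: sampling is with replacement from a pool of roughly $B$ points using $B$ draws, so the same index is drawn more than once within a buffer with constant probability, and on that event the true algorithm reuses the \emph{same} realization $\xi_i$ at both iterations, whereas your modified process uses two independent copies. The joint law of the gradient steps (and hence the second moment of the iterates, which is exactly the quantity the lemma bounds) differs between the two processes, so proving the bound for the resampled process does not prove it for Algorithm~\ref{alg_main}. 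In short, the step "the cross term $\E[\eta\hat\xi_{t+1}\hat X_{t+1}(w^{\var}_t-w^*)^T]$ vanishes" is not available to you.

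The paper's proof handles precisely this point differently: it keeps the cross term, expands $w^{\var}_{k-1}-w^*$ as a sum of past noise-weighted products, observes that only the summands where the freshly drawn index coincides with a past index survive (probability $\tfrac1B$ per summand), and bounds the resulting contribution by $\eta\sigma^2 H$ for each of the two cross terms; the induction then closes as $3\sigma^2(I-\eta H)+2\eta\sigma^2 H+\eta^2\sigma^2 H\preceq 3\sigma^2 I$, which is why the constant $3$ appears. Your computation can be repaired the same way: replacing your "cross term $=0$" by the bound $\eta\sigma^2\,\E[A_{t+1}]$ per cross term turns your final display into $\Sigma_{t+1}\preceq 3\sigma^2 I-\tfrac14\sigma^2\,\E[A_{t+1}]\preceq 3\sigma^2 I$, so the induction still closes; but as written, the justification you give for discarding the cross term is incorrect, and some argument of this type (including the carry-over term from the previous buffer, which is independent of the current buffer's noise) is needed.
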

\begin{proof}
Proof by induction. 

In the first iterate of the first buffer, we have:
$$(w^{\var}_1 - w^*)(w^{\var}_1 - w^*)^T =\eta^2( \hat{\xi}^{(1)}_1)^2 \hat{X}^{(1)}_1 \hat{X}^{(1) T}_1$$

Since each $\|X_j\|^2_2 \leq 2$ with high probability, and when $\eta = \frac{1}{2}$, we have: 
$$\E \left[(w^{\var}_1 - w^*)(w^{\var}_1 - w^*)^T | X_1, \ldots X_B \right] =\eta^2 \sigma^2 H^{(1)} \preceq \sigma^2 I$$

For the second iterate, we have: 
$$w^{\var}_2 - w^* = \left(I - \eta\hat{X}^{(1)}_2 \hat{X}^{(1) T}_2 \right) \hat{\xi}^{(1)}_1 \hat{X}^{(1)}_1 + \eta \hat{\xi}^{(1)}_2 \hat{X}^{(1)}_2$$

\begin{align*}
\E \left[(w^{\var}_2 - w^*)(w^{\var}_2 - w^*)^T | X_1, \ldots X_B \right] & = \E [ \left(I - \eta\hat{X}^{(1)}_2 \hat{X}^{(1) T}_2 \right) \eta^2( \hat{\xi}^{(1)}_1)^2 \hat{X}^{(1)}_1 \hat{X}^{(1) T}_1 \left(I - \eta\hat{X}^{(1)}_2 \hat{X}^{(1) T}_2 \right) \\
& +  \left(I - \eta\hat{X}^{(1)}_2 \hat{X}^{(1) T}_2 \right) \eta \hat{\xi}^{(1)}_1 \hat{\xi}^{(1)}_2 \hat{X}^{(1)}_1 \hat{X}^{(1) T}_2 \\
& + \eta \hat{\xi}^{(1)}_1 \hat{\xi}^{(1)}_2 \hat{X}^{(1)}_1 \hat{X}^{(1) T}_2  \left(I - \eta\hat{X}^{(1)}_2 \hat{X}^{(1) T}_2 \right) \\
& + \eta^2( \hat{\xi}^{(1)}_2)^2 \hat{X}^{(1)}_2 \hat{X}^{(1) T}_2 | X_1 \ldots X_B] \\
& \preceq \sigma^2 (I - \eta H^{(1)} + \eta H^{(1)} + \eta H^{(1)}+ \eta^2 H^{(1)}) \\
& \preceq 3 \sigma^2 I
\end{align*}
\end{proof}

Suppose that in the first buffer, for $k \leq B$, 
$$\E \left[(w^{\var}_{k-1} - w^*)(w^{\var}_{k-1} - w^*)^T | X_1, \ldots X_B \right] \preceq 3\sigma^2 I $$

Then we look at 
\begin{align*}
\E \left[(w^{\var}_k - w^*)(w^{\var}_k - w^*)^T | X_1, \ldots X_B \right] & = \E [ \left(I - \eta\hat{X}^{(1)}_k \hat{X}^{(1) T}_k \right)(w^{\var}_{k-1} - w^*) (w^{\var}_{k-1} - w^*)^T \left(I - \eta\hat{X}^{(1)}_k \hat{X}^{(1) T}_k \right) \\
& + \left(I - \eta\hat{X}^{(1)}_k \hat{X}^{(1) T}_k \right) (w^{\var}_{k-1} - w^*)\eta \hat{\xi}^{(1)}_k \hat{X}^{(1) T}_k \\
& + \eta \hat{\xi}^{(1)}_k \hat{X}^{(1)}_k  (w^{\var}_{k-1} - w^*)^T \left(I - \eta\hat{X}^{(1)}_k \hat{X}^{(1) T}_k \right) \\
& + \eta^2( \hat{\xi}^{(1)}_k)^2 \hat{X}^{(1)}_k \hat{X}^{(1) T}_k | X_1 \ldots X_B] \\
\end{align*}

Notice that $w^{\var}_{k-1} - w^* = \sum\limits_{j=1}^{k-1} \prod\limits_{i=j+1}^{k-1} \left(I - \eta\hat{X}^{(1)}_i \hat{X}^{(1) T}_i \right) \eta \hat{\xi}^{(1)}_j \hat{X}^{(1)}_j $

We first focus on the cross term: 
\begin{align*}
\E & \left[\left(I - \eta\hat{X}^{(1)}_k \hat{X}^{(1) T}_k \right) (w^{\var}_{k-1} - w^*)\eta \hat{\xi}^{(1)}_k \hat{X}^{(1) T}_k | X_1, \ldots X_B\right] \\
& = \E \left[ \left(I - \eta\hat{X}^{(1)}_k \hat{X}^{(1) T}_k \right) \sum\limits_{j=1}^{k-1} \prod\limits_{i=j+1}^{k-1} \left(I - \eta\hat{X}^{(1)}_i \hat{X}^{(1) T}_i \right) \eta \hat{\xi}^{(1)}_j \hat{X}^{(1)}_j  \left(\eta \hat{\xi}^{(1)}_k \hat{X}^{(1) T}_k\right) | X_1, \ldots X_B\right] 
\end{align*}

Notice that by the independence of the noise, only those terms where $\hat{X}_j = \hat{X}_k$ will be non-zero (and this event happens with probability $\frac{1}{B}$ for each $j$). Moreover, note that 
\begin{align*}
\E & \left[ \eta \hat{X}^{(1)}_k \hat{X}^{(1) T}_k \sum\limits_{j=1}^{k-1} \prod\limits_{i=j+1}^{k-1} \left(I - \eta\hat{X}^{(1)}_i \hat{X}^{(1) T}_i \right) \eta \hat{\xi}^{(1)}_k \hat{X}^{(1)}_k \left(\eta \hat{\xi}^{(1)}_k \hat{X}^{(1) T}_k \right) | X_1, \ldots X_B \right] \\
& = \E\left[ \eta \hat{X}^{(1)}_k \hat{X}^{(1) T}_k \sum\limits_{j=1}^{k-1} \left(I - \eta H^{(1)} \right)^{k-j-2} \eta \hat{\xi}^{(1)}_k \hat{X}^{(1)}_k \left(\eta \hat{\xi}^{(1)}_k \hat{X}^{(1) T}_k \right) | X_1, \ldots X_B \right] \\
& \succeq 0
\end{align*}

Therefore, we have: 
\begin{align*}
\E & \left[\left(I - \eta\hat{X}^{(1)}_k \hat{X}^{(1) T}_k \right) (w^{\var}_{k-1} - w^*)\eta \hat{\xi}^{(1)}_k \hat{X}^{(1) T}_k | X_1, \ldots X_B\right] \\
& \preceq \frac{\sigma^2}{B} \left[ \sum\limits_{j=1}^{k-1} \prod\limits_{i=j+1}^{k-1} \left(I - \eta H^{(1)} \right) \eta H^{(1)} \right] \\
& \preceq \eta \sigma^2 \frac{k-1}{B} H^{(1)} \preceq \eta \sigma^2 H^{(1)}
\end{align*}

Therefore, 
\begin{align*}
\E \left[(w^{\var}_k - w^*)(w^{\var}_k - w^*)^T | X_1, \ldots X_B \right] & \preceq 3\sigma^2 (I - \eta H^{(1)}) + 2 \eta \sigma^2 H^{(1)} +  \eta^2 \sigma^2 H^{(1)} \\
& \preceq 3\sigma^2 I 
\end{align*}

Therefore, in the first buffer, $\E \left[(w^{\var}_k - w^*)(w^{\var}_k - w^*)^T | X_1, \ldots X_B \right] \preceq 3\sigma^2 I$ for all $k \leq B$. 

For the first iterate using the second buffer, because the cross terms are 0 by independent noise, it is easy to show that
\begin{align*}
\E \left[(w^{\var}_{B+1} - w^*)(w^{\var}_{B+1} - w^*)^T | X^{(2)}_1, \ldots X^{(2)}_B \right] & = \E [ \left(I - \eta\hat{X}^{(2)}_1 \hat{X}^{(2) T}_1 \right)(w^{\var}_{B} - w^*) (w^{\var}_{B} - w^*)^T \left(I - \eta\hat{X}^{(2)}_1 \hat{X}^{(2) T}_1 \right) \\
& + \eta^2( \hat{\xi}^{(2)}_1)^2 \hat{X}^{(2)}_1 \hat{X}^{(2) T}_1 | X^{(2)}_1, \ldots X^{(2)}_B] \\
& \preceq 3 \sigma^2 I 
\end{align*}

For subsequent iterates in the second buffer, 

We write out $w^{\var}_{B+k-1} - w^* = \prod\limits_{j=1}^k \left(I - \eta\hat{X}^{(2)}_j \hat{X}^{(2) T}_j \right) (w^{\var}_B - w^*) + \sum\limits_{j=1}^{k-1} \prod\limits_{i=j+1}^{k-1} \left(I - \eta\hat{X}^{(2)}_i \hat{X}^{(2) T}_i \right) \eta \hat{\xi}^{(2)}_j \hat{X}^{(2)}_j $. 

Therefore, the cross term 
\begin{align*}
\E & \left[\left(I - \eta\hat{X}^{(2)}_{B+k} \hat{X}^{(2) T}_{B+k} \right)(w^{\var}_{B+k-1} - w^*)\eta \hat{\xi}^{(2)}_{B+k} \hat{X}^{(2)}_{B+k} | X^{(2)}_1, \ldots X^{(2)}_B \right]\\
& = \E \left[\left(I - \eta\hat{X}^{(2)}_{B+k} \hat{X}^{(2) T}_{B+k} \right)  \sum\limits_{j=1}^{k-1} \prod\limits_{i=j+1}^{k-1} \left(I - \eta\hat{X}^{(2)}_i \hat{X}^{(2) T}_i \right) \eta \hat{\xi}^{(2)}_j \hat{X}^{(2)}_j  \eta \hat{\xi}^{(2)}_{B+k} \hat{X}^{(2)}_{B+k} | X^{(2)}_1, \ldots X^{(2)}_B \right]\\
& \preceq \eta \sigma^2 H^{(2)} 
\end{align*}

Therefore, by induction, we have that $\E[(w^{\var}_{t} - w^*)(w^{\var}_{t} - w^*)^T | X^{(1)}_1, \ldots X^{(T/S)}_S] \preceq 3\sigma^2 I$ for all $t$.

\subsection{Lower Bound for SGD with Constant Step Size}
\label{app:er_lb}

\begin{proof}[Proof of Theorem~\ref{thm:sequential_lower_bound}]
	
	\noindent We know that $w_{t+1} - w^* = (I - \eta X_tX_t^T) (w_t - w^*)$. We define: 
	\[
	\alpha_t = X_t^T(w_t - w^*)
	\]
	\[
	\gamma_t = \|w_t - w^*\|
	\]
	Then we have: 
	\begin{align*}
	\alpha_{t+1} &= X_{t+1}^T(w_{t+1} - w^*) \\
	& = \left(\sqrt{1-\eps^2} X_t + \eps G_{t+1} \right)^T \left(I - \eta X_tX_t^T \right) (w_t - w^*) \\
	& = \sqrt{1-\eps^2}\alpha_t + \eps G_{t+1}^T(w_t - w^*) - \eta \sqrt{1-\eps^2} \alpha_t \|X_t\|^2 - \eta \eps G_{t+1}^TX_tX_t^T(w_t - w^*)
	\end{align*}
	Suppose that $ \frac{c\log(\frac{1}{\beta})}{\sqrt{d}} \leq 8$, then by Lemmas \ref{lem:concentration_norm} and \ref{lem:concentration_inner_product}, we have:
	\begin{align*}
	\E\left[\alpha_{t+1}^2\right] &\leq (1-\eps^2) \E[\alpha_t^2] + \frac{\eps^2}{d} \E[\gamma_t^2] + 7\eta^2 (1-\eps^2) \E[\alpha_t^2] + 7\frac{\eta^2 \eps^2}{d}  \E[\alpha_t^2] \\
	& + 14\eta (1-\eps^2) \E[\alpha_t^2] - \frac{2\eta\eps^2}{d} \E[\alpha_t^2] \\
	&\leq \left((1-\eps^2)(1+7\eta^2 + 14\eta) + \eps^2\left(\frac{7\eta^2}{d} - \frac{2\eta}{d}\right) \right)\E[\alpha_t^2] + \frac{\eps^2}{d} \E[\gamma_t^2] \\
	&=\left(1- \left[\eps^2 -  (1-\eps^2)(7\eta^2 + 14\eta) - \frac{\eps^2}{d} \left(7\eta^2 - 2\eta\right)\right]\right) \E[\alpha_t^2] + \frac{\eps^2}{d} \E[\gamma_t^2] \\
	\end{align*}
	Now we turn to $\gamma_{t+1}$. We have: 
	\begin{align*}
	\gamma^2_{t+1} &= \|w_{t+1} - w^*\|^2 = (w_t - w^*)^T(I-\eta X_tX_t^T)(I-\eta X_tX_t^T)(w_t - w^*) \\
	& = (w_t - w^*)^T (I - (2\eta-\eta^2 \|X_t\|^2)X_tX_t^T)(w_t - w^*) \\
	& = \gamma^2_{t} - (2\eta - \eta^2 \|X_t\|^2)\alpha_t^2
	\end{align*}
	Therefore, we can say that $\E[\gamma^2_{t+1}] \geq \E[\gamma^2_{t}] - (2\eta - \eta^2 (-7))\E[\alpha_t^2]$.

	When $\eps^2 > 0.5$ and $\eta < 0.05$, it follows that 
	\begin{itemize}
		\item $\E[\gamma^2_{t+1}] \leq \E[\gamma^2_{t}]$
		\item $\E\left[\alpha_{t+1}^2\right] \leq (1-\zeta) \E\left[\alpha_{t}^2\right] + \frac{\eps^2}{d}  \E[\gamma^2_{t}]$, where $0<\zeta<1$ 
		\item Moreover, $\zeta > 7\eta^2 + 2\eta$, so $\E[\gamma^2_{t+1}] \geq \E[\gamma^2_{t}] - \zeta\E[\alpha_t^2]$. 
	\end{itemize}
	
	Unwrapping, the recursion, we can say that 
	\begin{align*}
	\E[\alpha^2_{t+1}] &\leq (1-\zeta) \E[\alpha^2_{t}] + \frac{\eps^2}{d}  \E[\gamma^2_{1}] \\
	& \leq (1-\zeta) \left((1-\zeta) \E[\alpha^2_{t-1}] + \frac{\eps^2}{d}  \E[\gamma^2_{1}]\right) + \frac{\eps^2}{d}  \E[\gamma^2_{1}] \\
	& \leq (1-\zeta)^t\E[\alpha^2_{1}] + \frac{\eps^2}{d} \left(\sum\limits_{j=0}^{t-1} (1-\zeta)^j \right) \E[\gamma^2_{1}] \\
	& \leq (1-\zeta)^t\E[\alpha^2_{1}]  +  \frac{\eps^2}{d\zeta}  \E[\gamma^2_{1}]
	\end{align*}
	Note that $\E[\alpha^2_{1}] = \E\left[(X_1^T(w_1 - w^*))^2\right] = \frac{\E[\gamma^2_{1}]}{d}$
	Therefore we can say that $\E[\alpha^2_{t+1}] \leq (1-\zeta)^t\frac{\E[\gamma^2_{1}]}{d} +  \frac{\eps^2}{d\zeta}  \E[\gamma^2_{1}]$
	
	Now we unwrap the recursion for $\E[\gamma^2_{t+1}] \geq \E[\gamma^2_{t}] - \zeta\E[\alpha_t^2]$. We have: 
	\begin{align*}
	\E[\gamma^2_{t+1}] & \geq \E[\gamma^2_{t}] - \zeta\E[\alpha_t^2] \\
	& \geq \E[\gamma^2_{t-1}] - \zeta\E[\alpha_{t-1}^2] - \zeta\E[\alpha_t^2] \\
	& \geq \E[\gamma^2_{1}] - \zeta \sum\limits_{j=1}^t \E[\alpha_{j}^2] \\
	& \geq \E[\gamma^2_{1}] - \zeta \sum\limits_{j=1}^t \left( (1-\zeta)^{j-1}\E[\alpha^2_{1}]  +  \frac{\eps^2}{d\zeta}  \E[\gamma^2_{1}]\right) \\
	& = \E[\gamma^2_{1}] - \zeta \left(\frac{t\eps^2}{d\zeta} \right) \E[\gamma^2_{1}] - \zeta \cdot \frac{1-(1-\zeta)^t}{\zeta} \cdot \E[\alpha^2_{1}] \\
	& \geq \E[\gamma^2_{1}] - \zeta \left(\frac{t\eps^2}{d\zeta} \right) \E[\gamma^2_{1}] - \frac{1}{d} \E[\gamma^2_{1}]
	\end{align*}
	
	In order for $\frac{t\eps^2}{d} > \frac{1}{2}$, we need $t \geq \frac{d}{2\eps^2}$. Therefore the number of samples required is $T = \Omega \left(\frac{d}{\eps^2}\right)$. 
\end{proof}

\subsection{Additional Simulations}

We also conducted experiments on data generated using Gaussian AR MC \eqref{eq:gaussar} . We set $d=100$,  noise std. deviation $\sigma=1e-3$, $\epsilon=.01$, and buffer size $B=1/\eps^2$. We report results averaged over $20$ runs. Figure~\ref{fig:exprply_100} compares the estimation error achieved by SGD, \sgddd, and the proposed SGD-ER method.
\begin{figure}[H]
\centering
	\includegraphics[scale=0.5]{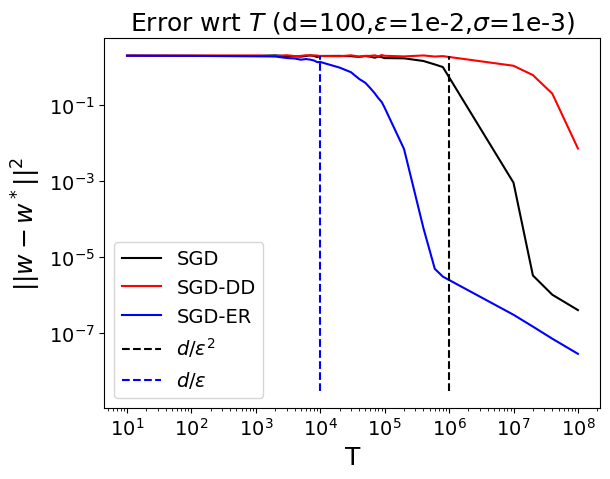}
	\caption{Gaussian AR Chain: error incurred by various methods.}
	\label{fig:exprply_100}
\end{figure}

\end{document}